\newcommand*{\addFileDependency}[1]{
  \typeout{(#1)}
  \@addtofilelist{#1}
  \IfFileExists{#1}{}{\typeout{No file #1.}}
}
\newtheorem{lemma}{Lemma}
\newtheorem{corollary}{Corollary}
\newtheorem{theorem}{Theorem}
\newcommand{\bcircle}{\tikz\draw[blue, thick] (0,0) circle (0.15cm);}
\newcommand{\bcircleplus}{
  \tikz{
    \draw[blue, thick] (0,0) circle (0.15cm);
    \draw[blue, thick] (0,-0.15cm) -- (0,0.15cm);
    \draw[blue, thick] (-0.15cm,0) -- (0.15cm,0);
  }
}
\newcommand{\gcircle}{
  \tikz{
    \filldraw[green] (0,0) circle (0.15cm);
    \draw[blue, thick] (0,0) circle (0.15cm);
  }
}
\newcommand{\gcircleplus}{
  \tikz{
    \filldraw[green] (0,0) circle (0.15cm);
    \draw[blue, thick] (0,0) circle (0.15cm);
    \draw[blue, thick] (0,-0.15cm) -- (0,0.15cm);
    \draw[blue, thick] (-0.15cm,0) -- (0.15cm,0);
  }
}
\newcommand{\rw}{\mathrm{rs}}
\newcommand{\dtm}{\mathrm{tg}}
\newcommand{\comdat}{\mathrm{c(d)}}
\newcommand{\comest}{\mathrm{c(e)}}
\newcommand{\maxs}{\vee}
\newcommand{\mins}{\wedge}
\newcommand{\dfr}{\text{df}}
\newcommand{\0}{\bm{0}}
\newcommand{\Z}{\bm{Z}}
\newcommand{\I}{\bm{I}}
\newcommand{\ttheta}{\bm{\theta}_0}
\newcommand{\bSigma}{\bm{\Sigma}}
\newcommand{\A}{\bm{A}}
\newcommand{\x}{\bm{x}}
\newcommand{\tx}{\tilde{\bm{x}}}
\newcommand{\bv}{\bm{v}}
\newcommand{\bmu}{\bm{\mu}}
\newcommand{\bdelta}{\bm{\delta}}
\newcommand{\dd}{\mathrm{d}}
\newcommand{\cS}{\mathcal{T}}
\newcommand{\cB}{\Xi}
\newcommand{\cO}{\mathcal{O}}
\newcommand{\cT}{\mathcal{T}}
\newcommand{\cBs}{\cB^{*}}
\newcommand{\cBd}{\cB_{\dtm}^{*}}
\newcommand{\cBr}{\cB_{\rw}^{*}}
\newcommand{\ys}{\bm{y}_{\cS}}
\newcommand{\yb}{\bm{y}_{\cB}}
\newcommand{\ybs}{\bm{y}_{\cBs}}
\newcommand{\X}{\bm{X}}
\newcommand{\Xs}{\bm{X}_{\cS}}
\newcommand{\Xb}{\bm{X}_{\cB}}
\newcommand{\Xbs}{\bm{X}_{\cBs}}
\newcommand{\Ns}{n_{\cS}}
\newcommand{\Nb}{n_{\cB}}
\newcommand{\Nbs}{n_{\cBs}}
\newcommand{\V}{\bm{V}}
\newcommand{\W}{\bm{W}}
\newcommand{\bH}{\bm{H}}
\newcommand{\diag}{\mathrm{diag}}
\newcommand{\mpi}{c_{\pi}}
\newcommand{\argmin}{\mathop{\arg\min}}
\newcommand{\sumcs}{\sum_{t\in\cS}}
\newcommand{\sumcb}{\sum_{e\in\cB}}
\newcommand{\sumcbs}{\sum_{e\in\cBs}}
\newcommand{\sumcbd}{\sum_{e\in\cBd}}
\newcommand{\sumcbr}{\sum_{e\in\cBr}}
\newcommand{\cL}{\mathcal{L}}
\newcommand{\cLp}{\mathcal{L}^{\pi}}
\newcommand{\cLd}{\mathcal{L}^{\mathrm{D}}}
\newcommand{\bbeta}{\bm{\beta}}
\newcommand{\tbeta}{\bm{\beta}_0}
\newcommand{\hbeta}{\hat{\bbeta}}
\newcommand{\bgamma}{\bm{\gamma}}
\newcommand{\hgamma}{\hat{\bgamma}}
\newcommand{\ttgamma}{\tilde{\gamma}}
\newcommand{\ep}{\bm{\varepsilon}}
\newcommand{\epS}{\bm{\varepsilon}_{\cS}}
\newcommand{\bmeta}{\bm{\eta}}
\newcommand{\rd}{r}
\newcommand{\Ze}{Z^0}
\newcommand{\Zg}{Z^{\gamma}}
\newcommand{\Leps}{L_{\varepsilon}}
\newcommand{\feps}{f_{\varepsilon}}
\newcommand{\fgam}{f_{\gamma}}
\newcommand{\feg}{f_{\varepsilon+\gamma}}
\newcommand{\fetg}{f_{\varepsilon+\ttgamma}}
\newcommand{\ftga}{f_{\ttgamma}}
\newcommand{\geps}{g_{\varepsilon}}
\newcommand{\getg}{g_{\varepsilon+\ttgamma}}
\newcommand{\mineg}{\sigma_{\min}}
\newcommand{\cvsigma}{\varsigma}
\newcommand{\deltaS}{\bm{\delta}_{\cS}}
\newcommand{\bu}{\bm{u}}
\newcommand{\z}{\bm{z}}
\newcommand{\M}{\bm{M}}
\newcommand{\Ds}{\mathcal{D}_{\cS}}
\newcommand{\tp}{^{\top}}
\newcommand{\cvd}{\rightsquigarrow}
\renewcommand{\Pr}{\mathbb{P}}
\newcommand{\Exp}{\mathbb{E}}
\newcommand{\Var}{\mathbb{V}}
\newcommand{\vc}{\mathrm{vec}}
\newcommand{\opt}{{\mathrm{opt}}}
\newcommand{\wtd}{{\textrm{\tiny W}}}%
\newcommand{\vw}{\sigma}
\newcommand{\vuw}{s}
\newcommand{\consp}{c_0}
\newcommand{\hbetaw}{\hbeta_{\rw}}
\newcommand{\hbetad}{\hbeta_{\dtm}}
\newcommand{\gammae}{\gamma}
\newcommand{\Nor}{\mathbb{N}}
\newtheorem{assumption}{Assumption}
\newtheorem{proposition}{Proposition}
\newtheorem{remark}{Remark}
\begin{document}
\title{Robust Data Fusion via Subsampling}
\author{Jing Wang, HaiYing Wang, Kun Chen}
\maketitle
\centerline{\it Department of Statistics, University of Connecticut}

\begin{abstract}
  Data fusion and transfer learning are rapidly growing fields that enhance
  model performance for a target population by leveraging other related data
  sources or tasks. The challenges lie in the various potential heterogeneities
  between the target and external data, as well as various practical concerns
  that prevent a naïve data integration. We consider a realistic scenario where
  the target data is limited in size while the external data is large but
  contaminated with outliers; such data contamination, along with other
  computational and operational constraints, necessitates proper selection or
  subsampling of the external data for transfer learning.  To our knowledge,
  transfer learning and subsampling under data contamination have not been
  thoroughly investigated. We address this gap by studying various transfer
  learning methods with subsamples of the external data, accounting for outliers
  deviating from the underlying true model due to arbitrary mean shifts.  Two
  subsampling strategies are investigated: one aimed at reducing biases and the
  other at minimizing variances. Approaches to combine these strategies are also
  introduced to enhance the performance of the estimators. We provide
  non-asymptotic error bounds for the transfer learning estimators, clarifying
  the roles of sample sizes, signal strength, sampling rates, magnitude of
  outliers, and tail behaviors of model error distributions, among other
  factors. Extensive simulations show the superior performance of the proposed
  methods. Additionally, we apply our methods to analyze the risk of hard
  landings in A380 airplanes by utilizing data from other airplane types,
  demonstrating that robust transfer learning can improve estimation efficiency for
  relatively rare airplane types with the help of data from other types of
  airplanes.

  \-\\\textit{Keywords:} Mean shifts; Non-asymptotic error bounds; Outliers;
  Transfer learning.
\end{abstract}

\section{Introduction}\label{sec:intro}

Transfer learning and data fusion have gained significant attention in
statistics and machine learning in recent years
\citep{rosenstein2005transfer,chen2015data,li2022transfer}. A typical scenario
for transfer learning is that the target task has limited data but can be
potentially enhanced by a substantial amount of external data from one or many
sources. As such, transfer learning techniques can be particularly beneficial in
fields like engineering and medicine \citep{li2022transfer,jin2024transfer},
where collecting sufficient data for a specific task is costly or data is
collected from disparate sources and thus can be heterogeneous. For example, in
air safety studies, airplane hard landings can be analyzed with Quick Access
Recorder (QAR) data. However, QAR data for certain types of airplanes, such as
the Airbus A380, remains limited \citep{chen2021detection}. Borrowing
information from other airplane types, such as the Airbus A320 and the Boeing
737, can potentially improve the prediction of hard landings for the Airbus
A380. In suicide risk studies with electronic health records (EHRs), transfer
learning and data fusion can be used to build more accurate models
for a local healthcare provider with limited data, when a separate and more
comprehensive EHR or claims database is available \citep{XuChang2022}.

However, external data can be a double-edged sword: naively utilizing all its
information could lead to negative transfer, where the resulting model performs
worse than using only the target data \citep{rosenstein2005transfer}.  This is
mainly due to potential inconsistencies and heterogeneity between the target and
the external sources, and various practical concerns prevent complete data
integration. In the modeling of hard landings, negative transfer could occur if
some flights or airplanes, such as the Boeing 737, exhibit substantially
different landing behaviors compared to typical Airbus A380 flights. In a
suicide risk study, negative transfer may occur if the external source cohort, e.g.,
patients in a statewide database, are very different from the target, e.g.,
patients at a children's hospital.

The effective transfer of knowledge from external sources to the target relies
on exploring and utilizing their connections. One commonly adopted similarity
assumption is that the sources and the target share similar model parameters
\citep{li2022transfer,tian2023transfer,jin2024transfer}, and the difference between the
parameters has to be sufficiently small to ensure a positive transfer. More
recently, an alternative assumption is that the target distribution can be
approximated by a mixture of source distributions without necessarily being
similar to each other
\cite{Hu2018,Mohri2019}. While this mixing assumption is much weaker than the
similarity assumption, it is less intuitive and does not apply when there are
only one or a few large sources. In either of the two approaches, a source is
modeled as a whole without considering within-source heterogeneity, thus
limiting their capability of handling many real-world scenarios.

In this study, we focus on a prototypical scenario where the target data is
limited in size, while the external data is large but contaminated with
outliers. This setup is practical and novel, and neither the aforementioned
similarity assumption nor the mixing assumption may hold. The data
contamination, along with other computational and operational constraints,
necessitates a new strategy for effective transfer learning.

Since anomalies in the external data could lead to significant estimation biases
and variances, we argue that data selection techniques are crucial for ensuring
the success of transfer learning. To achieve this goal, we utilize ideas from
another rapidly developing field, subsampling \citep{Drineas:11,
  ma2015statistical, WangZhuMa2017, WangYangStufken2018, ai2019optimal}, to select
informative data for transfer learning.  To the best of our
knowledge, existing subsampling and transfer learning literature seldom
considers data with outliers. On the other hand, it is a natural scenario that
the target data and a subset of the external source data share the same or similar
underlying models, and the large external dataset as a whole can be quite
heterogeneous.

We address the above gap by studying robust transfer learning methods with
subsamples of the external source data, accounting for outliers deviating from
the underlying true model due to arbitrary mean shifts.  Our learning strategy
is summarized in Figure~\ref{fig:diag}. Two distinct subsampling strategies are
investigated: one aimed at reducing biases, referred to as ``Target-guided
selection'', and the other at minimizing variances, referred to as
``Leverage-based random sampling'' or simply ``random sampling.'' Our thorough
theoretical analysis quantifies the performance of
each transfer learning estimator in terms of bias and variance, as a function of
important structural parameters including sample sizes, magnitudes of outliers,
the sampling rate, tail behaviors of model errors, and others. Furthermore, we
consider methods to combine these strategies to enhance the performance of transfer,
corresponding to the ``Robust data fusion'' step that utilizes both target data
and sampled external data. 

\begin{figure}[H]
  \centering
  \includegraphics[width=\textwidth]{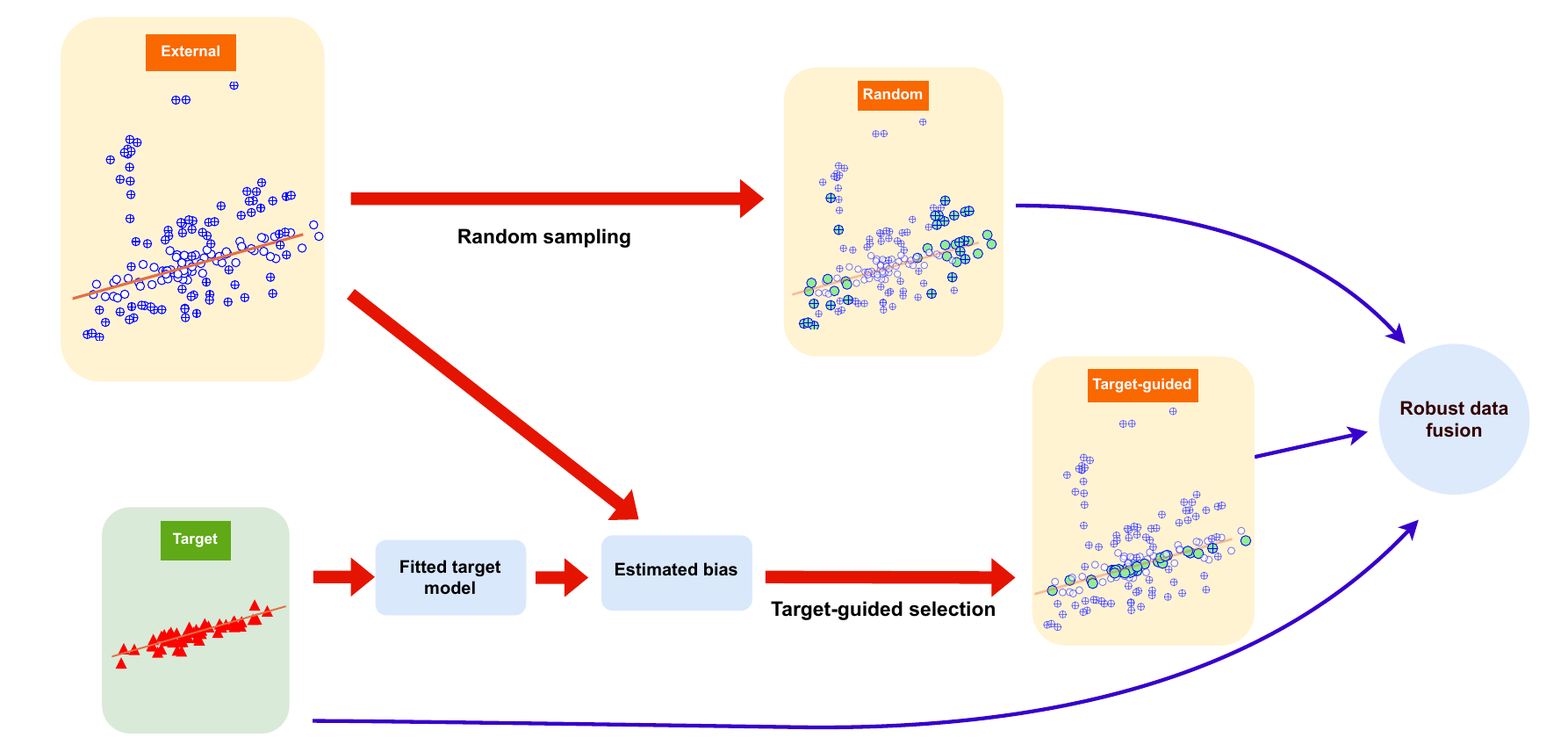}
  \caption{Diagram of robust transfer learning with subsampling.}
  \label{fig:diag}
\end{figure}

The remainder of this paper is structured as follows. Section~\ref{sec:model}
provides an overview of transfer learning and subsampling, along with a formal
definition of the problem setup. In Section~\ref{sec:rtrans}, we discuss random
sampling strategies for transfer learning and present their theoretical
properties. Section~\ref{sec:dtrans} then introduces a target-guided data
selection method, accompanied by theoretical analysis. Computational aspects,
including combined estimators and practical implementation issues, are addressed
in Section~\ref{sec:computation}. Section~\ref{sec:numeric} presents numerical
experiments using both simulated and real-world data to validate our
approach. We conclude the paper in Section~\ref{sec:conclusion}. Detailed proofs
of all the theoretical results are provided in the supplementary material.

\section{Transfer Learning under Source Contamination}\label{sec:model}

\subsection{Overview of Transfer Learning and Subsampling}

We start by providing a brief review of transfer learning and
subsampling. Transfer learning has garnered significant interest in the
statistical community. \cite{chen2015data} proposed a framework for transfer
learning in linear regression involving one target and one external data. They
introduced a fusion estimator that pools the target and external data, adding a
shrinkage penalty on the parameters of external data. \cite{li2022transfer}
investigated high-dimensional
linear regression with multiple external sources. Their algorithms include two
steps: first, train a fusion model with all available data; second, de-bias
using the target data with $\ell_1$ penalization. This methods was extended to
generalized linear models in \cite{tian2023transfer} and high-dimensional
quantile regression in \cite{li2022transfer}. All these works assumed a
similarity condition that
the target and the sources have models with similar parameters. Another more recent and 
potentially weaker assumption is that the target model can be well approximated by a
mixture of the source models, under which group distributional robust optimization
\citep{Hu2018} and agnostic federated learning methods \citep{Mohri2019} have been
developed for transfer learning.

Subsampling is another rapidly growing field. \cite{Drineas:11} proposed to use
uniform sampling to accelerate least-squares approximation. \cite{Ma2014} and
\cite{ma2015statistical} studied leverage score subsampling for linear
regression from the statistical aspect. \cite{WangZhuMa2017} proposed optimal 
subsampling probabilities for
logistic regression, and \cite{ai2019optimal} extended these methods to
generalized linear models. Besides random subsampling, deterministic selection
methods also are major focuses. \cite{WangYangStufken2018} proposed an
information-based optimal subdata selection scheme named IBOSS, using ideas from
optimal design of experiments. \cite{cheng2020information} extended IBOSS to
logistic regression,
and \cite{singh2023subdata} developed deterministic data selection with a large
number of variables.

Despite extensive research on subsampling algorithms, few studies consider
subsampling in the presence of outliers. We aim to address this gap within
the context of regression with potential mean shifts. 
\cite{she2011outlier} and \cite{SheChen2017} investigated robust regression with
outlier detection using penalized mean-shift
models, connecting mean-shift penalization to robust regression,
particularly Huber regression \citep{huber1983minimax} when using $\ell_1$
penalization. Theoretical properties of mean-shift models
were also studied in \cite{beyhum2020inference}. From the perspective of robust
regression, \cite{sun2020adaptive} provided theoretical results for Huber
regression, which is equivalent to linear regression with mean shifts and $\ell_1$
penalization.

\subsection{Problem Setup \& General Form of the Estimators}

We begin with defining some notations. We use $\|\bv\|_p$ and $\|\A\|_p$ to represent the
$p$-norm of a vector $\bv$ and a matrix $\A$, respectively. For a fixed matrix $\A$, we
use notation $\|\bv\|_{\A,2}=\sqrt{\bv\tp\A\bv}$ to represent the 2-norm according to
a matrix $\A$. %
We define $a\maxs b=\max\{a,b\}$ and $a\mins b=\min\{a,b\}$. For any two sequences $a_n$ and $b_n$, we use
$a_n=O(b_n)$ and $a_n\lesssim b_n$ to denote $|a_n/b_n|\le C$ for some positive constant
$C$ when $n$ is large enough; we use $a_n=o(b_n)$ to denote $|a_n/b_n|\to 0$ as
$n\to\infty$.

We consider the scenario that we have data from a target and an external
source. The target data, denoted as $\cS$, is with a relatively small sample size $\Ns$ from the population of interest; the external source
data, denoted as $\cB$, is with a relatively much larger sample size $\Nb$ from a relevant yet
different population, thus subjecting to data contamination with respect to the target data. 
Our main goal is properly utilize the source data to improve the estimation and
prediction for the target population. With some abuse of notations, we also use
$\cS = \{1,\ldots, \Ns\}$ to represent the target data index set and
$\cB=\{\Ns+1,\ldots, \Ns+\Nb\}$ to represent the external data index set. 

To focus on the main idea, consider a linear regression model on the target: %
\begin{equation}
y_t=\x_t\tp\tbeta+\varepsilon_t,\qquad t\in\cS,\label{eq:targetmodel}
\end{equation}
where $y_t$ is the response, $\x_t \in \mathbb{R}^{d}$ is the covariate vector, 
$\tbeta\in\mathbb{R}^d$ is the true coefficient vector of interest, and
$\varepsilon_t$'s are independent and identically distributed
(i.i.d.) random errors with mean zero. For the ``contaminated'' source, we
assume a mean-shift model, 
\begin{equation}\label{eq:sourcemodel}
y_e=\x_e\tp\tbeta+\gamma_e+\varepsilon_e,\qquad e\in\cB,
\end{equation}
where $\tbeta$ remains the same as in~\eqref{eq:targetmodel}, $\gamma_e$'s are
the subject-specific mean-shift parameters, and $\varepsilon_e$'s are i.i.d. random
errors with mean zero.
We introduce some notations to facilitate
the presentation. Let $\Xs=(\x_1\tp, \x_2\tp, \ldots, \x_t\tp, \ldots)\tp$ and
$\ys= (y_1, y_2, \ldots, y_t, \ldots)\tp$ for $t\in\cS$.  For the external data,
let
$\Xbs=( \ldots, {\x_e}\tp, \ldots)\tp$, $\ybs= (\ldots, y_e, \ldots)\tp$,
$\bgamma_{\cBs}=(\ldots, \gamma_e, \ldots)\tp$, and $\ep_{\cBs}=(\ldots, \varepsilon_e,
\ldots)\tp$, for $e\in\cBs$, where $\cBs$ is a subset of index set $\cB$. A
special case is when $\cBs=\cB$ where the matrices and vectors above represent
the full external data matrices and vectors.

A key aspect of the above setup is that the potential similarity and discrepancy
between the target model and the source model are characterized by the unknown
mean-shift terms ($\gamma_e$). The target and the source are more similar when,
for example, the true values of $\gamma_e$ are sparse or of small magnitude, and
vise versa. This is drastically different from the typical ``similarity
assumption'' in transfer learning where the target and the source are assumed to
have exactly the same model structure with similar yet ``controllably'' different
regression coefficient vectors. Our setup is much more realistic and flexible in that the
connection between the target and the source is reflected at the individual
observation level. This allows the presence of arbitrary anomalies in the
source that may easily violate the similarity assumption. The potential value of
the source data can be flexibly controlled by some structural or distributional
assumptions on the mean-shift parameters; we defer detailed technical
discussions to Sections~\ref{sec:rtrans} and~\ref{sec:dtrans}.

It is clear that naively using all data to fit the regression model
in~\eqref{eq:targetmodel} can fail miserably: while observations in the source
with none or small biases (as reflected in the magnitude of the mean-shift parameters
$\gamma_e$) can help reduce the estimation variance of $\tbeta$, observations with large
biases may lead to a substantial estimation bias that overrides any potential gain.

To enable transfer learning, we propose to consider subsampling of the source
data, coupled with a weighted and regularized fusion learning with combined
data. Specifically, we will investigate several different strategies of
sampling/selecting a subset of $\cB$, denoted as $\cBs$, that could potentially lead to a
better bias-variance trade-off and thus improve transfer learning.  With the
target data and the selected source data, we will construct sampling weights to
account for subsampling and adopt regularized estimation to alleviate the impact
of the potential mean shifts of the selected source data.

As such, the transfer
learning estimators of our interest take the following general form: 
\begin{equation}\label{eq:ftrans}
(\hbeta\tp,\hgamma_{\cBs}\tp)\tp=\arg\min_{\bbeta,\bgamma_{\cBs}}
\left\{\sumcs(y_t-\x_t\tp\bbeta)^2+\sumcbs w_e(y_e-\x_e\tp\bbeta-\gamma_e)^2
+\lambda\sum_{e\in\cBs} w_e P(\gamma_e)\right\},
\end{equation}
where $P(\cdot)$ is an even penalty
function and $\lambda$ is a tuning parameter. A special case
of~\eqref{eq:ftrans} is when $\cBs=\cB$ and $w_e=1,e\in\cB$; this estimator
fuses all the external source data with the target data, so we
refer it as the full-data transfer learning estimator.

The idea of mean-shift regularization has been adopted for outlier
detection and robust regression with a single data set \citep[e.g.,][]{she2011outlier}.
It has been shown that the resulting estimator is equivalent to a class
of robust estimators, including Huber regression
\citep{she2011outlier}. The following proposition shows that this connection remains for the transfer learning
estimators in~\eqref{eq:ftrans}. 

\begin{proposition}\label{pro:thr}
For any $\lambda>0$
and penalty function $P(\cdot)$, the $\hbeta$
in~\eqref{eq:ftrans} is the minimizer of 
\begin{equation}\label{eq:fhuber}
\sumcs(y_t-\x_t\tp\bbeta)^2
+\sumcbs w_e\mathcal{H}\left(y_e-\x_e\tp\bbeta;\lambda\right),
\end{equation}
where $\mathcal{H}(z;\lambda)=2\int_0^z\psi(u;\lambda)\dd
u=2\int_0^z\left\{u-\Theta(u;\lambda)\right\}\dd u$, and
$\Theta(z;\lambda)=\argmin_{\gammae\in\mathbb{R}}\{(z-\gamma)^2+\lambda P(\gamma)\}$.
\end{proposition}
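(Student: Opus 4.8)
The plan is to \emph{profile out} (concentrate) the mean-shift parameters $\bgamma_{\cBs}$ for a fixed $\bbeta$, and then show that the resulting concentrated objective coincides with~\eqref{eq:fhuber}. The key structural observation is that, in the objective of~\eqref{eq:ftrans}, each $\gamma_e$ appears only in the two terms indexed by that same $e$, namely $w_e(y_e-\x_e\tp\bbeta-\gamma_e)^2$ and $\lambda w_e P(\gamma_e)$, while the target sum $\sumcs(y_t-\x_t\tp\bbeta)^2$ does not involve $\bgamma_{\cBs}$ at all. Hence, for any fixed $\bbeta$, the minimization over $\bgamma_{\cBs}$ decouples across $e\in\cBs$ and can be performed coordinatewise.

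First I would write $z_e=y_e-\x_e\tp\bbeta$ for the current residual and factor out the common weight $w_e>0$, so that the per-coordinate problem becomes $\min_{\gamma_e}\{(z_e-\gamma_e)^2+\lambda P(\gamma_e)\}$, whose minimizer is exactly $\Theta(z_e;\lambda)$ by definition and whose optimal value I denote $Q(z):=\min_{\gamma}\{(z-\gamma)^2+\lambda P(\gamma)\}$. Substituting back, the joint minimization over $(\bbeta,\bgamma_{\cBs})$ reduces to minimizing the concentrated objective $\sumcs(y_t-\x_t\tp\bbeta)^2+\sumcbs w_e\,Q(y_e-\x_e\tp\bbeta)$ over $\bbeta$ alone, with $\hat{\gamma}_e=\Theta(y_e-\x_e\tp\hbeta;\lambda)$ recovered afterwards. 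It therefore remains only to identify the scalar value function $Q$ with the loss $\mathcal{H}(\cdot;\lambda)$, since then the concentrated objective is literally~\eqref{eq:fhuber} and $\hbeta$ is its minimizer.

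To establish $Q(z)=\mathcal{H}(z;\lambda)$ I would invoke the envelope theorem, and the hard part will be handling the possible nonsmoothness or nonconvexity of $P$ in $\gamma$. The point that rescues the argument is that the envelope theorem only requires differentiability in the \emph{parameter} $z$, not in the optimization variable $\gamma$: since $(z-\gamma)^2+\lambda P(\gamma)$ has $z$-derivative $2(z-\gamma)$, wherever $\Theta(z;\lambda)$ is single-valued one gets $Q'(z)=2\{z-\Theta(z;\lambda)\}=2\psi(z;\lambda)$. To control the regularity of $Q$ globally, I would expand $(z-\gamma)^2=z^2-2z\gamma+\gamma^2$ to write $Q(z)=z^2-\max_{\gamma}\{2z\gamma-\gamma^2-\lambda P(\gamma)\}$, which exhibits $Q(z)-z^2$ as a concave function of $z$; hence $Q$ is locally Lipschitz and differentiable almost everywhere, so the fundamental theorem of calculus gives $Q(z)=Q(0)+\int_0^z 2\{u-\Theta(u;\lambda)\}\,\dd u$. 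Finally, because $P$ is an even penalty with $P(0)=0\le P(\gamma)$, the minimizer at $z=0$ is $\Theta(0;\lambda)=0$ and $Q(0)=0$, which yields $Q(z)=2\int_0^z\{u-\Theta(u;\lambda)\}\,\dd u=\mathcal{H}(z;\lambda)$; in any case a nonzero $Q(0)$ would add only the $\bbeta$-independent constant $\sumcbs w_e\,Q(0)$ and would not affect the minimizer $\hbeta$. Substituting this identity into the concentrated objective completes the proof.
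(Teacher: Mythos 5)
Your proof is correct, and it takes a genuinely different route from the paper's. The paper argues through first-order conditions, following Proposition 4.1 of She and Owen (2011): it writes the stationarity equations of the joint problem \eqref{eq:ftrans} --- the normal equation for $\bbeta$ together with $\hgamma_{\cBs}=\Theta(\ybs-\Xbs\hbeta;\lambda)$ --- substitutes $\Theta=\mathrm{id}-\psi$, and checks that this yields $\Xs\tp(\ys-\Xs\hbeta)+\Xbs\tp\W_{\cBs}\psi(\ybs-\Xbs\hbeta;\lambda)=\0$, precisely the KKT condition of \eqref{eq:fhuber}; the same algebra simultaneously delivers the fixed-point representation of Proposition~\ref{pro:thr2} used by the algorithm. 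You instead concentrate out $\bgamma_{\cBs}$ exactly, reducing \eqref{eq:ftrans} to minimizing $\sumcs(y_t-\x_t\tp\bbeta)^2+\sumcbs w_e\,Q(y_e-\x_e\tp\bbeta)$ with $Q(z)=\min_{\gamma}\{(z-\gamma)^2+\lambda P(\gamma)\}$, and then identify $Q$ with $\mathcal{H}(\cdot;\lambda)$ (up to a $\bbeta$-independent constant) by an envelope/Danskin argument: $Q(z)-z^2$ is concave, hence $Q$ is locally Lipschitz and differentiable a.e.\ with $Q'=2\psi$, and the fundamental theorem of calculus closes the identity. What each buys: your argument works at the level of global minimizers, so it proves the proposition as literally stated even when $P$ (hence $\mathcal{H}$) is nonconvex or nonsmooth, whereas the paper's KKT matching, read strictly, identifies $\hbeta$ only as a stationary point of \eqref{eq:fhuber}; your observation that a nonzero $Q(0)$ merely shifts the objective by a constant also frees the result from the normalization $P(0)=0$. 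The paper's route, in exchange, is algebraically shorter and doubles as the derivation of the computational iteration in Section~\ref{sec:computation}. Two small points you should make explicit, both implicit in the paper's definition of $\Theta$: you need $P\ge 0$ (or at least $P$ bounded below and lower semicontinuous) so that the inner minimum is attained and $\max_{\gamma}\{2z\gamma-\gamma^2-\lambda P(\gamma)\}$ is finite; and on the a.e.\ set where $Q$ is differentiable the argmin is automatically a singleton, so the identity $Q'=2\psi$, and hence $\mathcal{H}$ itself, does not depend on which measurable selection of $\Theta$ is used.
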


Proposition~\ref{pro:thr} extends Proposition 4.1 of~\cite{she2011outlier} to a
more general scenario; the latter is a special
case when there is no target data set, $\cBs=\cB$, and $w_e^{*}=1$ for all
$e\in\cB$. The result shows that $\hbeta$
defined in~\eqref{eq:ftrans} can be obtained by
minimizing a robust loss function that involves
only $\bbeta$. For example, if $P(\gamma)=2|\gamma|$, the $\ell_1$ penalty, then
$\Theta(z;\lambda)=(z-\lambda)I(z>\lambda)+(z+\lambda)I(z<-\lambda)$ and
$\mathcal{H}(z;\lambda)=z^2I(|z|\le\lambda)+(2\lambda|z|-\lambda^2)I(|z|>\lambda)$. The
resulting $\Theta(z;\lambda)$ is the soft thresholding function and  $\mathcal{H}(z;\lambda)$ is Huber's loss widely
used in robust statistics. Another important special case is
$P(\gamma)=\gamma^2$, the $\ell_2$ penalty, for which
 $\Theta(z;\lambda)=z/(1+\lambda)$ and $\mathcal{H}(z;\lambda)=\lambda
z^2/(1+\lambda)$. This is related to shrinkage estimator and ridge
regression. Although Proposition~\ref{pro:thr} holds for general penalty
functions, we focus on the fundamental $\ell_1$ and $\ell_2$ penalties.
Specifically, we specify
$$
P(\gamma)=2\nu^{-1}|\gamma|^{\nu},
$$
where $\nu=1$ or $2$ corresponds to the $\ell_1$ or $\ell_2$ penalty,
respectively.

\subsection{Subsampling Strategies}

With the form of the transfer learning estimators defined in~\eqref{eq:ftrans}, the main 
puzzle is to determine $\cBs$, i.e., selecting data points from the external 
source $\cB$ to balance the bias-variance trade-off. Figure~\ref{fig:exam} illustrates 
three approaches of data selection: (a) leverage-based random sampling, (b) target-guided selection, 
and (c) the optimal subsampling method under the A-optimality criterion
(OSMAC) \citep{ai2019optimal}. 

\begin{figure}[H]
  \centering
  \begin{subfigure}{0.325\textwidth}
    \includegraphics[width=\textwidth]{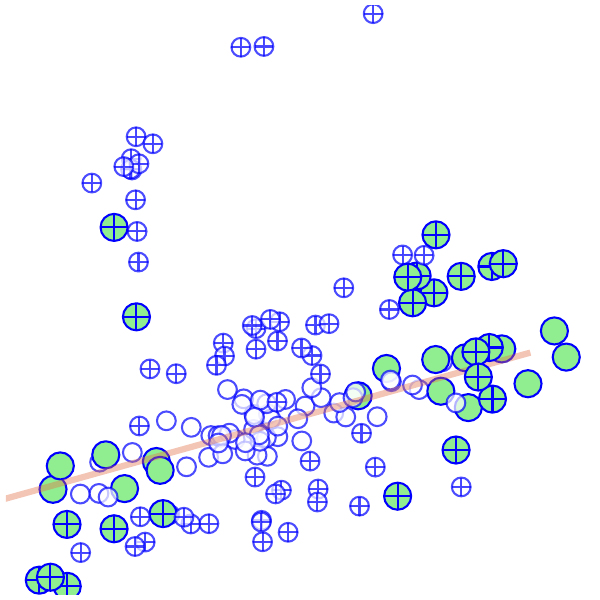}
    \caption{Leverage-based selection}
  \end{subfigure}
  \begin{subfigure}{0.325\textwidth}
    \includegraphics[width=\textwidth]{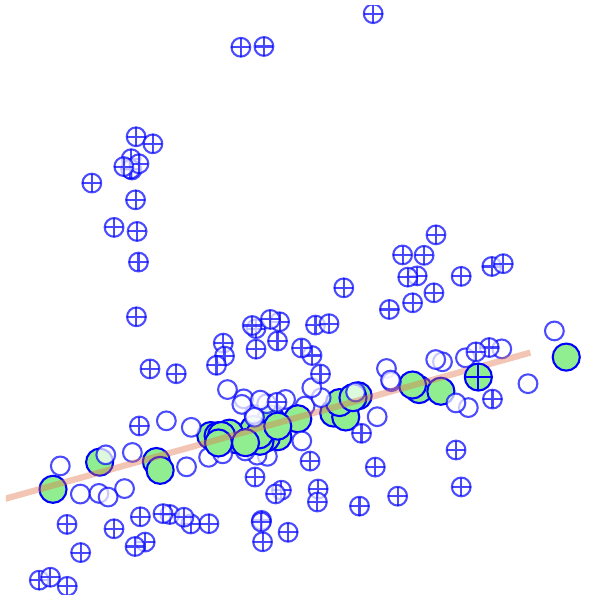}
    \caption{Target-guided selection}
  \end{subfigure}
  \begin{subfigure}{0.325\textwidth}
    \includegraphics[width=\textwidth]{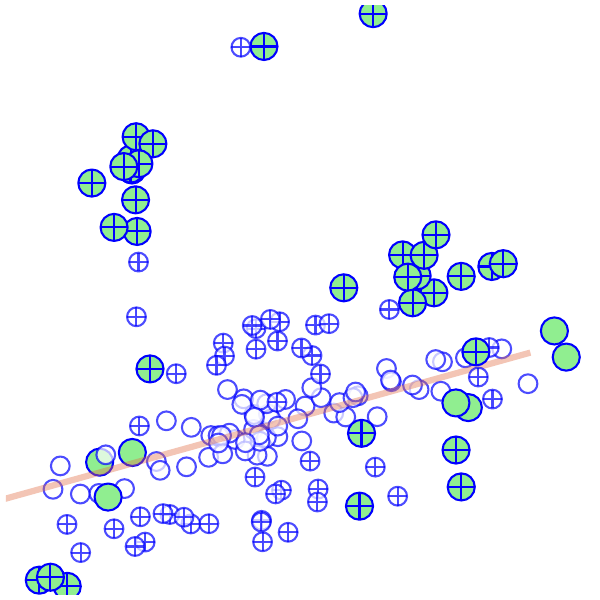}
    \caption{OSMAC selection}
  \end{subfigure}
  \caption{Illustration of data selection strategies. Unbiased observations ($\gamma=0$)
    that do not deviate away from the target data are represented by hollow
    circles (\protect\bcircle, \protect\gcircle) scattering around the true
    regression line (the red reference line). Biased observations
    ($\gamma\neq 0$) that deviate away from the target data are represented by
    the circles filled with ``+'' (\protect\bcircleplus, \protect\gcircleplus).
    Selected data points are colored in green (\protect\gcircle,
    \protect\gcircleplus).}
  \label{fig:exam}
\end{figure}
The leverage-based random sampling \citep{MaSun:2015,Ma+EtAl:2015}, as shown in
Figure~\ref{fig:exam} (a), relies
only on the covariates to construct subsampling probabilities, focusing on selecting 
the most influential data points to reduce estimation variance. However, in the presence
of mean-shift outliers, leverage-based sampling may select biased points%
; as such, it alone may not be effective in reducing bias for transfer learning.

The target-guided selection, as illustrated in Figure~\ref{fig:exam} (b), aims
to select source data with less biases utilizing the target data. Specifically,
a natural estimator of $\gamma_e$, the amount of mean shift or bias in a source
data point, is $y_e-\x_e\tp\hbeta_{\cS}$, where $\hbeta_{\cS}$ is the
least-squares estimator of $\tbeta$ from fitting the target data. The procedure
then uses the order statistics of $|y_e-\x_e\tp\hbeta_{\cS}|$ to select data
points with smaller biases.  However, the performance of this method relies on
the quality of the target-only estimator $\hbeta_{\cS}$, which may have a large
variance due to the small sample size of the target data. As such, the resulting
subsamples may not be effective in reducing the estimation variance.

Optimal subsampling is a widely used method to reduce the asymptotic
variance of the subsampling estimator \citep[see][]{Ma+EtAl:2015, WangZhuMa2017,
ai2019optimal}. Figure~\ref{fig:exam} (c) illustrates the selection by OSMAC
\citep{ai2019optimal}. Optimal subsampling probabilities are constructed under
the assumption of no outliers, and they are approximately proportional to
$|y_e-\x_e\tp\tbeta|$. This suggests that the method favors the selection of the
outliers and may jeopardize transfer learning. We have conducted a simulation
study to further illustrate this; see Section~\ref{sec:toy} in the supplementary material. The
results show that OSMAC performs even worse than uniform sampling estimator and
the target-only estimator.

It now becomes intuitive that while optimal subsampling such as OSMAC may fail 
miserably
with contaminated data, both random sampling and target-based selection are subject to a
bias-variance tradeoff. A thorough investigation of their behaviors sheds light on the
best strategies of conducting transfer learning with a contaminated and heterogeneous source.
In the sequel,
we shall thoroughly investigate the two general strategies, namely, random sampling and
target-guided selection, as well as their adaptive integration, in conjunction with
transfer learning.

\section{Transfer Learning with Random Sampling}\label{sec:rtrans}

In this section, we study random sampling for
transfer learning. We describe the random sampling procedure in
Section~\ref{sec:rand} and present the theoretical property of the transfer
learning estimator based on the selected data. We then discuss optimal sampling
probabilities, leveraging the insights from our theoretical analysis in
Section~\ref{sec:optpi}.

\subsection{Poisson Subsampling and the Resulting Estimator}
\label{sec:rand}
Non-uniform random sampling is widely used in the subsampling literature to
reduce variance, as it assigns different sampling probabilities to individual
data points, allowing for the selection of a more informative subset. In
alignment with this approach, we adopt non-uniform Poisson sampling, which is
frequently employed in subsampling due to its higher statistical efficiency
compared to sampling with replacement \citep[see][]{wang2019more,
wang2022sampling}.
Let our objective be to sample a nominal size of $r = \rho \Nb$ data points from
the set $\cB$ based on sampling probabilities $\{\pi_e\}_{e=1}^{\Nb}$, where $0
< \pi_e \leq 1$ and $\sumcb\pi_e = r$. The Poisson sampling algorithm proceeds
as follows:

\begin{algorithm}[H]%
  \caption{Poisson subsampling}
  \label{alg:pois}
  \begin{algorithmic}[1]
    \STATE For $e=1,...,\Nb$:
    \STATE Generate $u_e\sim U[0,1]$; 
    \IF {$u_e\le \pi_e$}
    \STATE include $(y_e,\x_e)$ and record $\pi_e$ in the subsample;
    \ENDIF
  \end{algorithmic}
\end{algorithm}

The actual subsample size $r^{*}$ resulted from Algorithm~\ref{alg:pois} is
random with $\Exp(r^{*})=r$. Denote the index set $\cBr=\{e\in\cB:\delta_e=1\}$,
where $\delta_e=I(u_e\le \pi_e)$. We propose a transfer learning estimator
through inverse probability weighting (IPW) based on $\cBr$ by taking $\cBs=\cBr$,
$w_e=\rho\pi_e^{-1}$ and $P(\gamma_e)=2\nu^{-1}|\gamma_e|^{\nu}$
in~\eqref{eq:ftrans}, which has the following form:  
\begin{align}\label{eq:rtrans}
(\hbetaw\tp,\hgamma_{\cBr}\tp)\tp
&=\argmin_{\bbeta,\bgamma_{\cBr}}
\left\{\sumcs(y_t-\x_t\tp\bbeta)^2
+\sum_{e\in\cBr}\frac{\rho(y_e-\x_e\tp\bbeta-\gamma_e)^2}{\pi_e}
+\lambda\sum_{e\in\cBr}\frac{2\rho|\gamma_e|^{\nu}}{\nu\pi_e}\right\}.
\end{align}
The IPW is a widely used technique in
subsampling literature \citep[e.g.,][]{ai2019optimal,wang2022sampling}. Here, we
also use weighted penalties for robust estimation. From
Proposition~\ref{pro:thr}, we know that $\hbetaw$ in~\eqref{eq:rtrans}
is the minimizer of
\begin{equation*}
\sumcs(y_t-\x_t\tp\bbeta)^2+\sum_{e\in\cBr}\frac{
\rho\mathcal{H}(y_e-\x_e\tp\bbeta;\lambda)}{\pi_e},
\end{equation*}
where $\mathcal{H}(z;\lambda)=z^2I(|t|\le\lambda)
+(2\lambda|z|-\lambda^2)I(|z|>\lambda)$ for the $\ell_1$ penalty and
$\mathcal{H}(z;\lambda)=\lambda z^2/(1+\lambda)$ for the $\ell_2$ penalty. Therefore, our IPW transfer learning estimator
is the minimizer of an objective function that 
combines a least-squares
objective function on $\cS$ and a weighted robust objective function on $\cBr$. %

\subsection{Theoretical Analysis}\label{sec:rtheory}
In this section, we investigate the theoretical aspects of random sampling, starting with 
presenting some general assumptions. %
\begin{assumption}\label{asm:xsubg}
The covariate $\x$ in both $\cS$ and
$\cB$ satisfy that $\Pr(|\bu\tp\tx|\geq z)\le 2e^{-z^2\|\bu\|_2^2/C_0^2}$ for all
$z\in\mathbb{R}$ and $\bu\in\mathbb{R}^d$, where $\tx=\bSigma^{-\frac{1}{2}}\x$,
$\bSigma=\Exp(\x\x\tp)$, and $C_0>0$ is a constant.
\end{assumption}
\begin{assumption}\label{asm:eps}
The error term $\varepsilon_t,t\in\cS$ are i.i.d. satisfying
$\Exp(\varepsilon_t)=0$, $\Var(\varepsilon_t)=\sigma_{\cS}^2$, and
$\Exp(|\varepsilon_t|^3)<\infty$; and the error term $\varepsilon_e,e\in\cB$ are
i.i.d. with a symmetric density $\feps(x)$, satisfying
$\Exp(|\varepsilon_e|^{1+\alpha_1})<\infty$ for some $0<\alpha_1\le 1$.    
\end{assumption}
\begin{assumption}\label{asm:bpi}
  The subsampling probabilities satisfy that
$\min_{e\in\cB}\pi_e\geq\mpi\rho$ for some constant $\mpi>0$.
\end{assumption}

Assumption~\ref{asm:xsubg} requires that the covariate distribution is
sub-Gaussian. A similar assumption is also used
in~\cite{sun2020adaptive} for theoretical analysis of Huber regression. In our
framework, we allow the covariates in the target data and external data to
follow different distributions.
Assumption~\ref{asm:eps} permits different error distributions for the target and external
data. When $\alpha_1=1$, the error terms $\varepsilon_e$ possess the
second-order moment, which forms a standard condition for establishing
asymptotic results for least-squares estimators. In contrast, when $\alpha_1<1$,
the error terms may not have a finite second-order moment, implying that
$\varepsilon_e$ exhibit heavy-tailed behavior, in contrast to the random errors
in $\cS$. This allows the external data to have larger variation than the
  source data even when there are no outliers.
Assumption~\ref{asm:bpi} is widely adopted in subsampling literature 
\citep[e.g.,][]{yu2020optimal, wang2022sampling, ZhangZuoWangSun2024}.

We first define some notation to ease the presentation on non-asymptotic bounds
of $\hbetaw$:
\begin{align}
&\vw_{\varepsilon,\alpha_1}^{\wtd}=
  \frac{1}{\Nb}\sumcb\Exp\left(\frac{|\varepsilon_e|^{1+\alpha_1}}{\pi_e}\Big|\x_e\right), \label{eq:vew}\\
&\vw_{\gammae,\alpha_1}^{\wtd}=\frac{1}{\Nb}\sum_{e\in\cO}\frac{|\gamma_e|^{1+\alpha_1}}{\pi_e},\text{
  and}\label{eq:vgw}\\
&\vuw_{\gammae,\alpha_2}=\frac{1}{\Nb}\sum_{e\in\cO}|\gamma_e|^{1-\alpha_2},\label{eq:vg}
\end{align}
where $\cO=\{e\in\cB:\gamma_e\neq 0\}$.
We now present a non-asymptotic bound for $\hbetaw$ with $\ell_1$
penalization in the following theorem.

\begin{theorem}\label{thm:rtrans}
Under Assumptions~\ref{asm:xsubg}-\ref{asm:bpi}, for any $\varsigma>0$ and
$0<\alpha_1,\alpha_2\le 1$, the estimator $\hbetaw$ 
in~\eqref{eq:rtrans} with $\nu=1$ and
$\lambda=
\left\{(\mpi r)/\varsigma\right\}^{\frac{1}{1+\alpha_1}}$
satisfies that
\begin{equation*}
\|\hbetaw-\tbeta\|_{\bSigma,2}\lesssim \frac{\Ns}{\Ns+r}I_{\cS}
+\frac{r}{\Ns+r}I_{\cB}+\frac{r}{\Ns+r}I_{\gammae},
\end{equation*}
with probability at least $1-Ce^{-\varsigma}$, if
\begin{align}
  &\Ns\gtrsim e^{2\varsigma}, \label{eq:1}\\
  &r\gtrsim \vw_{\gammae,\alpha_1}^{\wtd}\varsigma, \text{ and}\label{eq:2}\\
  &(\Ns+r)r^{\frac{-\alpha_1-\alpha_2}{1+\alpha_1}}
    \gtrsim \vuw_{\gammae,\alpha_2}\varsigma^{\frac{1-\alpha_2}{1+\alpha_1}},\label{eq:3}  
\end{align}
where 
\begin{align*}
I_{\cS}=\sigma_{\cS}\sqrt{d}\left(\frac{\varsigma}{\Ns}\right)^{\frac{1}{2}},
I_{\cB}=(\vw_{\varepsilon,\alpha_1}^{\wtd}\maxs 1)\sqrt{d}
\left(\frac{\varsigma}{r}\right)^{\frac{\alpha_1}{1+\alpha_1}},\text{
  and}\\
  I_{\gammae}
  :=I_{\gammae}^{\textrm{var}} + I_{\gammae}^{\textrm{bias}}
  :=(\vw_{\gammae,\alpha_1}^{\wtd}\maxs 1)\sqrt{d}\left(
\frac{\varsigma}{r}\right)^{\frac{\alpha_1}{1+\alpha_1}}
  +\vuw_{\gammae,\alpha_2}
  \left(\frac{\mpi r}{\varsigma}\right)^{\frac{\alpha_2}{1+\alpha_1}},
\end{align*}
and $C$ is a constant that does not depend on $\Ns$ or $\Nb$.
\end{theorem}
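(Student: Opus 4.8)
The plan is to use Proposition~\ref{pro:thr} to recast $\hbetaw$ as the minimizer of a single convex loss in $\bbeta$ alone,
\[
L(\bbeta) = \sumcs (y_t - \x_t\tp\bbeta)^2 + \sumcbr \frac{\rho}{\pi_e}\,\mathcal{H}(y_e - \x_e\tp\bbeta; \lambda),
\]
with $\mathcal{H}$ the Huber loss and score $\psi(z;\lambda) = z - \Theta(z;\lambda)$ clipped at level $\lambda$. Because $L$ is convex, it suffices to run the standard localized-convexity argument: taking $R$ equal to the claimed bound, I would show that with probability at least $1 - Ce^{-\varsigma}$ the directional derivative $\langle \nabla L(\tbeta + \bDelta), \bDelta\rangle$ is strictly positive for every $\bDelta$ with $\|\bDelta\|_{\bSigma,2} = R$, which by convexity forces $\hbetaw$ into the ball $\|\bbeta - \tbeta\|_{\bSigma,2} \le R$. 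Writing $\langle \nabla L(\tbeta+\bDelta),\bDelta\rangle = \langle \nabla L(\tbeta+\bDelta) - \nabla L(\tbeta),\bDelta\rangle + \langle \nabla L(\tbeta),\bDelta\rangle$ splits the task into a curvature lower bound and a score upper bound; on $\cS$ the residual at $\tbeta$ is $\varepsilon_t$ and on $\cBr$ it is $\gamma_e + \varepsilon_e$, which is where the mean shifts enter.

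For the curvature term I would establish a restricted strong convexity bound of the form $\langle \nabla L(\tbeta+\bDelta) - \nabla L(\tbeta), \bDelta\rangle \gtrsim (\Ns + r)\|\bDelta\|_{\bSigma,2}^2$. The target contributes the Gram matrix $\sumcs \x_t\x_t\tp$, which concentrates around $\Ns\bSigma$ by the sub-Gaussian Assumption~\ref{asm:xsubg}. The source contributes only through points whose residuals lie inside the Huber window $[-\lambda,\lambda]$, since $\mathcal{H}$ has vanishing curvature outside it; the clipping therefore discards the outliers automatically, and the usable curvature must come from the clean points ($\gamma_e = 0$) together with the target. I would control the Poisson indicators $\delta_e$ and the inverse-probability weights $\rho/\pi_e$ --- bounded by $\mpi^{-1}$ under Assumption~\ref{asm:bpi} --- by a Bernstein argument, and use that $\lambda$ is large enough relative to the clean residuals so that, for $\|\bDelta\|_{\bSigma,2} \le R$, a constant fraction of sampled clean points keep $|\varepsilon_e - \x_e\tp\bDelta| \le \lambda$. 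This self-consistent localization, where $R$ must itself be small enough for the curvature region to contain the ball it certifies, is exactly what conditions~\eqref{eq:2} and~\eqref{eq:3} guarantee, and I expect it to be the main obstacle: the curvature is driven by a random, residual-dependent subset of the sample, so coupling the Poisson sampling, the Huber truncation, and arbitrary $\gamma_e$ requires care.

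For the score term I would bound $|\langle \nabla L(\tbeta),\bDelta\rangle| \le \|\nabla L(\tbeta)\|_{\bSigma^{-1},2}\,\|\bDelta\|_{\bSigma,2}$ and decompose the dual norm into the three advertised pieces. The target part $\sumcs \x_t \varepsilon_t$ is a sum of independent mean-zero vectors; combining sub-Gaussian covariates with the third-moment condition of Assumption~\ref{asm:eps} yields, after dividing by the curvature, the $I_{\cS} = \sigma_{\cS}\sqrt{d}(\varsigma/\Ns)^{1/2}$ rate. The clean-error part $\sumcbr (\rho/\pi_e)\x_e\psi(\varepsilon_e;\lambda)$ is mean-zero by the symmetry of $\feps$, and the Huber clipping lets me use only the $(1+\alpha_1)$-st moment of $\varepsilon_e$; a truncated Bernstein bound gives the heavy-tail rate $(\varsigma/r)^{\alpha_1/(1+\alpha_1)}$ weighted by $\vw_{\varepsilon,\alpha_1}^{\wtd}$, i.e. $I_{\cB}$. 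The mean-shift part $\sumcbr (\rho/\pi_e)\x_e\psi(\gamma_e+\varepsilon_e;\lambda)$ I would split into its conditional mean plus a centered fluctuation: the fluctuation reproduces an $I_{\cB}$-type rate with $\vw_{\gammae,\alpha_1}^{\wtd}$, giving $I_{\gammae}^{\textrm{var}}$, while the conditional mean is the irreducible outlier bias. Applying the interpolation $|\psi(\gamma_e+\varepsilon_e;\lambda)| \le \lambda^{\alpha_2}|\gamma_e+\varepsilon_e|^{1-\alpha_2}$ with $\lambda^{\alpha_2} = (\mpi r/\varsigma)^{\alpha_2/(1+\alpha_1)}$ produces the bias term $I_{\gammae}^{\textrm{bias}} = \vuw_{\gammae,\alpha_2}(\mpi r/\varsigma)^{\alpha_2/(1+\alpha_1)}$.

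Finally, dividing the aggregated score bound $G$ by the curvature $\asymp (\Ns + r)$ yields $R \lesssim G/(\Ns+r)$, which after normalizing each summand gives the stated weights $\Ns/(\Ns+r)$ on $I_{\cS}$ and $r/(\Ns+r)$ on $I_{\cB}$ and $I_{\gammae}$. Condition~\eqref{eq:1} is used to make the target-side concentration and localization valid, while~\eqref{eq:2}--\eqref{eq:3} ensure the certified radius is small enough that the restricted strong convexity of the preceding paragraph holds on the whole ball, closing the argument with the claimed probability $1 - Ce^{-\varsigma}$.
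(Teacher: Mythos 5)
Your proposal is correct and follows essentially the same route as the paper's proof: a localized convexity argument (the paper implements your sphere argument via the interpolated point $\hbeta_{\eta}=\tbeta+\eta(\hbetaw-\tbeta)$ and the KKT condition), restricted strong convexity from Lemma~\ref{lem:convexfull} combined with Lemma~\ref{lem:lemofS} for the target Gram matrix and target score, and the same three-way decomposition of the score at $\tbeta$ bounded by Bernstein-type inequalities (Lemmas~\ref{lem:psirand2} and~\ref{lem:psifull}), with the identical choice of $\lambda$ and the same interpolation step producing the $\lambda^{\alpha_2}\vuw_{\gammae,\alpha_2}$ bias term, and with conditions~\eqref{eq:1}--\eqref{eq:3} closing the localization exactly as you describe. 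The one imprecision is that your ``mean-shift part'' should be the difference $\sumcb(\rho\delta_e/\pi_e)\{\psi_1(\gamma_e+\varepsilon_e;\lambda)-\psi_1(\varepsilon_e;\lambda)\}\x_e$, which is pointwise bounded by $\min(|\gamma_e|,2\lambda)\le(2\lambda)^{\alpha_2}|\gamma_e|^{1-\alpha_2}$ (equivalently, use $\Exp\{\psi_1(\varepsilon_e;\lambda)\}=0$ before interpolating), so that the bias involves only $|\gamma_e|^{1-\alpha_2}$ as in $\vuw_{\gammae,\alpha_2}$; interpolating $|\psi_1(\gamma_e+\varepsilon_e;\lambda)|$ directly against $|\gamma_e+\varepsilon_e|^{1-\alpha_2}$ would leave a non-vanishing $O(p_{\gamma})$ residue when the nonzero $\gamma_e$ are small.
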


Theorem~\ref{thm:rtrans} applies to general forms of bias terms. However, the
specific structure of these bias terms influences both the conditions and
corresponding bounds in the theorem. To facilitate our discussion of
Theorem~\ref{thm:rtrans}, we introduce Assumption~\ref{asm:gamma}, which
describes a mixed structure for the bias terms.

\begin{assumption}\label{asm:gamma}
The mean-shift parameters $\gamma_e$ are i.i.d. with the same distribution of
$\gamma=\tilde{\gamma}I(u\le p_{\gamma})$, where $\tilde{\gamma}$ is a random
variable, $u\sim Unif[0,1]$ is independent of $\tilde{\gamma}$, and $0<p_{\gamma}\le 1$ bounds
the probability of $\gamma$ being non-zero.    
\end{assumption}

In Assumption~\ref{asm:gamma}, the bias term $\gamma$ is modeled as a mixture of
zero and a nonzero random variable $\tilde{\gamma}$, with the mixture rate
controlled by $p_{\gamma}$. If $p_{\gamma} < 1$, there is a positive probability
that $\gamma_e = 0$ for some $e \in \mathcal{B}$; if $p_{\gamma} = 1$, all data
points in $\mathcal{B}$ deviate from the model for $\mathcal{S}$.

Conditions~\eqref{eq:1}, \eqref{eq:2}, and \eqref{eq:3} in
Theorem~\ref{thm:rtrans} require the sample sizes to be large enough for the
bound to hold with the exponential tail probability rate.
Condition~\eqref{eq:1} is a restriction for $\Ns$.
 Under Assumptions~\ref{asm:bpi} and~\ref{asm:gamma}, Condition~\eqref{eq:2}
 reduces to
 $r\gtrsim p_{\gamma}\Exp(|\tilde{\gamma}|^{1+\alpha_1})\varsigma$. This
 indicates that the required subsample size from the external data is affected
 by the both the outlier proportion $p_{\gamma}$ and the average magnitude of the
outliers $\Exp(|\tilde{\gamma}|^{1+\alpha_1})$. 
 Condition~\eqref{eq:3} puts a restriction on both $\Ns$ and $r$. When
 $\alpha_2=1$, $\vuw_{\gammae,\alpha_2}$ becomes the sample proportion of outliers in the
 external data, and Condition~\eqref{eq:3} reduces to
 $(\Ns+r)r^{-1}\gtrsim p_{\gamma}$ under Assumption~\ref{asm:gamma}. This is
 always satisfied since $p_{\gamma}<1$.

The error bound of $\hbetaw$ in Theorem~\ref{thm:rtrans} can be decomposed
into three sources. The $I_{\cS}$ in the first term can be regarded as the
uncertainty due to the randomness of the target data. The convergence rate of this
term is of $1/\sqrt{\Ns}$, which is the regular least-squares convergence
rate. The $I_{\cB}$ in the second term can be regarded as the uncertainty due to randomness of
the external data. If $\vw_{\varepsilon,\alpha_1}^{\wtd}$ is bounded in
probability, the convergence rate of $I_{\cB}$ is
$1/r^{\frac{\alpha_1}{1+\alpha_1}}$, which is the same as the
convergence rate of Huber regression estimator discussed in~\cite{sun2020adaptive}.
When the model error for the external data $\varepsilon_{\cB}$ is light-tailed with a finite second moment, 
we can set $\alpha_1=1$, and the rate becomes $1/\sqrt{r}$. When $\varepsilon_{\cB}$'s
distribution has a heavier tail, the convergence rate is slower than the
regular root rate, but $I_{\cB}$ still converges to 0. This means our method with
$\ell_1$ penalty is able to transfer useful information from outlier
contaminated external data with heavy-tailed model errors.

The $I_{\gammae}$ in the third term is the error due to the biases in the
external data, and it contains two parts $I_{\gammae}^{\textrm{var}}$ and
$I_{\gammae}^{\textrm{bias}}$. The first part $I_{\gammae}^{\textrm{var}}$ can
be regarded as the contribution of $\gamma_e$ to the variance, and it converges
to zero at the same rate as $I_{\cB}$ if $\vw_{\gammae,\alpha_1}^{\wtd}$ is
bounded in probability.  The second part $I_{\gammae}^{\textrm{bias}}$ can be
regraded as the contribution of $\gamma_e$ to the bias, and it may not converge
to zero without strong assumption on $\gamma_e$.  Note that under
Assumption~\ref{asm:gamma}, $\vuw_{\gammae,\alpha_2}\approx
p_{\gamma}\Exp(|\tilde{\gamma}|^{1-\alpha_2})$. If the outliers in the external data
are sparse, i.e., $p_{\gamma}$ is small, we can take $\alpha_2=1$ so that
$\vuw_{\gammae,\alpha_2}$ is not affected by the magnitude of the outliers at
all. In this case, $I_{\gammae}^{\textrm{bias}}\approx p_{\gamma}(\mpi
r/\varsigma)^{\frac{1}{1+\alpha_1}}$, and we need
$p_{\gamma}=o\{(\varsigma/r)^{\frac{1}{1+\alpha_1}}\}$ for it to go toward zero.  If
the outliers is not very sparse and $p_{\gamma}$ is bounded away from zero, then we
  have $I_{\gammae}^{\textrm{bias}}\approx\Exp(|\tilde{\gamma}|^{1-\alpha_2})(\mpi
r/\varsigma)^{\frac{\alpha_2}{1+\alpha_1}}$ and need
$\Exp(|\tilde{\gamma}|^{1-\alpha_2})=o\{(\varsigma/r)^{\frac{\alpha_2}{1+\alpha_1}}\}$
in order to have $I_{\gammae}^{\textrm{bias}}$ converge to zero.  This requires
the magnitudes of the biases to be sufficiently small if there are too many
outliers in the external data.

Our non-asymptotic error bound in Theorem~\ref{thm:rtrans} is more general than
existing results in the literature. For example, when $\alpha_2=1$, the error
bound can be rewritten as
\begin{equation}\label{eq:beyhumbound}
\|\hbetaw-\tbeta\|_{\bSigma,2}\lesssim\frac{\Ns}{\Ns+r}\Ns^{-\frac{1}{2}}
+\frac{r}{\Ns+r}(r^{-\frac{\alpha_1}{1+\alpha_1}}\maxs\lambda p_{\gamma}),
\end{equation}
because $s_{\gammae,\alpha_2=1}=p_{\gamma}$ and $\lambda=\{(\mpi
r)/\varsigma\}^{\frac{1}{1+\alpha_1}}$. Taking $\Ns=0$ and $r=\Nb$,
the estimator $\hbetaw$ reduces to an $\ell_1$-penalized mean-shift regression estimator,
and in this case, our error bound is similar
to the asymptotic error bound in \cite{beyhum2020inference}, which is
$O(\Nb^{-\frac{\alpha_1}{1+\alpha_1}} \maxs\lambda p_{\gamma})$. Our result
holds even if $\varepsilon_{\cB}$ is very heavy tailed while \cite{beyhum2020inference} requires
  $\varepsilon_{\cB}$ to be light-tailed. In addition, \cite{beyhum2020inference} does
  not cover the scenario that the outliers are not sparse.

Theorem~\ref{thm:rtrans} also indicates the
substantial difference of our study from the existing investigations in the
subsampling literature.
To illustrate this, we simplify the bound in Theorem~\ref{thm:rtrans} as
\begin{align}\label{eq:simpr}
  \|\hbetaw-\tbeta\|_{\bSigma,2}
  \lesssim\frac{\sqrt{\Ns\varsigma}}{\Ns+r}
  +\frac{r^{\frac{1}{1+\alpha_1}}\varsigma^{\frac{\alpha_1}{1+\alpha_{1}}}}{\Ns+r}
  +\frac{r^{1+\frac{\alpha_2}{1+\alpha_1}}}{(\Ns+r)\varsigma^{\frac{1}{1+\alpha_1}}}.
\end{align}
Note that the bound is a monotonically decreasing function of $\Ns$ but not a
monotonic function of $r$.  Therefore, increasing the subsample size of the
contaminated external data may not always increase the estimation efficiency of
$\hbetaw$, which is different from the results in existing subsampling
literature. For the transfer learning scenario we are considering, the subsample
size provides a bias-variance trade-off, and using the entire external data may
not result in the best performance of $\hbetaw$. This justifies the
importance of data selection in transfer learning.

Our analysis in Theorem~\ref{thm:rtrans} focuses specifically on the $\ell_1$
penalization. A similar error decomposition holds for the estimator
$\hbetaw$ with the $\ell_2$ penalization. The corresponding error bound is
expressed in the following theorem.

\begin{theorem}\label{thm:rtrans2}
Under Assumptions~\ref{asm:xsubg}-\ref{asm:bpi}, if
$\vw_{\varepsilon,2}^{\wtd}<\infty$, then for any $\lambda>0$, the estimator defined
in~\eqref{eq:rtrans} with $\nu=2$ satisfies that
\begin{equation*}
\|\hbetaw-\tbeta\|_{\bSigma,2}\lesssim
\frac{\Ns}{\Ns+c_{\lambda}r} I_{\cS}+\frac{c_{\lambda}r}{\Ns+c_{\lambda}r}I_{\cB}
+\frac{c_{\lambda}r}{\Ns+c_{\lambda}r}I_{\gammae},
\end{equation*}
with probability at least $1-Ce^{-\varsigma}$, when $\Ns\gtrsim e^{2\varsigma}$
and $\sqrt{r}\gtrsim \vw_{\gammae,2}^{\wtd}e^{\varsigma}$, 
where $c_{\lambda}=\lambda/(1+\lambda)$,
\begin{align*}
I_{\cS}=\sigma_{\cS}\sqrt{d}\left(\frac{\varsigma}{\Ns}\right)^{\frac{1}{2}},
I_{\cB}=\sqrt{\vw_{\varepsilon,1}^{\wtd}}\sqrt{d}
\left(\frac{\varsigma}{r}\right)^{\frac{1}{2}},
\text{ and }\\
I_{\gammae}:=I_{\gammae}^{\textrm{var}}+I_{\gammae}^{\textrm{bias}}:=\sqrt{\vw_{\gammae,1}^{\wtd}}\sqrt{d}
\left(\frac{\varsigma}{r}\right)^{\frac{1}{2}}+\vuw_{\gammae,0}.
\end{align*}
\end{theorem}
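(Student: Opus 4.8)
The plan is to exploit the special structure of the $\ell_2$ penalty, which collapses the estimator to a closed-form weighted least-squares solution and thereby bypasses the nonsmooth estimating-equation machinery needed for the $\ell_1$ case in Theorem~\ref{thm:rtrans}. By Proposition~\ref{pro:thr} with $\nu=2$, the profiled loss on the subsample is $c_{\lambda}\sum_{e\in\cBr}(\rho/\pi_e)(y_e-\x_e\tp\bbeta)^2$ with $c_{\lambda}=\lambda/(1+\lambda)$; equivalently, minimizing~\eqref{eq:rtrans} over $\bgamma_{\cBr}$ gives $\hat\gamma_e=(y_e-\x_e\tp\hbetaw)/(1+\lambda)$ in closed form. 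Hence $\hbetaw$ solves a linear normal equation, and writing $\M_{\cS}=\sumcs\x_t\x_t\tp$ and $\M_{\cBr}=\sum_{e\in\cBr}(\rho/\pi_e)\x_e\x_e\tp$ and substituting the models~\eqref{eq:targetmodel}--\eqref{eq:sourcemodel} yields the exact decomposition
\[
\hbetaw-\tbeta=(\M_{\cS}+c_{\lambda}\M_{\cBr})^{-1}\big(\bu_{\cS}+c_{\lambda}\bu_{\cB}+c_{\lambda}\bu_{\gammae}\big),
\]
where $\bu_{\cS}=\sumcs\x_t\varepsilon_t$, $\bu_{\cB}=\sum_{e\in\cBr}(\rho/\pi_e)\x_e\varepsilon_e$, and $\bu_{\gammae}=\sum_{e\in\cO\cap\cBr}(\rho/\pi_e)\x_e\gamma_e$. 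These three terms map directly onto $I_{\cS}$, $I_{\cB}$, and $I_{\gammae}$.

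Second, I would control the curvature. Under Assumption~\ref{asm:xsubg} and Assumption~\ref{asm:bpi} (so $\rho/\pi_e\le 1/\mpi$), a matrix concentration bound gives $\M_{\cS}\approx\Ns\bSigma$, and, taking the expectation over the Poisson indicators $\delta_e$ first (note $\Exp_{\delta}\M_{\cBr}=\rho\sumcb\x_e\x_e\tp\approx r\bSigma$) and then concentrating over both $\delta$ and the covariates, $\M_{\cBr}\approx r\bSigma$. Consequently the smallest eigenvalue of $\bSigma^{-1/2}(\M_{\cS}+c_{\lambda}\M_{\cBr})\bSigma^{-1/2}$ is bounded below by a constant multiple of $\Ns+c_{\lambda}r$ on an event of probability at least $1-Ce^{-\varsigma}$; the hypothesis $\Ns\gtrsim e^{2\varsigma}$ is precisely what secures this for the target block at the stated exponential rate. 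On this event $\|\hbetaw-\tbeta\|_{\bSigma,2}\lesssim(\Ns+c_{\lambda}r)^{-1}\|\bSigma^{-1/2}(\bu_{\cS}+c_{\lambda}\bu_{\cB}+c_{\lambda}\bu_{\gammae})\|_2$.

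Third, I would bound the three scaled numerators by Bernstein-type inequalities. The vector $\bSigma^{-1/2}\bu_{\cS}$ is a mean-zero sum of i.i.d. sub-Gaussian-times-error terms, so $\|\bSigma^{-1/2}\bu_{\cS}\|_2\lesssim\sigma_{\cS}\sqrt{d\Ns\varsigma}$ (using $\Exp(|\varepsilon_t|^3)<\infty$), which after division by $\Ns+c_{\lambda}r$ gives the $I_{\cS}$ term. For $\bu_{\cB}$ the conditional second moment of the weighted sum is governed by $\vw_{\varepsilon,1}^{\wtd}$ while the higher-order Bernstein term is controlled by $\vw_{\varepsilon,2}^{\wtd}<\infty$; this is exactly where the finite-third-moment hypothesis enters, and it explains why, unlike the $\ell_1$/Huber analysis, heavy-tailed external errors are \emph{not} admissible here, since the quadratic loss does not clip large residuals. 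This yields the $I_{\cB}$ term. For $\bu_{\gammae}$ I split into its subsampling mean $\Exp_{\delta}\bu_{\gammae}=\rho\sum_{e\in\cO}\x_e\gamma_e$ and the fluctuation: the fluctuation, with variance proxy governed by $\vw_{\gammae,1}^{\wtd}$, produces $I_{\gammae}^{\textrm{var}}$, while the mean contributes the irreducible $I_{\gammae}^{\textrm{bias}}=\vuw_{\gammae,0}$ after scaling by $c_{\lambda}r/(\Ns+c_{\lambda}r)$. The condition $\sqrt{r}\gtrsim\vw_{\gammae,2}^{\wtd}e^{\varsigma}$ is what makes the fluctuation Bernstein bound valid at the claimed rate. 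Combining by the triangle inequality delivers the stated bound.

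I expect the main obstacle to be the coupling between the Poisson indicators $\delta_e$, which appear simultaneously in the inverse $(\M_{\cS}+c_{\lambda}\M_{\cBr})^{-1}$ and in the numerators $\bu_{\cB},\bu_{\gammae}$, so they cannot be treated as independent. I would resolve this by conditioning on the high-probability well-conditioning event from the second step and then bounding the numerators on that event, being careful that the weights $\rho/\pi_e$ may be as large as $1/\mpi$, so the summands are only sub-exponential rather than sub-Gaussian and the correct variance-plus-max Bernstein form must be used. The remaining work — tracking the $c_{\lambda}$ factors through the bias--variance weighting and checking each concentration step has the right variance proxy — is routine once this coupling is handled.
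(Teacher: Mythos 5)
Your skeleton matches the paper's proof exactly: by Proposition~\ref{pro:thr} the $\ell_2$-penalized estimator is a closed-form weighted least-squares solution, and the paper uses precisely your decomposition of $\bSigma^{1/2}(\hbetaw-\tbeta)$ into a Gram-matrix inverse applied to the three numerators $\sumcs\varepsilon_t\tx_t$, $c_{\lambda}\sumcb w_e\varepsilon_e\tx_e$, and $c_{\lambda}\sumcb w_e\gamma_e\tx_e$, with the last split into its subsampling mean (giving $\vuw_{\gammae,0}$) and its fluctuation (giving $\sqrt{\vw_{\gammae,1}^{\wtd}}\sqrt{d}\sqrt{\varsigma/r}$). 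However, there is a genuine gap in your third step: you propose Bernstein-type exponential concentration for $\bu_{\cS}$ and $\bu_{\cB}$, but under Assumption~\ref{asm:eps} the target errors have only a finite \emph{third} moment and the external errors only finite weighted third moments ($\vw_{\varepsilon,2}^{\wtd}<\infty$). The product of a sub-Gaussian covariate and an error possessing only three moments is not sub-exponential and satisfies no Bernstein moment condition, so no inequality of the form $\Pr\{\|\sumcs\varepsilon_t\tx_t\|_2\gtrsim\sigma_{\cS}\sqrt{d\Ns\varsigma}\}\le e^{-\varsigma}$ valid for all $\varsigma$ can exist — the tails of such sums are polynomial, not exponential. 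The paper's actual tool is the multivariate Berry--Esseen theorem of Rai\v{c} (2019), applied in Lemma~\ref{lem:lemofS} and in the treatment of the $\gamma$-term: one approximates each normalized sum by a Gaussian, uses the $\chi^2$ tail for the Gaussian, and pays polynomial remainders of order $C/\sqrt{\Ns}$, $C/\sqrt{r}$, and $C\vw_{\gammae,2}^{\wtd}/\sqrt{r}$ (the last coming from the third-moment Berry--Esseen error for the bias fluctuation). The hypotheses $\Ns\gtrsim e^{2\varsigma}$ and $\sqrt{r}\gtrsim\vw_{\gammae,2}^{\wtd}e^{\varsigma}$ exist precisely to absorb these polynomial remainders into $Ce^{-\varsigma}$; they are not, as you assert, what validates an exponential-concentration step. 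Indeed the appearance of $e^{\varsigma}$ and $e^{2\varsigma}$ (rather than $\varsigma$) in these conditions is the telltale signature of a normal-approximation argument, and your proposal cannot produce conditions of that form.

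Two smaller points. First, your worry about coupling between the Poisson indicators in the Gram matrix and in the numerators is a non-issue in the paper's route: each bound (Gram-matrix lower bound via Chebyshev, numerator bounds via Berry--Esseen) holds on its own high-probability event, and one simply intersects the events; your proposed fix of conditioning on the well-conditioning event would actually distort the distribution of the $\delta_e$ and complicate the numerator bounds. Second, your remark that heavy-tailed external errors are inadmissible here because the quadratic loss does not clip residuals is correct and matches the paper's discussion; a repair of your approach in the spirit you suggest would require truncating the errors and applying Bernstein to the truncated parts while controlling the tails by Markov's inequality with third moments — which reintroduces polynomial failure probabilities and essentially reconstructs the paper's conditions by another path.
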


The error bound for $\hbetaw$ with $\ell_2$ penalization is comparable to
the one with $\ell_1$ penalization described in
Theorem~\ref{thm:rtrans}. Similar to the $\ell_1$ case, the error for
$\hbetaw$ with $\ell_2$ penalization can be decomposed into three
components: $I_{\cS}$ represents the contribution of randomness from the target
data; $I_{\cB}$ captures the randomness from the external data; and
$I_{\gammae}$ is associated with the bias terms.

While the interpretation of these error components remains consistent between
the $\ell_1$ and $\ell_2$ penalization cases, there are notable differences.
The first key difference is that $\hbetaw$ with $\ell_1$ penalization is
applicable in more general settings. The moment conditions required in
Theorem~\ref{thm:rtrans} are weaker than those in
Theorem~\ref{thm:rtrans2}. Specifically, for $\ell_1$ penalization, the error
term $\varepsilon_{\cB}$ can be heavy-tailed, whereas $\ell_2$ penalization
requires $\varepsilon_{\cB}$ to be light-tailed enough to have a finite second
moment. As a result, $\ell_1$ penalization can handle external data with
contamination and heavy-tailed errors, while $\ell_2$ penalization may struggle
under such circumstances due to its stricter distributional assumptions.
Another key difference lies in the term related to the bias
$I_{\gammae}^{\textrm{bias}}$. In Theorem~\ref{thm:rtrans2}, this term becomes
$\vuw_{\gammae,0}=\Nb^{-1}\sum_{e\in\cO}|\gamma_e|\approx p_{\gamma}
\mathbb{E}(|\tilde{\gamma}|)$. To ensure this term goes toward zero, both
the proportion of the outliers $p_{\gamma}$ and the magnitude of the bias terms must remain
sufficiently small, regardless of how small $p_{\gamma}$ is. This limitation may impact
the performance of $\hbetaw$ with $\ell_2$ penalization in cases where
$\cB$ contains sparse outliers with large magnitudes.

\subsection{Optimal Sampling Probabilities under Data Contamination}\label{sec:optpi}

A key step of the transfer learning estimator based on $\cBr$ is determining the
optimal $\pi_e$. In subsampling literature, this is typically achieved by
minimizing the asymptotic variance of the subsample estimator, assuming it is
asymptotically unbiased. However, as highlighted in Section~\ref{sec:model},
directly applying this strategy is unsuitable in our setting due to the presence
of outliers.  In this section, we determine optimal sampling
probabilities for $\cB$ under our data contamination scenario. We first present
a corollary that implies the asymptotic normality
of the weighted estimator, $\hbetaw$.

\begin{corollary}\label{cor:norm}
Assume that $\x_e$ for $e\in\cB$ and $\x_t$ for $t\in\cS$ have the same
distribution. For any $\varsigma>0$, $\hbetaw$ has the following
properties: 
\begin{enumerate}[(I)]
\item Under the conditions in Theorem~\ref{thm:rtrans}, if
  $\vw_{\varepsilon,1}^{\wtd}<\infty$, then the estimator $\hbetaw$ with $\ell_1$
  penalization and $\lambda=\{(\mpi r)/\varsigma\}^{\frac{1}{2}}$ satisfies
\begin{align*}
&\left\|\bSigma^{\frac{1}{2}}(\hbetaw-\tbeta)
-\frac{1}{N_{f}}\bSigma^{-\frac{1}{2}}\left\{\sumcs\varepsilon_t\x_t
+\sumcb\frac{\rho\delta_e}{\pi_e}\psi_1(\varepsilon_e;\lambda)\tx_e\right\}\right\|_2
\le\frac{r}{N_{f}}\check{I}_{\bgamma_e}
+\frac{C(d+\varsigma)}{N_{f}},
\end{align*}
with probability at least $1-Ce^{-\varsigma}$,
where $N_{f}=\Ns+r$ is the fused sample size,
$\psi_1(z;\lambda)=zI(|z|\le\lambda)+\lambda I(z>\lambda)-\lambda I(z<-\lambda)$,
$\check{I}_{\gammae}= I_{\gammae} + rN_{f}^{-1}(I_{\gammae})^2
+(d+\varsigma)r^2N_{f}^{-3}(I_{\gammae})^3$, and $C$ is a constant that
does not depend on $\Ns$ and $\Nb$. 
\item Under the conditions in Theorem~\ref{thm:rtrans2}, the estimator
$\hbetaw$ with $\ell_2$ penalization satisfies
\begin{align*}
&\left\|\bSigma^{\frac{1}{2}}(\hbetaw-\tbeta)
-\frac{1}{N_{f\lambda}}\bSigma^{-\frac{1}{2}}\left\{\sumcs\varepsilon_t\x_t
+\sumcb\frac{\rho\delta_e}{\pi_e}\psi_2(\varepsilon_e;\lambda)\x_e\right\}\right\|_2
\le\frac{c_{\lambda}r}{N_{f\lambda}}I_{\gammae}+\frac{C(d+\varsigma)}{N_{f\lambda}},
\end{align*}
with probability at least $1-Ce^{-\varsigma}$, where
$\psi_2(z,\lambda)=\lambda z/(1+\lambda)$, $c_{\lambda}=\lambda/(1+\lambda)$, and $N_{f\lambda}=\Ns+c_{\lambda}r$ is
the fused sample size adjusted by $\lambda$.
\end{enumerate}
\end{corollary}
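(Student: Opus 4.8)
The plan is to derive both expansions from the stationarity condition satisfied by $\hbetaw$. By Proposition~\ref{pro:thr}, $\hbetaw$ minimizes a smooth robust objective whose gradient vanishes at the optimum; writing $\delta_e=I(u_e\le\pi_e)$ so that $\sum_{e\in\cBr}=\sumcb\delta_e$ and using $\mathcal{H}'(\cdot;\lambda)=2\psi_\nu(\cdot;\lambda)$, the first-order condition (after discarding the common factor) is
\begin{equation*}
\sumcs\x_t(y_t-\x_t\tp\hbetaw)+\sumcb\frac{\rho\delta_e}{\pi_e}\x_e\,\psi_\nu(y_e-\x_e\tp\hbetaw;\lambda)=0.
\end{equation*}
Substituting the target model~\eqref{eq:targetmodel} and the source model~\eqref{eq:sourcemodel} and setting $\bu:=\hbetaw-\tbeta$, the residuals become $y_t-\x_t\tp\hbetaw=\varepsilon_t-\x_t\tp\bu$ and $y_e-\x_e\tp\hbetaw=\varepsilon_e+\gamma_e-\x_e\tp\bu$. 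The goal is to invert this equation for $\bu$, isolating the linear score term appearing in the statement from the remainder.

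First I would treat the $\ell_2$ case (part~II), which is cleaner because $\psi_2(z;\lambda)=c_\lambda z$ is exactly linear. Plugging this in, the estimating equation is linear in $\bu$ and solves in closed form as $\bu=\bH_\lambda^{-1}b_\lambda$, where $\bH_\lambda=\sumcs\x_t\x_t\tp+c_\lambda\sumcb\frac{\rho\delta_e}{\pi_e}\x_e\x_e\tp$ and $b_\lambda=\sumcs\x_t\varepsilon_t+c_\lambda\sumcb\frac{\rho\delta_e}{\pi_e}\x_e(\varepsilon_e+\gamma_e)$. The score $b_\lambda$ splits into the variance part (the $\varepsilon$ terms, which is exactly the bracketed quantity in the statement) and the bias part $c_\lambda\sumcb\frac{\rho\delta_e}{\pi_e}\x_e\gamma_e$. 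The main quantitative step is a Hessian concentration argument: using the sub-Gaussian design of Assumption~\ref{asm:xsubg}, the weight control of Assumption~\ref{asm:bpi}, and a matrix-concentration bound together with $\Exp(\rho\delta_e\pi_e^{-1}\x_e\x_e\tp\mid\x_e)=\rho\x_e\x_e\tp$, one shows $\bH_\lambda=N_{f\lambda}\bSigma\{\I+o(1)\}$ with operator-norm deviation of order $(d+\varsigma)$ relative to $N_{f\lambda}$ on the stated event. Replacing $\bH_\lambda^{-1}$ by $(N_{f\lambda}\bSigma)^{-1}$ yields the leading linear term $N_{f\lambda}^{-1}\bSigma^{-1}b_\lambda$, whose variance part gives the displayed expansion; the bias part is bounded by $\frac{c_\lambda r}{N_{f\lambda}}I_{\gammae}$ exactly as in the analysis of Theorem~\ref{thm:rtrans2}, and the inversion error contributes the $C(d+\varsigma)/N_{f\lambda}$ term.

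The harder case is the $\ell_1$ expansion (part~I), where $\psi_1(\cdot;\lambda)$ is only piecewise linear. The plan is a one-step linearization anchored at the rate already established in Theorem~\ref{thm:rtrans}. Writing $a_e:=\gamma_e-\x_e\tp\bu$ and using the exact identity $\psi_1(\varepsilon_e+a_e;\lambda)=\psi_1(\varepsilon_e;\lambda)+\int_0^1\psi_1'(\varepsilon_e+sa_e;\lambda)\,ds\,a_e$ with $\psi_1'(z;\lambda)=I(|z|\le\lambda)$, the source contribution separates into the leading score $\psi_1(\varepsilon_e;\lambda)$, an effective-Hessian term with coefficient $\Exp\{\psi_1'(\varepsilon_e;\lambda)\}=\Pr(|\varepsilon_e|\le\lambda)$ acting on $-\x_e\x_e\tp\bu$, and a curvature remainder plus the $\gamma_e$-driven bias. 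Because $\lambda=\{(\mpi r)/\varsigma\}^{1/2}\to\infty$, we have $\Pr(|\varepsilon_e|\le\lambda)\to1$, so the effective Hessian is again $N_f\bSigma\{\I+o(1)\}$ (now with $N_f=\Ns+r$ rather than $N_{f\lambda}$), which explains the unadjusted fused size in the statement. Inverting as before produces the displayed linear term, and the $\gamma_e$ part together with the curvature remainder is controlled by feeding the a~priori rate $\|\bu\|_{\bSigma,2}\lesssim I_{\cS}+I_{\cB}+I_{\gammae}$ from Theorem~\ref{thm:rtrans} back into the quadratic and cubic error terms, which is precisely what generates the higher-order corrections $rN_f^{-1}(I_{\gammae})^2$ and $(d+\varsigma)r^2N_f^{-3}(I_{\gammae})^3$ in $\check{I}_{\gammae}$.

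I expect the main obstacle to be the $\ell_1$ curvature remainder: controlling the weighted, randomly subsampled empirical process $\sumcb\frac{\rho\delta_e}{\pi_e}\x_e\{\psi_1'(\varepsilon_e+sa_e;\lambda)-\Exp\psi_1'(\varepsilon_e;\lambda)\}$ uniformly in $s$ and over $\bu$ in a shrinking $\bSigma$-ball, so that the substitution of the Theorem~\ref{thm:rtrans} rate is legitimate rather than circular. This requires concentration of the subsampled sums (via Assumptions~\ref{asm:xsubg} and~\ref{asm:bpi} and the Poisson independence of the $\delta_e$) and careful bookkeeping of how the three error sources $I_{\cS}$, $I_{\cB}$, and $I_{\gammae}$ propagate through the nonlinearity to assemble into $\check{I}_{\gammae}$.
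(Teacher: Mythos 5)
Your proposal is correct and follows essentially the same route as the paper: starting from the KKT/stationarity condition, solving the $\ell_2$ case in closed form with Hessian (matrix) concentration, and for the $\ell_1$ case linearizing $\psi_1$ around $\varepsilon_e$, showing the effective Hessian concentrates at $N_f\bSigma$, separating the $\gamma_e$-driven bias (the paper's Lemma on $\psi_1(\gamma_e+\varepsilon_e;\lambda)-\psi_1(\varepsilon_e;\lambda)$), and feeding the a priori rate of Theorem~\ref{thm:rtrans} into the quadratic and cubic remainder terms to assemble $\check{I}_{\gammae}$. The paper organizes this via a process $\bm{B}(\bbeta)$ whose expectation is handled by the mean value theorem and whose deviation is bounded uniformly over a $\bSigma$-ball by a Bernstein-type empirical-process bound, which is exactly the uniform curvature-remainder control you identified as the main obstacle.
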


\begin{remark}\label{rm:3}
Corollary~\ref{cor:norm} implies the asymptotic normality of
$\hbetaw$ under some conditions. Denote
$\Exp[\{\rho\psi(\varepsilon_{\cB};\lambda)\}^2/\pi]=\sigma_{\cB}^2$. 
Under some mild regularity conditions, 
\begin{equation*}
  \sigma^{-1}_{\cS}\bSigma^{-\frac{1}{2}}\frac{1}{\sqrt{\Ns}}
  \sumcs\varepsilon_t\x_t \cvd\Nor(\0,\I_d),\text{ and }
  \sigma^{-1}_{\cB}\bSigma^{-\frac{1}{2}}\frac{1}{\sqrt{r}}
  \sumcb\frac{\rho\delta_e}{\pi_e}\psi_\nu(\varepsilon_e;\lambda)\x_e
  \cvd\Nor(\0,\I_d),
\end{equation*}
where $\cvd$ means weak convergence and $\nu=1,2$. This shows that
\begin{equation*}
\hbetaw \approx \tbeta
+\frac{\sqrt{\Ns}}{N}\Nor(\0,\sigma^2_{\cS}\I_d)
+\frac{\sqrt{r}}{N}\Nor(\0,\sigma^2_{\cB}\I_d)
+\frac{r}{N}\check{I}_{\gammae},
\end{equation*}
where $N=N_{f}$ for the $\ell_1$ penalty and $N=N_{f\lambda}$ for the $\ell_2$
penalty. 
Thus, if $\check{I}_{\gammae}=o_{\mathrm{P}}(1/\sqrt{N})$, then
$\sqrt{N}(\hbeta-\tbeta)$ is asymptotic normal. 
\end{remark}

As shown in Theorem~\ref{thm:rtrans} and Theorem~\ref{thm:rtrans2}, subsampling
probabilities $\{\pi_e\}_{e=1}^{\Nb}$ affect the error bound by affecting
$I_{\cB}$ and $I_{\gammae}^{\textrm{var}}$. These two terms can be considered
as the variance part of $\hbetaw$. In traditional subsampling literature, we
can choose $\{\pi_e\}_{e=1}^{\Nb}$ to minimize the asymptotic variances to
obtain optimal subsamples. However, this approach may result in subsamples with
too many outliers, because the $\{\pi_e\}_{e=1}^{\Nb}$ that minimize
$I_{\gammae}^{\textrm{var}}$ are proportional to
$|\gamma_e|^{(1+\alpha_1)/2}$. Therefore, we propose to find optimal subsampling
probabilities through minimizing $I_{\cB}$, which is the variance component
contributed by the randomness of the external data. The optimal sampling
probabilities $\{\pi_e\}_{e=1}^{\Nb}$ are stated in the following proposition.

\begin{proposition}\label{pro:optpi}
The optimal subsampling probabilities $\{\pi_e\}_{e=1}^{\Nb}$ with expected
subsample size $r$ that minimize
\begin{equation*}
\Exp\left\{\left\|\sumcb\frac{\rho\delta_e}{\pi_e}\psi_{\nu}(\varepsilon_e;\lambda)\bSigma^{-\frac{1}{2}}\x_e
\right\|^2\Big|\Xb\right\}
\quad \text{for } \nu=1,2,
\end{equation*}
are
\begin{equation*}
\pi_e^{\opt}=\frac{r\|\bSigma^{-\frac{1}{2}}\x_e\|_2\mins H}{
\sum_{i=1}^{\Nb}\|\bSigma^{-\frac{1}{2}}\x_i\|_2\mins H},
\text{ }e=1,2,\ldots,\Nb,
\end{equation*}
where $H$ is a threshold value that ensures
$\pi_e^{\opt}\le 1$ and $\sumcb\pi_e^{\opt}=r$. 
\end{proposition}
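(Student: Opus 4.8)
The plan is to reduce the stochastic objective to a deterministic convex program in $\{\pi_e\}$ and then solve it by a Lagrangian/KKT argument. First I would expand the conditional second moment. Writing $\bu_e=\bSigma^{-\frac{1}{2}}\x_e$ and $\psi_e=\psi_{\nu}(\varepsilon_e;\lambda)$, the quantity to minimize is
\begin{equation*}
\Exp\left\{\left\|\sumcb\frac{\rho\delta_e}{\pi_e}\psi_e\bu_e\right\|_2^2\Big|\Xb\right\}
=\rho^2\sum_{e,e'}\frac{\Exp(\delta_e\delta_{e'}\psi_e\psi_{e'}\mid\Xb)}{\pi_e\pi_{e'}}\bu_e\tp\bu_{e'}.
\end{equation*}
Conditional on $\Xb$, the only remaining randomness is in the sampling indicators $\delta_e=I(u_e\le\pi_e)$ and the errors $\varepsilon_e$, which are mutually independent. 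For the off-diagonal terms $e\neq e'$, independence factorizes the expectation as $\pi_e\pi_{e'}\Exp(\psi_e)\Exp(\psi_{e'})$; since $\feps$ is symmetric (Assumption~\ref{asm:eps}) while both $\psi_1$ and $\psi_2$ are odd in their first argument, $\Exp(\psi_e)=0$, so all cross terms vanish. This cancellation is the crucial structural fact that makes the problem separable.

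On the diagonal $e=e'$, I would use $\delta_e^2=\delta_e$ together with $\Exp(\delta_e\mid\Xb)=\pi_e$ and the i.i.d.\ structure of the errors, so that $\Exp(\delta_e\psi_e^2\mid\Xb)=\pi_e\,\sigma_{\psi}^2$ with $\sigma_{\psi}^2=\Exp\{\psi_{\nu}(\varepsilon;\lambda)^2\}$ a constant not depending on $e$. The objective then collapses to
\begin{equation*}
\rho^2\sigma_{\psi}^2\sumcb\frac{\|\bSigma^{-\frac{1}{2}}\x_e\|_2^2}{\pi_e},
\end{equation*}
so minimizing over $\{\pi_e\}$ is equivalent to minimizing $\sumcb a_e^2/\pi_e$ with $a_e=\|\bSigma^{-\frac{1}{2}}\x_e\|_2$, subject to the budget constraint $\sumcb\pi_e=r$ and the box constraints $0<\pi_e\le 1$.

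For the constrained minimization I would form the Lagrangian $\sumcb a_e^2/\pi_e+\mu(\sumcb\pi_e-r)+\sumcb\eta_e(\pi_e-1)$ with $\eta_e\ge0$ and examine the stationarity condition $-a_e^2/\pi_e^2+\mu+\eta_e=0$. At an interior coordinate ($\pi_e<1$, hence $\eta_e=0$) this forces $\pi_e\propto a_e$, which is exactly the Cauchy--Schwarz optimizer obtained from $(\sumcb a_e)^2\le(\sumcb a_e^2/\pi_e)(\sumcb\pi_e)$ with equality iff $\pi_e\propto a_e$; at a boundary coordinate ($\pi_e=1$) the sign condition $\eta_e\ge0$ forces $a_e\ge\sqrt{\mu}$. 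Setting the threshold $H=\sqrt{\mu}$, both cases combine into $\pi_e^{\opt}\propto a_e\mins H$, and normalizing by the budget yields the stated form with $H$ pinned down by $\sumcb(a_e\mins H)=rH$, which simultaneously enforces $\sumcb\pi_e^{\opt}=r$ and $\pi_e^{\opt}\le1$. The main obstacle is precisely this box constraint: without it the answer is the clean $\pi_e\propto a_e$, and it is the requirement $\pi_e\le1$ that introduces the truncation at $H$; the careful part is verifying that the truncated candidate satisfies all KKT conditions and that a feasible $H$ exists (e.g.\ from the concavity and boundary behavior of $H\mapsto\sumcb(a_e\mins H)-rH$, which vanishes at $H=0$, increases since $\Nb>r$, and then decreases to $-\infty$, giving a unique positive root).
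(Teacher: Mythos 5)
Your proposal is correct, and its first half coincides with the paper's own argument: the paper likewise kills the cross terms by noting that $\psi_{\nu}$ is odd while $\feps$ is symmetric (so $\Exp\{\psi_{\nu}(\varepsilon;\lambda)\}=0$), uses $\Exp(\delta_e\mid\Xb)=\pi_e$ on the diagonal, and collapses the objective to a constant times $\sumcb\x_e\tp\bSigma^{-1}\x_e/\pi_e$. The paper is slightly more general at this step, proving oddness of $\psi$ for \emph{any} even penalty $P$ via $\Theta(-z;\lambda)=-\Theta(z;\lambda)$, whereas you simply invoke the explicit odd forms of $\psi_1$ and $\psi_2$, which is all the proposition needs. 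Where you genuinely diverge is the constrained minimization of $\sumcb a_e^2/\pi_e$ (with $a_e=\|\bSigma^{-\frac12}\x_e\|_2$) subject to $\sumcb\pi_e=r$ and $0<\pi_e\le 1$: the paper does not prove this step but cites Theorem 4 of \cite{wang2022sampling}, which also supplies an explicit order-statistics construction of $H$ (via an integer $g$ counting the coordinates pinned at $1$), while you give a self-contained Lagrangian/KKT derivation, obtain the truncation $\pi_e^{\opt}\propto a_e\mins H$ with $H=\sqrt{\mu}$, and characterize $H$ implicitly as the unique positive root of the concave piecewise-linear map $H\mapsto\sumcb(a_e\mins H)-rH$. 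The two characterizations of $H$ are equivalent; your route has the virtue of making the proof self-contained (convexity of the objective plus linear constraints makes KKT sufficient, so your candidate is the global minimizer), while the paper's citation buys brevity and an explicit finite algorithm for computing $H$.
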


Our optimal sampling probabilities in Proposition~\ref{pro:optpi}
are the same for $\ell_1$ and $\ell_2$ penalization. This is
because penalization impacts only $I_{\cB}$ by modifying the thresholding
function $\psi_{\nu}(\varepsilon; \lambda)$, which is applied to the random error
terms $\varepsilon$. However, since our goal is to determine the 
optimal subsampling probabilities based on the design matrix $\X_{\cB}$, the
function $\psi_{\nu}(\varepsilon_e; \lambda)$ does not influence the optimal
$\{\pi_e\}_{e=1}^{\Nb}$.

The optimal probabilities in Proposition~\ref{pro:optpi} do not dependent on the
responses. This type of sampling probabilities are useful in the scenario that
responses are difficult or expensive to obtain %
\citep{Zhang2020optimal}. Our
purpose of using response-free sampling probabilities is different. We use
covariates only to build optimal $\{\pi_e\}_{e=1}^{\Nb}$ to avoid including
outliers.

In practice, $\bSigma=\Exp(\x_e\x_e\tp)$ is unknown. We propose to use
$\sumcb\x_e\x_e\tp/\Nb$ to approximate it. The resulting approximate optimal
subsampling probabilities are
\begin{equation*}
  \hat{\pi}_e^{\opt}=\frac{r\|(\Xb\tp\Xb)^{-\frac{1}{2}}\x_e\|_2\mins H_a}{\sum_{i=1}^{\Nb}
    \|(\Xb\tp\Xb)^{-\frac{1}{2}}\x_i\|_2\mins H_a},e=1,2,\ldots,\Nb,
\end{equation*}
where $H_a$ is an approximated value of $H$.  Note that
$\|(\Xb\tp\Xb)^{-\frac{1}{2}}\x_e\|$'s in $\hat{\pi}_e^{\opt}$ are the square
roots of leverage scores, which are widely used in subsampling. Our results
shows that leverage scores can be used to
minimize $I_{\cB}$. Another interesting observation
is that a special case of Proposition~\ref{pro:optpi} implies the optimality of
leverage-based subsampling for Huber regression. Specifically, if there is no
$\cS$ and we only consider optimal sampling for the mean-shift model with Huber
regression, then leverage-scores produce the optimal response-free
subsampling probabilities.

\section{Transfer Learning with Target-guided Data Selection}\label{sec:dtrans}

In this section, we study target-guided data selection. We first
state the procedure of data selection and propose a transfer learning estimator
based on the selected data in Section~\ref{sec:det}. Next, we present theoretical analysis
of the proposed estimator and discuss its implications in Section~\ref{sec:dtheory}.

\subsection{Target-guided Data Selection and the Estimator}
\label{sec:det}

We propose a target-guided selection approach aimed at selecting
data points in $\cB$ where $\gamma_e$ is either zero or close to zero, i.e.,
data points that share a similar underlying model with $(y_t, \x_t)$. Given that
the target data $\cS$ is unbiased, we use the magnitude $|y_e -
\x_e\tp\hbeta_{\cS}|$ to estimate the bias term $\gamma_e$ in the external data.
Heuristically, $|y_e - \x_e\tp\hbeta_{\cS}|$ serves as an estimator of $|y_e -
\x_e\tp\tbeta| = |\varepsilon_e + \gamma_e|$. Since the residuals
$\varepsilon_e$ are expected to have a distribution centered around zero,
smaller values of $|y_e - \x_e\tp\tbeta|$ typically correspond to smaller values
of $\gamma_e$, i.e., the ``good'' data points.  Following this intuition, by
sorting the data in an increasing order of $|y_e - \x_e\tp\hbeta_{\cS}|$, we should
observe a concentration of the ``good'' data points at the beginning of the sorted
list. %
The target-guided selection is formalized in Algorithm~\ref{alg:det}.

\begin{algorithm}[H]%
  \caption{Target-guided selection}
  \label{alg:det}
  \begin{algorithmic}[1]
    \STATE Compute $\hbeta_{\cS}:=\argmin_{\bbeta}\sumcs(y_t-\x_t\tp\bbeta)^2$;
    \STATE Compute $|y_e-\x_e\tp\hbeta_{\cS}|$, for $e=1,2,\ldots,\Nb$;
    \STATE Include the $r$ data
    points with the smallest values of $|y_e-\x_e\tp\hbeta_{\cS}|$ in the
    subsample.
  \end{algorithmic}
\end{algorithm}

Algorithm~\ref{alg:det} does not introduce additional randomness; the selected
data points are deterministic with given target and external datasets. The use
of extreme statistics for data selection is common in the subsampling
literature, such as~\cite{WangYangStufken2018} and
~\cite{cheng2020information}, which are
driven by optimal experimental design principles, aiming to maximize the
information within the subsample. However, our motivation for using order statistics
is different: we seek to minimize bias
in the subsample by carefully selecting data points with minimal external bias.

Let the target-guided subsample be
$\cBd=\{e\in\cB:|y_e-\x_e\tp\hbeta_{\cS}|\le|y-\x\tp\hbeta_{\cS}|_{(r)}\}$,
where $|y-\x\tp\hbeta_{\cS}|_{(r)}$ is the $r$-th order statistics of
$|y_e-\x_e\tp\hbeta_{\cS}|$ for $e\in\cB$. We
define a transfer learning estimator by taking $\cBs=\cBd$, $w_e=1$ and
$P(\gamma_e)=2\nu^{-1}|\gamma_e|^{\nu}$ in~\eqref{eq:ftrans}, which results in the following estimator:
\begin{equation}\label{eq:dtrans}
(\hbetad\tp,\hgamma_{\cBd}\tp)\tp
=\mathop{\arg\min}_{\bbeta,\bgamma_{\cBd}}\left\{\sumcs(y_t-\x_t\tp\bbeta)^2
+\sumcbd (y_e-\x_e\tp\bbeta-\gamma_e)^2
+\lambda\sumcbd\frac{2|\gamma_e|^{\nu}}{\nu}\right\}.
\end{equation}
Again, we know from
Proposition~\ref{pro:thr} that $\hbetad$ in~\eqref{eq:dtrans} minimizes
\begin{equation*}
\sumcs(y_t-\x_t\tp\bbeta)^2+\sumcbs
w_e\mathcal{H}(y_e-\x_e\tp\bbeta;\lambda)
=\sumcs(y_t-\x_t\tp\bbeta)^2+\sum_{e\in\cBd}
\mathcal{H}(y_e-\x_e\tp\bbeta;\lambda).
\end{equation*}

\subsection{Theoretical Analysis}\label{sec:dtheory}

We investigate the theoretical aspects of $\hbetad$ in this section,
beginning with an additional assumption.

\begin{assumption}\label{asm:epsLip}
The density function of $\varepsilon$ in $\cB$, $\feps(z)$, is
Liptchitz with a Liptchitz constant $\Leps$. In addition, for any $z\in\mathbb{R}$, there
exists an $\alpha_{\varepsilon}>0$ and a bounded function $H(z)$ such that $\lim_{w\to
  0}w^{-\alpha_{\varepsilon}}\{\feps(z+w)-\feps(z)\}=H(z)$.
\end{assumption}

Assumption~\ref{asm:epsLip} imposes a smoothness condition on the density of
$\varepsilon$ for the external data, which is essential for deriving non-asymptotic bounds for
estimators based on target-guided selection. This assumption is satisfied by
many symmetric distributions, including the normal distribution and the
$t$-distribution.

Recall that we select data points in $\cB$ corresponding to the $r$ smallest
$\lvert y - \x\tp\hbeta_{\cS} \rvert_{(e)}$'s, which are the order statistics
of $\lvert y_e - \x_e\tp\hbeta_{\cS} \rvert$'s given $\Ds$, for
$e=1,2,\ldots,\Nb$. Here, $\Ds$ represents $\{(y_t,\x_t)\}_{t\in\cS}$. We first
analyze the properties of the bias terms
(mean-shift parameters) for the selected data points. Denote the bias terms in
the selected data points corresponding to the order statistics $\lvert y -
\x\tp\hbeta_{\cS} \rvert_{(i)}$ as $\gamma_{[i]}$, for $i = 1, 2, \ldots,
r$.  Here, $\gamma_{[i]}$'s are the concomitants of the order statistics
\citep[see,][]{yang1977general,chu1999study}. We will use tools for concomitants
and order statistics in extreme statistics to derive the properties of
$\gamma_{[i]}$'s. We begin by presenting the following result.

\begin{proposition}\label{pro:nobias}
Under Assumptions~\ref{asm:xsubg}-\ref{asm:eps} and~\ref{asm:gamma}-\ref{asm:epsLip}, let $p_{\gamma}$ be upper
bounded away from 1 and $\tilde{\gamma}$ be lower bounded away from 0, e.g.,
$\tilde{\gamma}\geq \gamma_b>0$ almost surely for some $\gamma_b$.
For every $\varsigma>0$, if $\Ns\gtrsim e^{2\cvsigma}$, then
\begin{align*}
\Pr\left(\gamma_{[i]}=0, \; 1\le i\le r\right)\geq 1-Ce^{-\varsigma}-\check{p},
\end{align*}
where 
\begin{equation*}
\check{p}=\frac{Cr^2}{\Nb^{\alpha_{\varepsilon}\mins 1}}+Crp_{\gamma}\Exp\{\feps(\ttgamma)\}
+\frac{Crp_{\gamma}\sqrt{\varsigma}}{\sqrt{\Ns}},
\end{equation*}
and $C$ is a constant that does not depend on the sample sizes.
\end{proposition}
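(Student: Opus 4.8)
The plan is to work conditionally on the target data $\Ds$, so that the residual magnitudes $\hat R_e:=|y_e-\x_e\tp\hbeta_{\cS}|=|\varepsilon_e+\gamma_e-\x_e\tp(\hbeta_{\cS}-\tbeta)|$ become, across $e\in\cB$, independent and identically distributed given $\Ds$ (because $\hbeta_{\cS}$ is fixed given $\Ds$ while $(\varepsilon_e,\gamma_e,\x_e)$ are i.i.d.). The event $\{\gamma_{[i]}=0,\,1\le i\le r\}$ says that the $r$ points with the smallest $\hat R_e$ are all ``good'' ($\gamma_e=0$); equivalently, writing $M=\min_{e\in\cO}\hat R_e$ for the smallest residual among the biased points and $\hat R_{(r)}$ for the $r$-th order statistic of all $\hat R_e$, the event fails only if $M\le \hat R_{(r)}$. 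I would introduce a threshold $T$ (deterministic given $\Ds$) and use the inclusion $\{M\le \hat R_{(r)}\}\subseteq\{M\le T\}\cup\{\hat R_{(r)}>T\}$, so the failure probability is at most $\Pr(M\le T\mid\Ds)+\Pr(\hat R_{(r)}>T\mid\Ds)$. The first term (Event~B) asks that no biased point falls below $T$; the second (Event~A) that at least $r$ points fall below $T$.

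First I would record the control on $\bdelta:=\hbeta_{\cS}-\tbeta$. Under Assumptions~\ref{asm:xsubg}--\ref{asm:eps} and $\Ns\gtrsim e^{2\cvsigma}$, the target-only least-squares estimator obeys $\|\bdelta\|_{\bSigma,2}\lesssim I_{\cS}\asymp\sigma_{\cS}\sqrt{d\cvsigma/\Ns}$ with probability at least $1-Ce^{-\cvsigma}$, and I work on this event throughout, so that each perturbation $\x_e\tp\bdelta$ is of order $\sqrt{\cvsigma/\Ns}$ up to a sub-Gaussian covariate factor. For the threshold, note the good points satisfy $\Pr(|\varepsilon_e-\x_e\tp\bdelta|\le T\mid\Ds)\approx 2\feps(0)T$ for small $T$, while $p_{\gamma}$ bounded away from $1$ makes good points a positive fraction of $\cB$; hence I would pick $T\asymp r/\Nb$ so that the expected number of good points below $T$ is a fixed multiple of $r$ (say $2r$). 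A Bernstein concentration on $\sum_{e:\gamma_e=0}I(\hat R_e\le T)$ then gives $\Pr(\hat R_{(r)}>T\mid\Ds)\le Ce^{-\cvsigma}$, disposing of Event~A and pinning down $\Nb T\asymp r$; this last relation is what converts the $\Nb$ from the union bound below into the $r$ appearing in $\check p$.

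For Event~B I use $\Pr(M\le T\mid\Ds)\le\Nb\,\Pr(\gamma_e\neq0,\ \hat R_e\le T\mid\Ds)$ and condition further on $(\ttgamma_e,\x_e)$. A biased point has $\hat R_e\le T$ only if $\varepsilon_e$ lands in an interval of width $2T$ centred at $-\ttgamma_e+\x_e\tp\bdelta$; by symmetry of $\feps$ and Assumption~\ref{asm:epsLip} this probability equals $2T\feps(\ttgamma_e)$ up to (i) a density-approximation error of order $T^{1+(\alpha_{\varepsilon}\mins1)}$ from the Hölder/Lipschitz modulus and (ii) a shift error of order $T\Leps|\x_e\tp\bdelta|\asymp T\sqrt{\cvsigma/\Ns}$ from re-centring by $\x_e\tp\bdelta$. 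Multiplying by $\Nb$ and using $\Nb T\asymp r$, the leading piece becomes $Crp_{\gamma}\Exp\{\feps(\ttgamma)\}$, the Hölder piece accumulates to $\Nb T^{1+(\alpha_{\varepsilon}\mins1)}\lesssim r^{1+(\alpha_{\varepsilon}\mins1)}/\Nb^{\alpha_{\varepsilon}\mins1}\le Cr^2/\Nb^{\alpha_{\varepsilon}\mins1}$, and the shift piece becomes $Crp_{\gamma}\sqrt{\cvsigma}/\sqrt{\Ns}$; these are exactly the three summands of $\check p$. Adding the $Ce^{-\cvsigma}$ from the $\bdelta$-control and the Event~A concentration yields the stated bound.

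The main obstacle is the coupling created by the \emph{random} threshold $\hat R_{(r)}$: the ordering is driven by $\hbeta_{\cS}$, which enters every $\hat R_e$ simultaneously, so the concomitants $\gamma_{[i]}$ are dependent. Conditioning on $\Ds$ restores the i.i.d.\ structure across $e$ and legitimizes the threshold split, but it forces me to carry the estimation error $\x_e\tp\bdelta$ through both the choice of $T$ and the density integrals, which is where the $\sqrt{\cvsigma/\Ns}$ and $\alpha_{\varepsilon}$ dependences originate. A secondary technical point is guaranteeing that at least $r$ good points exist and concentrate below $T$: this is precisely where the hypotheses ``$p_{\gamma}$ bounded away from $1$'' (enough good points) and ``$\ttgamma\ge\gamma_b>0$'' (biased points genuinely separated, so their residuals do not masquerade as small) are used, and calibrating the constant in $T\asymp r/\Nb$ to be simultaneously compatible with Events~A and~B is the delicate part of the calculation.
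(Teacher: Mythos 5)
Your architecture (condition on $\Ds$ to restore i.i.d.\ structure across $\cB$, split the failure event at a threshold $T$, union bound over biased points with Lipschitz density estimates) is reasonable, and your Event~B analysis does recover all three summands of $\check p$. The genuine gap is in Event~A, i.e.\ the treatment of the random threshold $\hat R_{(r)}$. With $T\asymp r/\Nb$, the number of unbiased points with residual below $T$ is Binomial with conditional mean $\asymp r$, so Bernstein/Chernoff gives $\Pr(\hat R_{(r)}>T\mid\Ds)\le e^{-cr}$, \emph{not} $Ce^{-\varsigma}$ as you assert: the exponent is governed by the mean of the count, which you calibrated to $r$, and $\varsigma$ enters nowhere in that deviation. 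This extra $e^{-cr}$ is neither present in, nor dominated by, the proposition's bound: take $r$ fixed, $\varsigma\to\infty$ with $\Ns\gtrsim e^{2\varsigma}$, $\Nb\to\infty$, and $p_{\gamma}=o(\Nb^{-1})$; then $Ce^{-\varsigma}+\check p\to 0$ (and the statement is true, since with high probability there is no biased point at all), while $e^{-cr}$ stays bounded away from zero. Nor can you repair it by enlarging $T$: forcing the Binomial lower tail to be $e^{-c\varsigma}$ requires $\Nb T\gtrsim r+\sqrt{r\varsigma}+\varsigma$, and then your Event~B leading term becomes $(r+\sqrt{r\varsigma}+\varsigma)\,p_{\gamma}\Exp\{\feps(\ttgamma)\}$, which exceeds the claimed $Crp_{\gamma}\Exp\{\feps(\ttgamma)\}$ whenever $\varsigma\gg r$. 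So your argument only proves the statement under the additional hypothesis $r\gtrsim\varsigma$ (or with an extra $e^{-cr}$ term in $\check p$), which the proposition does not assume.

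The paper avoids this trade-off by never introducing a deterministic threshold. Conditional on $\Ds$, Lemma~\ref{lem:order} (built on the von Mises-type convergence rates of \cite{falk1993mises}) shows that the law of $\Nb|\Zg|_{(r+1)}$ is within $r^{\frac{1}{2}}(r/\Nb)^{\alpha_{\varepsilon}\mins 1}$ of a $\mathrm{Gamma}(r+1)$ law uniformly over Borel sets; Bayes' rule combined with the concomitant product formula of \cite{nagaraja1994distribution} gives the conditional bound $\Pr\left(\max_{1\le k\le r}\gamma_{[k]}>0\,\big|\,|\Zg|_{(r+1)}=z_0,\Ds\right)\lesssim rp_{\gamma}\left\{\fetg(0)+z_0+\|\deltaS\|_2/\sqrt{\Ns}\right\}$, and this bound is then \emph{integrated against the Gamma density} rather than evaluated at a cutoff. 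The integration converts the randomness of the threshold into purely polynomial terms ($r^2p_{\gamma}/\Nb$, $r/\Nb$, plus the Falk rate), all absorbed into $Cr^2/\Nb^{\alpha_{\varepsilon}\mins 1}$; the only exponential term $Ce^{-\varsigma}$ comes from controlling $\|\deltaS\|_2$ on the good set, using $\Ns\gtrsim e^{2\varsigma}$. To salvage your elementary route you would need to replace the Bernstein step with a distributional approximation of this kind for the order statistic itself.
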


Proposition~\ref{pro:nobias} shows that if $p_{\gamma}<1$, meaning that there
are data points with no biases, then the probability that
Algorithm~\ref{alg:det} selects $r$ data points with no biases approaches one.

The $\check{p}$ in Proposition~\ref{pro:nobias} contains three terms. The first term is
the gap between the distribution of order statistics and their limiting
distribution \citep[see][]{falk1993mises}. This term converges to zero if
$r^2=o(\Nb^{\alpha_{\varepsilon}\mins 1})$, which can be satisfied when the
external data are much larger than the data to be transferred. Here, the
$\alpha_\varepsilon$ is determined by the
smoothness of $f_{\varepsilon}(z)$, the density function of $\varepsilon$. For
example, if $f_{\varepsilon}(z)$ has a bounded derivative, then
Assumption~\ref{asm:epsLip} holds with $\alpha_{\varepsilon}=1$, which gives
the fastest convergence rate of $r^2/\Nb$.

The second term in $\check{p}$ is affected by the subsample size, the tail
behavior of $\varepsilon$, and the magnitude of biases in $\cB$. We illustrate
this by assuming $\varepsilon\sim t_{\nu}$, the $t$-distribution with degree of
freedom $\nu$. When $\feps(\cdot)$ is the
density function of $t_{\nu}$, we have that
$\Exp\{\feps(\ttgamma)\}\lesssim 1/\gamma_b^{\nu}$ because of
$\ttgamma\geq\gamma_b$ assumed in Assumption~\ref{asm:gamma}.  Therefore, the
second term $rp_{\gamma}\Exp\{\feps(\ttgamma)\}\lesssim rp_{\gamma}/\gamma_b^{\nu}$, indicating
that the sample size $r$ that can be taken from the external data is
determined by the tail of $\varepsilon$ (reflected by $\nu$) and the magnitude of $\gamma_b$. When
$p_{\gamma}\to 0$, we can take $r=O(\gamma_b^{\nu})$ to let this term converge to 0. If
$\varepsilon$ is lighter-tailed ($\nu$ is larger) and $\gamma_b$ is larger, then
$r$ can be larger, meaning that we can take more data points with no
biases. This is intuitive because if $\varepsilon$'s are small and
$\gamma$'s are large in $\cB$, it will be easier to identify data points with
biases.

The third term in $\check{p}$ is contributed by the target-guided
selection. Here, $rp_{\gamma}\sqrt{\varsigma}/\sqrt{\Ns}$ can be viewed as the level of uncertainty caused by target-guided
selection, and it shows that using the target data 
benefits the quality of the data selection. If we
use uniform sampling to take $r$ data points from $\cB$ without consulting the
target data, the probability of selecting at least one biased data point
is of rate $1-(1-p_{\gamma})^r\approx rp_{\gamma}$, instead of $rp_{\gamma}\sqrt{\varsigma}/\sqrt{\Ns}$. As
long as $\varsigma<\Ns$,
the target-guided data selection
may have a lower probability to include any biased data points. This term also shows that
the size of the target data limits the size of unbiased subsample we are able to
take from $\cB$.
For example, if $p_{\gamma}\nrightarrow0$ and we take $\varsigma=0.5\log\Ns$, we can only
take at most $r=o(\sqrt{\Ns/\log\Ns})$ data points from $\cB$ without
including any biased data points.
This shows that the randomness from the target data affects
the data selection, indicating that available information in the target data
limits the information one can borrow from the outlier contaminated external data.

Proposition~\ref{pro:nobias} does not rely on any specific distribution assumptions
on the error term $\varepsilon$ and the covariate $\x$ in $\cB$. If we assume
normality for $\varepsilon$ and $\x$ in $\cB$, we can
derive non-asymptotic bounds for $\hbetad$.
We introduce the following function to ease the presentation of the results:
\begin{equation}\label{eq:qfunc}
Q(u)=\frac{2\sqrt{F^{-1}_{\chi_1^2}(u)}\exp\{-\frac{1}{2}F_{\chi_1^2}^{-1}(u)\}}{\sqrt{2\pi}u},
\end{equation}
where $F_{\chi_1^2}^{-1}(u)$ is the quantile function of the chi-square
distribution with degrees of freedom one. Here, $Q(\cdot)$ is a decreasing function,
and we will come back to it with more details after the following theorem.

\begin{theorem}\label{thm:dtrans}
Under Assumptions~\ref{asm:xsubg}-\ref{asm:eps}, and~\ref{asm:gamma} 
and assume that $\varepsilon$ and $\x$ in the
external data are normally distributed. Let 
$\tilde{\gamma}\geq\gamma_b>0$ be lower bounded away from 0. 
For every $\varsigma>0$ and $0<\consp<8$, if $p_{\gamma}<\frac{1}{1+\consp}$, $\Ns\gtrsim e^{2\cvsigma}$,
$r\le\{1-(1+\consp)p_{\gamma}\}\Nb$,
$\gamma_b\gtrsim\sqrt{\log\Nb}$,
and there exists a constant
$0<\alpha_3<1$ such that $r\geq 2^{\frac{2}{\alpha_3}}$ and
$r^{\frac{\alpha_3}{2}}\Ns\gtrsim\cvsigma\log\Nb$,
then the following hold with probability at least
$1-Crp_{\gamma}e^{-\frac{\gamma_b^2}{4\sigma_{\varepsilon}^2}}-Ce^{-C[r^{1-\alpha_3}\mins\{r/(\log\Nb)^2\}]}
-Ce^{-C\{(\consp^2 p_{\gamma}\Nb)\mins\log\Nb\}}
-C\cvsigma^{-1}$ for some constant $C$:
\begin{enumerate}[(I)]
\item The estimator defined in~\eqref{eq:dtrans} with $\nu=1$ satisfies that
\begin{equation}\label{eq:dtransb1}
\|\hbetad-\tbeta\|_{\bSigma,2}
\lesssim\frac{\Ns}{\Ns+r}I_{\cS}+\frac{r}{\Ns+r}I_{\cB}^{\dtm},
\end{equation}
if $\lambda\gtrsim\sqrt{\log\Nb}$, where
\begin{align}\label{eq:dtransbound}
\begin{split}
&I_{\cS}=\sigma_{\cS}\sqrt{d}\left(\frac{\varsigma}{\Ns}\right)^{\frac{1}{2}},\\
&I_{\cB}^{\dtm} = I_1^{\dtm}+I_2^{\dtm},\\
&I_1^{\dtm}=\sigma_{\cS}\sqrt{d}\left(\frac{\varsigma}{\Ns}\right)^{\frac{1}{2}}
Q\left\{\left(1-\frac{1}{\log\Nb}\right)\frac{r}{\Nb}\right\}, \quad\text{and}\\
&I_2^{\dtm}=\frac{\sqrt{\cvsigma}\{(\cvsigma^2\log\Nb)\maxs(p_{\gamma}\Nb)\}}{\sqrt{r}}.
\end{split}
\end{align}
\item The estimator defined in~\eqref{eq:dtrans} with $\nu=2$ satisfies that $\forall\lambda>0$,
\begin{equation}\label{eq:dtransb2}
\|\hbetad-\tbeta\|_{\bSigma,2}
\lesssim\frac{\Ns}{\Ns+c_{\lambda}r}I_{\cS}+\frac{c_{\lambda}r}{\Ns+c_{\lambda}r}I_{\cB}^{\dtm},
\end{equation}
where $c_{\lambda}=\lambda/(1+\lambda)$, and $I_{\cS}$ and $I_{\cB}^{\dtm}$ have
the same definitions as in~\eqref{eq:dtransbound}.
\end{enumerate}
\end{theorem}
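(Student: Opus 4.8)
The plan is to reduce Theorem~\ref{thm:dtrans} to a combined convex M-estimation problem over the target data and a set of \emph{unbiased} selected external points, and then to track how the target-based selection perturbs the estimating equation. Throughout I would condition on the target sample $\Ds=\{(y_t,\x_t)\}_{t\in\cS}$, which freezes $\hbeta_{\cS}$, the displacement $\bDelta:=\hbeta_{\cS}-\tbeta$, and the target gradient $\sumcs\x_t\varepsilon_t$; the external pairs $\{(\x_e,\varepsilon_e)\}_{e\in\cB}$ remain i.i.d.\ and independent of $\Ds$, and $\cBd$ is then a deterministic function of them through the order-statistic threshold $\tau:=|y-\x\tp\hbeta_{\cS}|_{(r)}$. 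On a $\Ds$-event of the required probability, standard least-squares concentration under Assumptions~\ref{asm:xsubg}--\ref{asm:eps} gives both $\|\bDelta\|_{\bSigma,2}\lesssim I_{\cS}$ and $\|\sumcs\x_t\varepsilon_t\|_{\bSigma^{-1}}\lesssim\sigma_{\cS}\sqrt{d\Ns\varsigma}$. The first structural input is Proposition~\ref{pro:nobias}: under the normality assumed here it sharpens so that Algorithm~\ref{alg:det} selects only unbiased points (all $\gamma_{[i]}=0$) except on an event of probability $\lesssim rp_{\gamma}e^{-\gamma_b^2/(4\sigma_{\varepsilon}^2)}$, which uses $p_{\gamma}<1/(1+\consp)$, $\gamma_b\gtrsim\sqrt{\log\Nb}$ and $r\le\{1-(1+\consp)p_{\gamma}\}\Nb$ to push the biased residuals above $\tau$. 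On this event the selected external responses obey $y_e=\x_e\tp\tbeta+\varepsilon_e$ exactly.

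For the estimation bound itself I would run the usual convex M-estimation machinery on the objective $\sumcs(y_t-\x_t\tp\bbeta)^2+\sumcbd\mathcal{H}(y_e-\x_e\tp\bbeta;\lambda)$ from Proposition~\ref{pro:thr}. Two ingredients suffice. First, a restricted-strong-convexity / curvature bound showing the Hessian proxy $\sumcs\x_t\x_t\tp+\sumcbd\x_e\x_e\tp I(|\varepsilon_e-\x_e\tp\bDelta|\le\lambda)$ dominates $c(\Ns+r)\bSigma$: Assumption~\ref{asm:xsubg} controls $\sum\x_e\x_e\tp$ through sub-Gaussian concentration, while the choice $\lambda\gtrsim\sqrt{\log\Nb}$ (together with the threshold concentration below) keeps $\tau\le\lambda$, so every selected point lies in the quadratic region of the Huber loss and the indicator is effectively one; this is what makes the curvature scale like $\Ns+r$. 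Second, an upper bound on the gradient at $\tbeta$. Combining the two gives $\|\hbetad-\tbeta\|_{\bSigma,2}\lesssim(\Ns+r)^{-1}\|\nabla L(\tbeta)\|_{\bSigma^{-1}}$, and the stated decomposition $\tfrac{\Ns}{\Ns+r}I_{\cS}+\tfrac{r}{\Ns+r}I_{\cB}^{\dtm}$ then falls out of the gradient analysis.

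The gradient at $\tbeta$ is $\sumcs\x_t\varepsilon_t+\sumcbd\x_e\psi_1(\varepsilon_e;\lambda)$. The first term is frozen given $\Ds$ and, divided by $\Ns+r$, contributes exactly the $\tfrac{\Ns}{\Ns+r}I_{\cS}$ piece. The external sum I would split, given $(\Ds,\{\x_e\})$, into its conditional mean and a fluctuation. Because selection keeps $|\varepsilon_e-\x_e\tp\bDelta|\le\tau$ and $\varepsilon_e$ is symmetric, $\Exp[\psi_1(\varepsilon_e;\lambda)\mid e\in\cBd,\x_e]$ vanishes when $\bDelta=\0$ but is otherwise nonzero; a first-order expansion in the scalar $\x_e\tp\bDelta$ shows the conditional bias equals (slope)$\cdot\x_e\tp\bDelta$, where for Gaussian errors the slope is precisely the truncated-mean sensitivity $2\tau\feps(\tau)/(r/\Nb)$, which on substituting the $\tfrac{r}{\Nb}$-quantile $\tau=\sigma_{\varepsilon}\sqrt{F_{\chi_1^2}^{-1}(r/\Nb)}$ is exactly $Q(r/\Nb)$. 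Summing $\x_e$ times this bias reconstructs $Q(r/\Nb)\,(\sumcbd\x_e\x_e\tp)\bDelta\approx Q(r/\Nb)\,r\,\bSigma\bDelta$, and since $\|\bDelta\|_{\bSigma,2}\lesssim I_{\cS}$ this yields the $\tfrac{r}{\Ns+r}I_1^{\dtm}$ term, with the $(1-1/\log\Nb)$ deflation absorbing the threshold fluctuation. The remaining mean-zero fluctuation of $\sumcbd\x_e\psi_1$ is a sum of $r$ conditionally bounded (by $\lambda$) terms, so a Bernstein bound together with the threshold- and outlier-count concentration produces the $\tfrac{r}{\Ns+r}I_2^{\dtm}$ term.

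The main obstacle is making the bias computation rigorous, since $\cBd$ is an order-statistic set: the selected $\varepsilon_e$'s are neither independent nor identically truncated, and the threshold $\tau$ is itself random and coupled to $\bDelta$. I would control this with the concomitant/order-statistic tools flagged before Proposition~\ref{pro:nobias}, showing that $\tau$ concentrates at the $\tfrac{r}{\Nb}$-quantile of $|\varepsilon-\x\tp\bDelta|$ (a second-moment/Chebyshev argument for the quantile, which is the source of the polynomial $\varsigma^{-1}$ term in the failure probability), and then replacing the order-statistic selection by an i.i.d.\ fixed-threshold event, paying the approximation error through the remaining exponential terms. Finally, the $\ell_2$ case ($\nu=2$) is much easier: $\mathcal{H}(z;\lambda)=c_{\lambda}z^2$ turns the objective into a weighted least squares with external weight $c_{\lambda}$, and $\psi_2(z;\lambda)=c_{\lambda}z$ is linear, so the normal equation is explicit; the identical bias/fluctuation split goes through with $\Ns+r$ replaced by $\Ns+c_{\lambda}r$ and with no need for the Huber-region argument, which is exactly why \eqref{eq:dtransb2} holds for every $\lambda>0$.
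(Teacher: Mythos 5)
Your proposal is correct and follows essentially the same route as the paper's proof: first the clean-selection event (your sharpened use of Proposition~\ref{pro:nobias} is the paper's Lemma~\ref{lem:nobias2}), then the reduction of $\hbetad$ to ordinary least squares on $\cS\cup\cBd$ (your observation that every selected point stays in the quadratic region of the Huber loss is exactly the content of the paper's fixed-point argument that $\hgamma_{\cBd}=\0$ once $\lambda\gtrsim\sqrt{\log\Nb}$), then the selection-induced conditional bias of the retained errors, linear in $\hbeta_{\cS}-\tbeta$ with slope $Q$ --- your identity $2\tau\feps(\tau)/(r/\Nb)=Q(r/\Nb)$ is the same integration-by-parts fact as the truncated-$\chi_1^2$ mean identity $\mu^{(0,b)}=1-Q(b)$ in Lemma~\ref{lem:chisq} --- giving $I_1^{\dtm}$, and finally a fluctuation bound with concomitant/order-statistic tools (Lemmas~\ref{lem:concox} and~\ref{lem:deteps}) giving $I_2^{\dtm}$. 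The only differences are in execution, not substance: the paper avoids your first-order expansion (and its remainder control) by computing the conditional mean of $\varepsilon\,\tx\tp\bu$ given $|\Ze|$ exactly under Gaussianity, it controls the Gram matrix of the \emph{selected} covariates via conditional sub-Gaussianity plus Hanson--Wright (needed because selection is correlated with $\x_e$ through $\x_e\tp(\hbeta_{\cS}-\tbeta)$, so unconditional concentration does not directly apply), and the polynomial $C\cvsigma^{-1}$ failure term actually arises from a Chebyshev bound on the gradient fluctuation rather than from the quantile concentration, which is exponential.
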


The bounds \eqref{eq:dtransb1} and~\eqref{eq:dtransb2} in Theorem~\ref{thm:dtrans} decreases
as the target sample size $\Ns$ increases, so increasing $\Ns$ always benefits
the performance of $\hbetad$. If $\Nb$ is 
fixed, then $I_{\cB}^{\dtm}$ is a decreasing function of $r$ (remember that $Q(\cdot)$ is decreasing);
this implies that
if we are always able to include ``good points'' into the sample, 
a larger subsample leads to a better performance. This is different from the results for
random sampling in Theorem~\ref{thm:rtrans}, where the bias term
$I_{\gamma}^{\textrm{bias}}$ is an increasing function of $r$ regardless of
$\Nb$. However, the bound does not necessarily decrease
as $r$ increases if $\Nb$ also increases.
For example, if $p_{\gamma}$ is bounded away from 0, then
$I_2^{\dtm}\approx\sqrt{\cvsigma}(p_{\gamma}\Nb/\sqrt{r})$, which may increase
as $r$ increases since $\Nb$ may increase as well. Therefore, 
it is possible that a subsample estimator outperforms the full data
estimator.

When $p_{\gamma}$ is small enough, e.g., $p_{\gamma}=O(\cvsigma^2\log\Nb/\Nb)$, we have
$I_2^{\dtm}\approx\cvsigma^{\frac{3}{2}}(\log\Nb/\sqrt{r})$. In this case, if
$r$ is sufficiently large, e.g., $r\gtrsim\cvsigma\Ns(\log\Nb)^2$,
then $I_1^{\dtm}$ becomes the dominating term in
$I_{\cB}^{\dtm}$, and the function $Q(u)$ plays an important role in determining
the performance of $\hbetad$. This function is continuous, strictly decreasing,
and satisfies $Q(0)=1$ and $Q(1)=0$ (as shown in Lemma~\ref{lem:chisq} in the
  supplementary material). Therefore, when $r$ is much larger than $n_{\cS}$, e.g.,
  $r\gtrsim\cvsigma\Ns(\log\Nb)^{2+\delta}$, for some $\delta>0$, then $I_2^{\dtm}$
  is small compared with $I_1^{\dtm}$ and therefore can be ignored. We have that 
\begin{equation*}
\frac{\Ns}{\Ns+r}I_{\cS}+\frac{r}{\Ns+r}I_1^{\dtm}\lesssim
\sigma_{\cS}\sqrt{d}\left(\frac{\cvsigma}{\Ns}\right)^{\frac{1}{2}},
\end{equation*}
where the right-hand side is the error bound for $\hbeta_{\cS}$. 
This means that $\hbetad$ will never be worse than $\hbeta_{\cS}$.

The function $Q(\cdot)$ gives a closer description on the connection between the
performance of $\hbetad$ and the sampling rate $r/\Nb$. We plot this
function in Figure~\ref{fig:quantile} to facilitate the discussion.
When the
sampling rate $r/\Nb$ is small, $u=(1-1/\log\Nb)(r/\Nb)$ is close to zero and
$Q(u)$ is close to one; the error bound of $\hbetad$ is close to that of
$\hbeta_{\cS}$. A heuristic interpretation is the selected
``good points'' are all very close to the hyperplane
$y-\x\tp\hbeta_{\cS}$ for a small sampling rate, and the estimator is highly affected by the
uncertainty of the target data $\cS$. When the sampling rate $r/\Nb$ increase, as shown in
Figure~\ref{fig:quantile}, the values of $Q(\cdot)$ drops very fast to zero when
$r/\Nb$ goes to one. In this case, the term $I_1^{\dtm}$ is much smaller than
the bound for $\hbeta_{\cS}$. We state this in the following corollary.

\begin{figure}[t]
  \centering
  \includegraphics[width=0.4\textwidth]{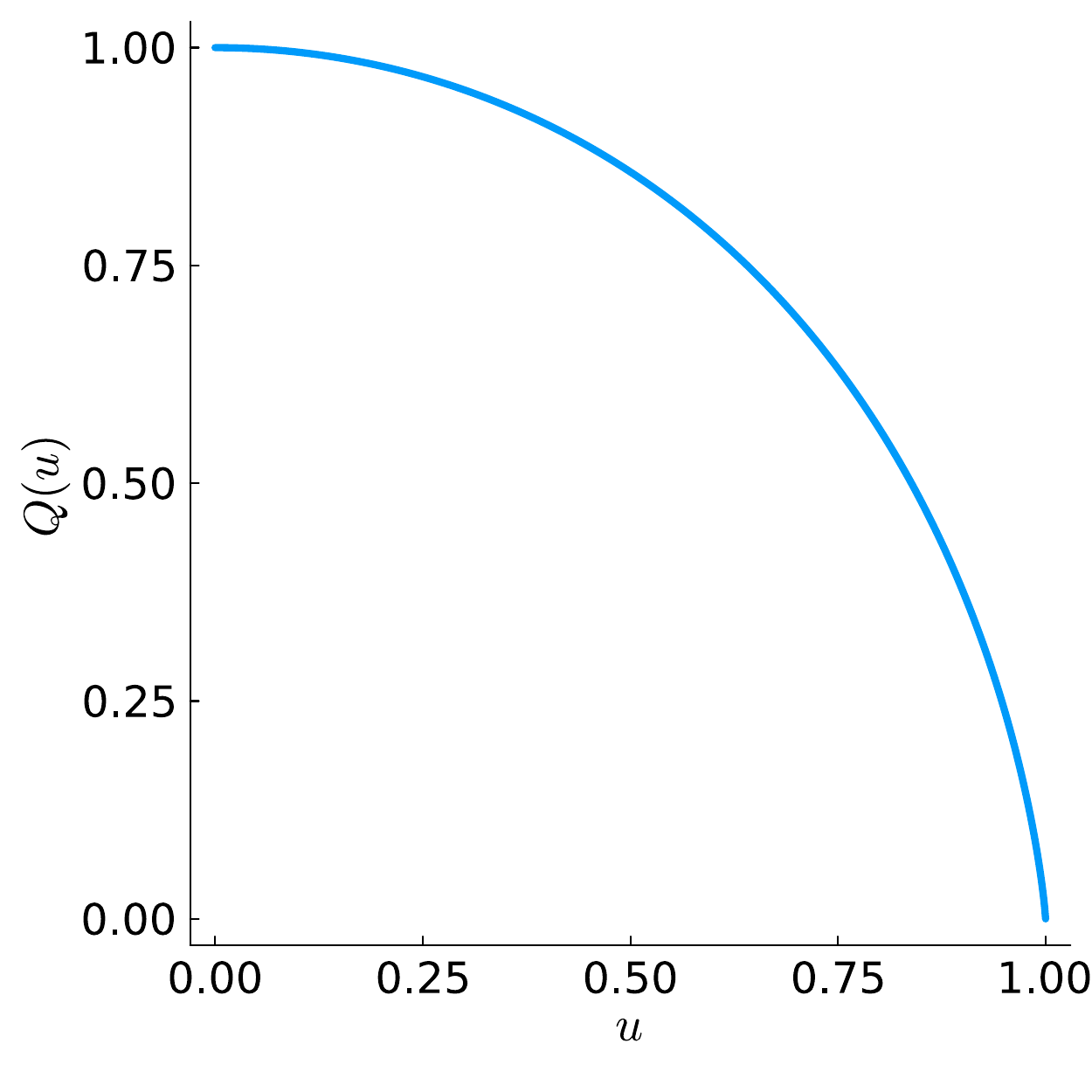}
  \caption{Plot of $Q(u)$}
  \label{fig:quantile}
\end{figure}

\begin{corollary}\label{cor:orderimprove}
Under the same assumptions in Theorem~\ref{thm:dtrans}, if there exists $q>1$ and $C>0$, such that
$(\Nb-r+r/\log\Nb)^{1-\frac{1}{q}}\le C$, then with the same probability bound as in Theorem~\ref{thm:dtrans},
\begin{equation*}
I_1^{\dtm}\lesssim
\left(\frac{\cvsigma}{\Ns}\right)^{\frac{1}{2}}
\left(\frac{1}{\Nb}\right)^{1-\frac{1}{q}}.
\end{equation*}
\end{corollary}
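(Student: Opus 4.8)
The plan is to treat the claimed inequality as a purely analytic statement about the function $Q$ evaluated at $u=(1-1/\log\Nb)(r/\Nb)$, because $I_1^{\dtm}$ is a deterministic function of $r,\Nb,\Ns,\cvsigma$ on the high-probability event already secured in Theorem~\ref{thm:dtrans}. Consequently, the probability bound carries over verbatim and no new stochastic argument is needed. The first step is to record the exact distance of the argument from one, namely $1-u=(\Nb-r+r/\log\Nb)/\Nb$, so that the hypothesis $(\Nb-r+r/\log\Nb)^{1-1/q}\le C$ turns into a direct statement about $(1-u)^{1-1/q}$. Once $Q(u)$ is controlled by an appropriate power of $1-u$, the conclusion follows by substitution.

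The heart of the proof is a quantitative estimate of $Q(u)$ as $u\to1$. Writing $v=F_{\chi_1^2}^{-1}(u)$ and $z=\sqrt v$, the identity $F_{\chi_1^2}(v)=2\Phi(z)-1$ gives $1-\Phi(z)=(1-u)/2$, and substituting $e^{-v/2}=\sqrt{2\pi}\,\phi(z)$ into the definition of $Q$ yields the clean form $Q(u)=2z\phi(z)/u$. I would then invoke the two-sided Mills-ratio bounds $\frac{\phi(z)}{z}(1-z^{-2})\le 1-\Phi(z)\le\frac{\phi(z)}{z}$, supplementing the qualitative properties of $Q$ recorded in Lemma~\ref{lem:chisq}. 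The lower Mills bound, together with $1-\Phi(z)=(1-u)/2$, controls $\phi(z)$ from above and gives $Q(u)\lesssim v(1-u)$; the upper Mills bound gives $(1-u)/2\le e^{-v/2}/\sqrt{2\pi}$ for $z\ge1$, which upon taking logarithms yields $v\le 2\log\frac{1}{1-u}+C$. Chaining these produces the key estimate $Q(u)\lesssim (1-u)\log\frac{1}{1-u}$, valid for $1-u$ small.

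The remaining steps are elementary. First, the inequality $\log\frac1t\le\frac{q}{e}\,t^{-1/q}$ for $t\in(0,1]$, which follows from $\sup_{t\in(0,1]}t^{1/q}\log\frac1t=q/e$ (attained at $t=e^{-q}$), converts the logarithmic factor into a power and gives $(1-u)\log\frac{1}{1-u}\lesssim (1-u)^{1-1/q}$. Second, substituting $1-u=(\Nb-r+r/\log\Nb)/\Nb$ and applying the hypothesis gives
\begin{equation*}
(1-u)^{1-1/q}=\frac{(\Nb-r+r/\log\Nb)^{1-1/q}}{\Nb^{1-1/q}}\le C\left(\frac{1}{\Nb}\right)^{1-\frac{1}{q}}.
\end{equation*}
Multiplying through by the prefactor $\sigma_{\cS}\sqrt d\,(\cvsigma/\Ns)^{1/2}$ and absorbing $\sigma_{\cS}$, $\sqrt d$, and the numerical constants into $\lesssim$ then delivers the claimed bound on $I_1^{\dtm}$.

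I expect the main obstacle to be the tail analysis of $Q$ in the second step: one must extract the correct logarithmic growth of $v=F_{\chi_1^2}^{-1}(u)$ and verify that the lower-order corrections from the Mills ratio, namely the $1-z^{-2}$ and $1/u$ factors, stay uniformly bounded over the relevant range of $u$, so that the implied constant in $Q(u)\lesssim (1-u)\log\frac{1}{1-u}$ does not degrade as $\Nb\to\infty$. Everything downstream of that estimate is routine.
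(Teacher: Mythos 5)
Your proof is correct, but it takes a genuinely different route from the paper's. The paper proves the corollary in two lines by invoking Proposition 4.1.1 of \cite{oliveira2025finite}: writing $Q(b)=1-\mu^{(0,b)}$ as the deviation of a one-sided trimmed mean from the full mean $\mu=1$ of a $\chi_1^2$ variable, a H\"older-type moment inequality gives $Q(b)\le\Exp(|X_1-\mu|^q)\,b^{-1}(1-b)^{1-\frac{1}{q}}$ for every $b\in(0,1)$, after which substituting $b=(1-1/\log\Nb)(r/\Nb)$ and the hypothesis finishes the argument. You instead analyze the closed-form expression of $Q$ directly: the identity $1-\Phi(z)=(1-u)/2$ with $z=\sqrt{F_{\chi_1^2}^{-1}(u)}$, combined with the two-sided Mills-ratio bounds, yields $Q(u)\lesssim(1-u)\log\frac{1}{1-u}$ as $u\to1$, and the elementary inequality $t\log\frac{1}{t}\le\frac{q}{e}\,t^{1-\frac{1}{q}}$ converts this to the stated power form. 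Both arguments are sound; your worry about uniformity of the Mills-ratio corrections is resolved automatically, since the hypothesis forces $1-u\le C^{q/(q-1)}/\Nb\to0$, so the factors $1/u$ and $(1-z^{-2})^{-1}$ are uniformly bounded in the only regime that matters. The trade-offs: the paper's moment argument is shorter (given the cited result), holds for all $b$ rather than only near $b=1$, and would apply verbatim to any order-statistic distribution with a finite $q$-th moment; your argument is self-contained, exploits the Gaussian structure, and delivers the strictly sharper intermediate estimate $(1-u)\log\frac{1}{1-u}=o\{(1-u)^{1-1/q}\}$, which shows the corollary's power-law bound is not tight and that $I_1^{\dtm}\lesssim(\cvsigma/\Ns)^{1/2}\Nb^{-1}\log\Nb$ actually holds under the same hypothesis.
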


The condition in Corollary~\ref{cor:orderimprove} essentially assumes that $r$ is at the same order of $\Nb$.
In this case, $\hbetad$ can have an order improvement over $\hbeta_{\cS}$. A
heuristic interpretation is more unbiased points that are away from the
hyperplane $y-\x\tp\hbeta_{\cS}$ are included.
These data points contains additional information to the target data $\cS$ and
thus
benefits the performance of $\hbetad$.

Our results in Theorem~\ref{thm:rtrans} and Theorem~\ref{thm:dtrans} offer
further insights into other 
transfer learning models in the literature. For instance,~\cite{li2022transfer}
examines a linear model on external data with a parameter shift, expressed as
$\yb=\Xb(\tbeta+\bdelta)+\bm\varepsilon_{\cB}$, where
$\bm\varepsilon_{\cB}$ is the error vector.  If we define
$\bdelta=(\Xb\tp\Xb)^{-1}\Xb\tp\bgamma_{\cB}$, then the mean-shift model in
~\eqref{eq:sourcemodel} reduces to the parameter shift model. Theorem 1 of
~\cite{li2022transfer} imposes a condition on their method's performance,
requiring that $\|\bdelta\|_{\tau}=o(\sqrt{\log d/\Ns})$ with $\tau\in[0,1]$.
This constraint essentially
necessitates that the mean magnitude of $\gamma_e$ is not excessively large in the
mean-shift model, which is similar to the constraints in~\eqref{eq:2}
and~\eqref{eq:3} in Theorem~\ref{thm:rtrans}. However, for $\hbetad$, we
do not need to put any assumptions on the magnitude of $\gamma$, if
there is a positive proportion of data points in $\cB$ that is unbiased. This is
because intuitively, as long as we have enough external data and a proportion of
identifiable unbiased points, target-guided data selection is always able to
select some unbiased points with a large probability. Therefore, the magnitude
of $\gamma$ will not affect $\hbetad$, because $\hbetad$ are constructed based
on unbiased data points with a large probability. This is different from the
method in~\cite{li2022transfer} and the random subsampling estimator in
Section~\ref{sec:rtrans} as they put restrictions on the mean magnitude of
$\gamma_e$ and thus fail if there are data points with extreme large $\gamma_e$.

\section{Combined Estimators \& Computational issues}\label{sec:computation}

In Sections~\ref{sec:rtrans} and~\ref{sec:dtrans}, we introduced two data
selection strategies: Poisson random sampling and target-guided selection. Here,
we compare the random subsampling estimation with the target-guided selection
estimator.

Heuristically, the first two terms in~\eqref{eq:simpr} for random subsampling
can be considered as the
``variance part'', and the last term can be considered as the ``bias part''.
While, the error bound contributed by $\cB$ in the target-guided selection can
all be considered as ``variance part''. We
see that the bias term for random subsampling is larger compared with
target-guided selection if we select data points with no biases.  For random
sampling, there is no restriction on the subsample size. The
variances of $\hbetaw$ can be made small if we take enough external data
points. However, this may result in a large bias.

On the other hand, 
although the target-guided approach can select data points with no biases, the selection
may be more reliable when $r$ is small. However, when sampling rate is small,
the target-guided selection may result in large variance at the same level as
$\hbeta_{\cS}$. %

In summary, random sampling is tailored to reduce variance but may not
be effective in mitigating biases. Conversely, target-guided
selection focuses on reducing bias but may struggle with variance minimization.
Given these complementary strengths and weaknesses, a natural idea is to combine
the two approaches. By leveraging both methods, we may strike a balance
between bias and variance, which has the potential to yield more accurate
estimators.

In this section, we propose two methods for combining these selection strategies
and address several computational considerations to support their practical
implementation.

\subsection{Combined Estimators}\label{sec:comb}

We propose two methods for combining the data selection
approaches. Specifically, let $\cBr$ denote the subsample obtained through
random sampling and $\cBd$ the subsample obtained via target-guided
selection. The relative sizes of these subsamples can be determined based on
prior knowledge of the external dataset $\cB$.  For example, if the external
dataset exhibits significant bias relative to the target data, a larger
proportion of observations can be allocated to $\cBd$ to reduce bias. On the
other hand, if the external dataset is only mildly biased, assigning a larger
sample size to $\cBr$ would be more effective in mitigating variance.

One approach to combine the two data selection approaches is to merge all data
points in the two subsamples and compute a transfer learning estimator. We refer
to this method as data combining. The resulting estimator incorporates
information from three data sources and can be viewed as a special case
of~\eqref{eq:ftrans} with $\cBs = \cBs_{\mathrm{c}}=\cBr \cup \cBd$. Specifically, the combined
data estimator, $(\hbeta_{\comdat}\tp, \hgamma_{\cBs_{\mathrm{c}}}\tp)\tp$, is the
minimizer of the following objective function:
\begin{equation}
\label{eq:datcomb}
\sumcs(y_t-\x_t\tp\bbeta)^2
+\sum_{e\in\cBs_{\mathrm{c}}}w_e(y_e-\x_e\tp\bbeta-\gamma_e)^2
+\lambda\sum_{e\in\cBs_{\mathrm{c}}}\frac{2w_e|\gamma_e|^{\nu}}{\nu},
\end{equation}
where $w_e=\rho\pi_e^{-1}$ for $e\in\cBr$ and $w_e=1$ for $e\in\cBd$.

Another way to combine the Poisson sampling and the target-guided selection 
approaches is to aggregate the estimator $\hbetaw$ in~\eqref{eq:rtrans} and
the estimator $\hbetad$ in~\eqref{eq:dtrans}. We
refer this approach as estimator combining, which is defined as:
\begin{equation}\label{eq:estcomb1}
\hbeta_{\comest}=(2\V_{\cS}+\V_{\cBr}+\V_{\cBd})^{-1}
\left\{(\V_{\cS}+\V_{\cBr})\hbetaw
+(\V_{\cS}+\V_{\cBd})\hbetad\right\},
\end{equation}
where $\V_{\cS}=\sumcs\x_t\x_t\tp$, 
$\V_{\cBr}=\rho\sumcbr\x_e\x_e\tp/\pi_e$, and $\V_{\cBd}=\sumcbd\x_e\x_e\tp$.
The idea of combining estimators is widely utilized in various domains,
including distributed computing \citep{LinXie2011}, streaming data analysis
\citep{schifano2016online}, and subsampling~\cite{yu2020optimal}.
In our case, the combined estimator $\hbeta_{\comest}$ uses the target dataset
$\cT$ twice, thereby assigning greater weight to data from $\cS$. This is
equivalent to assigning comparatively smaller weights to data from the subsamples
$\cBd$ and $\cBr$.

We provide further insights on the relationship between $\hbeta_{\comdat}$
in~\eqref{eq:datcomb} and $\hbeta_{\comest}$ in~\eqref{eq:estcomb1}, by defining
another objective function:
\begin{equation}
\label{eq:estcomb3}
\sumcs(y_t-\x_t\tp\bbeta)^2
+\frac{1}{2}\sum_{e\in\cBs_{\mathrm{c}}}w_e(y_e-\x_e\tp\bbeta-\gamma_e)^2
+\frac{\lambda}{2}\sum_{e\in\cBs_{\mathrm{c}}}\frac{2w_e|\gamma_e|^{\nu}}{\nu}.
\end{equation}
Compared
with~\eqref{eq:datcomb}, the only difference is that~\eqref{eq:estcomb3} assigns
less weights to the external data points in
$\cBs$. Since we have computed $(\hbetaw\tp,\hgamma_{\cBr}\tp)\tp$ based
on $\cBr$ and $(\hbetad\tp,\hgamma_{\cBd}\tp)\tp$ based on $\cBd$, we
can use $\hgamma_{\dtm}^{*}$ and $\hgamma_{\rw}^{*}$ to replace the
$\gamma_e$'s in~\eqref{eq:estcomb3}, which results in
\begin{equation}\label{eq:estcomb2}
\sumcs(y_t-\x_t\tp\bbeta)^2
+\frac{1}{2}\sumcbr\frac{\rho(y_e-\x_e\tp\bbeta-\hat{\gamma}_{\cBr,e})^2}{\pi_e}
+\frac{1}{2}\sumcbd
(y_e-\x_e\tp\bbeta-\hat{\gamma}_{\cBd,e})^2,
\end{equation}
with $\hat{\gamma}_{\cBr,e}$ and $\hat{\gamma}_{\cBd,e}$ being elements of
$\hgamma_{\cBr}$ and $\hgamma_{\cBd}$, respectively. The combined
estimator $\hbeta_{\comest}$ in~\eqref{eq:estcomb1} is the minimizer
of~\eqref{eq:estcomb2}. Therefore, $\hbeta_{\comest}$ is a simplified combining
estimator that puts more weights on $\cS$.

\subsection{Computational Issues}\label{sec:compissue}

In this section, we discuss the practical implementation for the proposed
transfer learning method. As discussed in previous sections, the proposed
transfer learning estimators can be considered as a special case of the general
estimator in \eqref{eq:ftrans} with specific subsample set $\cBs$ and weights
$\{w_e\}_{e\in\cBs}$.  Penalized estimators similar to \eqref{eq:ftrans}
usually involve coordinate-descend algorithm, which conducts iteration
calculation with both $\hbeta$ and $\hgamma_{\cBs}$ alternatively. Our method can
avoid iterations on $\bbeta$ by utilizing the special structure of mean-shift
models. The following Proposition~\ref{pro:thr2} provides an equivalent
representation of the estimator in \eqref{eq:ftrans} that provides us a unified
framework for the implementation of the proposed estimators.
It shows that
only $\hgamma_{\cBs}$ requires iterative calculations with \eqref{eq:itgamma},
while $\hbeta$ can be calculated with~\eqref{eq:itbeta} after obtaining
$\hgamma_{\cBs}$.

\begin{proposition}\label{pro:thr2}
For any $\lambda>0$
and penalty function $P(\cdot)$, the estimator
in~\eqref{eq:ftrans} matches the estimator obtained at convergence of
iterating 
\begin{equation}
\label{eq:itgamma}
\hgamma_{\cBs,\lambda}^{(k+1)}=\Theta\left(\W_{\cBs}^{-\frac{1}{2}}\bm{z}^{(k)};\lambda\right),
\bm{z}^{(k)}=(\I-\bH_{\cBs}^w)\W_{\cBs}^{\frac{1}{2}}\ybs
+\bH_{\cBs}^w\W_{\cBs}^{\frac{1}{2}}\hgamma_{\cBs,\lambda}^{(k)}-\bH_{\cBs,\cS}^w\ys,
\end{equation}
where $\W_{\cBs}=\diag\{\ldots,w_e,\ldots\}$, for $e\in\cBs$,
$\bH_{\cBs}^w=\W_{\cBs}^{\frac{1}{2}}\Xbs\V^{-1}(\W_{\cBs}^{\frac{1}{2}}\Xbs)\tp$, and
$\bH_{\cBs,\cS}^w=\W_{\cBs}^{\frac{1}{2}}\Xbs\V^{-1}\Xs\tp$; and then calculating 
\begin{equation}
\label{eq:itbeta}
\hbeta^{(k+1)}=\V^{-1}\Xs\tp\ys+\V^{-1}\Xbs\tp\W_{\cBs}(\ybs-\hgamma_{\cBs,\lambda}^{(k+1)}),
\end{equation}
where $\V=\Xs\tp\Xs+\Xbs\tp\W_{\cBs}\Xbs$.
\end{proposition}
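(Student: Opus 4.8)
The plan is to recognize the iteration in the statement as block (Gauss--Seidel) coordinate descent applied to the joint objective in~\eqref{eq:ftrans}, and then to match its fixed point with the first-order optimality conditions of that objective. Viewing~\eqref{eq:ftrans} as a function of $(\bbeta,\bgamma_{\cBs})$, note that it is convex for the $\ell_1$ and $\ell_2$ penalties, smooth and strictly convex in $\bbeta$ through the weighted design, and that the penalty term is \emph{separable} across the coordinates $\gamma_e$. These are exactly the structural features that make alternating minimization tractable and that I would exploit throughout.

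First I would record the two partial optimality conditions. Minimizing over $\bbeta$ with $\bgamma_{\cBs}$ held fixed is a weighted least-squares problem whose normal equations yield
\[
\hbeta=\V^{-1}\left\{\Xs\tp\ys+\Xbs\tp\W_{\cBs}(\ybs-\bgamma_{\cBs})\right\},\qquad \V=\Xs\tp\Xs+\Xbs\tp\W_{\cBs}\Xbs,
\]
which is precisely~\eqref{eq:itbeta}. Minimizing over each $\gamma_e$ with $\bbeta$ fixed, the factor $w_e>0$ cancels and the subproblem reduces to $\argmin_{\gamma}\{(y_e-\x_e\tp\bbeta-\gamma)^2+\lambda P(\gamma)\}$, so by the definition of $\Theta$ in Proposition~\ref{pro:thr} the minimizer is $\gamma_e=\Theta(y_e-\x_e\tp\bbeta;\lambda)$, i.e.\ $\bgamma_{\cBs}=\Theta(\ybs-\Xbs\bbeta;\lambda)$ applied coordinatewise.

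The core step is to substitute the $\bbeta$-update into the source residual and simplify. Computing $\W_{\cBs}^{1/2}(\ybs-\Xbs\hbeta)$ after inserting the expression for $\hbeta$ and identifying the blocks $\bH_{\cBs}^w=\W_{\cBs}^{1/2}\Xbs\V^{-1}\Xbs\tp\W_{\cBs}^{1/2}$ and $\bH_{\cBs,\cS}^w=\W_{\cBs}^{1/2}\Xbs\V^{-1}\Xs\tp$, the cross terms collapse to give
\[
\W_{\cBs}^{1/2}(\ybs-\Xbs\hbeta)=(\I-\bH_{\cBs}^w)\W_{\cBs}^{1/2}\ybs+\bH_{\cBs}^w\W_{\cBs}^{1/2}\bgamma_{\cBs}-\bH_{\cBs,\cS}^w\ys=\bm{z},
\]
so that $\ybs-\Xbs\hbeta=\W_{\cBs}^{-1/2}\bm{z}$. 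Composing the $\bbeta$-update with the thresholding of this residual therefore reproduces exactly~\eqref{eq:itgamma}, namely $\hgamma_{\cBs,\lambda}^{(k+1)}=\Theta(\W_{\cBs}^{-1/2}\bm{z}^{(k)};\lambda)$. This algebraic reduction—keeping the $\W_{\cBs}^{1/2}$ factors consistent so that the profiled residual telescopes into $\bm{z}$—is the one calculation I would carry out with care.

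Finally I would close the loop through convergence of block coordinate descent. Because the smooth part of~\eqref{eq:ftrans} is convex and the nonsmooth penalty is separable in $\bgamma_{\cBs}$, alternating minimization converges to a global minimizer, and at any fixed point the two updates reproduce the stationarity conditions derived above; conversely the minimizer $(\hbeta,\hgamma_{\cBs})$ of~\eqref{eq:ftrans} satisfies both conditions and is thus a fixed point. Hence the limit of the iteration coincides with the estimator in~\eqref{eq:ftrans}. The main obstacle is the matrix bookkeeping in the core step, together with the point that for the convex $\ell_1$ and $\ell_2$ penalties the fixed point is genuinely the global minimizer rather than merely a stationary point.
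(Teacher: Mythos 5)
Your proof is correct, and its central computation --- profiling out $\bbeta$ via the weighted normal equations and checking that $\W_{\cBs}^{1/2}(\ybs-\Xbs\hbeta)$ telescopes into $\bm{z}$ --- is exactly the algebra that underlies the paper's own argument (it is how the paper's fixed-point equation \eqref{eq:fpgamma} arises from \eqref{eq:itgamma}). Where you genuinely differ is the meeting point used to identify the two estimators. The paper proves Propositions~\ref{pro:thr} and~\ref{pro:thr2} simultaneously: starting from the same two blockwise optimality conditions that you write down, it eliminates $\bgamma_{\cBs}$ using $\psi(z;\lambda)=z-\Theta(z;\lambda)$ to obtain the stationarity (K-K-T) condition \eqref{eq:kkt} of the profiled robust objective \eqref{eq:fhuber}, and then verifies by direct matrix calculation that the fixed point of \eqref{eq:itgamma}--\eqref{eq:itbeta} satisfies that same equation. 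You instead stay entirely with the joint objective \eqref{eq:ftrans}, recognize \eqref{eq:itgamma} as the $\bgamma$-update composed with the $\bbeta$-update of block coordinate descent, and identify fixed points with blockwise-optimal points, invoking convexity plus separability of the penalty to upgrade blockwise optimality to global optimality. Each route buys something: the paper's yields Proposition~\ref{pro:thr} in the same stroke and exposes the Huber-loss connection, while yours is self-contained for Proposition~\ref{pro:thr2} and is in fact more careful on one logical point --- the paper only shows that the minimizer and the fixed point satisfy a common stationarity equation, which gives equality only if that equation has a unique solution, whereas your convexity argument makes the implication ``fixed point $\Rightarrow$ global minimizer'' explicit for the $\ell_1$ and $\ell_2$ cases. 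The one caveat, which you flag yourself, is that the proposition is stated for arbitrary $P(\cdot)$: for a non-convex penalty your argument (and equally the paper's) only identifies fixed points of the iteration with blockwise stationary points, so in that generality ``matches'' should be read as ``satisfies the same first-order conditions,'' which is the standard the paper itself adopts.
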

Another major issue for practical implementation of a penalized estimator is to
determine the tuning parameter $\lambda$, which is usually obtained through
cross validation. However, cross validation is not appropriate in our case,
because our major goal is to improve the estimation on the target data set,
which is often of a small sample size. Data splitting on the target data may
result in very small training and/or validating sets. This may cause inaccurate
estimation and even become infeasible for the target-guided estimator with
high-dimensional data.
Therefore, we prefer a tuning method that can be implemented without data
splitting. We adopt the idea in~\cite{she2011outlier} and suggest tuning
$\lambda$ with information criteria such as Akaike information criterion (AIC)
and Bayesian information criterion (BIC) as illustrated below.

Since we have two sources of data, the target data and the external data, we
combine the residual sum of squares (RSS) of the two sources to define
information criteria.  The RSS for the source data $\cS$ is
$\text{RSS}_{\cS}=\|\ys-\Xs\hbeta\|_2^2$, where $\hbeta$ is the estimator based
on both data.
For the external data $\cBs$, a reduced linear model can be obtained as
\begin{equation*}
  (\I-\bm{P}_{\Xbs})\ybs=(\I-\bm{P}_{\Xbs})\bgamma_{\cBs}+(\I-\bm{P}_{\Xbs})\ep_{\cBs},
\end{equation*}
where $\bm{P}_{\Xbs}$ is the projection matrix of $\Xbs$. Therefore, we define the
RSS for the external data $\cBs$ as the RSS form of the reduced linear model,
namely, $\text{RSS}_{\cBs}=\|(\I-\bm{P}_{\Xbs})(\ybs-\hgamma_{\cBs,\lambda})\|_2^2$.
With the combines RSS being defined as
$\text{RSS}=\text{RSS}_{\cS}+\text{RSS}_{\cBs}$, the AIC and BIC are given as
follows:
\begin{align}\label{eq:aic}
\text{AIC}=m\log\left(\frac{\text{RSS}}{m}\right)+2\dfr,\quad\text{ and }\quad
\text{BIC}=m\log\left(\frac{\text{RSS}}{m}\right)+\dfr\left\{\log(m)+1\right\},
\end{align}
where $m=\Ns+\Nb-d$, and $\dfr$ is the degrees of freedom of the model. For
$\ell_1$ penalty, $\dfr=\#\{e:\gamma_e\neq 0\}+1$, where $\#$ denotes the
cardinal number of the set. For $\ell_2$ penalty,
\begin{equation*}
\dfr=\frac{\Nbs}{1+\lambda}+\frac{\lambda
  d}{1+\lambda}+\sum_{i=1}^d \frac{q_i}{\lambda+(1+\lambda)q_i},
\end{equation*}
where $q_i,i=1,2,\ldots,d$ are the eigenvalues of the matrix
$\V_{\cS}^{\frac{1}{2}}\V_{\cBs}^{-1}\V_{\cS}^{\frac{1}{2}}$. Detailed calculations of these
degrees of freedom are presented in the supplementary material.

We present a practical Algorithm~\ref{alg:ftran} to implement the proposed
transfer learning estimators.
\begin{algorithm}[H]%
  \caption{Iterative algorithm for transfer learning estimators}
  \label{alg:ftran}
  \begin{algorithmic}[1]
    \REQUIRE Target data $(\ys,\Xs)$, external data $(\yb,\Xb)$, tuning parameter set
    $\{\lambda_i\}_{i=1}^n$, starting value $\bgamma_{\cBs}^{(0)}$, tolerance
    value $\epsilon$, and max iteration $K$; 
    \STATE Obtain a subsample $(\ybs,\Xbs)$ using Algorithm~\ref{alg:pois} or
    \ref{alg:det}, and record the corresponding weights $\{w_e\}_{e\in\cBs}$.
    \STATE Compute $\V$, $\bH_{\cBs}^w$,
    $\bH_{\cBs}^w\W_{\cBs}^{\frac{1}{2}}\ybs$ and $\bH_{\cBs,\cS}^w\ys$ according to Proposition~\ref{pro:thr2}; 
    \FOR{$i=1,\ldots,n$}
    \FOR{$k=0,\ldots,K$}
    \STATE Compute $\bgamma_{\cBs,\lambda_i}^{(k+1)}$ using~\eqref{eq:itgamma};
    \IF{$\|\bgamma_{\cBs,\lambda_i}^{(k+1)}-\bgamma_{\cBs,\lambda_i}^{(k)}\|_{\infty}<\epsilon$}
    \STATE Break;
    \ENDIF
    \ENDFOR
    \STATE %
    Compute $\hbeta_{\lambda_i}$ according to \eqref{eq:itbeta};
    \STATE Calculate 
    $\text{AIC}_{\lambda_i}$ or $\text{BIC}_{\lambda_i}$ using \eqref{eq:aic};
    \STATE Find  $m = \arg\min_{i=1,..., n}\text{AIC}_{\lambda_i}$ or $m = \arg\min_{i=1,..., n}\text{BIC}_{\lambda_i}$;
    \ENDFOR
    \ENSURE  
    $\hbeta_{\lambda_m}$ and $\hgamma_{\cBs,\lambda_m}^{(k+1)}$.
  \end{algorithmic}
\end{algorithm}

\section{Numerical Experiments}\label{sec:numeric}

\subsection{Simulation}\label{sec:simu}

We conduct simulation to investigate the performances of the proposed
methods. We generate independent target data of size $\Ns=150$ from a linear
model:
\begin{equation*}
y_t=\mu_0+\z_t\tp\ttheta+\varepsilon_t
=\x_t\tp\tbeta+\varepsilon_t, t=1,2,\ldots,\Ns,
\end{equation*}
where $\mu_0=1$ is an intercept, $\ttheta$ is a vector of ones with dimension
$100$, and $\varepsilon_t\sim\Nor(0,1)$. Large external data of size
$\Nb=20000$ are generated from a linear model with mean-shifts:
\begin{equation*}
  y_e=\mu_0+\z_e\tp\ttheta+\gamma_e+\varepsilon_e
  =\x_e\tp\tbeta+\gamma_e+\varepsilon_e, e=1,2,\ldots,\Nb,
\end{equation*}
where $\mu_0$, $\ttheta$, and the distribution of $\varepsilon_e$ are the same
as in the target data model.  We use the same covariate distribution $\z$ for
the target data and the external data. We consider both light-tailed
distribution $\z\sim\Nor(\0,\bSigma)$ and heavy-tailed distribution
$\z\sim T_3(\bSigma)$, where the $i,j$-th element of $\bSigma$ is
$\bSigma_{ij}=0.5^{|i-j|}$, $1\le i,j,\le 100$.

We consider three distinct cases characterized by different bias term
($\gamma_e$) distributions, detailed below:
\begin{enumerate}[]
\item{\textbf{Case 1. Sparse Bias Terms (SP):}} Only a subset
  of the large dataset is biased. Specifically, 70\% of the data points are
  randomly selected as biased, and their bias terms are generated from
  $\gamma_e \sim 2 + \Gamma(1, 1)$, where
  $\Gamma(1,1)$ represents gamma distribution with shape and scale parameters
  equal to 1.

\item{\textbf{Case 2. Heavy-Tailed Bias Terms (HT):}} All data points in
  the external data are biased, with bias terms following a heavy-tailed
  distribution: $\gamma_e \sim |t_2|$. This scenario represents situations where
  outliers exhibit large deviations.

\item{\textbf{Case 3. High-Leverage Bias Terms (HL):}}
  All data points are biased, with bias terms related to the leverage
  scores. Specifically, 
  $\gamma_e = c \xi_e \x_e\tp (\Xb\tp \Xb)^{-1} \x_e$, where
  $\xi_e \sim \Gamma(1, 1)$ and $c = \Nb / (d-1)$. This
  scenario accounts for high-leverage outliers, as the bias terms are influenced
  by the covariates' leverage scores.
\end{enumerate}

We repeat the simulation for $K=500$ times and evaluate the performances of the
proposed methods using an empirical trimmed mean square error (eMSE). For a set
of positive values $\{v_k:k=1,\ldots,K\}$, the trimmed mean with trimming
proportion $\alpha$ is defined as
\begin{equation*}
\text{Trim}\left(\{v_i:k=1,\ldots K\}\right)=\frac{1}{(1-2\alpha)K}
\sum_{k=\lfloor\alpha K\rfloor}^{\lfloor(1-\alpha)K\rfloor}v_{(i)},
\end{equation*}
where $\lfloor\cdot\rfloor$ is the floor function and
$v_{(i)},i=1,\ldots,K$ are the order statistics of $v_1,\ldots,v_K$. 
Our eMSE is defined as
\begin{equation*}
\mathrm{eMSE}(\hbeta)=\text{Trim}\left(\left\{\|\hbeta_k-\tbeta\|_2^2:k=1,\ldots,
  K\right\}\right),
\end{equation*}
where $\hbeta_k$ is the estimate in
the $k$-th replication. We use $\alpha=0.1$ for evaluations. Besides eMSE, we
also calculate the
empirical biases (squared) and empirical trimmed variances: 
\begin{equation*}
\mathrm{eBias}^2=\left\|\bar{\hbeta}-\tbeta\right\|_2^2,
\quad\text{ and }\quad
\mathrm{eVar}=\text{Trim}\left(\left\{
\left\|\hbeta_k-\bar{\hbeta}\right\|_2^2:k=1,\ldots,K\right\}\right),
\end{equation*}
respectively, where $\bar{\hbeta}=K^{-1}\sum_{k=1}^K\hbeta_k$.
We set the sampling
rates to be $\rho=0.0075$, $0.03$, $0.12$, and $0.48$, which correspond to
nominal sample sizes of $\Ns,4\Ns,16\Ns$, and $64\Ns$, respectively. For combined
estimators, equal expected number of observations are taken from the
target-guided selection and the leverage-based random sampling.

\paragraph{$\ell_1$ penalty}

We first present results using $\ell_1$ penalty. Figure~\ref{fig:mseL1} presents
the eMSE. 

\begin{figure}[H]
  \centering 
  \begin{subfigure}{0.255\textwidth}
    \includegraphics[width=\textwidth]{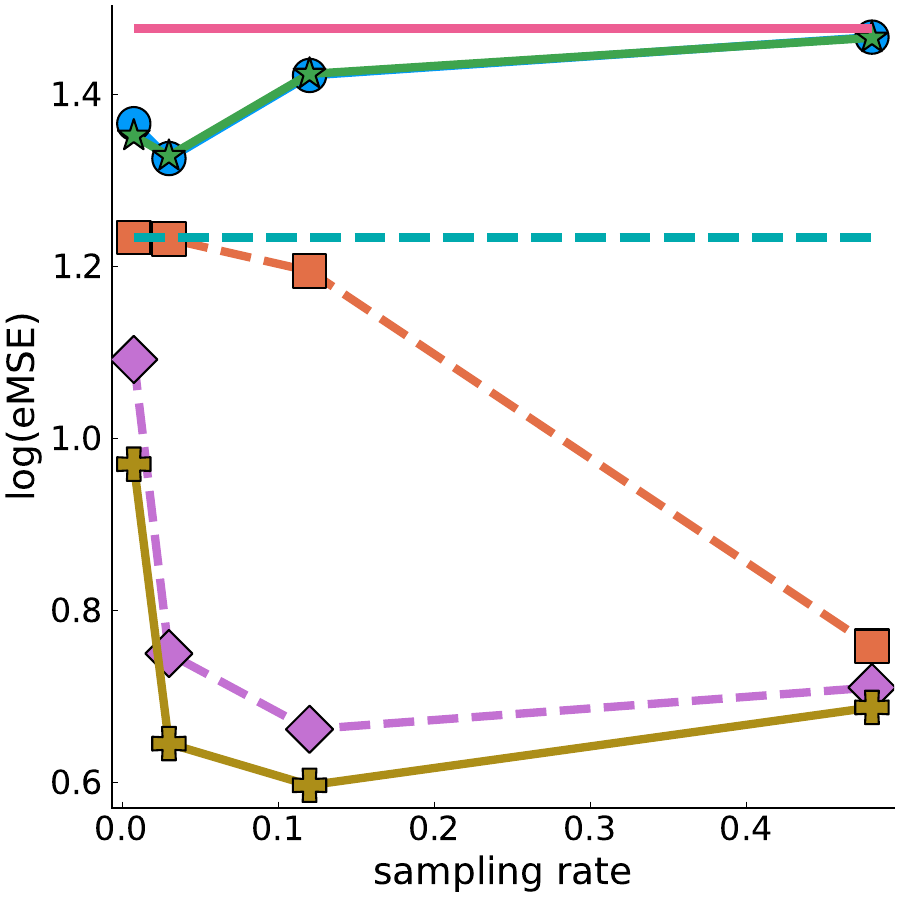}
    \caption{SP (light-tailed $\x$)}
    \label{sfig:a}
  \end{subfigure}
  \begin{subfigure}{0.255\textwidth}
    \includegraphics[width=\textwidth]{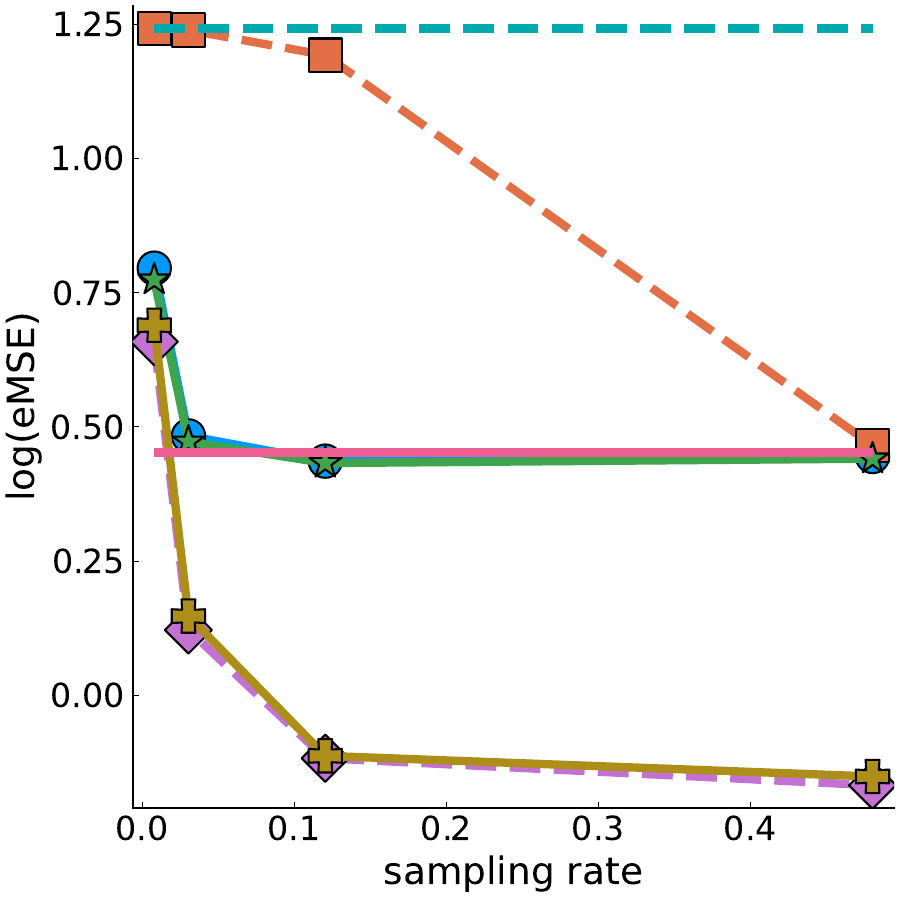}
    \caption{HT (light-tailed $\x$)}
    \label{sfig:b}
  \end{subfigure}
  \begin{subfigure}{0.255\textwidth}
    \includegraphics[width=\textwidth]{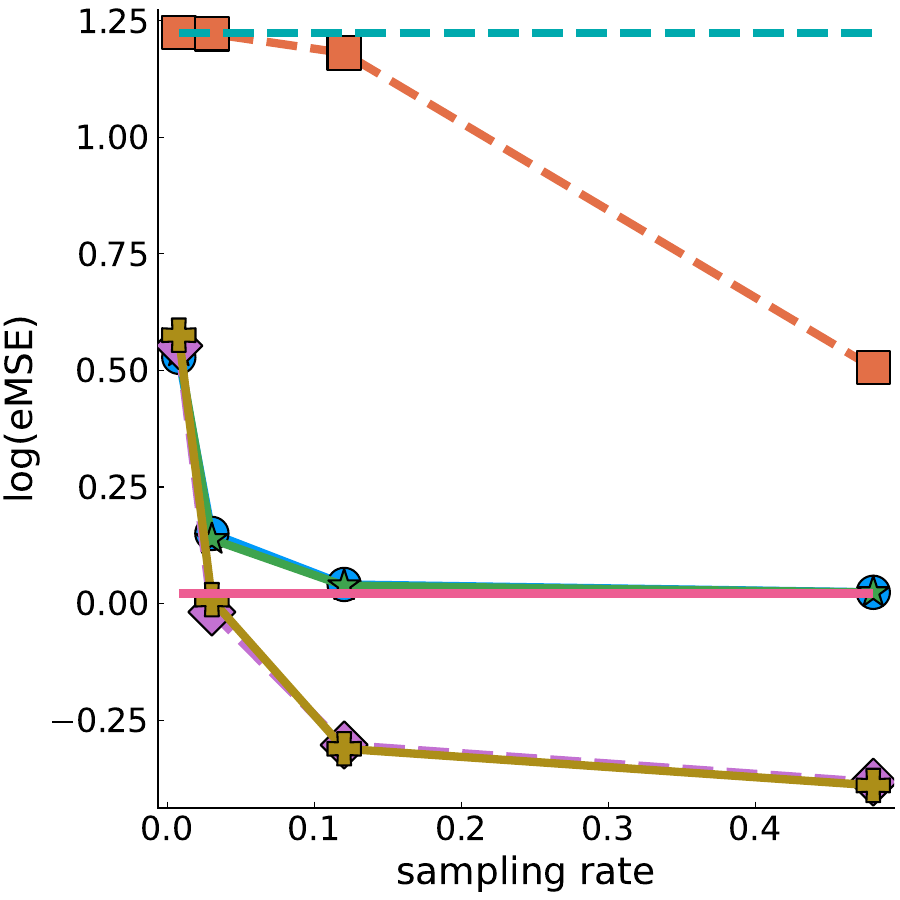}
    \caption{HL (light-tailed $\x$)}
    \label{sfig:c}
  \end{subfigure}
  \begin{subfigure}{0.12\textwidth}
    \includegraphics[width=\textwidth]{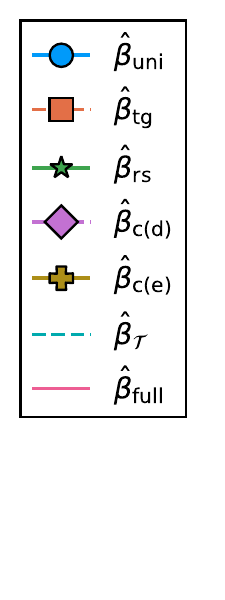}
  \end{subfigure}
  \begin{subfigure}{0.255\textwidth}
    \includegraphics[width=\textwidth]{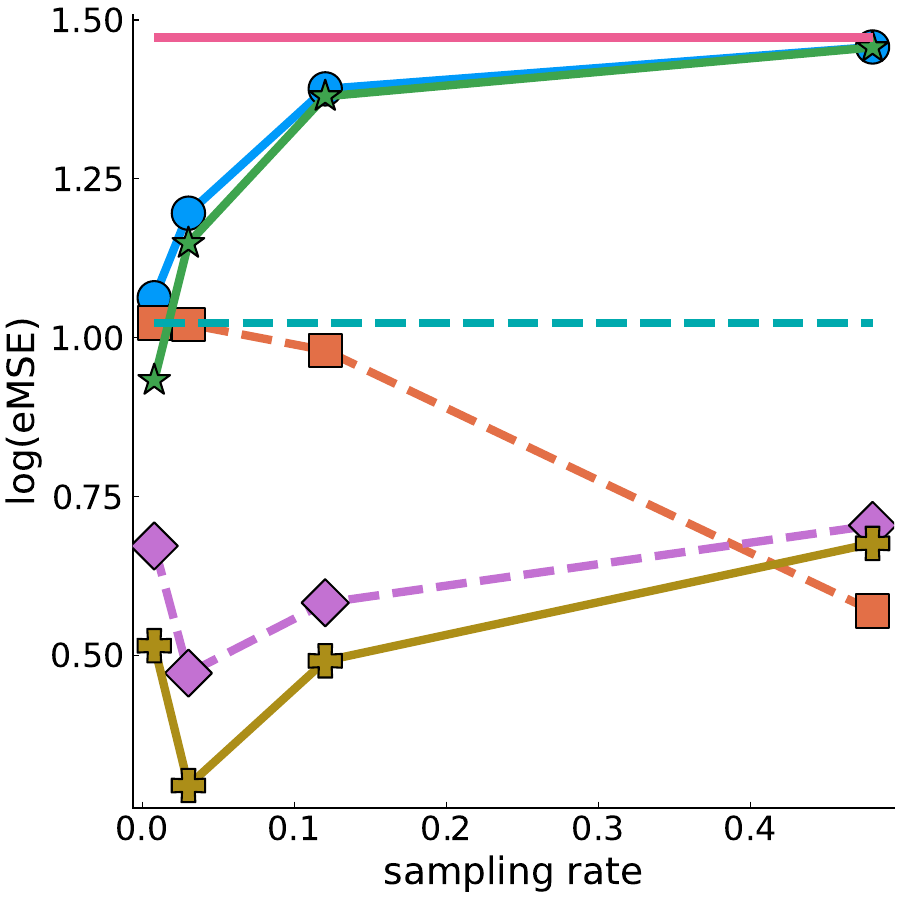}
    \caption{SP (heavy-tailed $\x$)}
    \label{sfig:d}
  \end{subfigure}
  \begin{subfigure}{0.255\textwidth}
     \includegraphics[width=\textwidth]{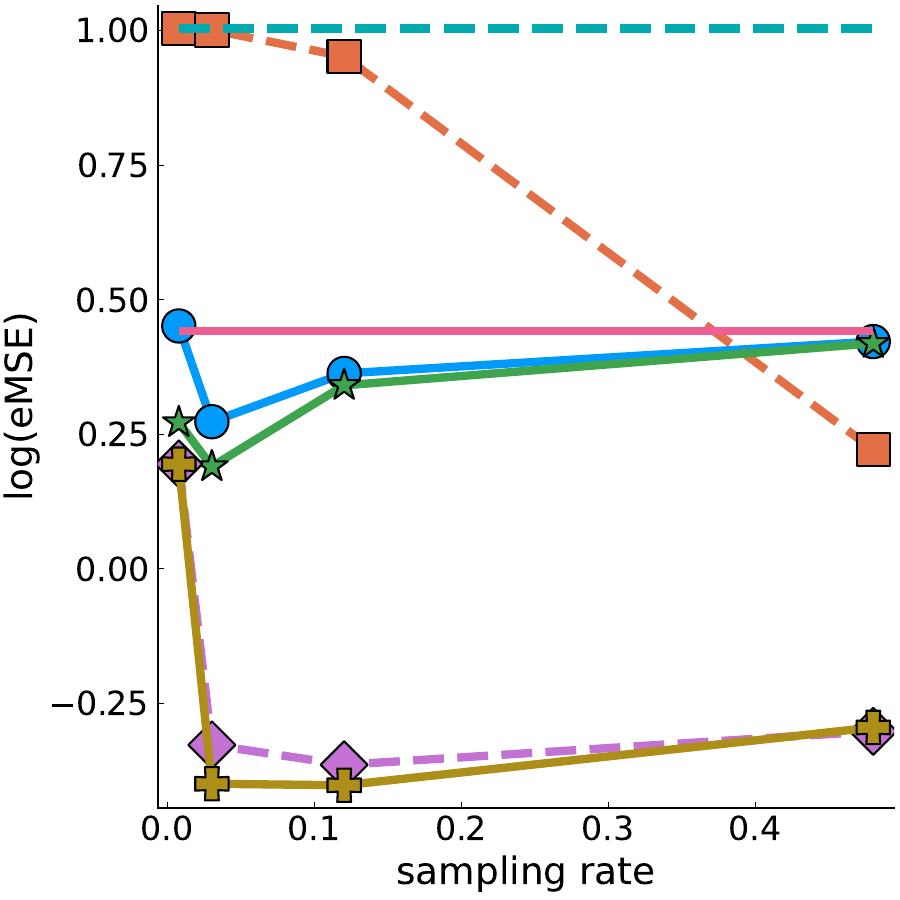}
    \caption{HT (heavy-tailed $\x$)}
    \label{sfig:e}
  \end{subfigure}
  \begin{subfigure}{0.255\textwidth}
    \includegraphics[width=\textwidth]{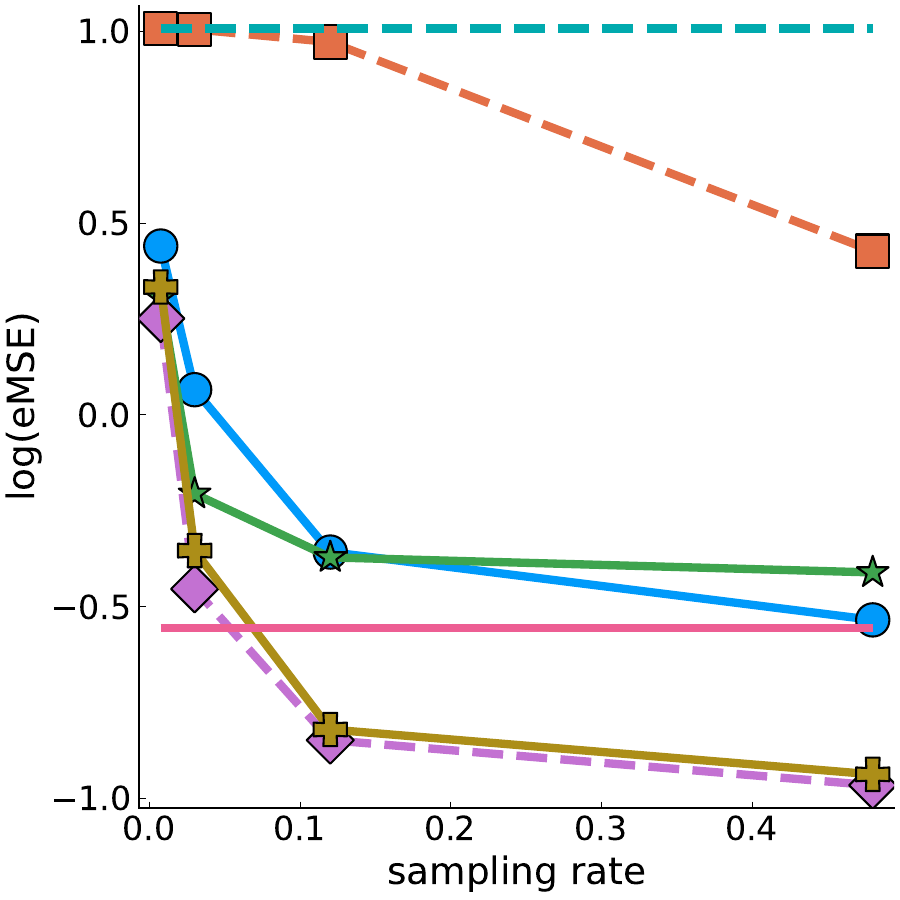}
    \caption{HL (heavy-tailed $\x$)}
    \label{sfig:f}
  \end{subfigure}
  \begin{subfigure}{0.12\textwidth}
    \includegraphics[width=\textwidth]{leg3.pdf}
  \end{subfigure}
  \caption{The eMSE of transfer learning estimators with $\ell_1$ penalty. The
    first row is for light-tailed $\x$ and the second row is for heavy-tailed
    $\x$. The dashed line represents the least squares estimator
    ($\hbeta_{\cS}$) with only target data $\cS$. The solid line represents the
    transfer learning estimator ($\hbeta_{\mathrm{full}}$) with full data
    $\cS\cup\cB$.}
  \label{fig:mseL1}
\end{figure}

Figure~\ref{fig:mseL1} shows that most transfer learning estimators outperform
$\hbeta_{\cS}$, although negative transfer is possible. Among all cases in our
simulations, the two combined estimators perform the best, suggesting that they
benefit from both target-guided selection and optimal subsampling.

In subfigures (\subref{sfig:a}) and (\subref{sfig:d}) for Case ``SP,'' bias
terms are relatively large, and transfer learning estimators using full data
($\hbeta_{\mathrm{full}}$) and random sampling ($\hbeta_{\mathrm{uni}}$ and
$\hbetaw$) show a negative transfer effect; they perform worse than
$\hbeta_{\cS}$. In this case, $\hbetad$ avoids negative transfer. It performs
very similarly to $\hbeta_{\cS}$ when the sampling rate is low and demonstrates
a more significant positive transfer effect as the subsample size from external
data increases.

For Cases ``HT'' and ``HL,'' there is no negative transfer, and $\hbetaw$
performs relatively better, while $\hbetad$ may be less efficient. This is
because $\hbetad$ behaves similarly to $\hbeta_{\cS}$ with small external
subsamples and is subject to high variance.

Notably, the eMSE does not always decrease with larger subsample sizes, which
contrasts with the common trend in subsampling literature that larger
subsamples typically yield better performance and the full data estimator
performs the best. For instance, in subfigures (\subref{sfig:a}) and
(\subref{sfig:d}) for Case ``SP'', the eMSE of $\hbeta_{\comest}$ and
$\hbeta_{\comdat}$ initially decreases but later increases, outperforming the
full data estimator. The random subsampling estimators also surpass the full
data estimator in this case.  This observation aligns with our theoretical
insight. As discussed in Section~\ref{sec:rtrans}, the error bound of $\hbetaw$
is not monotonically decreasing with sample size $r$, indicating a bias-variance
trade-off in data selection for transfer learning.  The
decreasing-then-increasing trend is more pronounced for combined estimators, as
they are specifically designed to optimize this trade-off. This phenomenon
highlights the importance of data selection in mitigating negative transfer.

To further illustrate the bias-variance trade-off, we present the bias and
variance in Figure~\ref{fig:bias-var}. Here, we only present the results for
Case ``HL'', which correspond to Figures~\ref{fig:mseL1}(\subref{sfig:c}) and
\ref{fig:mseL1}(\subref{sfig:f}). We leave results for other cases in the
supplementary material to save space.

\begin{figure}[H]
  \centering
  \begin{subfigure}{0.255\textwidth}
    \includegraphics[width=\textwidth]{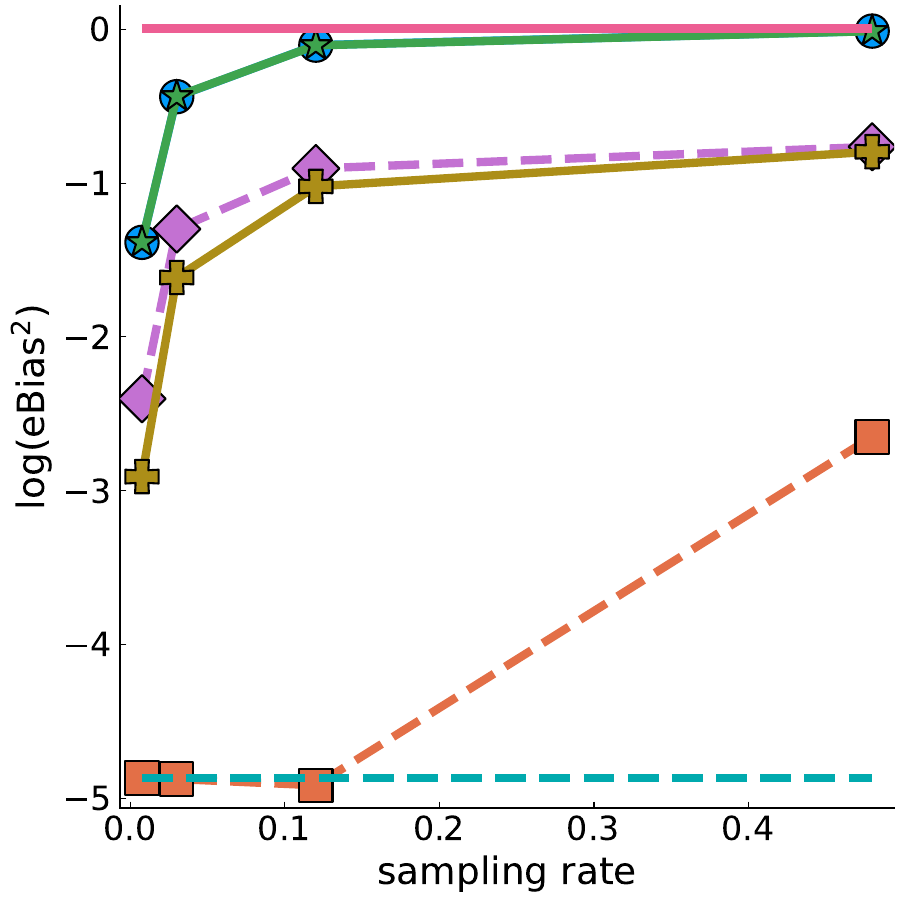}
    \caption*{eBias$^2$ of Figure~\ref{fig:mseL1}(\subref{sfig:c})}
  \end{subfigure}
  \begin{subfigure}{0.255\textwidth}
    \includegraphics[width=\textwidth]{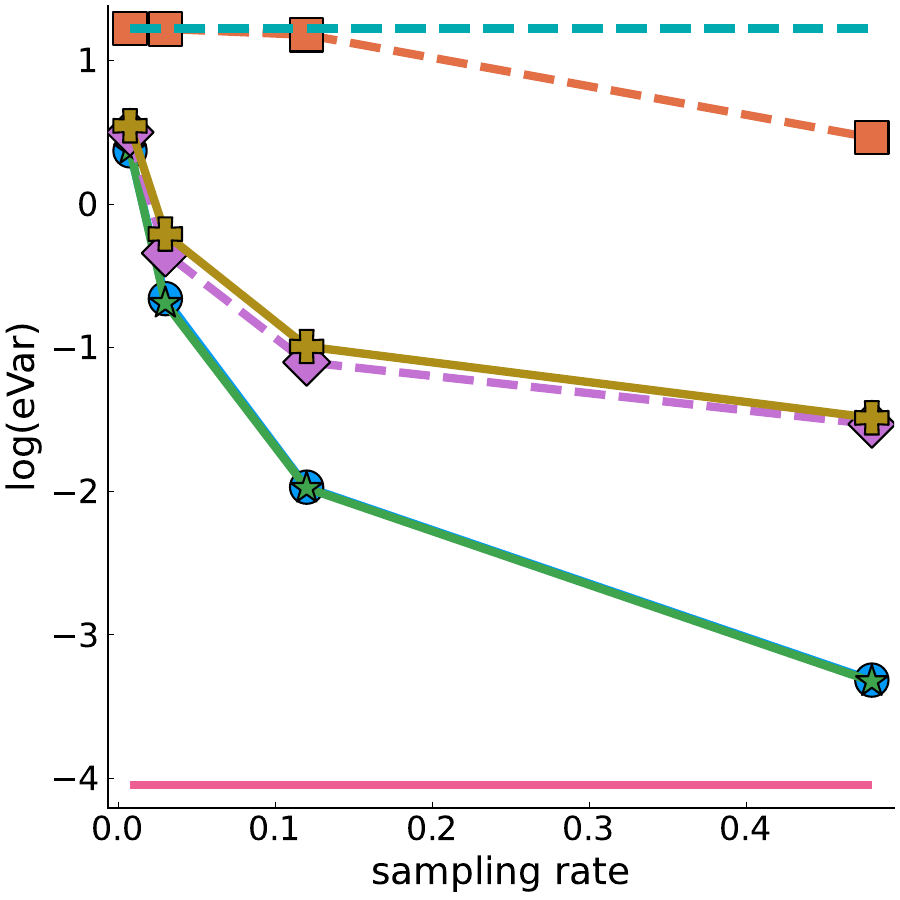}
    \caption*{eVar of Figure~\ref{fig:mseL1}(\subref{sfig:c})}
  \end{subfigure}
  \begin{subfigure}{0.12\textwidth}
    \includegraphics[width=\textwidth]{leg3.pdf}
  \end{subfigure}\\
  \begin{subfigure}{0.255\textwidth}
    \includegraphics[width=\textwidth]{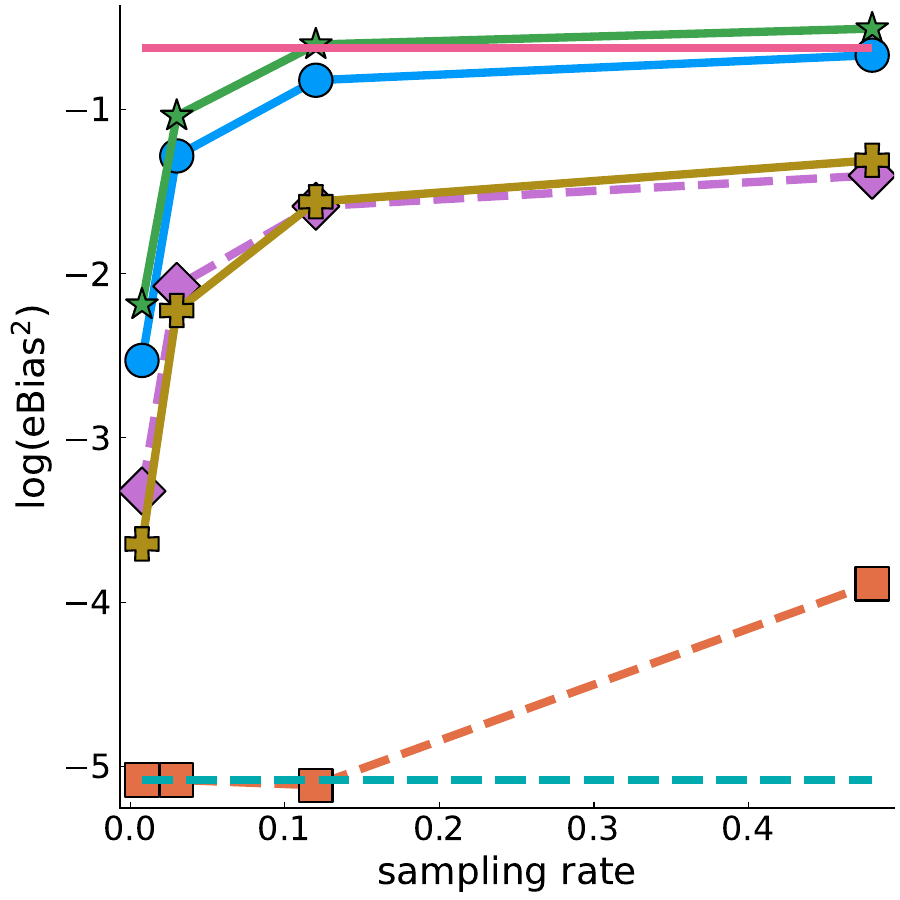}
    \caption*{eBias$^2$ of Figure~\ref{fig:mseL1}(\subref{sfig:f})}
  \end{subfigure}
    \begin{subfigure}{0.255\textwidth}
    \includegraphics[width=\textwidth]{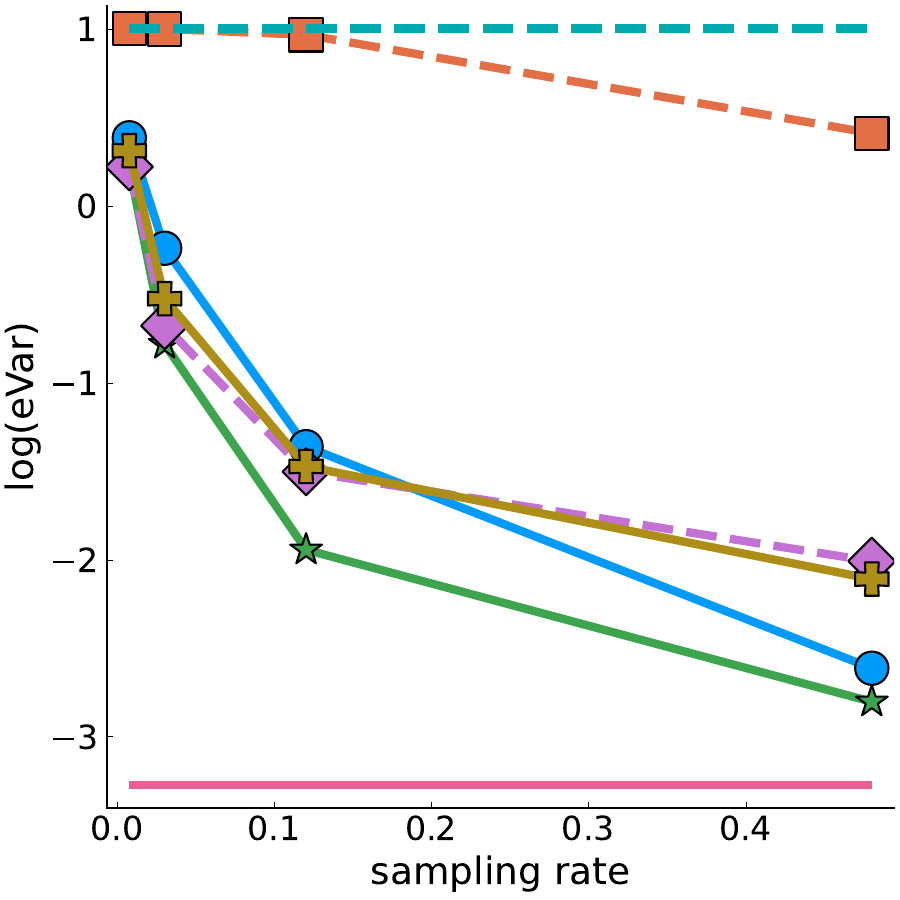}
    \caption*{eVar of Figure~\ref{fig:mseL1}(\subref{sfig:f})}
  \end{subfigure}
  \begin{subfigure}{0.12\textwidth}
    \includegraphics[width=\textwidth]{leg3.pdf}
  \end{subfigure}
  \caption{
    The eBias$^2$ and eVar of transfer learning estimators with $\ell_1$
    penalty for Case ``HL''.}
  \label{fig:bias-var}
\end{figure}

From Figure~\ref{fig:bias-var}, we observe that the estimator $\hbetad$ has the
smallest bias among all sampling estimators, whereas random sampling generally
results in large biases, with leverage-based random sampling exhibiting the
highest bias. The biases of the two combined estimators lie between those of
$\hbetad$ and $\hbetaw$, which aligns with our expectations. In terms of
variance, leverage-based subsampling results in the smallest variance, while
$\hbetad$ has relatively large variance. Similarly, the variances of the
combined estimators fall between those of $\hbetad$ and $\hbetaw$.  These
findings suggest that combined estimators outperform target-guided selection and
leverage sampling by better balancing bias and variance.

\paragraph{$\ell_2$ penalty}%

We now present simulations with the $\ell_2$ penalty, using the same settings as
for $\ell_1$. Figure~\ref{fig:corL2} shows the results for Case ``HL'' with
heavy-tailed covariates $\x$. The results for other cases and light-tailed $\x$
are similar to those of the $\ell_1$ penalty and are provided in the supplementary material.

\begin{figure}[htp]
  \centering 
    \begin{subfigure}{0.255\textwidth}
    \includegraphics[width=\textwidth]{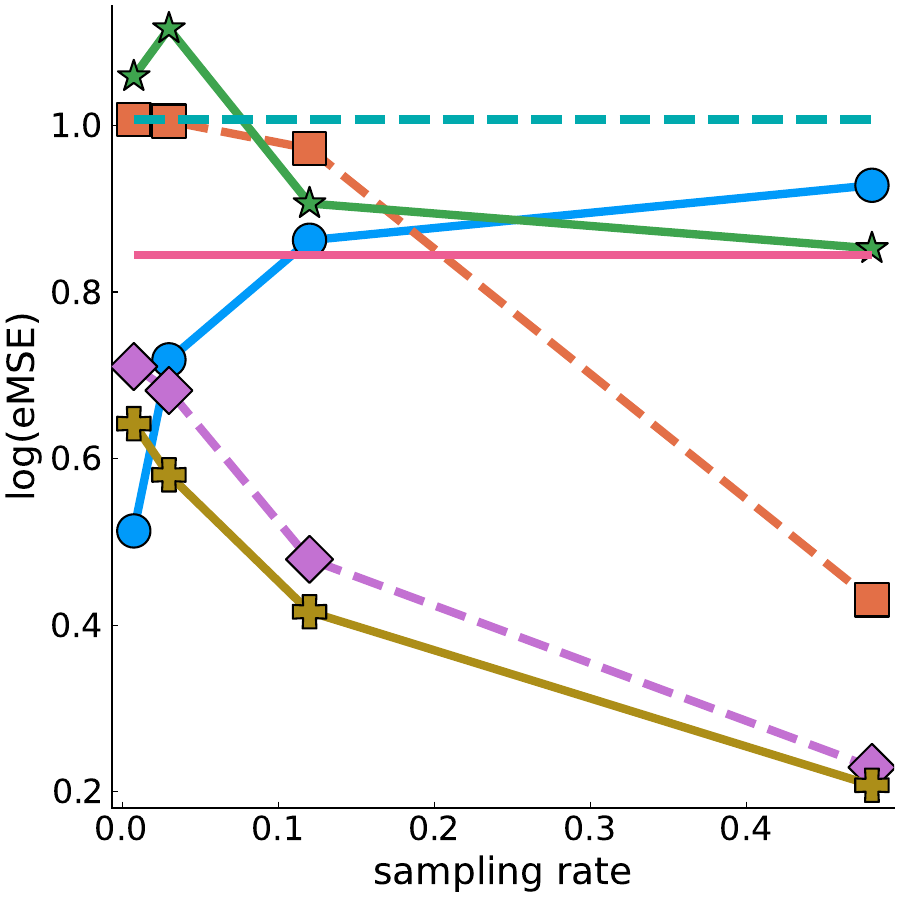}
    \caption*{eMSE}
  \end{subfigure}
    \begin{subfigure}{0.255\textwidth}
    \includegraphics[width=\textwidth]{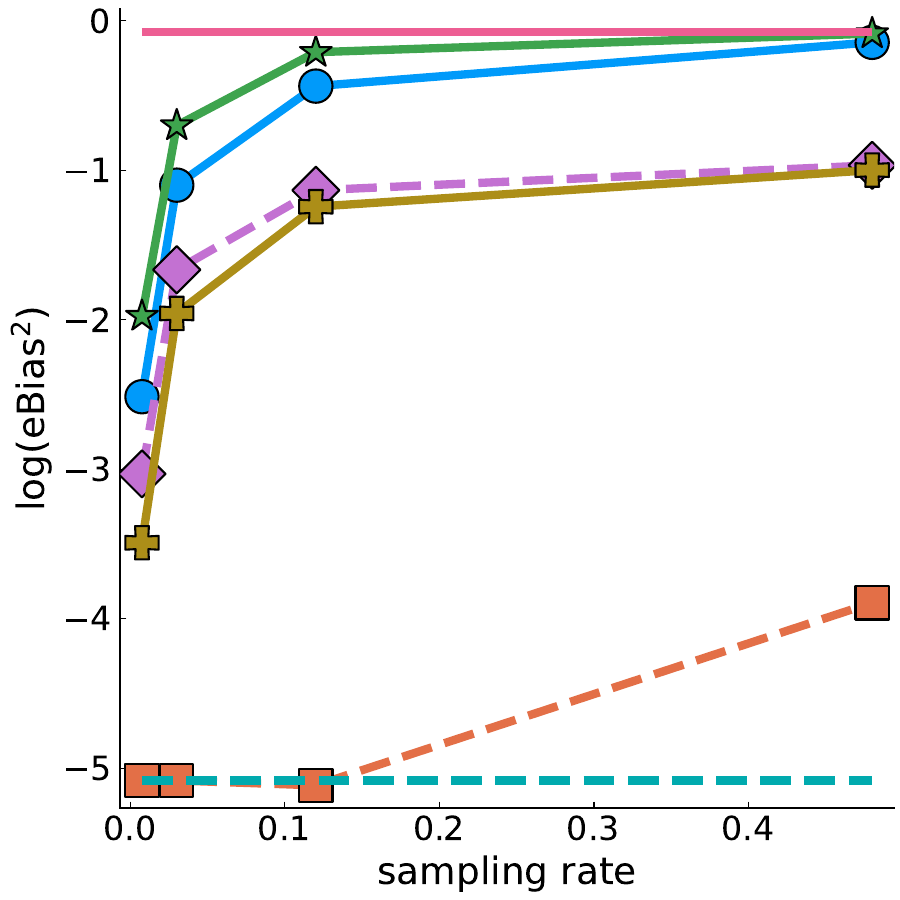}
    \caption*{eBias$^2$}
  \end{subfigure}
  \begin{subfigure}{0.255\textwidth}
    \includegraphics[width=\textwidth]{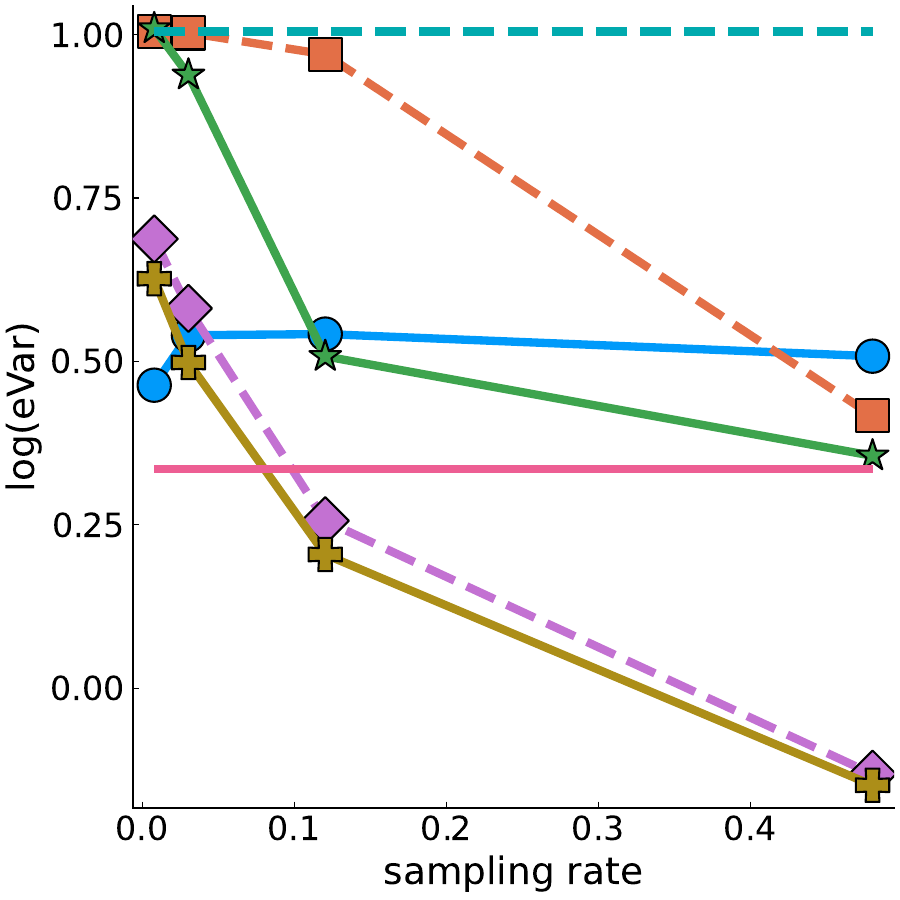}
    \caption*{eVar}
  \end{subfigure}
  \begin{subfigure}{0.12\textwidth}
    \includegraphics[width=\textwidth]{leg3.pdf}
  \end{subfigure}
  \caption{The eMSE, eBias$^2$, and eVar of transfer learning estimators with
    $\ell_2$ penalty for Case ``HL'' with heavy-tailed covariates $\x$. The
    horizontal dashed line represents the least squares estimator
    ($\hbeta_{\cS}$) with only target data $\cS$. The horizontal solid line
    represents the transfer learning estimator ($\hbeta_{\mathrm{full}}$) with
    full data $\cS\cup\cB$.}
  \label{fig:corL2}
\end{figure}

Figure~\ref{fig:corL2} shows that $\hbeta_{\rw}$ with the $\ell_2$ penalty is
unstable for the Case ``HL,'' where bias terms are proportional to the leverage
scores of heavy-tailed covariates. In contrast, this instability does not occur
with the $\ell_1$ penalty, showing its superiority over the $\ell_2$ penalty in
cases where biases are related to heavy-tailed covariates.
To directly compare the $\ell_1$ and $\ell_2$ penalties, we present the results
from the two penalties side-by-side in Figure~\ref{fig:corL1L2} for the Case
``HL'' with heavy-tailed covariates $\x$. For clearer presentation, we only
plot $\hbetad$, $\hbetaw$, $\hbeta_{\comdat}$, and $\hbeta_{\mathrm{full}}$. It
is evident that estimators with the $\ell_1$ penalty are more efficient, except
for $\hbetad$. Note that $\hbetad$ does not use external data points with large
outliers with a large probability, whereas $\hbetaw$ almost always
includes external data points with large outliers. This suggests that the
$\ell_1$ penalty is more effective than the $\ell_2$ penalty in handling large
outliers in transfer learning. However, we should clarify that the $\ell_2$
penalty has an advantage over the $\ell_1$ penalty in terms of computational
simplicity.

\begin{figure}[htp]
  \centering
  \begin{subfigure}{0.490\textwidth}
    \includegraphics[width=\textwidth]{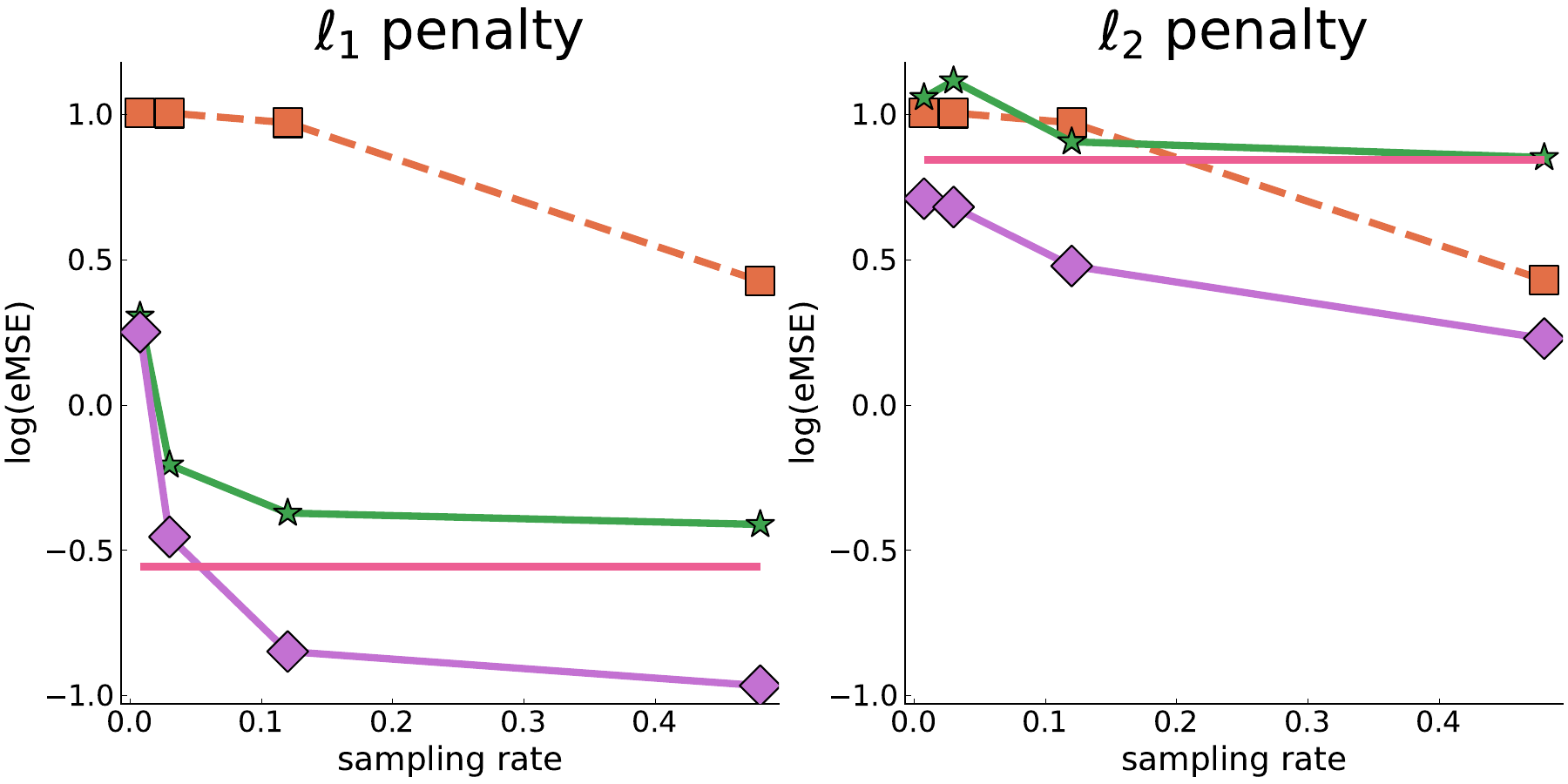}
    \caption*{eMSE}
  \end{subfigure}
  \begin{subfigure}{0.12\textwidth}
    \includegraphics[width=\textwidth]{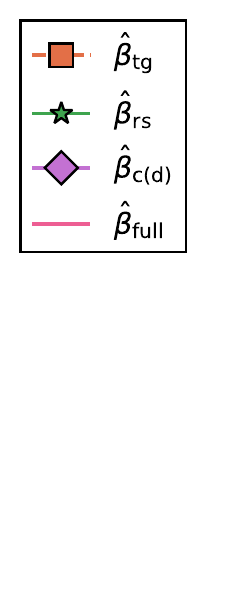}
  \end{subfigure}
  \begin{subfigure}{0.48\textwidth}
    \includegraphics[width=\textwidth]{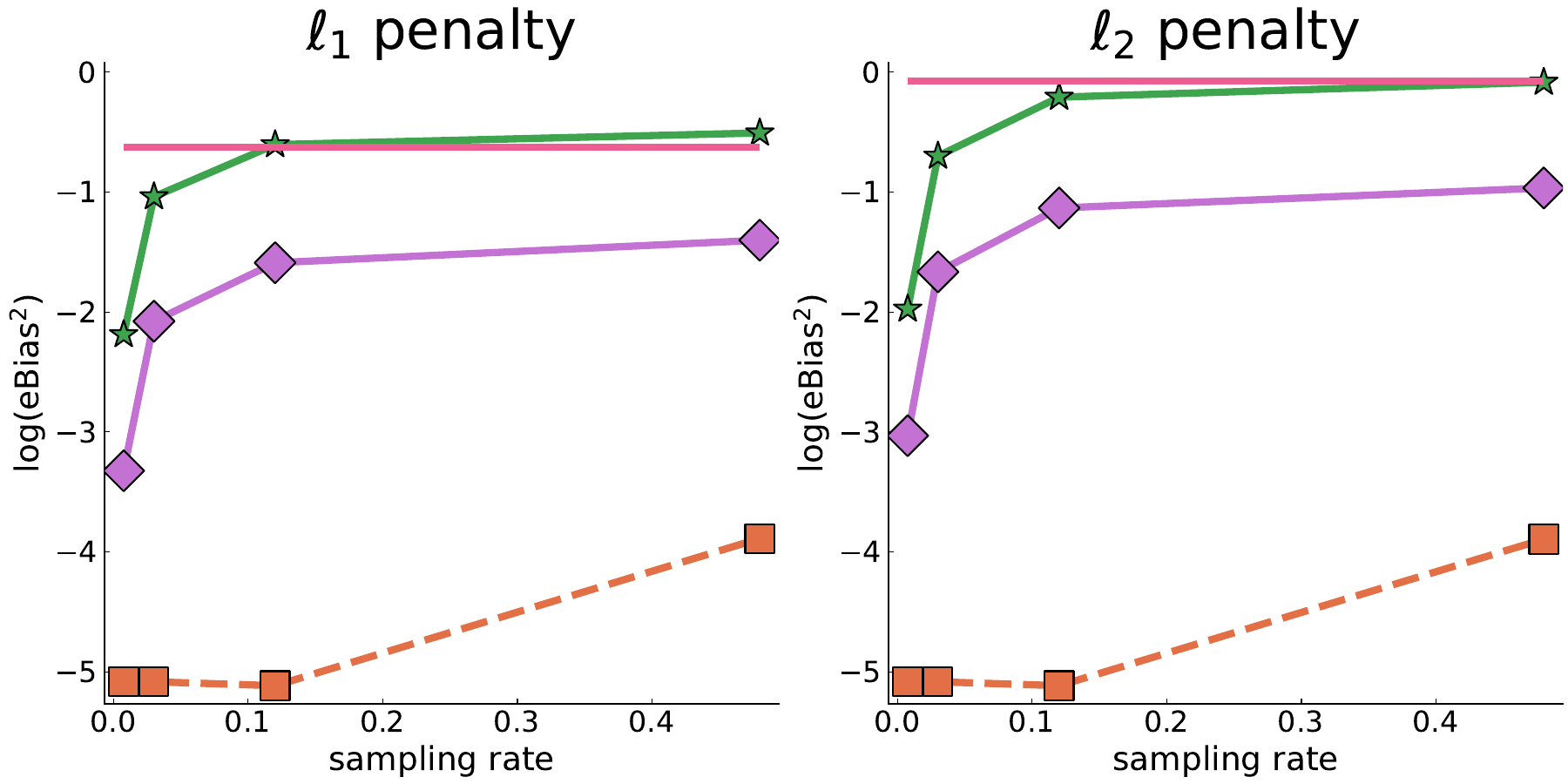}
    \caption*{eBias$^2$}
  \end{subfigure}
  \hspace{0.01\textwidth}
  \begin{subfigure}{0.48\textwidth}
    \includegraphics[width=\textwidth]{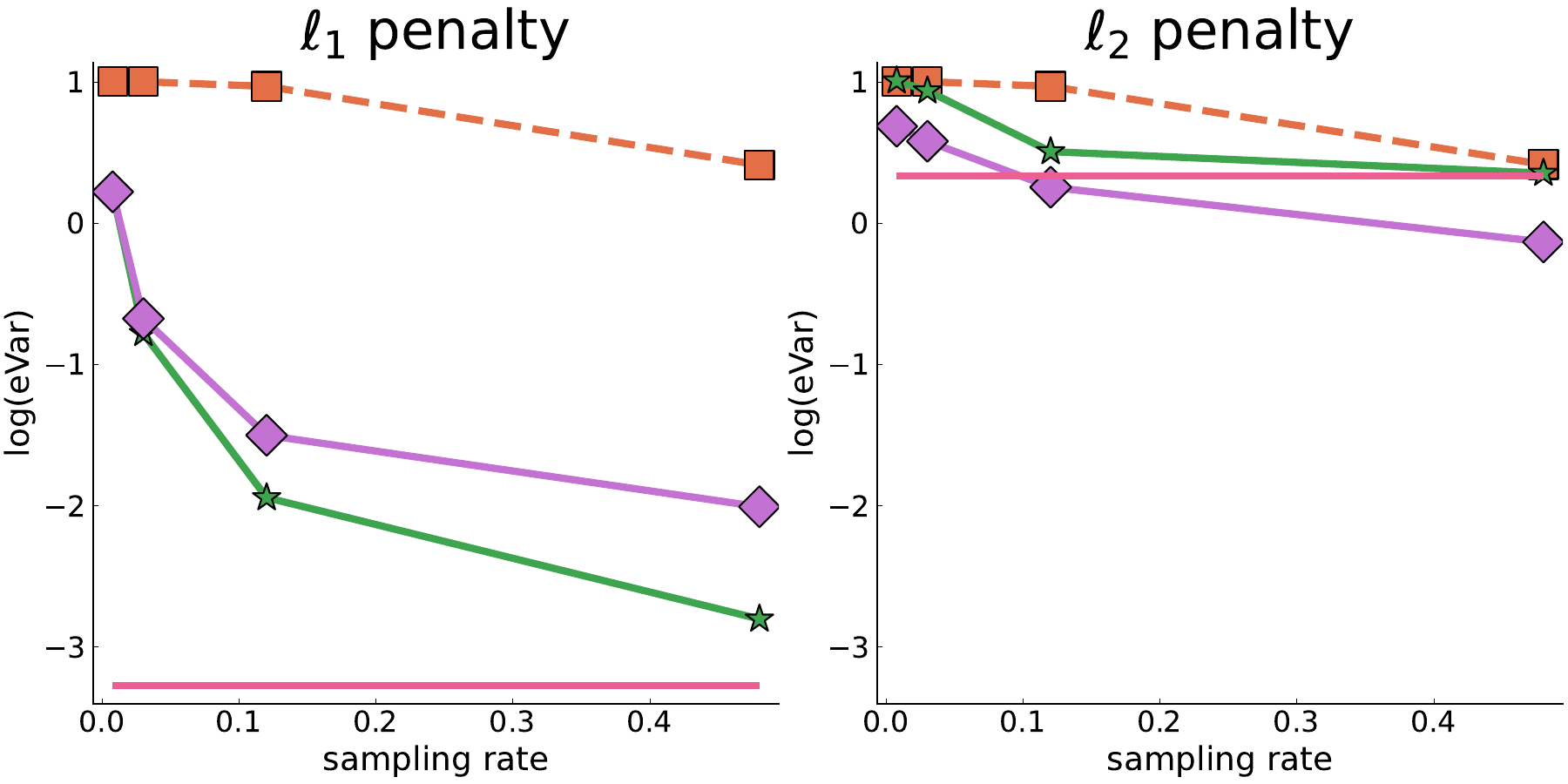}
    \caption*{eVar}
  \end{subfigure}
  \caption{The eMSE, eBias$^2$, and eVar of transfer learning estimators with
    $\ell_1$ and $\ell_2$ penalties for Case ``HL'' with heavy-tailed
    covariates $\x$.}
  \label{fig:corL1L2}
\end{figure}

\subsection{Real Data}\label{sec:realdata}

We apply our proposed methods to predict airplane hard landings using quick
access recorder (QAR) data. Our dataset includes three types of airplanes:
Airbus A380 (A380), Airbus A320 (A320), and Boeing 737 (B737). Each data point
contains 102 covariates extracted from the QAR records. We consider the maximum
vertical acceleration during landing as a numerical measure of hard landings.
The dataset consists of 1,218 observations for A380; 80,814 observations for
A320; and 68,986 observations for B737.  Our goal is to predict the maximum
vertical accelerations during landing for A380 aircrafts using a linear
regression model with other extracted covariates. Accordingly, we treat the A380
data as the target data and merge the A320 and B737 data as the external data.
Since A320 and B737 share similarities with A380 but are still distinct, there
are potential biases in using a model trained on A320 and B737 data to predict
hard landings for A380. We employ the proposed transfer learning methods to
address this.

We repeat our experiments $K=500$ times, In each iteration, we randomly split
the target data into training data and test data in each iteration. Two
splitting percentages are considered: 50\%-50\% and 80\%-20\%, which correspond
to the training target data size $\Ns=609$ and $\Ns=975$, respectively.

In each iteration, we use marginal t-tests to find important covariates that
affect the maximum vertical acceleration based on the training data by
controlling the false discovery rate (FDR) to be 0.1. Marginal t-tests select
7.332 variables on average for the 50\%-50\% split and 8.496 variables on
average for the 80\%-20\% split. The two splitting strategy select the same
seven variables with the highest frequencies. We present them in the following
Table~\ref{tb:slvar1}.

\begin{table}[H]
\centering
\caption{Variables that are selected with most frequency}
\label{tb:slvar1}
\begin{tabular}{ccc}\hline
  \textbf{Code} & \textbf{Definition} & \textbf{Type} \\\hline
  x1102 & Horizontal Speed at Touchdown & Speed \\
  x1103 & Vertical Speed at Touchdown & Speed \\
  x3093 & Max Vertical Acceleration during Flight & Acceleration \\
  x3089 & Time during Touchdown to Start of All Engine Reverser Deployed & Time\&Dist \\
  x1115 & Wind Speed at Touchdown & External \\
  x3104 & Max Vertical Acceleration during Pre Take Off & Acceleration \\
  x3385 & Max Absolute Lateral Acceleration during Landing & Acceleration \\
  x3265 & Total Ground Distance during Taxi Out & Time\&Dist\\\hline
\end{tabular}
\end{table}

The true parameters are unknown for real data. We evaluate the performances of
the transfer learning estimators by the empirical trimmed prediction error
(eMSPE) which is defined as
\begin{equation*}
\mathrm{eMSPE}(\hbeta)=\text{Trim}\left(\left\{
\frac{1}{\#\mathcal{S}_{\mathrm{test}}}
\sum_{i\in\mathcal{S}_{\mathrm{test}}}(y_i-\x_i\tp\hbeta_k)^2:k=1,\ldots,K\right\}\right),
\end{equation*}
where $\mathcal{S}_{\mathrm{test}}$ represents the index set of the test data.

For combined estimators, we try different proportions of the external data
selected by random subsampling and target-guided. 
Specifically, letting $c$ be
the proportion of observations selected by the target-guided selection, we consider $c=0.1,0.3,0.5,0.7,0.9$. We present the
results of $c=0.1,0.3$ in the main paper and leave other results in the
supplementary material to save space. 
Table~\ref{tb:real} gives the percentage 
improved over the estimator using only the target data, defined as
\begin{equation*}
\frac{\mathrm{eMSPE}(\hbeta_{\cS})-\mathrm{eMSPE}(\hbeta)}{\mathrm{eMSPE}(\hbeta_{\cS})}
\times 100\%
\end{equation*}
for an estimator $\hbeta$.

\begin{table}[H]
\caption{Percentages improved on eMSPE of transfer learning estimators over
$\hbeta_{\cS}$.}
\label{tb:real}
\centering
\begin{tabular}{l *{7}{w{c}{12mm}} }\hline
       & \multicolumn{7}{c}{50\%-50\% split of target data}\\
       & \multicolumn{3}{c}{} & \multicolumn{2}{c}{$c=0.1$} & \multicolumn{2}{c}{$c=0.3$}\\\hline
  $r$ & $\hbeta_{\mathrm{uni}}$ & $\hbeta_{\dtm}$
  & $\hbeta_{\rw}$ & $\hbeta_{\comdat}$ & $\hbeta_{\comest}$
  & $\hbeta_{\comdat}$ & $\hbeta_{\comest}$ \\\hline
  $\Ns$  & $+1.56$ & $-0.00$ & $+1.78$ & $+1.65$ & $+1.50$ & $+1.33$ & $+1.30$ \\
  $2\Ns$ & $+1.39$ & $+0.00$ & $+1.53$ & $+1.52$ & $+1.64$ & $+1.38$ & $+1.54$ \\
  $4\Ns$ & $+0.62$ & $+0.00$ & $+0.69$ & $+1.01$ & $+1.26$ & $+1.14$ & $+1.43$ \\
  $8\Ns$ & $-0.44$ & $+0.01$ & $-0.07$ & $+0.66$ & $+0.48$ & $+0.87$ & $+0.85$ \\\hline
\end{tabular}
\begin{tabular}{l *{7}{w{c}{12mm}} }\hline
       & \multicolumn{7}{c}{80\%-20\% split of target data}\\
       & \multicolumn{3}{c}{} & \multicolumn{2}{c}{$c=0.1$} & \multicolumn{2}{c}{$c=0.3$}\\\hline
  $r$ & $\hbeta_{\mathrm{uni}}$ & $\hbeta_{\dtm}$
  & $\hbeta_{\rw}$ & $\hbeta_{\comdat}$ & $\hbeta_{\comest}$
  & $\hbeta_{\comdat}$ & $\hbeta_{\comest}$ \\\hline
  $\Ns$  & $+1.04$ & $-0.00$ & $+1.11$ & $+1.04$ & $+0.97$ & $+0.89$ & $+0.87$ \\
  $2\Ns$ & $+0.66$ & $+0.00$ & $+0.69$ & $+0.74$ & $+0.97$ & $+0.78$ & $+0.98$ \\
  $4\Ns$ & $-0.23$ & $+0.00$ & $-0.20$ & $+0.16$ & $+0.47$ & $+0.52$ & $+0.70$ \\
  $8\Ns$ & $-1.51$ & $+0.01$ & $-1.14$ & $-0.23$ & $-0.49$ & $+0.48$ & $+0.01$ \\\hline
\end{tabular}
\end{table}

From Table~\ref{tb:real}, we observe that transfer learning estimators generally
result in positive transfer. When the target training data is small in the
50\%-50\% split, transfer learning estimators provide more significant
improvements over the estimators that rely solely on target data, compared with
the 80\%-20\% split. This suggests that transfer learning estimators are
especially advantageous when target data itself is limited and insufficient for
reliable prediction. We also see that although $\hbetaw$ might perform well when
the sampling rate is low, the combined estimators are more robust and less
likely to result in negative transfer. For example, the combined estimators
$\hbeta_{\comdat}$ and $\hbeta_{\comest}$ perform well even when we choose $c$
as small as $c=0.1$ for the 50\%-50\% split, while estimators
$\hbeta_{\mathrm{uni}}$ and $\hbetaw$ may result in negative transfer when the
sampling rate is high.

\section{Conclusion}\label{sec:conclusion}

This paper addressed the challenges of transfer learning and subsampling in the
presence of data contamination. We focused on scenarios where external data
contain outliers resulting from arbitrary mean shifts, and investigated robust
subsampling strategies to mitigate their influence. Two distinct subsampling
approaches were introduced: one designed to reduce model bias caused by
outliers, and the other to minimize variance.

Our theoretical analysis provided valuable insights into the interplay of
several key factors, including the sizes of the subsamples, the sampling rates,
the magnitude and frequency of outliers, and the tail behavior of the model
error distribution. Importantly, our analysis demonstrated that, under specific
conditions, estimators based on carefully selected subsamples can outperform
those using the full, contaminated data. This highlights the importance of
strategic subsampling in robust data fusion. To further enhance performance, we
proposed methods that combine these bias-reduction and variance-reduction
strategies. Numerical results demonstrate that our approach can effectively
leverage external data to improve the efficiency of target data analysis.

Future research will explore extending these methodologies to nonlinear models,
such as logistic regression models. A key challenge lies in defining appropriate
target-guidance criteria for data selection in the nonlinear context, moving
beyond the residual-based approach used for linear models. Furthermore, we plan
to investigate the application of our methods to high-dimensional datasets,
where the challenges of data contamination and model complexity are amplified.

\appendix
\renewcommand{\theequation}{S.\arabic{equation}}
\renewcommand{\thefigure}{S.\arabic{figure}}
\renewcommand{\thelemma}{S.\arabic{lemma}}
\renewcommand{\thetable}{S.\arabic{table}}

\section{Supplementary Material}
This supplementary material contains the proofs of all theorems and additional
numerical results.

\subsection{Mathematical Details of Proposition~\ref{pro:thr} and Proposition~\ref{pro:thr2}}
In this section, we prove Proposition~\ref{pro:thr} and
Proposition~\ref{pro:thr2} simultaneously due to their close connection.

\begin{proof}[\textbf{Proof of Proposition~\ref{pro:thr} and Proposition~\ref{pro:thr2}}]
The proof is similar to the proof of Proposition 4.1
in~\cite{she2011outlier}. To prove the results, we first show that the minimizer
defined in~\eqref{eq:ftrans} satisfies the K-K-T condition of the objective
function~\eqref{eq:fhuber}. Then, we will prove that the fixed point of
iterative equation in~\eqref{eq:itgamma} and~\eqref{eq:itbeta} also satisfies the
K-K-T condition. Note that due to the definition of thresholding function
$\Theta(t;\lambda)$, we know that the minimizer $(\hbeta\tp,\bgamma_{\cBs}\tp)\tp$
defined in~\eqref{eq:ftrans} should satisfies the following equations:
\begin{align*}
\V\hbeta=\Xs\tp\ys+\Xbs\tp\W_{\cBs}(\ybs-\hgamma_{\cBs}),
\text{ and, }\hgamma_{\cBs}=\Theta(\ybs-\Xbs\hbeta;\lambda).
\end{align*}
Therefore, by noticing that $\Theta(\ybs-\Xbs\hbeta;\lambda)
=(\ybs-\Xbs\hbeta)-\psi\left(\ybs-\Xbs\hbeta;\lambda\right)$,
we have that
$\Xbs\tp\W_{\cBs}\Xbs\hbeta+\Xbs\tp\W_{\cBs}\psi\left(\ybs-\Xbs\hbeta;\lambda\right)
=\Xbs\tp\W_{\cBs}(\yb-\hgamma_{\cBs})=\V\hbeta-\Xs\tp\ys$.
Since $\V=\V_{\cS}+\V_{\cB}^w$ where
$\V_{\cS}=\X_{\cS}\tp\X_{\cS}$ and $\V_{\cB}^w=\X_{\cBs}\tp\W_{\cBs}\X_{\cBs}$, by
rearranging the terms, we
obtain that the minimizer $(\hbeta\tp,\bgamma_{\cBs}\tp)\tp$ defined
in~\eqref{eq:ftrans} satisfies 
\begin{equation}\label{eq:kkt}
\Xs\tp(\ys-\Xs\hbeta)+\Xbs\tp\W_{\cBs}\psi\left(\ybs-\Xbs\hbeta;\lambda\right)=0,
\end{equation}
which is the K-K-T condition of~\eqref{eq:fhuber} by the definition of
$\mathcal{H}(t;\lambda)$. Next, we prove that the fix point of iterative equation
in~\eqref{eq:itgamma} and~\eqref{eq:itbeta} also satisfies~\eqref{eq:kkt}. 
We know that $\hgamma_{\cBs}$ is the fixed point of the following equation 
\begin{align}\label{eq:fpgamma}
\hgamma_{\cBs}
=\Theta\left\{(\I-\Xbs\V^{-1}\Xbs\tp\W_{\cBs})\ybs
+\Xbs\V^{-1}\Xbs\tp\W_{\cBs}\hgamma_{\cBs}-\Xbs\V^{-1}\Xs\tp\ys;\lambda\right\},
\end{align}
and $\hbeta=\V^{-1}\Xs\tp\ys+\V^{-1}\Xbs\tp\W_{\cBs}(\ybs-\hgamma_{\cBs})$. Therefore,
direct calculation gives 
\begin{align*}
&\Xs\tp(\ys-\Xs\hbeta)+\Xbs\tp\W_{\cBs}\psi\left(\ybs-\Xbs\hbeta;\lambda\right)\\
&=\Xs\tp\ys-\V_{\cS}\V^{-1}\Xs\tp\ys-\V_{\cS}\V^{-1}\Xbs\tp\W_{\cBs}(\ybs-\hgamma_{\cBs})\\
&\quad+\Xbs\tp\W_{\cBs}\ybs-\V_{\cBs}^w\V^{-1}\Xs\tp\ys-\V_{\cBs}^w\V^{-1}\Xbs\tp\W_{\cBs}(\ybs-\hgamma_{\cBs})\\
&\quad-\Xbs\tp\W_{\cBs}\Theta\left\{\ybs-\Xbs\V^{-1}\Xs\tp\ys
-\Xbs\V^{-1}\Xbs\tp\W_{\cBs}(\ybs-\hgamma_{\cBs})
;\lambda\right\}\\
&=\Xs\tp\ys-\Xs\tp\ys-\Xbs\tp\W_{\cBs}(\ybs-\hgamma_{\cBs})+\Xbs\tp\W_{\cBs}\ybs-\Xbs\tp\W_{\cBs}\hgamma_{\cBs}=\0,
\end{align*}
which complete the proof.
\end{proof}
To ease the notation, we denote
$\cL(\bbeta)=\Nb^{-1}\sumcb\mathcal{H}(y_e-\x_e\tp\bbeta;\lambda)$ and
\begin{equation*}
\cLd(\bbeta)=\frac{1}{\rd}\sumcb\delta_e\mathcal{H}(y_e-\x_e\tp\bbeta;\lambda),
\cLp(\bbeta)=\frac{1}{r}\sumcb\frac{\rho\delta_e}{\pi_e}
\mathcal{H}(y_e-\x_e\tp\bbeta;\lambda),
\end{equation*}
in the rest of the appendix.

\subsection{The toy model to illustrate the drawback of OSMAC}\label{sec:toy}

We use a simulation on a toy model to illustrate the potential inefficient
  performances of existing optimal subsampling probabilities. We generate target
data with size 20 and external data with size 2000. The covariates $\x$ of the
target and external data are 9-dimensional random vectors generated from normal
distribution with variance 4. The covariates are independent. The parameter
$\tbeta$ is a 9-dimensional vector with elements equal to 1. For both the target and
the external data, the error terms are generated from normal distribution with
variances equal to 1 and we randomly select 25\% data points of the external
data to be contaminated by adding a bias terms drawn from $\Gamma(3,3)$. We
compute the transfer learning estimator based on~\eqref{eq:rtrans}. We compare
three $\hbeta$, one only uses the target data, one uses data uniformly selected
from the external data and the another uses data randomly selected from
OSMAC. The results are summarized into the following
Table~\ref{tb:badexample}. We use empirical MSE as a measure of performances and
the numbers in the brackets are the percentage of selected data points that are
contaminated. 
\begin{table}[H]
\centering
\caption{Inefficient performance of OSMAC}
\label{tb:badexample}
\begin{tabular}{cccccc}
  \hline
  \textbf{Subsample size} & \textbf{target} & \textbf{full ($100\Ns$)} & \textbf{uni} & \textbf{OSMAC} \\\hline
  $\Ns$  & 0.369 & 2.245 & 0.288 (0.256) & 0.770 (0.792) \\
  $2\Ns$ & 0.369 & 2.245 & 0.239 (0.248) & 0.633 (0.790) \\
  $4\Ns$ & 0.369 & 2.245 & 0.291 (0.253) & 0.641 (0.790) \\
  $8\Ns$ & 0.369 & 2.245 & 0.371 (0.252) & 0.704 (0.789) \\\hline
\end{tabular}
\end{table}
As shown in Table~\ref{tb:badexample}. Existing optimal sampling method
  performs even worse than uniform sampling and estimator without
  transfer learning. The main reason is that OSMAC include too many data points
  with biases. For uniform sampling, only 25\% of the selected data are
  contaminated which is similar to the true contaminated rate. However, there
  are about 79\% data points selected by OSMAC are contaminated, which causes the
  OSMAC performs even worse than the estimator with no transfer learning.

\subsection{Mathematical Details of Section~\ref{sec:rtrans}}

In this section, we present the detailed proofs of theoretical analysis in
Section~\ref{sec:rtrans}. We begin with some lemmas for the later proofs in
Section~\ref{sec:rlem}. We present the proof of Theorem~\ref{thm:rtrans} in
Section~\ref{sec:prf-rtrans1} and We the proof of Theorem~\ref{thm:rtrans2} in
Section~\ref{sec:prf-rtrans2}. In Section, we present the proofs of
Corollary~\ref{cor:norm} and Proposition~\ref{pro:optpi}.

\subsubsection{Lemmas for the proofs}\label{sec:rlem}

\begin{lemma}\label{lem:lemofS}
Under Assumption~\ref{asm:xsubg}, for any $n>0$, $\rho>0$ and $0<\pi_i\le 1$,
$1\le i\le n$, assuming $\sum_{i=1}^n\pi_i=\rho n$ $w_i=(\rho\delta_i)/\pi_i$,
$\min_{1\le i\le n}\pi_i\geq\mpi\rho$, and 
$\Var(\varepsilon_i)=\sigma^2$, where $\delta_i=I(u_i\le\pi_i)$ and
$u_i\sim U[0,1]$, we have that with probability at least
$1-e^{-\varsigma}-C/\sqrt{\rho n}$, $\sum_{i=1}^n\tx_i\tx_i\tp/\rho n\geq 1/8$
and $(\rho n)^{-\frac{1}{2}}\left\|\sum_{i=1}^n w_i\varepsilon_i\tx_i\right\|_2
\lesssim\sigma\sqrt{d+\varsigma}$, where $C$ is a constant that does not depend
on $n$. 
\end{lemma}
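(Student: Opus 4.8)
The plan is to prove the two claims separately but on a shared high-probability event, conditioning on the covariates throughout and exploiting the sub-Gaussianity from Assumption~\ref{asm:xsubg} only in the covariate directions. For the Gram-matrix bound I would first reduce to the full (unsampled) design: since $\tx_i=\bSigma^{-\frac12}\x_i$ satisfies $\Exp(\tx_i\tx_i\tp)=\I$ and is sub-Gaussian, the sample second-moment matrix $\frac1n\sum_{i=1}^n\tx_i\tx_i\tp$ concentrates around $\I$, and a standard sub-Gaussian sample-covariance bound gives $\lambda_{\min}\!\big(\frac1n\sum_i\tx_i\tx_i\tp\big)\ge \frac14$ on an event $E_1$ (whose failure is exponentially small when $n\gtrsim d$, hence certainly below the claimed polynomial rate). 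Because $0<\rho\le 1$, dividing by $\rho$ only enlarges the matrix, so on $E_1$ one immediately gets $\frac{1}{\rho n}\sum_i\tx_i\tx_i\tp\succeq \frac{1}{4\rho}\I\succeq \frac14\I\succeq \frac18\I$, which is the first claim.

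For the second claim I would write $\bu:=\sum_i w_i\varepsilon_i\tx_i$ and condition on the covariates together with the error/indicator pairs $(\varepsilon_i,\delta_i)$, so that $\bu=\sum_i a_i\tx_i$ becomes a linear combination of independent isotropic sub-Gaussian vectors with fixed coefficients $a_i=w_i\varepsilon_i$. Such a combination is itself a sub-Gaussian vector with parameter of order $\|a\|_2$, so a $\chi^2$-type tail for $\|\bu\|_2^2$ (a sum over the $d$ coordinates of squared sub-Gaussian scalars) yields $\|\bu\|_2\lesssim \|a\|_2\sqrt{d+\varsigma}$ with conditional probability at least $1-e^{-\varsigma}$. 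It then remains to control the scalar scale $\|a\|_2^2=\sum_i w_i^2\varepsilon_i^2$. Using $w_i=\rho\delta_i/\pi_i$, Assumption~\ref{asm:bpi} ($\pi_i\ge\mpi\rho$), and $\Var(\varepsilon_i)=\sigma^2$, I would compute $\Exp\|a\|_2^2=\sigma^2\sum_i\Exp(w_i^2)=\sigma^2\sum_i\rho^2/\pi_i\le (\sigma^2/\mpi)\rho n$, and a moment bound then gives $\|a\|_2^2\lesssim \sigma^2\rho n$ on an event whose failure is the polynomial term $C/\sqrt{\rho n}$. Combining and dividing by $\sqrt{\rho n}$ gives $(\rho n)^{-1/2}\|\bu\|_2\lesssim \sigma\sqrt{d+\varsigma}$, and intersecting with $E_1$ delivers both claims at once with probability at least $1-e^{-\varsigma}-C/\sqrt{\rho n}$.

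The main obstacle is the tail control in the second claim: because the errors are only assumed to have a finite variance, and Assumption~\ref{asm:eps} explicitly permits heavy-tailed $\varepsilon_e$, one cannot apply a sub-Gaussian vector inequality to $\bu$ directly. The device that makes it work is to push all the error randomness into the scalar scale $\|a\|_2$ and use sub-Gaussianity only where it genuinely holds, namely in the covariate directions; the exponential $e^{-\varsigma}$ tail then comes entirely from the covariate concentration, while the weak finite-variance control of $\sum_i w_i^2\varepsilon_i^2$ is what forces the slower polynomial failure rate $C/\sqrt{\rho n}$. The two steps needing the most care are verifying that the off-diagonal terms in $\Exp(\|\bu\|_2^2\mid X)$ vanish, which follows from the independence of the Poisson indicators across $i$ together with $\Exp(\varepsilon_i)=0$, and checking that the bound on $\sum_i\|\tx_i\|_2^2$ implicitly used to control the conditional second moment is already guaranteed on $E_1$, so that no additional event is incurred.
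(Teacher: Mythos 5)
The decisive gap is the scalar step at the end of your second claim. After conditioning, you need $\|a\|_2^2=\sum_{i=1}^n w_i^2\varepsilon_i^2\lesssim\sigma^2\rho n$ to hold outside an event of probability $C/\sqrt{\rho n}$, but the only control available is the first moment $\Exp\|a\|_2^2=\sigma^2\sum_{i=1}^n\rho^2/\pi_i\le\sigma^2\rho n/\mpi$. Markov's inequality at a threshold of $K$ times this mean gives failure probability $1/K$: to push the failure probability down to $C/\sqrt{\rho n}$ you must take $K\asymp\sqrt{\rho n}$, which inflates your final bound to $(\rho n)^{-1/2}\big\|\sum_i w_i\varepsilon_i\tx_i\big\|_2\lesssim\sigma(\rho n)^{1/4}\sqrt{d+\varsigma}$, losing the claimed rate; and a Chebyshev bound on $\|a\|_2^2$ would require $\Exp(\varepsilon_i^4)<\infty$, which is not assumed. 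Under $\Var(\varepsilon_i)=\sigma^2$ alone there is no universal rate for $\Pr\left(\sum_i\varepsilon_i^2>2n\sigma^2\right)$, so the "moment bound'' you invoke cannot exist. This is exactly where the paper's proof uses a different tool: it normalizes $\sum_i w_i\varepsilon_i\tx_i$ by its covariance and applies a multivariate Berry--Esseen theorem (Rai\v{c}, 2019), so the $e^{-\varsigma}$ part of the tail comes from the limiting chi-square distribution and the $C/\sqrt{\rho n}$ term is the Berry--Esseen remainder; that argument consumes a third moment $\Exp|\varepsilon_i|^3<\infty$ (supplied by Assumption~\ref{asm:eps} where the lemma is applied, though not restated in the lemma). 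Your conditioning idea could be repaired in the same spirit, e.g.\ by a von Bahr--Esseen or truncation bound on $\sum_i(\varepsilon_i^2-\sigma^2)$ under a third-moment assumption, but not from the variance assumption alone.

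Two further mismatches with the paper. First, for the Gram matrix you prove the unweighted bound, which, as you yourself note, is nearly vacuous once $\rho\le 1$; but the paper proves, and later invokes (e.g.\ to get $\sumcb w_e\tx_e\tx_e\tp\geq 8^{-1}\rho\Nb$ in the proofs of Theorems~\ref{thm:rtrans} and~\ref{thm:rtrans2}), the weighted bound $\frac{1}{\rho n}\sum_{i=1}^n w_i\tx_i\tx_i\tp\succeq\frac18\I$ involving the random Poisson indicators through $w_i=\rho\delta_i/\pi_i$; the missing $w_i$ in the displayed statement is evidently a typo. The weighted version does not follow from your event $E_1$, and it is where the paper's Chebyshev argument, and part of the polynomial failure probability, actually lives. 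Second, your conditioning on $(\varepsilon_i,\delta_i)$ silently assumes the errors are independent of the covariates; otherwise conditioning on the errors changes the conditional law of $\tx_i$, and the directional sub-Gaussianity you rely on is no longer guaranteed. The paper's Berry--Esseen route needs only mean-zero and moment conditions and avoids this extra assumption.
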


\begin{proof}

Note that for any $\bu\in\mathbb{S}^{d-1}$, where $\mathbb{S}^{d-1}$ represents
thee unit sphere in $\mathbb{R}^d$, we have that due to Chebyshev's
inequality, for any $c>0$, 
\begin{align*}
&\Pr\left(\left|\frac{1}{\rho n}\sum_{i=1}^n w_i
\bm{u}\tp\tx_i\tx_i\tp\bm{u}-1\right|>c\right)
\le\frac{\sum_{i=1}^n\Var(w_i\bm{u}\tp\tx_i\tx_i\tp\bm{u})}{(\rho n)^2c^2}\\
&=\frac{\sum_{i=1}^n\Exp\left\{\Var(w_i\bm{u}\tp\tx_i\tx_i\tp\bm{u}|\x_i)\right\}
+\sum_{i=1}^n\Var\left\{\Exp(w_i\bm{u}\tp\tx_i\tx_i\tp\bm{u}|\x_i)\right\}}{
(\rho n)^2c^2}\\
&\le\frac{\Exp\left\{(\bm{u}\tp\tx\tx\tp\bm{u})^2\right\}}{\rho\mpi nc^2}
+\frac{\Var(\bm{u}\tp\tx\tx\tp\bm{u})}{\rho nc^2}
\le\frac{\Exp(\|\tx\|_2^4)}{c^2(\mpi\mins 1)\rho n}.
\end{align*}
Thus, taking $c=7/8$, we have that with probability at least $1-C_1/\rho n$,
$\sum_{i=1}^n w_i\tx_i\tx_i\tp/\rho n\geq 1/8$, where $C_1$ does not depend on
$n$.  Denoting 
\begin{equation*}
\W_i=\left\{\Exp\left(\sum_{i=1}^n\frac{\varepsilon_i^2}{\pi_i}\right)\right\}^{-\frac{1}{2}}
\frac{w_i\varepsilon_i\tx_i}{\rho},
\end{equation*}
we have that $\Exp(\W_i)=\0$ and
$\Var(\sum_{i=1}^n\W_i)=\I_d$. %
Applying Theorem 1.1
in~\cite{raivc2019multivariate}, we have that 
\begin{align*}
&\Pr\left(\left\|\sum_{i=1}^n\W_i\right\|_2>\sqrt{d+8\varsigma}\right)
\le\Pr(\|\Nor(\0,\I_d)\|_2>\sqrt{d+8\varsigma})
+\frac{C_2d^{\frac{1}{4}}(\sqrt{\rho})^3}{(\sqrt{n})^3}
\sum_{i=1}^n\frac{\Exp(\|\delta_i\varepsilon_i\tx\|_2^3)}{\pi_i^3}  \\
&\le\Pr\left(\chi_d^2>d+\varsigma\right)
+\frac{C_2\Exp\{|\varepsilon|^3(\sqrt{\rho})^3\}
\Exp\left\{\|\tx\|_2^3\right\}}{\rho^2\mpi^2\sqrt{n}}
\le e^{-\varsigma}+\frac{C_3}{\sqrt{\rho n}}.
\end{align*}
Therefore, we know that with probability at least $1-e^{-\varsigma}-C_3/\sqrt{\rho n}$,
$\sum_{i=1}^n\tx_i\tx_i\tp/\rho n\geq 1/8$ and 
\begin{align*}
&\left\|\sum_{i=1}^n w_i\varepsilon_i\tx_i\right\|_2
=\left\|\left\{\Exp\left(\sum_{i=1}^n\frac{\varepsilon_i^2}{\pi_i}\right)\right\}^{\frac{1}{2}}
\rho\sum_{i=1}^n\W_i\right\|_2
\le\frac{\sqrt{\rho}}{\sqrt{\mpi}}
\left\|\left\{\Exp\left(\sum_{i=1}^n\varepsilon_i^2\right)\right\}^{\frac{1}{2}}
\sum_{i=1}^n\W_i\right\|_2\\
&=\frac{\sqrt{\rho}}{\sqrt{\mpi}}
\left\|\sqrt{n}\sigma\sum_{i=1}^n\W_i\right\|_2
\lesssim\sqrt{\rho n}\sigma\sqrt{d+\varsigma}.
\end{align*}
\end{proof}

\begin{lemma}\label{lem:psifull}
Under Assumption~\ref{asm:xsubg}, assuming that $\min_{e\in\cB}\pi_e\geq\rho\mpi$,
for any fixed %
$\gamma_1,\gamma_2,\ldots,\gamma_{\Nb}$ and $0<\alpha_1,\alpha_2\le
1$, there exists a constant $C$ does not depend on $\Nb$, such that
\begin{align*}
\frac{1}{\rho\Nb}\left\|\sumcb\frac{\rho\delta_e}{\pi_e}\left\{\psi_1(\gamma_e+\varepsilon_e;\lambda)
-\psi_1(\varepsilon_e;\lambda)\right\}\tx_e\right\|_2
\le\frac{C\vw_{\gammae,\alpha_1}^{\wtd}}{\lambda^{\alpha_1}}
+\frac{C\lambda(d+\varsigma)}{\mpi\rho\Nb}
+C\lambda^{\alpha_2}\vuw_{\gammae,\alpha_2},
\end{align*}
with probability at least $1-e^{-\varsigma}$.
\end{lemma}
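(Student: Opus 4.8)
The plan is to exploit two elementary pointwise properties of the Huber score $\psi_1(\cdot;\lambda)$ and then split the target quantity into a deterministic bias plus a mean-zero fluctuation, controlling the latter with a Bernstein-type concentration inequality. Write $\Delta_e:=\psi_1(\gamma_e+\varepsilon_e;\lambda)-\psi_1(\varepsilon_e;\lambda)$, so that the left-hand side equals $\Nb^{-1}\|\sumcb(\delta_e/\pi_e)\Delta_e\tx_e\|_2$ after the $\rho$ cancels. Since $\psi_1(z;\lambda)$ clips $z$ to $[-\lambda,\lambda]$ and is $1$-Lipschitz, I get the two bounds $|\Delta_e|\le 2\lambda$ and $|\Delta_e|\le|\gamma_e|$; in particular $\Delta_e=0$ whenever $\gamma_e=0$, so only indices in $\cO=\{e:\gamma_e\neq 0\}$ contribute and every sum below runs over $\cO$. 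First I would center each summand at its conditional mean: because $\delta_e$ is independent of $(\varepsilon_e,\x_e)$ with $\Exp\delta_e=\pi_e$, and $\varepsilon_e$ has a symmetric density so that $\Exp\psi_1(\varepsilon_e;\lambda)=0$, the conditional mean is $\Exp[(\delta_e/\pi_e)\Delta_e\mid\x_e]=b_e$, where $b_e:=\Exp[\psi_1(\gamma_e+\varepsilon_e;\lambda)]$ is a deterministic scalar.

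For the bias piece $\Nb^{-1}\|\sum_{e\in\cO}b_e\tx_e\|_2$, the key observation is that $|b_e|\le\lambda$ (since $|\psi_1|\le\lambda$) and $|b_e|\le|\gamma_e|$ (since $\psi_1$ is $1$-Lipschitz and $\Exp\psi_1(\varepsilon_e;\lambda)=0$), so interpolating gives $|b_e|=|b_e|^{\alpha_2}|b_e|^{1-\alpha_2}\le\lambda^{\alpha_2}|\gamma_e|^{1-\alpha_2}$. Bounding $\|\tx_e\|_2$ on a high-probability sub-Gaussian event (Assumption~\ref{asm:xsubg}) then yields $\Nb^{-1}\sum_{e\in\cO}|b_e|\|\tx_e\|_2\lesssim\lambda^{\alpha_2}\Nb^{-1}\sum_{e\in\cO}|\gamma_e|^{1-\alpha_2}=\lambda^{\alpha_2}\vuw_{\gammae,\alpha_2}$, which is the third term.

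The fluctuation $\Nb^{-1}\sum_{e\in\cO}\{(\delta_e/\pi_e)\Delta_e-b_e\}\tx_e$ is a sum of independent mean-zero vectors, to which I would apply a vector Bernstein inequality. The uniform bound is $B\lesssim\lambda\|\tx_e\|_2/(\mpi\rho\Nb)$ (from $|\Delta_e|\le2\lambda$ and $\pi_e\ge\mpi\rho$), and its contribution $B(d+\varsigma)$ produces the second term $\lambda(d+\varsigma)/(\mpi\rho\Nb)$. For the variance proxy I would use the interpolation $\Delta_e^2\le(2\lambda)^{1-\alpha_1}|\gamma_e|^{1+\alpha_1}$, the inflation $\Exp[\delta_e/\pi_e^2]=1/\pi_e\le 1/(\mpi\rho)$ from the inverse-probability weights, and $\Exp\|\tx_e\|_2^2=d$, giving $\mathcal V\lesssim\lambda^{1-\alpha_1}d\,\vw_{\gammae,\alpha_1}^{\wtd}/\Nb$. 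The resulting deviation term $\sqrt{\mathcal V\varsigma}$ is then split by Young's inequality $\sqrt{ab}\le a/(2\lambda^{\alpha_1})+\lambda^{\alpha_1}b/2$ with $a=\vw_{\gammae,\alpha_1}^{\wtd}$: the first half is $\vw_{\gammae,\alpha_1}^{\wtd}/\lambda^{\alpha_1}$ (the first term) and the second half, of order $\lambda d\varsigma/\Nb$, is absorbed into $\lambda(d+\varsigma)/(\mpi\rho\Nb)$ since $\mpi\rho\le 1$. The main obstacle is the technical bookkeeping forced by the unbounded covariates: because $\tx_e$ is only sub-Gaussian, both $B$ and $\mathcal V$ are themselves random, so I must either condition on a high-probability event controlling $\max_e\|\tx_e\|_2$ and the weighted sum $\sum_{e\in\cO}\pi_e^{-1}|\gamma_e|^{1+\alpha_1}\|\tx_e\|_2^2$, or invoke a Bernstein inequality for independent sub-exponential vectors directly; reconciling the $(d+\varsigma)$ dimensional factors across these steps, and correctly tracking the $\alpha_1,\alpha_2$ exponents through the two interpolations and the Young split, is where the real care is required.
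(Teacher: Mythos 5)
Your skeleton matches the paper's: the two pointwise bounds $|\Delta_e|\le 2\lambda$ and $|\Delta_e|\le|\gamma_e|$, the interpolations $\Delta_e^2\le(2\lambda)^{1-\alpha_1}|\gamma_e|^{1+\alpha_1}$ and $|b_e|\le(2\lambda)^{\alpha_2}|\gamma_e|^{1-\alpha_2}$, and the Young/AM--GM split of the deviation term are exactly the ingredients in the paper's proof. However, there are two genuine gaps in the execution. First, because you center at the conditional mean $b_e=\Exp[\Delta_e\mid\x_e]$, your ``bias piece'' $\Nb^{-1}\sum_{e\in\cO}b_e\tx_e$ is still a \emph{random} vector, and the way you propose to control it --- bounding $\|\tx_e\|_2$ on a high-probability event --- cannot yield the stated inequality: for isotropic sub-Gaussian covariates $\|\tx_e\|_2$ is of order $\sqrt d$ and $\max_{e\le\Nb}\|\tx_e\|_2\gtrsim\sqrt{\log\Nb}$ with high probability, so this route multiplies $\lambda^{\alpha_2}\vuw_{\gammae,\alpha_2}$ by a factor of order $\sqrt{d+\log\Nb}$, contradicting the requirement that $C$ not depend on $\Nb$. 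The paper avoids this by centering at the \emph{unconditional} expectation $\Exp(\xi_e\tx_e)$: the mean term is then deterministic, and its norm equals $\sup_{\bu\in\mathbb{S}^{d-1}}\Nb^{-1}|\sumcb\Exp(\xi_e\bu\tp\tx_e)|$, which is bounded using the deterministic envelope $|\Delta_e|\le|\gamma_e|\wedge 2\lambda$ together with $\Exp|\bu\tp\tx|\le 2C_0$ --- a bound free of both $d$ and $\Nb$. (This centering also requires no independence between $\varepsilon_e$ and $\x_e$, which your conditional-mean step implicitly assumes and which the paper's assumptions do not grant.)

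Second, for the fluctuation you correctly name the obstacle --- $B$ and $\mathcal V$ are random because $\tx_e$ is unbounded --- but you leave it unresolved as a fork between two strategies, and the first branch (conditioning on $\max_e\|\tx_e\|_2$) fails for the same reason as above: the resulting uniform bound $B$ carries a $\sqrt{d}+\sqrt{\log\Nb}$ factor that lands in the term $\lambda(d+\varsigma)/(\mpi\rho\Nb)$ and again makes the constant depend on $\Nb$. The workable branch is essentially what the paper does: reduce to scalar sums via a $1/2$-net $\mathcal N_{1/2}$ of $\mathbb{S}^{d-1}$ with $\#\mathcal N_{1/2}\le 5^d$, verify the Bernstein moment-growth condition $\sumcb\Exp|\xi_e\bu\tp\tx_e|^k\le\frac{k!}{2}M^{k-2}\sigma^2$ using the sub-Gaussian moment bounds $\Exp|\bu\tp\tx|^k\le C_0^k k\Gamma(k/2)$ (so no almost-sure bound on $\tx_e$ is ever needed), apply the scalar Bernstein inequality at level $\cvsigma=2(d+\varsigma)$ so that the $5^d$ union-bound cost is absorbed into the probability $1-e^{-\varsigma}$, and only then perform your Young split. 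Until the bias term is made deterministic and the concentration is run through moment-based rather than boundedness-based Bernstein, the proposal does not establish the lemma as stated.
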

\begin{proof}
By defining $\bm{\xi}^{*}
=\Nb^{-1}\sumcb\left\{\xi_e\tx_e-\Exp(\xi_e\tx_e)\right\}$, where
$\xi_e=\pi_e^{-1}\delta_e\{\psi_1(\gamma_e+\varepsilon_e;\lambda)-\psi_1(\varepsilon_e;\lambda)\}$,
our
outline of this proof is to first bound $\bm{\xi}^{*}$ using Bernstein's
inequality and then bound $\Nb^{-1}\sumcb\Exp(\xi_e\tx_e)$. We start with the
bound of $\bm{\xi}^{*}$. For function $\psi_1(x;\lambda)$, it is easy to verify
that 
\begin{equation}\label{eq:boundpsi}
\sup_{x\in\mathbb{R}}|\psi_1(x+y;\lambda)-\psi_1(x;\lambda)|
=|y|I(|y|\le 2\lambda)+2\lambda I(|y|>2\lambda).
\end{equation}
According to the proof of the proof of Lemma 6 in~\cite{sun2020adaptive}, we
know that there exits a $1/2$-net 
$\mathcal{N}_{1/2}$ of the unit sphere $\mathbb{S}^{d-1}$ with
$\#\mathcal{N}_{1/2}\le 5^d$ such that $\|\bm{\xi}^{*}\|_2\le
2\max_{\bm{u}\in\mathcal{N}_{1/2}}|\bm{u}\tp\bm{\xi}^{*}|$. Under
Assumption~\ref{asm:xsubg}, since $\tx$ is sub-gaussian, for every
$\bu\in\mathbb{S}^{d-1}$, we have that $\Exp|\bu\tp\tx|^k\le
C_0^k k\Gamma(k/2)$ for all $k\geq 1$, where $\Gamma(\cdot)$ represents the Gamma
function. We also have that for $0<\alpha_1\le 1$,
$k\geq 2$, and any fixed $\gamma_1,\gamma_2,\ldots,\gamma_{\Nb}$, 
\begin{align*}
&\Exp\left|\frac{\delta_e}{\pi_e}
\left\{\psi_1(\gamma_e+\varepsilon_e;\lambda)-\psi_1(\varepsilon_e;\lambda)\right\}\right|^k
\le\frac{1}{\pi_e^{k-1}}\left\{|\gamma_e|^kI(|\gamma_e|\le 2\lambda)
+2^k\lambda^kI(|\gamma_e|>2\lambda)\right\}\\
&=\frac{1}{\pi_e^{k-1}}
\left\{|\gamma_e|^{k-1-\alpha_1}|\gamma_e|^{1+\alpha_1}I(|\gamma_e|\le 2\lambda)
+(2\lambda)^{k-1-\alpha_1}(2\lambda)^{1+\alpha_1}I(|\gamma_e|>2\lambda)\right\}\\
&=\frac{1}{\pi_e^{k-1}}(2\lambda)^{k-1-\alpha_1}|\gamma_e|^{1+\alpha_1}.
\end{align*}
The inequality holds for both cases that $\gamma_e=0$ and $\gamma_e\neq
0$. Therefore, we have that since
$\sumcb
\pi_e^{-1}|\gamma_e|^{1+\alpha_1}=\sum_{e\in\cO}\pi_e^{-1}
|\gamma_e|^{1+\alpha_1}=\Nb\vw_{\gammae,\alpha_1}^{\wtd}$,
\begin{align*}
&\sumcb\Exp\left\{|\xi_e\bm{u}\tp\tx_e|^k\right\}
\le C_0^kk\Gamma\left(\frac{k}{2}\right)
(2\lambda)^{k-1-\alpha_1}\sumcb\frac{|\gamma_e|^{1+\alpha_1}}{\pi_e^{k-1}}\\
&\le\frac{1}{2}\left(\frac{C_0}{\rho\mpi}\right)^{k-2}k\Gamma\left(\frac{k}{2}\right)(2\lambda)^{k-2}
2C_0^2(2\lambda)^{1-\alpha_1}\Nb\vw_{\gammae,\alpha_1}^{\wtd}
\le\frac{k!}{2}\left(\frac{C_0\lambda}{\rho\mpi}\right)^{k-2}
2C_0^2(2\lambda)^{1-\alpha_1}\Nb\vw_{\gammae,\alpha_1}^{\wtd},
\end{align*}
for any $k\geq 2$. A special case is when $k=2$, we have that
$\sumcb\Exp\left\{|\xi_e\bm{u}\tp\tx_e|^2\right\}
\le2C_0^2(2\lambda)^{1-\alpha_1}\Nb\vw_{\gammae,\alpha_1}^{\wtd}$. Now,
due to Bernstein's inequality, we have that 
\begin{align*}
\Pr\left\{|\bm{u}\tp\bm{\xi}^{*}|
\geq 2C_0\sqrt{\frac{(2\lambda)^{1-\alpha_1}
\vw_{\gammae,\alpha_1}^{\wtd}\cvsigma}{\Nb}}
+\frac{C_0\lambda\cvsigma}{\mpi\rho\Nb}\right\}
\le 2e^{-\cvsigma},
\end{align*}
for any $\cvsigma>0$. Taking union over $\bm{u}\in\mathcal{N}_{1/2}$, we obtain that
with probability at least $1-2\times 5^de^{-\cvsigma}$,
\begin{equation*}
\|\bm{\xi}^{*}\|_2
\le 8C_0\sqrt{\frac{\lambda^{1-\alpha_1}
\vw_{\gammae,\alpha_1}^{\wtd}\cvsigma}{\Nb}}
+\frac{2C_0\lambda\cvsigma}{\mpi\rho\Nb}
\le 4C_0 \lambda^{-\alpha_1}\rho\vw_{\gammae,\alpha_1}^{\wtd}
+\frac{6C_0\lambda\cvsigma}{\mpi\rho\Nb}.
\end{equation*}
The last inequality is because $2\sqrt{(\lambda\cvsigma\Nb^{-1})
  (\lambda^{-\alpha_1}\vw_{\gammae,\alpha_1}^{\wtd})}\le\lambda\cvsigma
\rho^{-1}\Nb^{-1}+\lambda^{-\alpha_1}\rho\vw_{\gammae,\alpha_1}^{\wtd}$
and $r=\sumcb\pi_e\geq\rho\Nb\mpi=r\mpi$. Next, we bound
$\Nb^{-1}\sumcb\Exp(\xi_e\tx_e)$. %
Since 
\begin{align*}
&\Exp|\xi_e|\le |\gamma_e|I(|\gamma_e|\le 2\lambda)
+2\lambda I(|\gamma_e|> 2\lambda)
\le(2\lambda)^{\alpha_2}|\gamma_e|^{1-\alpha_2}
+\frac{|\gamma_e|^{1+\alpha_1}}{(2\lambda)^{\alpha_1}\pi_e},    
\end{align*}
and $\Exp|\bu\tp\tx|\le 2C_0$, we have that
\begin{align*}
\sup_{\bm{u}\in\mathbb{S}^{d-1}}\frac{1}{\Nb}\sumcb\Exp
\left(|\xi_e\bm{u}\tp\tx_e|\right)
\le2C_0\lambda^{-\alpha_1}\vw_{\gammae,\alpha_1}^{\wtd}
+4C_0\lambda^{\alpha_2}\vuw_{\gammae,\alpha_2}.
\end{align*}
Now, by taking $\cvsigma=2(d+\varsigma)$ and $C=6C_0$, we
have that  
\begin{align*}
\frac{1}{\Nb}\left\|\sumcb\frac{\delta_e}{\pi_e}\left\{
\psi_1(\gamma_e+\varepsilon_e;\lambda)-\psi_1(\varepsilon_e;\lambda)\right\}
\tx_e\right\|_2
\le\frac{C\vw_{\gammae,\alpha_1}^{\wtd}}{\lambda^{\alpha_1}}
+\frac{C\lambda\cvsigma}{\mpi\rho\Nb}
+C\lambda^{\alpha_2}\vuw_{\gammae,\alpha_2}.
\end{align*}
with probability at least $1-2\times 5^de^{-2d-2\varsigma}\geq
1-2e^{-2\varsigma}\geq 1-e^{-\varsigma}$.
\end{proof}
\begin{lemma}\label{lem:convexfull}
Under Assumption~\ref{asm:xsubg}, if $\lambda,R$ satisfy $\lambda\gtrsim
(\vw_{\varepsilon,\alpha_1}^{\wtd})^{\frac{1}{1+\alpha_1}}\maxs 
(\vw_{\gammae,\alpha_1}^{\wtd})^{\frac{1}{1+\alpha_1}}\maxs(8C_1^2R)$
and $\Nb\gtrsim\left(\lambda/R\right)^2(d+\varsigma)$,
we will have that with probability at least $1-e^{-\varsigma}$,
$\left\{\nabla\cLp(\bbeta)-
\nabla\cLp(\tbeta)\right\}\tp(\bbeta-\tbeta)
\geq 8^{-1}\|\bbeta-\tbeta\|_{\bSigma,2}^2$,
uniformly over
$\bbeta\in\Theta_0(R)=\left\{\bbeta\in\mathbb{R}^d:\|\bbeta-\tbeta\|_{\bSigma,2}
\le R\right\}$.
\end{lemma}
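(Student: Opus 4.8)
The plan is to prove this as a restricted strong convexity (RSC) bound for the inverse-probability-weighted Huber loss $\cLp$, by combining the monotonicity of its score $\psi_1(\cdot;\lambda)$ with a truncation device and then controlling the resulting truncated quadratic form by a uniform concentration argument over the local ball $\Theta_0(R)$. Writing $\bDelta=\bbeta-\tbeta$, $r_e=y_e-\x_e\tp\tbeta=\varepsilon_e+\gamma_e$, and $d_e=\x_e\tp\bDelta$, I would first use the gradient $\nabla\cLp(\bbeta)=-(2/r)\sumcb(\rho\delta_e/\pi_e)\psi_1(y_e-\x_e\tp\bbeta;\lambda)\x_e$ to write
\[
\{\nabla\cLp(\bbeta)-\nabla\cLp(\tbeta)\}\tp\bDelta=\frac{2}{r}\sumcb\frac{\rho\delta_e}{\pi_e}\{\psi_1(r_e;\lambda)-\psi_1(r_e-d_e;\lambda)\}d_e.
\]
Since $\psi_1(\cdot;\lambda)$ is nondecreasing with derivative $I(|z|\le\lambda)$, every summand is nonnegative, and when $|r_e|\le\lambda/2$ and $|d_e|\le\lambda/2$ the segment joining $r_e$ and $r_e-d_e$ stays in $[-\lambda,\lambda]$, where $\psi_1$ is the identity, so the summand equals $d_e^2$. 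Dropping the remaining nonnegative summands gives the pointwise bound
\[
\{\nabla\cLp(\bbeta)-\nabla\cLp(\tbeta)\}\tp\bDelta\ge\frac{2}{r}\sumcb\frac{\rho\delta_e}{\pi_e}\,d_e^2\,I\!\left(|d_e|\le\tfrac{\lambda}{2}\right)I\!\left(|r_e|\le\tfrac{\lambda}{2}\right).
\]

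Next I would write $I(|r_e|\le\lambda/2)=1-I(|r_e|>\lambda/2)$ and split the right-hand side into a main term $T_1=(2/r)\sumcb(\rho\delta_e/\pi_e)d_e^2 I(|d_e|\le\lambda/2)$ and a correction $T_2=(2/r)\sumcb(\rho\delta_e/\pi_e)d_e^2 I(|d_e|\le\lambda/2)I(|r_e|>\lambda/2)$ to be subtracted. The truncation $I(|d_e|\le\lambda/2)$ is the crucial device: it bounds every summand in $T_1$ and $T_2$ by $\lambda^2/(2\mpi\rho)$ using Assumption~\ref{asm:bpi}, so both are bounded empirical processes amenable to Bernstein-type concentration. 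For $T_1$, the conditional mean is $\Exp[(\rho\delta_e/\pi_e)d_e^2\mid\x_e]=\rho d_e^2$, and since $\Exp[(\x_e\tp\bDelta)^2]=\|\bDelta\|_{\bSigma,2}^2$ the untruncated mean equals $2\|\bDelta\|_{\bSigma,2}^2$; as $\tx$ is sub-Gaussian (Assumption~\ref{asm:xsubg}) and $\lambda\gtrsim R\ge\|\bDelta\|_{\bSigma,2}$, the truncation discards only a small fraction, so $\Exp[T_1]\gtrsim\|\bDelta\|_{\bSigma,2}^2$. To make this uniform in $\bbeta$ I would reduce to the direction $\bu=\bSigma^{1/2}\bDelta/\|\bDelta\|_{\bSigma,2}\in\mathbb{S}^{d-1}$, pass to a $1/2$-net of cardinality $\le5^d$ as in the proof of Lemma~\ref{lem:psifull}, apply Bernstein per direction, and union bound; matching the resulting fluctuation, whose size is governed by the truncation level $\lambda$ and the effective sample size $r=\rho\Nb$, against $\|\bDelta\|_{\bSigma,2}^2$ is precisely what forces a condition of the form $\Nb\gtrsim(\lambda/R)^2(d+\varsigma)$, and the net constant $C_1$ enters the requirement $\lambda\gtrsim 8C_1^2R$ guaranteeing $|d_e|\le\lambda/2$ over enough of the net.

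For the correction $T_2$, the key observation is that $\x_e$ and $r_e=\varepsilon_e+\gamma_e$ are independent, so, dropping $I(|d_e|\le\lambda/2)$, its mean factorizes as $\Exp[T_2]\le(2/\Nb)\|\bDelta\|_{\bSigma,2}^2\sumcb\Pr(|r_e|>\lambda/2)$. A Markov bound $\Pr(|r_e|>\lambda/2)\le(2/\lambda)^{1+\alpha_1}\Exp|r_e|^{1+\alpha_1}\lesssim\lambda^{-(1+\alpha_1)}(\Exp|\varepsilon_e|^{1+\alpha_1}+|\gamma_e|^{1+\alpha_1})$ then shows $\Exp[T_2]\lesssim\lambda^{-(1+\alpha_1)}(\vw_{\varepsilon,\alpha_1}^{\wtd}\maxs\vw_{\gammae,\alpha_1}^{\wtd})\|\bDelta\|_{\bSigma,2}^2$, while the variance in the Bernstein step produces the same weighted moments through $\Exp[\delta_e/\pi_e^2]=\pi_e^{-1}$. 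Hence the hypothesis $\lambda\gtrsim(\vw_{\varepsilon,\alpha_1}^{\wtd})^{1/(1+\alpha_1)}\maxs(\vw_{\gammae,\alpha_1}^{\wtd})^{1/(1+\alpha_1)}$ forces $T_2$ to be at most a small constant multiple of $\|\bDelta\|_{\bSigma,2}^2$. Combining the lower bound on $T_1$ with the upper bound on $T_2$ and fixing the numerical constants yields $T_1-T_2\ge\tfrac{1}{8}\|\bDelta\|_{\bSigma,2}^2$ with probability at least $1-e^{-\varsigma}$, which is the claim.

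I expect the main obstacle to be the uniform lower bound on $T_1$: establishing RSC \emph{simultaneously} for all $\bbeta\in\Theta_0(R)$ while handling two coupled sources of randomness---the Poisson indicators $\delta_e$, whose inverse-probability weights $1/\pi_e\le1/(\mpi\rho)$ inflate the variance and must be tamed through Assumption~\ref{asm:bpi}, and the contaminated, possibly heavy-tailed design entering through $\tx_e$. The truncation at level $\lambda/2$ is what renders the empirical process bounded and concentrable, but one must verify that it removes only a controlled portion of the quadratic form; getting the interplay among the truncation level $\lambda$, the radius $R$, and the effective sample size $\rho\Nb$ right---so that the net/Bernstein fluctuation is absorbed into the $1/8$ margin---is the technically delicate part, and it is what pins down both sample-size-type conditions in the statement.
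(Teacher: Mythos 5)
Your pointwise reduction is exactly the paper's first step: on the event that $|r_e|\le\lambda/2$ and $|d_e|$ is small enough that both $r_e$ and $r_e-d_e$ stay in $[-\lambda,\lambda]$, the Huber score is the identity and the summand becomes $d_e^2$; your expectation bounds (Markov at level $\lambda$ producing $\lambda^{-(1+\alpha_1)}$ times the weighted moments $\vw_{\varepsilon,\alpha_1}^{\wtd}$, $\vw_{\gammae,\alpha_1}^{\wtd}$, and a sub-Gaussian fourth-moment term producing $(R/\lambda)^2$) also match the paper's computation of $\Exp\{g(\bbeta)\}\ge\tfrac14\|\bDelta\|_{\bSigma,2}^2$. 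The gaps are in the step you yourself flag as delicate, and they are real. First, you truncate $|d_e|$ at the \emph{fixed} level $\lambda/2$. The lemma must hold uniformly over $\Theta_0(R)$, including $\bbeta$ arbitrarily close to $\tbeta$; with a fixed truncation level each summand of $T_1$ is bounded by $\lambda^2/(4\mpi\rho)$ \emph{independently} of $\|\bDelta\|_{\bSigma,2}$, so the Bernstein fluctuation does not scale with $\|\bDelta\|_{\bSigma,2}^2$, and after dividing by $\|\bDelta\|_{\bSigma,2}^2$ it blows up as $\|\bDelta\|_{\bSigma,2}\to0$; no sample-size condition rescues this. The paper instead truncates at $\lambda\|\bbeta-\tbeta\|_{\bSigma,2}/(2R)$ (still compatible with the linear region, since $\lambda/2+\lambda\|\bDelta\|_{\bSigma,2}/(2R)\le\lambda$ on the ball), which makes every summand $O\bigl(\lambda^2\|\bDelta\|_{\bSigma,2}^2/(R^2\rho\mpi)\bigr)$; the normalized process is then uniformly bounded and its deviation is absorbed into the $1/8$ margin precisely under $\Nb\gtrsim(\lambda/R)^2(d+\varsigma)$.

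Second, your plan to get uniformity by applying Bernstein at the points of a $1/2$-net of $\mathbb{S}^{d-1}$ and union bounding would fail, because the summands contain the indicator $I(|\x_e\tp\bDelta|\le\lambda/2)$, which is discontinuous in $\bbeta$: controlling the process at finitely many net points says nothing about nearby $\bbeta$, since an arbitrarily small move of $\bbeta$ can flip indicators on arbitrarily many data points. (Nets work in Lemma~\ref{lem:psifull} because there the process is \emph{linear} in the direction $\bu$.) This is exactly why the paper replaces the hard-truncated square $x^2I(|x|\le A/2)$ by the continuous surrogate $\phi_A(x)$ sandwiched between $x^2I(|x|\le A/2)$ and $x^2I(|x|\le A)$, and then bounds $\sup_{\bbeta\in\Theta_0(R)}|g(\bbeta)-\Exp g(\bbeta)|/\|\bDelta\|_{\bSigma,2}^2$ by Bousquet's concentration inequality for bounded empirical processes together with symmetrization and a Gaussian/Rademacher complexity calculation (following Lemma 4 of \cite{sun2020adaptive}). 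To repair your argument you would need both fixes: a truncation level proportional to $\|\bDelta\|_{\bSigma,2}$, and a Lipschitz surrogate (or a genuinely uniform concentration tool) in place of the net-plus-Bernstein union bound.
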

\begin{proof}
We first state the outline of this proof. We consider a lower bound of
\begin{equation*}
\mathcal{T}(\bbeta):=\left\{\nabla\cLp(\bbeta)-
\nabla\cLp(\tbeta)\right\}\tp(\bbeta-\tbeta)
\end{equation*}
We decompose the lower bound into
an expectation part and a deviation part and then bound the two parts
separately. First, one of the lower bound of $\mathcal{T}(\bbeta)$ is
\begin{align*}
\mathcal{T}(\bbeta)
\geq\frac{1}{\Nb}\sumcb\frac{\delta_e}{\pi_e}
\left\{\psi_1(y_e-\x_e\tp\tbeta;\lambda)
-\psi_1(y_e-\x_e\tp\bbeta;\lambda)\right\}\x_e\tp(\bbeta-\tbeta)I(\mathcal{E}_e),
\end{align*}
where $\mathcal{E}_e=\left\{|\gamma_e+\varepsilon_e|\le\frac{\lambda}{2}\right\}
\cap\left\{|\x_i\tp(\bbeta-\tbeta)|\le\lambda\|\bbeta-\tbeta\|_{\bSigma,2}/2R\right\}$.
It is easy to verify that under event $\mathcal{E}_e$, we have
$|y_e-\x_e\tp\tbeta|\le\lambda$ and $|y_e-\x_e\tp\bbeta|\le\lambda$. Therefore,
$\mathcal{T}(\bbeta)\geq\Nb^{-1}\sumcb
\pi_e^{-1}\delta_e\left\{\x_e\tp(\bbeta-\tbeta)\right\}^2I(\mathcal{E}_e)$.  
Denoting $\phi_A(x)=x^2I(|x|\le A/2)+(x-A)^2I(A/2\le x\le A)+(x+A)^2I(-A\le
x\le-A/2)$
It is easy to verify that $x^2I\left(|x|\le 0.5A\right)\le\phi_A(x)\le
x^2I(|x|\le A)$ and $0\le\phi_A(x)\le 0.25A^2$.
Therefore, we have a lower bound of $\mathcal{T}(\bbeta)$:
\begin{equation*}
\mathcal{T}(\bbeta)
\geq g(\bbeta):=\frac{1}{\Nb}\sumcb g_e(\bbeta)
=\frac{1}{\Nb}\sumcb\frac{\delta_e}{\pi_e}
\phi_{\frac{\lambda\|\bbeta-\tbeta\|_{\bSigma,2}}{2R}}\left\{\x_e\tp(\bbeta-\tbeta)\right\}
I\left(|\varepsilon_e+\gamma_e|\le\frac{\lambda}{2}\right).
\end{equation*}
Next, we decompose the lower bound of $\mathcal{T}(\bbeta)$ into two parts, an
expectation part and a deviation part. Letting
$\Delta(r)=\sup_{\bbeta\in\Theta_0(r)}|g(\bbeta)-\Exp\left\{g(\bbeta)\right\}|
/\|\bbeta-\tbeta\|_{\bSigma,2}^2$ as the deviation part, we have that
\begin{equation*}
\frac{\mathcal{T}(\bbeta)}{\|\bbeta-\tbeta\|_{\bSigma,2}^2}
\geq\frac{\Exp\left\{g(\bbeta)\right\}}{\|\bbeta-\tbeta\|_{\bSigma,2}^2}
-\Delta(r).
\end{equation*}
We thus bound the two parts separately. We first bound
$\Exp\left\{g(\bbeta)\right\}$. Note that, denoting $\bmeta=\bbeta-\tbeta$, we
have that
\begin{align*}
&\Exp\left\{g(\bbeta)\right\}=\frac{1}{\Nb}\sumcb\Exp\left[\frac{\delta_e}{\pi_e}
\phi_{\frac{\lambda\|\bbeta-\tbeta\|_{\bSigma,2}}{2R}}
\left\{\x_e\tp(\bbeta-\tbeta)\right\}I\left(|\varepsilon_e+\gamma_e|
\le\frac{\lambda}{2}\right)\right]\\
&=\frac{1}{\Nb}\sum_{e\in\cB-\cO}\Exp\left[\phi_{\frac{\lambda\|\bmeta\|_{\bSigma,2}}{2R}}
(\x_e\tp\bmeta)I\left(|\varepsilon_e|
\le\frac{\lambda}{2}\right)\right]
+\frac{1}{\Nb}\sum_{e\in\cO}\Exp\left[\phi_{\frac{\lambda\|\bmeta\|_{\bSigma,2}}{2R}}
(\x_e\tp\bmeta)I\left(|\varepsilon_e+\gamma_e|
\le\frac{\lambda}{2}\right)\right]\\
&=\frac{1}{\Nb}\sumcb\Exp\left[\phi_{\frac{\lambda\|\bmeta\|_{\bSigma,2}}{2R}}
(\x_e\tp\bmeta)\right]
-\frac{1}{\Nb}\sum_{e\in\cB-\cO}\Exp\left[\phi_{\frac{\lambda\|\bmeta\|_{\bSigma,2}}{2R}}
(\x_e\tp\bmeta)I\left(|\varepsilon_e|
>\frac{\lambda}{2}\right)\right]\\
&\quad-\frac{1}{\Nb}\sum_{e\in\cO}\Exp\left[\phi_{\frac{\lambda\|\bmeta\|_{\bSigma,2}}{2R}}
(\x_e\tp\bmeta)I\left(|\varepsilon_e+\gamma_e|
>\frac{\lambda}{2}\right)\right]\\
&\geq\frac{1}{\Nb}\sumcb\Exp(\x_e\tp\bmeta)^2
 -\frac{1}{\Nb}\sumcb\Exp\left\{(\x_e\tp\bmeta)^2
I\left(|\x_e\tp\bmeta|>\frac{\lambda\|\bmeta\|_{\bSigma,2}}{4R}\right)\right\}\\
&\quad-\frac{1}{\Nb}\sum_{e\in\cB-\cO}\Exp\left\{
\left(\x_e\tp\bmeta\right)^2I\left(|\varepsilon_e|
>\frac{\lambda}{2}\right)\right\}
-\frac{1}{\Nb}\sum_{e\in\cO}\Exp\left\{
\left(\x_e\tp\bmeta\right)^2I\left(|\varepsilon_e+\gamma_e|
>\frac{\lambda}{2}\right)\right\}\\
&\geq\frac{1}{\Nb}\sumcb\Exp(\x_e\tp\bmeta)^2
 -\frac{1}{\Nb}\sumcb\Exp\left\{(\x_e\tp\bmeta)^2
I\left(|\x_e\tp\bmeta|>\frac{\lambda\|\bmeta\|_{\bSigma,2}}{4R}\right)\right\}\\
&\quad-\frac{1}{\Nb}\sum_{e\in\cB-\cO}\Exp\left\{
\left(\x_e\tp\bmeta\right)^2I\left(|\varepsilon_e|
>\frac{\lambda}{4}\right)\right\}
-\frac{1}{\Nb}\sum_{e\in\cO}\Exp\left\{
\left(\x_e\tp\bmeta\right)^2I\left(|\varepsilon_e|
>\frac{\lambda}{4}\right)\right\}\\
&\quad-\frac{1}{\Nb}\sum_{e\in\cO}\Exp\left\{
\left(\x_e\tp\bmeta\right)^2I\left(|\gamma_e|
>\frac{\lambda}{4}\right)\right\}\\
&\geq\bmeta\tp\bSigma\bmeta
-\left(\frac{4R}{\lambda}\right)^2\frac{\Exp(\x\tp\bmeta)^4}{\|\bmeta\|_{\bSigma,2}^2}
-\left(\frac{4}{\lambda}\right)^{1+\alpha_1}\frac{1}{\Nb}\sumcb\Exp(|\varepsilon_e|^{1+\alpha_1})%
\bmeta\tp\bSigma\bmeta-\left(\frac{4}{\lambda}\right)^{1+\alpha_1}
\vw_{\gammae,\alpha_1}^{\wtd}\bmeta\tp\bSigma\bmeta\\
&\geq\|\bmeta\|_{\bSigma,2}^2
\left\{1-\left(\frac{4}{\lambda}\right)^{1+\alpha_1}
\frac{1}{\Nb}\sumcb\Exp(|\varepsilon_e|^{1+\alpha_1})-\left(\frac{4}{\lambda}\right)^{1+\alpha_1}\vw_{\gammae,\alpha_1}^{\wtd}
-\left(\frac{4C_1^2R}{\lambda}\right)^2\right\}
\geq\frac{1}{4}\|\bmeta\|_{\bSigma,2}^2.
\end{align*}
as long as $\lambda\gtrsim
(\vw_{\varepsilon,\alpha_1}^{\wtd})^{\frac{1}{1+\alpha_1}}\maxs 
(\vw_{\gammae,\alpha_1}^{\wtd})^{\frac{1}{1+\alpha_1}}\maxs(8C_1^2R)$. Next,
we consider a bound of the deviation part $\Delta(R)$. %
Since $0\le I(|\varepsilon_e+\gamma_e|\le\lambda/2)\le 1$ and $\phi_A(x)\le
A^2/4$, we know that $0\le g_e(\bbeta)\le (4R)^{-2}(\rho\mpi)^{-1}
\lambda^2\|\bbeta-\tbeta\|_{\bSigma,2}^2$
Then, by Theorem 7.3 in~\cite{bousquet2003concentration}, we have that for any
$\varsigma>0$,
\begin{align*}
\Delta(R)\le\Exp\{\Delta(R)\}
+\Exp\{\Delta(R)\}^{1/2}\frac{\lambda}{2R}\sqrt{\frac{\varsigma}{\rho\mpi \Nb}}
+\sigma_{\Nb}^2\sqrt{\frac{2\varsigma}{\Nb}}
+\left(\frac{\lambda}{4R}\right)^2\frac{\varsigma}{3\rho\mpi \Nb},
\end{align*}
with probability at least $1-e^{-\varsigma}$ with 
\begin{align*}
\sigma_{\Nb}^2=\frac{1}{\Nb}\sumcb\sup_{\bbeta\in\Theta_0(R)}
\frac{\Exp\left\{g_e^2(\bbeta)\right\}}{\|\bmeta\|_{\bSigma,2}^4}
\le\frac{\Exp\left\{(\bmeta\tp\bSigma^{\frac{1}{2}}\tx_e)^4\right\}}{\rho^2\mpi^2
\|\bmeta\|_{\bSigma,2}^4}
\le \rho^{-2}\mpi^{-2}C_1^4.
\end{align*}
To bound $\Exp\left\{\Delta(R)\right\}$,
using symmetrization inequality and the connection between Gaussian complexity
and Rademacher complexity, we have $\Exp\left\{\Delta(R)\right\}\le\sqrt{2\pi}
\rho^{-1}\mpi^{-1}\Exp\left\{\sup_{\bbeta\in\Theta_0(R)}|\mathbb{G}_{\bbeta}|\right\}$, 
where 
\begin{equation*}
\mathbb{G}_{\bbeta}=\frac{1}{\Nb}\sumcb
\frac{G_e}{\|\bmeta\|_{\bSigma,2}^2}\phi_{\frac{\lambda\|\bmeta\|_{\bSigma,2}^2}{2R}}
(\x_e\tp\bmeta)I\left(|\varepsilon_e+\gamma_e|\le\frac{\lambda}{2}\right),
\end{equation*}
and $G_e$ are i.i.d. standard normal random variables that are independent of
$\{\varepsilon_e,\x_e\}_{b=1}^{\Nb}$. Thus, using exactly the same argument in the
proof of Lemma 4 in~\cite{sun2020adaptive}, we have that
$\Exp\{\Delta(R)\}\le\sqrt{2\pi}\rho^{-1}\mpi^{-1}\left\{2\lambda R^{-1}d^{\frac{1}{2}}\Nb^{-\frac{1}{2}}
+\lambda(4R)^{-1}\Nb^{-\frac{1}{2}}\right\}$.
Therefore, as long as we take $\Nb\gtrsim(\lambda/R)^2(d+\varsigma)$, we have that
$\Delta(R)\le 1/8$ and thus, with probability at least $1-e^{-\varsigma}$,
\begin{equation*}
\frac{\mathcal{T}(\bbeta)}{\|\bbeta-\tbeta\|_{\bSigma,2}^2}
\geq\frac{\Exp\left\{g(\bbeta)\right\}}{\|\bbeta-\tbeta\|_{\bSigma,2}^2}
-\Delta(r)\geq\frac{1}{4}-\frac{1}{8}
=\frac{1}{8}.
\end{equation*}

\end{proof}

\begin{lemma}\label{lem:psirand2}
Under Assumption~\ref{asm:xsubg}, assuming that $\min_{e\in\cB}\pi_e\geq\mpi\rho$,
for any $0<\alpha_1\le 1$,
there exists a constant $C$ does not depend on $\Nb$, such that 
\begin{align*}
\frac{1}{\Nb}\left\|\bSigma^{-\frac{1}{2}}
\sumcb\frac{\delta_e}{\pi_e}\psi_1(\varepsilon_e;\lambda)\x_e\right\|_2
\le C\lambda^{-\alpha_1}\vw_{\varepsilon,\alpha_1}^{\wtd}
+\frac{C\lambda(d+\varsigma)}{\mpi\rho \Nb}
\end{align*}
with probability at least $1-e^{-\varsigma}$.
\end{lemma}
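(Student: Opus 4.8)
The plan is to follow, and in fact simplify, the argument used for Lemma~\ref{lem:psifull}, exploiting the fact that here the summand is already centered so that no bias term arises. Writing $\tx_e=\bSigma^{-\frac{1}{2}}\x_e$ and $\xi_e=\pi_e^{-1}\delta_e\psi_1(\varepsilon_e;\lambda)$, the object to control is $\Nb^{-1}\|\sumcb\xi_e\tx_e\|_2$. Since $\psi_1(\cdot;\lambda)$ is odd and $\varepsilon_e$ has a symmetric density (Assumption~\ref{asm:eps}), $\Exp\{\psi_1(\varepsilon_e;\lambda)\}=0$; moreover, with $\pi_e$ treated as a fixed constant, $\delta_e=I(u_e\le\pi_e)$ is a function of the independent uniform $u_e$ alone, so $\Exp(\delta_e)=\pi_e$ and $(\delta_e,\varepsilon_e)$ is independent of $\x_e$. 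Hence each $\xi_e$ has mean zero, and no centering (and thus no $\vuw$-type bias term) is needed, unlike in Lemma~\ref{lem:psifull}. First I would reduce the vector norm to a scalar maximum over a $1/2$-net $\mathcal{N}_{1/2}$ of the unit sphere $\mathbb{S}^{d-1}$, with $\#\mathcal{N}_{1/2}\le 5^d$, so that $\|\sumcb\xi_e\tx_e\|_2\le 2\max_{\bu\in\mathcal{N}_{1/2}}|\bu\tp\sumcb\xi_e\tx_e|$, reducing matters to a one-dimensional concentration bound in each fixed direction $\bu$.

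The key step is the moment bound for $\xi_e\,\bu\tp\tx_e$ that feeds Bernstein's inequality. Combining the elementary bounds $|\psi_1(z;\lambda)|\le\lambda$ and $|\psi_1(z;\lambda)|\le|z|$ gives, for every $k\ge 2$ (so that the exponent $k-1-\alpha_1\ge 0$ since $\alpha_1\le 1$),
\[
|\psi_1(\varepsilon_e;\lambda)|^k\le\lambda^{k-1-\alpha_1}|\varepsilon_e|^{1+\alpha_1}.
\]
Together with $\delta_e^k=\delta_e$ this yields $\Exp(|\xi_e|^k\mid\x_e)\le\pi_e^{-(k-1)}\lambda^{k-1-\alpha_1}\Exp(|\varepsilon_e|^{1+\alpha_1}\mid\x_e)$. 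Using the independence of $(\delta_e,\varepsilon_e)$ from $\x_e$ to decouple the product moment, the sub-Gaussian moments $\Exp|\bu\tp\tx|^k\le C_0^kk\Gamma(k/2)$ from Assumption~\ref{asm:xsubg}, and the bound $\pi_e^{-(k-1)}\le(\mpi\rho)^{-(k-2)}\pi_e^{-1}$, summation produces a bound of the exact Bernstein form
\[
\sumcb\Exp|\xi_e\bu\tp\tx_e|^k\le\frac{k!}{2}\left(\frac{C_0\lambda}{\mpi\rho}\right)^{k-2}2C_0^2\lambda^{1-\alpha_1}\Nb\,\vw_{\varepsilon,\alpha_1}^{\wtd},
\]
once one recognizes that $\Nb^{-1}\sumcb\pi_e^{-1}\Exp(|\varepsilon_e|^{1+\alpha_1}\mid\x_e)$ is precisely $\vw_{\varepsilon,\alpha_1}^{\wtd}$. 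This identifies the variance proxy $v=2C_0^2\lambda^{1-\alpha_1}\Nb\vw_{\varepsilon,\alpha_1}^{\wtd}$ and scale $b=C_0\lambda/(\mpi\rho)$.

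Next I would apply Bernstein's inequality to each $\bu\tp\sumcb\xi_e\tx_e$, union-bound over the net (absorbing the $5^d$ factor by running the tail at level $\varsigma'=2(d+\varsigma)$, so that $2\cdot 5^de^{-\varsigma'}\le e^{-\varsigma}$, as in Lemma~\ref{lem:psifull}), and divide by $\Nb$. This gives, with probability at least $1-e^{-\varsigma}$, a bound of the shape
\[
\frac{1}{\Nb}\left\|\bSigma^{-\frac{1}{2}}\sumcb\xi_e\x_e\right\|_2\lesssim\sqrt{\frac{\lambda^{1-\alpha_1}\vw_{\varepsilon,\alpha_1}^{\wtd}(d+\varsigma)}{\Nb}}+\frac{\lambda(d+\varsigma)}{\mpi\rho\Nb}.
\]
A final arithmetic--geometric-mean step, splitting $\lambda^{1-\alpha_1}=\lambda\cdot\lambda^{-\alpha_1}$ exactly as in the proof of Lemma~\ref{lem:psifull}, converts the square-root term into $C\lambda^{-\alpha_1}\vw_{\varepsilon,\alpha_1}^{\wtd}$ plus a contribution absorbed into the second term (using $\mpi\rho\le 1$), delivering the stated inequality. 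The only point requiring care is the routine bookkeeping that turns the raw moments into the Bernstein constants $(v,b)$ and the check that $k-1-\alpha_1\ge 0$ for all $k\ge 2$; neither is a genuine obstacle, since the automatic centering makes this the easier companion of Lemma~\ref{lem:psifull}.
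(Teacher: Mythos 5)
Your treatment of the deviation part is essentially identical to the paper's proof: the same $1/2$-net reduction with $\#\mathcal{N}_{1/2}\le 5^d$, the same moment bound $|\psi_1(\varepsilon_e;\lambda)|^k\le\lambda^{k-1-\alpha_1}|\varepsilon_e|^{1+\alpha_1}$ feeding Bernstein's inequality with the same variance proxy and scale, the same union bound at level $2(d+\varsigma)$, and the same arithmetic--geometric-mean split of $\lambda^{1-\alpha_1}$ at the end. Where you depart from the paper is the mean term, and that is where there is a genuine gap. You claim $\Exp(\xi_e\tx_e)=\0$ outright, on the grounds that $\psi_1$ is odd, $\varepsilon_e$ has a symmetric density, and ``$(\delta_e,\varepsilon_e)$ is independent of $\x_e$.'' That independence is not among the paper's assumptions: the very definition of $\vw_{\varepsilon,\alpha_1}^{\wtd}$ in~\eqref{eq:vew} uses the \emph{conditional} expectation $\Exp(|\varepsilon_e|^{1+\alpha_1}/\pi_e\mid\x_e)$, precisely because the error distribution is allowed to depend on the covariates. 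Marginal symmetry of the density $\feps$ (Assumption~\ref{asm:eps}) gives $\Exp\{\psi_1(\varepsilon_e;\lambda)\}=0$ but does not give $\Exp\{\psi_1(\varepsilon_e;\lambda)\mid\x_e\}=0$; since $\psi_1$ is nonlinear, the conditional truncation bias can be nonzero even when $\Exp(\varepsilon_e\mid\x_e)=0$, and then $\Exp\{\psi_1(\varepsilon_e;\lambda)\tx_e\}\neq\0$ in general. So the step ``no centering is needed'' fails under the paper's intended generality.

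The paper handles this term differently and correctly: it decomposes into $\bm{\xi}^{*}=\Nb^{-1}\sumcb\{\xi_e\tx_e-\Exp(\xi_e\tx_e)\}$ plus the mean, and bounds the mean via the Huber truncation-bias inequality (equation (44) of \cite{sun2020adaptive}), namely
\begin{equation*}
\sup_{\bu\in\mathbb{S}^{d-1}}\frac{1}{\Nb}\left|\sumcb\Exp\left\{\psi_1(\varepsilon_e;\lambda)\bu\tp\tx_e\right\}\right|
\le 2C_0\lambda^{-\alpha_1}\frac{1}{\Nb}\sumcb\Exp\left(|\varepsilon_e|^{1+\alpha_1}\,\big|\,\x_e\right)
\le 2C_0\lambda^{-\alpha_1}\vw_{\varepsilon,\alpha_1}^{\wtd},
\end{equation*}
which uses only the (conditional) mean-zero property of the errors, not symmetry or independence. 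This bias bound is the actual source of the $C\lambda^{-\alpha_1}\vw_{\varepsilon,\alpha_1}^{\wtd}$ term in the lemma; your argument happens to land on the same final expression only because the AM--GM step regenerates a term of that form. The gap is localized and easily repaired: replace your ``exactly centered'' claim with the displayed truncation-bias bound, after which the remainder of your proof goes through unchanged.
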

\begin{proof}
By defining $\bm{\xi}^{*}
=\Nb^{-1}\sumcb\left\{\xi_i\tx_i-\Exp(\xi_i\tx_i)\right\}$, where
$\xi_e=\pi_e^{-1}\delta_e\psi_1(\varepsilon_e;\lambda)$. 
Similar to the proof of Lemma~\ref{lem:psifull}, there exits a $1/2$-net 
$\mathcal{N}_{1/2}$ of the unit sphere $\mathbb{S}^{d-1}$ with
$\#\mathcal{N}_{1/2}\le 5^d$ such that $\|\bm{\xi}^{*}\|_2\le
2\max_{\bm{u}\in\mathcal{N}_{1/2}}|\bm{u}\tp\bm{\xi}^{*}|$. Therefore, for
$0<\alpha_1\le 1$, $k\geq 2$,%
\begin{align*}
&\Exp\left|\frac{\delta_e}{\pi_e}\psi_1(\varepsilon_e;\lambda)\right|^k
=\frac{1}{\pi_e^{k-1}}
\Exp\left|\psi_1(\varepsilon_e;\lambda)\right|^k\\    
&\le\frac{1}{\pi_e^{k-1}}\Exp\left\{|\varepsilon_e|^kI(|\varepsilon_e|\le \lambda)
+\lambda^kI(|\varepsilon_e|>\lambda)\right\}
\le\frac{1}{\pi_e^{k-1}}\lambda^{k-1-\alpha_1}|\varepsilon_e|^{1+\alpha_1}.
\end{align*}
The inequality holds for both cases that $\gamma_e=0$ and $\gamma_e\neq
0$. Therefore, we have that  
\begin{align*}
\sumcb\Exp\left\{|\xi_e\bm{u}\tp\tx_e|^k\right\}
&\le C_0^kk\Gamma\left(\frac{k}{2}\right)\sumcb
\frac{\Exp(|\varepsilon_e|^{1+\alpha_1}|\x_i)}{\pi_e^{k-1}}\lambda^{k-1-\alpha_1}\\
&\le\frac{k!}{2}\left(\frac{C_0\lambda}{\rho\mpi}\right)^{k-2}
2C_0^2\lambda^{1-\alpha_1}\mpi^{-1}\Nb\vw_{\varepsilon,\alpha_1}^{\wtd},
\end{align*}
for any $k\geq 2$. A special case is when $k=2$, we have that
$\sumcb\Exp\left\{|\xi_e\bm{u}\tp\tx_e|^2\right\}\le
2C_0^2\lambda^{1-\alpha_1}\mpi^{-1}\Nb\vw_{\varepsilon,\alpha_1}^{\wtd}$. Now due to
Bernstein's inequality, we have  
\begin{align*}
\Pr\left\{|\bm{u}\tp\bm{\xi}^{*}|
\geq 2C_0\sqrt{\frac{\lambda^{1-\alpha_1}
  \vw_{\varepsilon,\alpha_1}^{\wtd}\cvsigma}{\mpi \Nb}}
+\frac{C_0\lambda\cvsigma}{\mpi\rho\Nb}\right\}
\le 2e^{-\cvsigma},
\end{align*}
for any $\cvsigma>0$. Taking union over $\bm{u}\in\mathcal{N}_{1/2}$, we obtain that
with probability at least $1-2\times 5^de^{-\varsigma}$,
\begin{equation*}
\|\bm{\xi}^{*}\|_2
\le 4C_0\sqrt{\frac{\lambda^{1-\alpha_1}
\vw_{\varepsilon,\alpha_1}^{\wtd}\cvsigma}{\mpi \Nb}}
+\frac{2C_0\lambda\cvsigma}{\mpi\rho\Nb}.
\le 2C_0\lambda^{-\alpha_1}\rho
\vw_{\varepsilon,\alpha_1}^{\wtd}+\frac{4C_0\lambda\cvsigma}{\mpi\rho\Nb}.
\end{equation*}
The last inequality is because $2\sqrt{(\lambda\cvsigma
  \Nb^{-1}\mpi^{-1})(\lambda^{-\alpha_1}\rho
  \vw_{\varepsilon,\alpha_1}^{\wtd})}\le\lambda\cvsigma\rho^{-1}
\Nb^{-1}\mpi^{-1}+\lambda^{-\alpha_1}\rho\vw_{\varepsilon,\alpha_1}^{\wtd}$. For
$\Nb^{-1}\sumcb\Exp(\xi_e\tx_e)$, due to (44) in~\cite{sun2020adaptive}, we have
that 
\begin{align*}
\sup_{\bu\in\mathbb{S}^{d-1}}\frac{1}{\Nb}\left|\sumcb\Exp
\left(\xi_e\bu\tp\tx_e\right)\right|
&=\sup_{\bu\in\mathbb{S}^{d-1}}\frac{1}{\Nb}\left|\sumcb\Exp
\left\{\psi_1(\varepsilon_e;\lambda)\bu\tp\tx_e\right\}\right|\\
&\le 2C_0\lambda^{-\alpha_1}\frac{1}{\Nb}\sumcb\Exp(|\varepsilon_e|^{1+\alpha_1}|\x_i)\\
&\le 2C_0\lambda^{-\alpha_1}\frac{1}{\Nb}\sumcb\Exp\left(\frac{|\varepsilon_e|^{1+\alpha_1}}{\pi_e}\Big|\x_i\right)
\le 2C_0\lambda^{-\alpha_1}\vw_{\varepsilon,\alpha_1}^{\wtd}.
\end{align*}
Therefore, by taking $\cvsigma=2(d+\varsigma)$ and $C=4C_0$, we have  
\begin{align*}
\frac{1}{\Nb}\left\|\bSigma^{-\frac{1}{2}}\sumcb\frac{\delta_e}{\pi_e}
\psi_1(\varepsilon_e;\lambda)\x_e\right\|_2
\le C\lambda^{-\alpha_1}\vw_{\varepsilon,\alpha_1}^{\wtd}
+\frac{C\lambda(d+\varsigma)}{\mpi\rho \Nb},
\end{align*}
with probability at least $1-2\times 5^de^{-2d-2\varsigma}\geq
1-2e^{-2\varsigma}\geq 1-e^{-\varsigma}$.
\end{proof}

\subsubsection{Proof of Theorem~\ref{thm:rtrans}}\label{sec:prf-rtrans1}

\begin{proof}[\textbf{Proof of Theorem~\ref{thm:rtrans}}]
For any prespecified $R>0$, we build an estimator
$\hbeta_{\eta}=\tbeta+\eta(\hbetaw-\tbeta)$, such that
$\|\hbeta_{\eta}-\tbeta\|\le R$. To be more specific, we take $\eta=1$ if
$\|\hbetaw-\tbeta\|_2\le R$; otherwise, we choose some $\eta\in(0,1)$ so that
$\|\hbeta_{\eta}-\tbeta\|_2=R$. Due to Lemma 2 in \cite{sun2020adaptive}, we know that
\begin{align*}
&\rho\Nb\left\{\nabla\cLp(\hbeta_{\eta})-\nabla\cLp(\tbeta)\right\}\tp(\hbeta_{\eta}-\tbeta)\\
&\le\eta\rho\Nb\nabla\cLp(\hbetaw)\tp(\hbetaw-\tbeta)
-\eta\rho\Nb\nabla\cLp(\tbeta)(\hbetaw-\tbeta)\\
&=-(\hbeta_{\eta}-\tbeta)\tp\Xbs\tp\W_{\cBs}\psi_1\left(\ybs-\Xbs\hbetaw;\lambda\right)
+(\hbeta_{\eta}-\tbeta)\tp\Xbs\tp\W_{\cBs}\psi_1\left(\ybs-\Xbs\tbeta;\lambda\right).
\end{align*}
We have that the K-K-T condition in~\eqref{eq:kkt} implies
\begin{align*}
&\rho\Nb\left\{\nabla\cLp(\hbeta_{\eta})-\nabla\cLp(\tbeta)\right\}\tp
(\hbeta_{\eta}-\tbeta)\\
&\le\eta(\hbetaw-\tbeta)\tp\Xs\tp(\ys-\Xs\hbetaw)
+\eta(\hbetaw-\tbeta)\tp\Xbs\tp\W_{\cBs}\psi_1\left(\ybs-\Xbs\tbeta;\lambda\right)\\
&=\eta(\hbetaw-\tbeta)\tp\Xs\tp(\Xs\tbeta+\epS-\Xs\hbetaw)
+\eta(\hbetaw-\tbeta)\tp\Xbs\tp\W_{\cBs}\psi_1\left(\ybs-\Xbs\tbeta;\lambda\right)\\
&=\eta(\hbetaw-\tbeta)\tp\Xs\tp\epS-(\hbeta_{\eta}-\tbeta)\tp\Xs\tp\Xs(\hbetaw-\tbeta)
+\eta(\hbetaw-\tbeta)\tp\sumcb\frac{\rho\delta_e}{\pi_e}\psi_1(\varepsilon_e+\gamma_e)\x_e.
\end{align*}
Rearranging the terms, we obtain that 
\begin{align}
\label{eq:rand1}
\begin{split}
&\rho\Nb\left\{\nabla\cLp(\hbeta_{\eta})-\nabla\cLp(\tbeta)\right\}\tp
(\hbeta_{\eta}-\tbeta)+(\hbeta_{\eta}-\tbeta)\tp\Xs\tp\Xs(\hbetaw-\tbeta)\\
&\le\eta(\hbetaw-\tbeta)\tp\Xs\tp\epS
+\eta(\hbetaw-\tbeta)\tp\sumcb\frac{\rho\delta_e}{\pi_e}\psi_1(\varepsilon_e+\gamma_e)\x_e.
\end{split}
\end{align}
Applying Lemma~\ref{lem:convexfull} and taking $R=\lambda/(8C_1^2)$, there
exists a constant $C_1$ that does not depend on sample sizes such that as long
as $\lambda\geq C_1\{(\vw_{\varepsilon,\alpha_1}^{\wtd})^{\frac{1}{1+\alpha_1}}\maxs
(\vw_{\gammae,\alpha_1}^{\wtd})^{\frac{1}{1+\alpha_1}}\}$ and $\Nb\geq C_1(d+\varsigma)$, we
have that with probability at least $1-e^{-\varsigma}$,
\begin{equation}
\label{eq:rand2}
\rho\Nb\left\{\nabla\cLp(\hbeta_{\eta})-\nabla\cLp(\tbeta)\right\}\tp(\hbeta_{\eta}-\tbeta)
\geq\frac{1}{8}\rho\Nb\|\hbeta_{\eta}-\tbeta\|_{\bSigma,2}.
\end{equation}
Taking $n=\Ns$, $\rho=1$ and $\pi_t=\Ns^{-1}$ for $t\in\cS$,we have that with
probability at least $1-e^{-\varsigma}-1/\sqrt{\Ns}$, 
\begin{equation}
\label{eq:rand3}
(\hbeta_{\eta}-\tbeta)\tp\Xs\tp\Xs(\hbetaw-\tbeta)
\geq(\hbeta_{\eta}-\tbeta)\tp\Xs\tp\Xs(\hbeta_{\eta}-\tbeta)
\geq\frac{1}{8}\Ns\|\hbeta_{\eta}-\tbeta\|_{\bSigma,2},
\end{equation}
and $I_{\cS}\le C_2\sigma_{\cS}\sqrt{(d+\varsigma)\Ns}$, where $C_2$ does not depend
on sample sizes. On the other hand, from Lemma~\ref{lem:psirand2}, we know that  
\begin{align*}
\frac{1}{\rho\Nb}\left\|\sumcb
\frac{\rho\delta_e}{\pi_e}\psi_1(\varepsilon_e;\lambda)\tx_e\right\|_2
\le C_3\lambda^{-\alpha_1}\vw_{\varepsilon,\alpha_1}^{\wtd}
+\frac{C_3\lambda(d+\varsigma)}{\mpi\rho\Nb},
\end{align*}
where $C_3$ does not depend on $\Nb$ with probability at least
$1-e^{-\varsigma}$. From Lemma~\ref{lem:psifull}, we know that with probability
at least $1-e^{-\varsigma}$,
\begin{align*}
&\frac{1}{\rho\Nb}\left\|\sumcb\frac{\rho\delta_e}{\pi_e}
\left\{\psi_1(\gamma_e+\varepsilon_e;\lambda)-\psi_1(\varepsilon_e;\lambda)\right\}
\tx_e\right\|_2
\le C_4\lambda^{-\alpha_1}\vw_{\varepsilon,\alpha_1}^{\wtd}
+\frac{C_4\lambda(d+\varsigma)}{\mpi\rho\Nb}+C_4\lambda^{\alpha_2}\vuw_{\gammae,\alpha_2},
\end{align*}
where $C_4$ does not depend on $\Nb$. Now, since %
$\lambda=
\left\{(\rho\mpi \Nb)/(d+\varsigma)\right\}^{\frac{1}{1+\alpha_1}}$,
we have that  
\begin{equation}\label{eq:rand4}
\begin{split}
\frac{1}{\rho\Nb}\left\|\sumcb\frac{\rho\delta_e}{\pi_e}
\psi_1(\varepsilon_e;\lambda)\tx_e\right\|_2\le
C_3(\vw_{\varepsilon,\alpha_1}^{\wtd}\maxs 1)
\left(\frac{d+\varsigma}{\mpi\rho\Nb}\right)^{\frac{\alpha_1}{1+\alpha_1}},
\end{split}
\end{equation}
and 
\begin{align}\label{eq:rand5}
\begin{split}
&\frac{1}{\rho\Nb}\left\|\sumcb\frac{\rho\delta_e}{\pi_e}
\left\{\psi_1(\gamma_e+\varepsilon_e;\lambda)-\psi_1(\varepsilon_e;\lambda)\right\}
\tx_e\right\|_2\\
&\le C_4(\vw_{\gammae,\alpha_1}^{\wtd}\maxs 1)
\left(\frac{d+\varsigma}{\mpi\rho\Nb}\right)^{\frac{\alpha_1}{1+\alpha_1}}
+C_4\lambda^{\alpha_2}\vuw_{\gammae,\alpha_2}.
\end{split}
\end{align}
Therefore, combining \eqref{eq:rand1}, \eqref{eq:rand2},
\eqref{eq:rand3}, \eqref{eq:rand4}, \eqref{eq:rand5}, we have that, taking
$R=\lambda/8C_1^2$,
\begin{equation*}
\|\hbetaw-\tbeta\|_{\bSigma,2}
\lesssim\frac{\Ns}{N_{f}}\sqrt{\frac{d+\varsigma}{\Ns}}
+\frac{r}{N_{f}}\left(\frac{d+\varsigma}{r}\right)^{\frac{\alpha_1}{1+\alpha_1}}
+\frac{r}{N_{f}}\lambda^{\alpha_2}\vuw_{\gammae,\alpha_2}:=R_{0,1}+R_{0,2}+R_{0,3}:=R_0.
\end{equation*}
Note that as long as $R_0<R$, we have that $\hbeta_{\eta}$ lies in the interior
of the ball with radius $R$ and by the construction of $\hbeta_{\eta}$, we have
that $\eta=1$ in this case, which implies
$\hbeta_{\eta}=\hbetaw$. Therefore, we prove that
$R_{0,1},R_{0,2},R_{0,3}<R/3$, respectively. Note that 
\begin{equation*}
\frac{\sqrt{d+\varsigma}}{2\sqrt{r}}
\le\frac{\sqrt{d+\varsigma}}{\sqrt{\Ns}+\frac{r}{\sqrt{\Ns}}}
\le\frac{\sqrt{\Ns}}{N_{f}}\sqrt{d+\varsigma},
\end{equation*}
and therefore as long as $r\gtrsim (d+\varsigma)$, we have that
$R_{1,0}<R/3$. Since $R_{0,2}\sim(r/N_{f})\lambda(d+\varsigma)/r$, it is also
easy to know that $R_{0,2}<R/3$ as long as $(\Ns+r)\gtrsim (d+\varsigma)$. The fact
that $R_{0,3}<R/3$ is due to the fact that
$N_{f}r^{\frac{-\alpha_1-\alpha_2}{1+\alpha_1}}\gtrsim
\vuw_{\gammae,\alpha_2}(d+\varsigma)^{\frac{1-\alpha_2}{1+\alpha_1}}$.
Thus, we
complete the proof of the bound.

\end{proof}

\subsubsection{Proof of Theorem~\ref{thm:rtrans2}}\label{sec:prf-rtrans2}

\begin{proof}[\textbf{Proof of Theorem~\ref{thm:rtrans2}}]
From Proposition~\ref{pro:thr}, we know that when $P(\cdot;\lambda)$ is
$\ell_2$ penalty, the corresponding $\mathcal{H}(t;\lambda)$ is
\begin{equation*}
\mathcal{H}(t;\lambda)=\frac{\lambda t^2}{1+\lambda}.
\end{equation*}
Thus, we consider a more general form of weighted transfer learning estimator
which minimizes 
\begin{equation*}
\sumcs(y_t-\x_t\tp\bbeta)^2
+\sumcb w_e\mathcal{H}(y_e-\x_e\tp\bbeta;\lambda)
=\sumcs(y_t-\x_t\tp\bbeta)^2
+\sumcb w_e \frac{\lambda(y_e-\x_e\tp\bbeta)^2}{1+\lambda}.
\end{equation*}
Directly solve the optimization problem, we have that 
\begin{equation*}
\hbetaw=\left(\sumcs\x_t\x_t\tp
+\frac{\lambda}{1+\lambda}\sumcb w_e\x_e\x_e\tp\right)^{-1}
\left(\sumcs\x_ty_t+\frac{\lambda}{1+\lambda}\sumcb w_e\x_ey_e\right).
\end{equation*}
Therefore, we have that 
\begin{align*}
&\bSigma^{\frac{1}{2}}\hbetaw-\bSigma^{\frac{1}{2}}\tbeta=\left(\sumcs\tx_t\tx_t\tp
+\frac{\lambda}{1+\lambda}\sumcb w_e\tx_e\tx_e\tp\right)^{-1}
\left(\sumcs\tx_ty_t+\frac{\lambda}{1+\lambda}\sumcb w_e\tx_ey_e\right)
-\bSigma^{\frac{1}{2}}\tbeta\\
&=\left(\sumcs\tx_t\tx_t\tp
+\frac{\lambda}{1+\lambda}\sumcb w_e\tx_e\tx_e\tp\right)^{-1}
\left(\sumcs\varepsilon_t\tx_t
+\frac{\lambda}{1+\lambda}\sumcb w_e\varepsilon_e\tx_e
+\frac{\lambda}{1+\lambda}\sumcb w_e\gamma_e\tx_e\right).
\end{align*}
Letting 
\begin{equation*}
\rho^2\Nb\vw_{\gammae,1}^{\wtd}\Var(\tx)\le\M
=\Exp\left\{\sumcb\frac{\rho^2|\gamma_e|^2}{\pi_e}\tx_e\tx_e\tp\right\}-\rho^2
\sumcb|\gamma_e|^2\Exp(\tx_e)\Exp(\tx_e)\tp
\le\rho^2\Nb\vw_{\gammae,1}^{\wtd}\Exp(\tx\tx\tp),
\end{equation*}
and $\W_e=\M^{-\frac{1}{2}}\{w_e\gamma_e\tx_e-\rho\gamma_e\Exp(\tx_e)\}$.
Due to Berry-Essen theorem (see Theorem 1.1 in~\cite{raivc2019multivariate}), we
have that for every $\varsigma\in\mathbb{R}$,
\begin{align*}
&\Pr\left(\left\|\sumcb\W_e\right\|_2>\sqrt{\varsigma}\right)-\Pr(\|\bm{Z}\|_2>\sqrt{\varsigma})
\le\frac{Cd^{\frac{1}{4}}}{(\sqrt{\Nb})^3\rho^3(\vw_{\gammae,1}^{\wtd})^{\frac{3}{2}}}
\sumcb\Exp\left\{\|w_e\gamma_e\tx_e-\rho\gamma_e\Exp(\tx_e)\|_2^3\right\}\\
&\lesssim\frac{\sumcb|\gamma_e|^3\Exp(\|w_e\tx_e\|_2^3)}{(\sqrt{\Nb})^3\rho^3(\vw_{\gammae,1}^{\wtd})^{\frac{3}{2}}}
+\frac{\sumcb|\gamma_e|^3\Exp(\|\tx\|_2^3)}{(\sqrt{\Nb})^3(\vw_{\gammae,1}^{\wtd})^{\frac{3}{2}}}\\
&\le\frac{\Exp(\|\tx\|_2^3)}{(\sqrt{\Nb})^3\mpi\rho(\vw_{\gammae,1}^{\wtd})^{\frac{3}{2}}}
\sumcb\frac{|\gamma_e|^3}{\pi_e}
+\frac{\sumcb|\gamma_e|^3\Exp(\|\tx\|_2^3)}{(\sqrt{\Nb})^3(\vw_{\gammae,1}^{\wtd})^{\frac{3}{2}}}\\
&\le\frac{\Exp(\|\tx\|_2^3)}{\mpi\rho(\vw_{\gammae,1}^{\wtd})^{\frac{3}{2}}\sqrt{\rho\Nb}}\vw_{\gammae,2}^{\wtd}
+\frac{\Exp(\|\tx\|_2^3)}{(\vw_{\gammae,1}^{\wtd})^{\frac{3}{2}}\sqrt{\Nb}}\frac{\sum_{e\in\cO}|\gamma_e|^3}{\Nb}
\lesssim \frac{\vw_{\gammae,2}^{\wtd}}{\sqrt{\rho\Nb}},
\end{align*}
where $\bm{Z}\sim\Nor(\0,\I_d)$. Therefore, we have that with probability at
least $1-e^{-d-\varsigma}-(C_2\vw_{\gammae,2}^{\wtd})/\sqrt{\rho\Nb}$, we have that 
\begin{align*}
  \left\|\sumcb w_e\gamma_e\tx_e\right\|_2
  &\lesssim\rho\Exp(\|\tx\|_2)\sum_{e\in\cO}|\gamma_e|
+d^{\frac{1}{4}}\sqrt{\rho\Nb}\sqrt{\vw_{\gammae,1}^{\wtd}}\sqrt{\varsigma}\\
&=\rho\Nb\Exp(\|\tx\|_2)\vuw_{\gammae,0}+\sqrt{\rho\Nb}d^{\frac{1}{4}}\sqrt{\vw_{\gammae,1}^{\wtd}
}\sqrt{\varsigma}
\end{align*}
Applying Lemma~\ref{lem:lemofS}, by taking $n=\Ns$, $\rho=1$ and
$\pi_i=\Ns^{-1}$, we have that $\sumcs\tx_t\tx_t\tp\geq 8^{-1}\Ns$ and
$\left\|\sumcs\varepsilon_t\tx_t\right\|_2\le\sqrt{\Ns}\sigma_{\cS}\sqrt{d+\varsigma}$ with
probability at least $1-e^{-\varsigma}-C_3/\sqrt{\Ns}$. From
Lemma~\ref{lem:lemofS}, we can also know that $\sumcb w_e\tx_e\tx_e\tp\geq
8^{-1}\rho\Nb$ and $\left\|\sumcb
  w_e\varepsilon_e\tx_e\right\|_2\le\sqrt{\rho\Nb}\sqrt{\vw_{\varepsilon,1}^{\wtd}}
\sqrt{d+\varsigma}$ with probability at least
$1-e^{-\varsigma}-C_4/\sqrt{\rho\Nb}$. Therefore, we have that
\begin{equation*}
\V_{\lambda}=\frac{1}{N_{f\lambda}}\left(\sumcs\tx_t\tx_t\tp
+\frac{\lambda}{1+\lambda}\sumcb w_e\tx_e\tx_e\tp\right)
\geq \frac{1}{8},
\end{equation*}
where $N_{f\lambda}=\Ns+\{\lambda/(1+\lambda)\}\rho \Nb$, and 
\begin{equation}\label{eq:rtrans-1}
\|\hbetaw-\tbeta\|_{\bSigma,2}\lesssim
\frac{\Ns}{\Ns+c_{\lambda}r}I_{\cS}+\frac{c_{\lambda}r}{\Ns+c_{\lambda}r}I_{\cB}
+\frac{c_{\lambda}r}{\Ns+c_{\lambda}r}I_{\gammae},
\end{equation}
where $I_{\cS}=\sigma_{\cS}\sqrt{d}\sqrt{\varsigma/\Ns}$,
$I_{\cB}=\sqrt{\vw_{\varepsilon,1}^{\wtd}}\sqrt{d}\sqrt{\varsigma/r}$
and
$I_{\gammae}=\vuw_{\gammae,0}+\sqrt{\vw_{\gammae,1}^{\wtd}}\sqrt{d}\sqrt{\varsigma/r}$,
with probability at least
$1-Ce^{-\varsigma}-C/\sqrt{\Ns}-(C\vw_{\gammae,1}^{\wtd})/\sqrt{r}$. Thus,
under the condition that $\sqrt{r}e^{-\varsigma}\gtrsim
\vw_{\gammae,1}^{\wtd}$, the inequality in~\eqref{eq:rtrans-1} holds
with probability at least $1-Ce^{-\varsigma}-C/\sqrt{\Ns}$.

\end{proof}

\subsubsection{Proofs of Corollary~\ref{cor:norm} and
  Proposition~\ref{pro:optpi}}\label{sec:prf-optpi}

\begin{proof}[\textbf{Proof of Corollary~\ref{cor:norm} (I)}]
To prove Corollary~\ref{cor:norm} (I), we first define a random
process  
\begin{align*}
\bm{B}(\bbeta)&=\bSigma^{-\frac{1}{2}}\frac{1}{N_{f}}\Big\{-\Xs\tp(\ys-\Xs\bbeta)
+\rho\Nb\nabla\cLp(\bbeta)\\
&\quad+\Xs\tp(\ys-\Xs\tbeta)-\rho\Nb\nabla\cLp(\tbeta)\Big\}
-\bSigma^{\frac{1}{2}}(\bbeta-\tbeta).
\end{align*}
where $N_{f}=\Ns+\rho \Nb$. We start with a decomposition,
\begin{align*}
&\left\|\bSigma^{\frac{1}{2}}(\hbetaw-\tbeta)
-\bSigma^{-\frac{1}{2}}\frac{1}{N_{f}}\left\{\Xs\tp(\ys-\Xs\tbeta)
+\sumcb\frac{\rho\delta_e}{\pi_e}\psi_1(\varepsilon_e;\lambda)\x_e\right\}\right\|_2\\
&=\Big\|\bSigma^{\frac{1}{2}}(\hbetaw-\tbeta)
-\bSigma^{-\frac{1}{2}}\frac{1}{N_{f}}\Big\{\Xs\tp(\ys-\Xs\tbeta)
+\sumcb\frac{\rho\delta_e}{\pi_e}\psi_1(\varepsilon_e+\gamma_e;\lambda)\x_e\\
&\quad-\sumcb\frac{\rho\delta_e}{\pi_e}\psi_1(\varepsilon_e+\gamma_e;\lambda)\x_e
+\sumcb\frac{\rho\delta_e}{\pi_e}\psi_1(\varepsilon_e;\lambda)\x_e\Big\}\Big\|_2\\
&\le\|\bm{B}(\hbetaw)\|_2
+\frac{\rho\Nb}{N_{f}}\frac{1}{\Nb}\left\|\bSigma^{-\frac{1}{2}}
\sumcb\frac{\delta_e}{\pi_e}\{\psi_1(\varepsilon_e+\gamma_e;\lambda)-
\psi_1(\varepsilon_e;\lambda)\}\x_e\right\|_2,    
\end{align*}
where the last inequality is because
$-\Xs\tp(\ys-\Xs\hbetaw)+\rho\Nb\nabla\cLp(\hbetaw)=\0$, which coincides the K-K-T
condition in~\eqref{eq:kkt}. Note that we have already obtained that
\begin{equation}\label{eq:rand6}
\frac{\rho\Nb}{N_{f}}\frac{1}{\Nb}\left\|\sumcb\frac{\delta_e}{\pi_e}
\left\{\psi_1(\varepsilon_e+\gamma_e;\lambda)-\psi_1(\varepsilon_e;\lambda)\right\}
\tx_e\right\|\le\frac{\rho\Nb}{N_{f}}I^{\pi}_{\gammae}.
\end{equation}
with probability at least $1-Ce^{-\varsigma}-1/\sqrt{\Ns}$. Thus, we only need to bound
$\sup_{\bbeta\in\Theta_0(R_0)}\|\bm{B}(\bbeta)\|_2$. Our approach to bound the
supremum of the empirical process 
$\{\bm{B}(\bbeta):\bbeta\in\Theta_0(R_0)\}$ is to bound
$\bm{B}(\bbeta)-\Exp\{\bm{B}(\bbeta)\}$ and $\Exp\{\bm{B}(\bbeta)\}$ separately. We first
compute $\Exp\{\bm{B}(\bbeta)\}$. Due to the mean value theorem,
\begin{align*}
\Exp\{\bm{B}(\bbeta)\}&=\bSigma^{-\frac{1}{2}}\frac{1}{N_{f}}
\left\{\Ns\bSigma(\bbeta-\tbeta)+\rho \Nb\Exp\nabla\cLp(\bbeta)
-\rho \Nb\Exp\nabla\cLp(\tbeta)\right\}-\bSigma^{\frac{1}{2}}(\bbeta-\tbeta)\\
&=\frac{\rho \Nb}{N_{f}}\left[\bSigma^{-\frac{1}{2}}\Exp\left\{\nabla^2\cLp(\tilde{\bbeta})\right\}
\bSigma^{-\frac{1}{2}}-\I_d\right]\bSigma^{\frac{1}{2}}(\bbeta-\tbeta)\\
&\le R\times\frac{\rho \Nb}{N_{f}}\sup_{\bbeta\in\Theta_0(R)}\left\|
\bSigma^{-\frac{1}{2}}\Exp\left\{\nabla^2\cLp(\tilde{\bbeta})\right\}
\bSigma^{-\frac{1}{2}}-\I_d\right\|_2
\end{align*} 
Therefore, denoting $\bdelta=\bSigma^{\frac{1}{2}}(\bbeta-\tbeta)$, we have that or
$\bbeta\in\Theta_0(R)$ and $\bu\in\mathbb{S}^{d-1}$, 
\begin{align*}
&\left|\bu\tp\left[\bSigma^{-\frac{1}{2}}\Exp\left\{\nabla^2\cLp(\bbeta)\right\}
\bSigma^{-\frac{1}{2}}-\I_d\right]\bu\right|\\
&=\left|\bu\tp\left[\frac{1}{\Nb}\sumcb\bSigma^{-\frac{1}{2}}\Exp\left\{
\frac{\delta_e}{\pi_e}I(|y_e-\x_e\tp\bbeta|\le\lambda)\x_e\x_e\tp\right\}
\bSigma^{-\frac{1}{2}}-\I_d\right]\right|\\
&=\frac{1}{\Nb}\sumcb\Exp\left\{I(|y_e-\x_e\tp\bbeta|>\lambda)(\bu\tp\tx_e)^2\right\}\\
&=\frac{1}{\Nb}\sumcb\Exp\left\{I(|\varepsilon_e+\gamma_e+\x_e\tp(\tbeta-\bbeta)|
>\lambda)(\bu\tp\tx_e)^2\right\}\\
&=\frac{1}{\Nb}\sum_{e\in\cB-\cO}\Exp\left\{I(|\varepsilon_e-\tx_e\tp\bdelta|
>\lambda)(\bu\tp\tx_e)^2\right\}
+\frac{1}{\Nb}\sum_{e\in\cO}\Exp\left\{I(|\varepsilon_e+\gamma_e-\tx_e\tp\bdelta|
>\lambda)(\bu\tp\tx_e)^2\right\}\\
&\le\frac{1}{\Nb}\sum_{e\in\cB-\cO}\Exp\left\{I\left(|\varepsilon_e-\tx_e\tp\bdelta|
>\lambda\right)(\bu\tp\tx_e)^2\right\}\\
&\quad+\frac{1}{\Nb}\sum_{e\in\cO}\Exp\left\{I\left(|\varepsilon_e-\tx_e\tp\bdelta|
>\frac{\lambda}{2}\right)(\bu\tp\tx_e)^2\right\}
+\frac{1}{\Nb}\sum_{e\in\cO}\Exp\left\{I\left(|\gamma_e|
>\frac{\lambda}{2}\right)(\bu\tp\tx_e)^2\right\}\\
&\le\frac{16}{\Nb\lambda^2}\sumcb\Exp(|\varepsilon_e|^2)
+\frac{16}{\Nb\lambda^2}\sumcb\Exp\left\{(\bdelta\tp\tx_e)^2(\bu\tp\tx_e)^2\right\}
+\frac{2^{1+\alpha_1}}{\Nb\lambda^{1+\alpha_1}}\sum_{e\in\cO}|\gamma_e|^{1+\alpha_1}\\
&\le\frac{16}{\Nb\lambda^2}\sumcb\Exp(|\varepsilon_e|^2)
+\frac{16}{\Nb\lambda^2}\sumcb\Exp\left\{(\bdelta\tp\tx_e)^2(\bu\tp\tx_e)^2\right\}
+\frac{2^{1+\alpha_1}}{\Nb\lambda^{1+\alpha_1}}\sum_{e\in\cO}|\gamma_e|^{1+\alpha_1}\\
&\le 16\lambda^{-2}\vw_{\varepsilon,1}^{\wtd}
+2^{1+\alpha_1}\lambda^{-1-\alpha_1}\vw_{\gammae,\alpha_1}^{\wtd}
+16C_1^4\lambda^{-2}R^2
\le 16\lambda^{-2}\vw_{\varepsilon,1}^{\wtd}
+4I_{\gammae}^{\pi}
+16C_1^4\lambda^{-2}R^2.
\end{align*}
From the proof of Theorem~\ref{thm:rtrans} and
$\sqrt{\Ns}+\sqrt{\rho\Nb}\le\sqrt{2}\sqrt{\Ns+\rho\Nb}=\sqrt{2}\sqrt{N_{f}}$,
we know that, there exits a constant $C$, such that 
\begin{align*}
&R=C\sqrt{d+\varsigma}\left(\frac{\sqrt{\Ns}}{N_{f}}+\frac{\sqrt{\rho\Nb}}{N_{f}}\right)
+\frac{\rho\Nb}{N_{f}}I_{\gammae}
\le\frac{C\sqrt{2(d+\varsigma)}}{\sqrt{N_{f}}}+\frac{\rho\Nb}{N_{f}}I_{\gammae},
\end{align*}
which implies that $R^3\le
32\sqrt{2}C^3(d+\varsigma)^{\frac{3}{2}}N_{f}^{-\frac{3}{2}}
+8(\rho\Nb)^3N_{f}^{-3}(I_{\gammae}^{\pi})^3$. Therefore, we have that 
\begin{align}\label{eq:rand7}
\begin{split}
&\sup_{\bbeta\in\Theta_0(R)}\|\Exp\left\{\bm{B}(\bbeta)\right\}\|_2
\le\frac{16\rho\Nb}{N_{f}}\lambda^{-2}\vw_{\varepsilon,1}^{\wtd}R+\frac{4\rho\Nb}{N_{f}}I_{\gammae}R
+\frac{16C_1^4\rho\Nb}{N_{f}}\lambda^{-2}R^3\\
&\le16\sqrt{2}C\vw_{\varepsilon,1}^{\wtd}\left(\frac{d+\varsigma}{N_{f}}\right)^{\frac{3}{2}}
+\frac{16\sqrt{2}C\vw_{\varepsilon,1}^{\wtd}(d+\varsigma)}{N_{f}}\frac{\rho\Nb}{N_{f}}I_{\gammae}
+4\sqrt{2}C\sqrt{\frac{d+\varsigma}{N_{f}}}\frac{\rho\Nb}{N_{f}}I_{\gammae}
+4\left(\frac{\rho\Nb}{N_{f}}I_{\gammae}\right)^2\\
&\quad+512\sqrt{2}C_1^4C^3
\left(\frac{d+\varsigma}{N_{f}}\right)^{\frac{5}{2}}
+\frac{128C_1^4(d+\varsigma)}{N_{f}}\left(\frac{\rho\Nb}{N_{f}}I_{\gammae}\right)^3\\
&\le C_5\frac{d+\varsigma}{N_{f}}+C_5\left(\frac{\rho\Nb}{N_{f}}I_{\gammae}\right)^2
+\frac{C_5(d+\varsigma)}{N_{f}}\left(\frac{\rho\Nb}{N_{f}}I_{\gammae}\right)^3,
\end{split}
\end{align}
where $C_5$ does not depend on sample size.
Now, defining a new process, $\bar{\bm{B}}(\bdelta)$, such that
\begin{align*}
&\bar{B}(\bdelta)=\bm{B}(\bbeta)-\Exp\left\{\bm{B}(\bbeta)\right\}\\
&=\bSigma^{-\frac{1}{2}}\frac{1}{N_{f}}\left\{\Xs\tp\Xs(\bbeta-\tbeta)
+\rho\Nb\nabla\cLp(\bbeta)-\rho \Nb\nabla\cLp(\tbeta)\right\}\\
&\quad-\bSigma^{-\frac{1}{2}}\frac{1}{N_{f}}\left\{\Ns\bSigma(\bbeta-\tbeta)
+\rho \Nb\Exp\nabla\cLp(\bbeta)-\rho \Nb\Exp\nabla\cLp(\tbeta)\right\}\\
&=\frac{\Ns}{N_{f}}\bSigma^{-\frac{1}{2}}\left\{\frac{1}{\Ns}\sumcs\x_t\x_t\tp-\bSigma\right\}
\bSigma^{-\frac{1}{2}}\bdelta\\
&\quad+\frac{\rho \Nb}{N_{f}}\bSigma^{-\frac{1}{2}}\nabla\cLp\left(\tbeta+\bSigma^{-\frac{1}{2}}\bdelta\right)
-\frac{\rho \Nb}{N_{f}}\bSigma^{-\frac{1}{2}}\nabla\cLp\left(\tbeta\right)\\
&\quad-\frac{\rho \Nb}{N_{f}}\bSigma^{-\frac{1}{2}}\Exp\left\{\nabla\cLp\left(\tbeta
+\bSigma^{-\frac{1}{2}}\bdelta\right)\right\}
+\frac{\rho \Nb}{N_{f}}\bSigma^{-\frac{1}{2}}\Exp\left\{\nabla\cLp\left(\tbeta
\right)\right\}.
\end{align*}
Thus, we have $\bar{\bm{B}}(\0)=\0$,
$\Exp\left\{\bar{\bm{B}}(\bdelta)\right\}=\0$, and for every
$\bu,\bv\in\mathbb{S}^{d-1}$ and $\kappa\in\mathbb{R}$,
\begin{align*}
&\Exp e^{\kappa\sqrt{N}_{\rho}\bu\tp\nabla_{\bdelta}\bar{\bm{B}}(\bdelta)\bv}\\
&=\prod_{t=1}^{\Ns}\Exp e^{\frac{\kappa}{\sqrt{N_{f}}}\left\{(\bu\tp\tx_t)(\bv\tp\tx_t)
-\Exp(\bu\tp\tx_t)(\bv\tp\tx_t)\right\}}
\prod_{e=1}^{\Nb}\Exp e^{\frac{\kappa}{\sqrt{N_{f}}}\left\{\frac{\rho\delta_e}{\pi_e}
(\bu\tp\tx_e)(\bv\tp\tx_e)-\rho\Exp(\bu\tp\tx)(\bv\tp\tx)\right\}}\\
&\le\prod_{t=1}^{\Ns}\left[1+\frac{\kappa^2}{N_{f}}\Exp\left\{|\bu\tp\tx_t|^2
|\bv\tp\tx_t|^2+\Exp|\bu\tp\tx\bv\tp\tx|^2\right\}e^{\frac{|\kappa|}{\sqrt{N_{f}}}
(|\bu\tp\tx_t\bv\tp\tx_t|+\Exp|\bu\tp\tx\bv\tp\tx|)}\right]\\
&\quad\times\prod_{e=1}^{\Nb}\left[1+\frac{\kappa^2}{N_{f}}\Exp\left\{\frac{\rho^2\delta_e}{\pi_e^2}|\bu\tp\tx_e|^2
|\bv\tp\tx_e|^2+\rho\Exp|\bu\tp\tx\bv\tp\tx|^2\right\}e^{\frac{|\kappa|}{\sqrt{N_{f}}}
\left(\frac{\rho\delta_e}{\pi_e}|\bu\tp\tx_e\bv\tp\tx_e|+\rho\Exp|\bu\tp\tx\bv\tp\tx|\right)}\right]\\
&\le\prod_{t=1}^{\Ns}\left[1+\frac{\kappa^2}{N_{f}}\Exp\left\{|\bu\tp\tx_t|^2
|\bv\tp\tx_t|^2+\Exp|\bu\tp\tx\bv\tp\tx|^2\right\}e^{\frac{|\kappa|}{\sqrt{N_{f}}}
(|\bu\tp\tx_t\bv\tp\tx_t|+\Exp|\bu\tp\tx\bv\tp\tx|)}\right]\\
&\quad\times\prod_{e=1}^{\Nb}\Big[1+\frac{\kappa^2}{N_{f}}\pi_e\Exp\left\{\frac{\rho^2}{\pi_e^2}|\bu\tp\tx_e|^2
|\bv\tp\tx_e|^2+\rho\Exp|\bu\tp\tx\bv\tp\tx|^2\right\}e^{\frac{|\kappa|}{\sqrt{N_{f}}}
\left(\frac{\rho}{\pi_e}|\bu\tp\tx_e\bv\tp\tx_e|+\rho\Exp|\bu\tp\tx\bv\tp\tx|\right)}\\
&\quad\quad\quad\quad+\frac{\kappa^2}{N_{f}}(1-\pi_e)\rho\Exp|\bu\tp\tx\bv\tp\tx|^2
e^{\frac{|\kappa|}{\sqrt{N_{f}}}\rho\Exp|\bu\tp\tx\bv\tp\tx|}\Big]\\
&\le\prod_{t=1}^{\Ns}\left[1+e^{\frac{|\kappa|}{\sqrt{N_{f}}}}\frac{\kappa^2}{N_{f}}
\Exp e^{\frac{|\kappa|}{\sqrt{N_{f}}}(|\bu\tp\tx_t\bv\tp\tx_t|)}+e^{\frac{|\kappa|}{\sqrt{N_{f}}}}
\frac{\kappa^2}{N_{f}}\Exp\left\{|\bu\tp\tx_t|^2
|\bv\tp\tx_t|^2e^{\frac{|\kappa|}{\sqrt{N_{f}}}(|\bu\tp\tx_t\bv\tp\tx_t|)}\right\}\right]\\
&\quad\times\prod_{e=1}^{\Nb}\Big[1+e^{\frac{\rho|\kappa|}{\sqrt{N_{f}}}}\frac{\kappa^2}{N_{f}}
\rho\pi_e\Exp e^{\frac{|\kappa|}{\mpi\sqrt{N_{f}}}(|\bu\tp\tx_e\bv\tp\tx_e|)}+e^{\frac{\rho|\kappa|}{\sqrt{N_{f}}}}
\frac{\kappa^2}{N_{f}}\frac{\rho}{\mpi}\Exp\left\{|\bu\tp\tx_e|^2
|\bv\tp\tx_e|^2e^{\frac{|\kappa|}{\sqrt{N_{f}}}(|\bu\tp\tx_e\bv\tp\tx_e|)}\right\}\\
&\quad\quad\quad\quad+e^{\frac{\rho|\kappa|}{\sqrt{N_{f}}}}\frac{\kappa^2}{N_{f}}
(1-\pi_e)\rho\Big]\\
&\le\prod_{t=1}^{\Ns}\left[1+e^{\frac{|\kappa|}{\sqrt{N_{f}}}}\frac{\kappa^2}{N_{f}}
\max_{\bm{w}\in\mathbb{S}^{d-1}}\Exp e^{\frac{|\kappa|}{\sqrt{N_{f}}}(|\bm{w}\tp\tx|^2)}+e^{\frac{|\kappa|}{\sqrt{N_{f}}}}
\frac{\kappa^2}{N_{f}}\max_{\bm{w}\in\mathbb{S}^{d-1}}\Exp\left\{|\bm{w}\tp\tx|^4
e^{\frac{|\kappa|}{\sqrt{N_{f}}}(|\bm{w}\tp\tx|^2)}\right\}\right]\\
&\quad\times\prod_{b=1}^{\Nb}\Big[1+e^{\frac{\rho|\kappa|}{\sqrt{N_{f}}}}\frac{\rho\kappa^2}{N_{f}}
\max_{\bm{w}\in\mathbb{S}^{d-1}\cup\0}\Exp e^{\frac{|\kappa|}{\mpi\sqrt{N_{f}}}(|\bm{w}\tp\tx|^2)}
+e^{\frac{\rho|\kappa|}{\sqrt{N_{f}}}}
\frac{\rho\kappa^2}{\mpi N_{f}}\max_{\bm{w}\in\mathbb{S}^{d-1}}\Exp\left\{|\bu\tp\tx|^4
e^{\frac{|\kappa|}{\sqrt{N_{f}}}(|\bu\tp\tx|^2)}\right\}\Big]\\
&\le\exp\left[e^{\frac{|\kappa|}{\sqrt{N_{f}}}}\frac{\Ns\kappa^2}{N_{f}}
\max_{\bm{w}\in\mathbb{S}^{d-1}}\Exp e^{\frac{|\kappa|}{\sqrt{N_{f}}}(|\bm{w}\tp\tx|^2)}+e^{\frac{|\kappa|}{\sqrt{N_{f}}}}
\frac{\Ns\kappa^2}{N_{f}}\max_{\bm{w}\in\mathbb{S}^{d-1}}\Exp\left\{|\bm{w}\tp\tx|^4
e^{\frac{|\kappa|}{\sqrt{N_{f}}}(|\bm{w}\tp\tx|^2)}\right\}\right]\\
&\quad\times\exp\Big[e^{\frac{\rho|\kappa|}{\sqrt{N_{f}}}}\frac{\rho \Nb\kappa^2}{N_{f}}
\max_{\bm{w}\in\mathbb{S}^{d-1}\cup\0}\Exp e^{\frac{|\kappa|}{\mpi\sqrt{N_{f}}}(|\bm{w}\tp\tx|^2)}
+e^{\frac{\rho|\kappa|}{\sqrt{N_{f}}}}
\frac{\rho \Nb\kappa^2}{\mpi N_{f}}\max_{\bm{w}\in\mathbb{S}^{d-1}}\Exp\left\{|\bu\tp\tx|^4
e^{\frac{|\kappa|}{\sqrt{N_{f}}}(|\bu\tp\tx|^2)}\right\}\Big]\\
&\le\exp\Big[e^{\frac{|\kappa|}{\mpi\sqrt{N_{f}}}}\kappa^2\frac{1}{\mpi}
\max_{\bm{w}\in\mathbb{S}^{d-1}\cup\0}\Exp e^{\frac{|\kappa|}{\mpi\sqrt{N_{f}}}(|\bm{w}\tp\tx|^2)}
+e^{\frac{|\kappa|}{\mpi\sqrt{N_{f}}}}
\kappa^2\frac{1}{\mpi}\max_{\bm{w}\in\mathbb{S}^{d-1}}\Exp\left\{|\bu\tp\tx|^4
e^{\frac{|\kappa|}{\mpi\sqrt{N_{f}}}(|\bu\tp\tx|^2)}\right\}\Big]\\
\end{align*}
Therefore, there exists constants $C_6,C_7>0$ depending only on $C_0$ such that
for any $|\kappa|\le\sqrt{N_{f}/C_6}$, $\sup_{\bu,\bv\in\mathbb{S}^{d-1}}
\Exp e^{\kappa\sqrt{N}\bu\tp\nabla_{\bdelta}\bar{\bm{B}}(\bdelta)\bv}\le e^{C_7^2\kappa^2/(2\mpi)}
$.
Applying Theorem A.3 in~\cite{spokoiny2013bernstein}, 
\begin{equation*}
\Pr\left\{\sup_{\bbeta\in\Theta_0(R)}\|\bm{B}(\bbeta)-\Exp\bm{B}(\bbeta)\|_2\geq
6C_7(8d+2\varsigma)^{\frac{1}{2}}\mpi^{-\frac{1}{2}}N_{f}^{-\frac{1}{2}}R\right\}\le e^{-\varsigma},
\end{equation*}
as long as $N_{f}\geq C_6(8d+2t)$, which implies that 
\begin{equation}\label{eq:rand8}
\sup_{\bbeta\in\Theta_0(R)}\|\bm{B}(\bbeta)-\Exp\bm{B}(\bbeta)\|_2
\le \frac{C_8(d+\varsigma)}{N_{f}}
+\sqrt{\frac{C_8(d+\varsigma)}{N_{f}}}\frac{\rho \Nb}{N_{f}}I_{\gammae}
\le\frac{3}{2}\frac{C_8(d+\varsigma)}{N_{f}}
+\left(\frac{\rho\Nb}{N_{f}}I_{\gammae}\right)^2
\end{equation}
where $C_8=24C_7C$ with probability at least $1-e^{-\varsigma}$ at least
$1-e^{-\varsigma}$. Thus, combining~\eqref{eq:rand6},~\eqref{eq:rand7},
and~\eqref{eq:rand8}. we have that 
\begin{equation*}
\sup_{\bbeta\in\Theta_0(r)}\|\bm{B}(\bbeta)\|_2\le
C_9\frac{\rho\Nb}{N_{f}}\check{I}_{\gammae}+\frac{C_9(d+\varsigma)}{N_{f}},
\end{equation*}
where $\check{I}_{\gammae}=I_{\gammae}+\rho\Nb
N_{f}^{-1}(I_{\gammae})^2+(d+\varsigma)(\rho\Nb)^2
N_{f}^{-3}(I_{\gammae})^3$. This complete the proof.

\end{proof}

\begin{proof}[\textbf{Proof of Corollary~\ref{cor:norm} (II)}]
To prove Corollary~\ref{cor:norm} (II), first notice that
$\Exp\{\vc(\tx\tx\tp)-\vc(\bSigma)\}=\0$. Since
$\vc(\tx\tx\tp)=\tx\otimes\tx$, we have that
$\Var\{\vc(\tx\tx\tp)\}=\Exp\{\vc(\tx\tx\tp)\vc(\tx\tx\tp)\tp\}
-\vc(\bSigma)\vc(\bSigma)\tp=\Exp\{(\tx\tx\tp)\otimes(\tx\tx\tp)\}
-\vc(\bSigma)\vc(\bSigma)\tp$ and $\|\Var\{\vc(\tx\tx\tp)\}\|_F
\le\Exp\{\|(\tx\tx\tp)\otimes(\tx\tx\tp)\|_F\}=\Exp\{\|\tx\|^4\}$. Therefore,
taking
$\W_t=\Ns^{-\frac{1}{2}}\Var\{\vc(\tx\tx\tp)\}^{-\frac{1}{2}}\{\vc(\tx_t\tx_t\tp)-\vc(\bSigma)\}$,
we have that $\Exp(\W_t)=\0$ and $\Var(\sumcs W_t)=\I_{d\times d}$. Applying
Theorem 1.1 in \cite{raivc2019multivariate}, we have that 
\begin{align*}
\Pr\left(\left\|\sumcs\W_t\right\|_2>\sqrt{\varsigma+d}\right)
-\Pr(\|\Z\|_2>\sqrt{\varsigma})\le\frac{C\sqrt{d}\Exp(\|\tx\|_2^6)}{\sqrt{\Ns}}.
\end{align*}
Thus, we know that with probability at least $1-e^{-\varsigma}-C/\sqrt{\Ns}$,
\begin{align*}
\left\|\sumcs\tx_t\tx_t\tp-\bSigma\right\|_F\le\sqrt{\Ns}\sqrt{\Exp\|\tx\|^4}\sqrt{\varsigma+d}
\lesssim\sqrt{\Ns}\sqrt{d+\varsigma}.  
\end{align*}
Using the same argument, we can also obtain that with probability at least
$1-e^{-\varsigma}-C/\sqrt{r}$,
\begin{align*}
\left\|\sumcb\frac{\rho\delta_e}{\pi_e}\tx_e\tx_e\tp-\bSigma\right\|_F
\le\sqrt{\rho\Nb}\sqrt{\Exp\|\tx\|^4}\sqrt{\varsigma+d}
\lesssim\sqrt{r}\sqrt{d+\varsigma}.  
\end{align*}
Therefore, with probability at least
$1-Ce^{-\varsigma}-C/\sqrt{\Ns}$, 
\begin{equation*}
\left\|\V_{\lambda}-\bSigma\right\|\lesssim\sqrt{\frac{d+\varsigma}{N_{f\lambda}}}.
\end{equation*}
Combining with the fact that 
\begin{equation*}
\frac{1}{N_{f\lambda}}\left\{\sumcs\varepsilon_t\tx_t
+\frac{\lambda}{1+\lambda}\sumcb w_e\varepsilon_e\tx_e\right\}
\lesssim\frac{\sqrt{\Ns(d+\varsigma)}}{N_{f\lambda}}
+\frac{\lambda}{1+\lambda}\frac{\sqrt{r(d+\varsigma)}}{N_{f}}
\le \sqrt{\frac{d+\varsigma}{N_{f\lambda}}},
\end{equation*}
we complete the proof.

\end{proof}

\begin{proof}[\textbf{Proof of Proposition~\ref{pro:optpi}}]
We first prove that $\psi(z;\lambda)$ is an odd function of $z$ as long as 
$P(\cdot)$ is an even penalty function. This is true due to the definition of
$\psi(z;\lambda)$. Since
$\Theta(z;\lambda)=\argmin_{\gamma\in\mathbb{R}}\left\{(z-\gamma)^2+\lambda
  P(\gamma)\right\}^2$, we have that 
\begin{equation*}
  \Theta(-z;\lambda)
  =\argmin_{\gamma\in\mathbb{R}}\left\{(-z-\gamma)^2+\lambda P(\gamma)\right\}^2
  =\argmin_{\gamma\in\mathbb{R}}\left[\left\{z-(-\gamma)\right\}^2+\lambda
    P(-\gamma)\right]^2
  =-\Theta(z;\lambda).
\end{equation*}
Therefore, we have that
$\psi(-z;\lambda)=-z-\Theta(-z;\lambda)=-z+\Theta(z;\lambda)=-\psi(z;\lambda)$. Thus,
$\psi(z;\lambda)$ is odd as long as $P(\cdot)$ is even. Therefore, it is easy to
know that the $\psi_{\nu}(\varepsilon;\lambda)$ is odd for both $\nu=1,2$, where
$\psi_{\nu}(z;\lambda)$ represents the $\psi(z;\lambda)$ function corresponding
to $\ell_1$ or $\ell_2$ penalties discussed in the main paper.
Since $\varepsilon_e$ is symmetric and $\psi_{\nu}(\varepsilon;\lambda)$ is an odd
function of $\varepsilon$, we have that
$\Exp\left\{\psi_{\nu}(\varepsilon;\lambda)\right\}=0$, and thus, 
\begin{align*}
&\Exp\left\{\left\|\sumcb\frac{\rho\delta_e}{\pi_e}\psi_{\nu}(\varepsilon_e;\lambda)\bSigma^{-\frac{1}{2}}\x_e
\right\|^2\Big|\Xb\right\}\\
&=\sumcb\Exp\left\{\frac{\rho^2\delta_e}{\pi_e^2}
\psi_{\nu}(\varepsilon_e;\lambda)^2\x_e\tp\bSigma^{-1}\x_e
\Big|\Xb\right\}\\
&\qquad+2\sum_{1\le i<j\le \Nb}^{\Nb}\Exp\left\{\frac{\rho^2\delta_i\delta_j}{\pi_i\pi_j}
\psi_{\nu}(\varepsilon_i;\lambda)\psi_{\nu}(\varepsilon_j;\lambda)\x_i\tp\bSigma^{-1}\x_j
\Big|\Xb\right\}\\
&=\rho^2\Exp\left\{\psi_{\nu}(\varepsilon;\lambda)\right\}^2
\sumcb\frac{\x_e\tp\bSigma^{-1}\x_e}{\pi_e}
\end{align*}
Let $t_e=\|\bSigma^{-\frac{1}{2}}\x_e\|_2^2$. 
Minimizing $\Exp\left\{\left\|\sumcb\frac{\rho\delta_e}{\pi_e}\psi_{\nu}(\varepsilon_e;\lambda)\bSigma^{-\frac{1}{2}}\x_e
\right\|^2\Big|\Xb\right\}$ is equivalent to minimizing 
\begin{equation*}
\sumcb\frac{t_e^2}{\pi_e},
\end{equation*}
subject to $\sumcb\pi_e=r$ and $0\le\pi_e\le 1$ for any
$e\in\cB$. Following the same proof of Theorem 4 in \cite{wang2022sampling} with
$t_{(i)}=\|\bSigma^{-\frac{1}{2}}\x_e\|_{(i),2}^2$, we obtain that 
\begin{equation*}
\pi_e^{\opt}=\frac{r\|\bSigma^{-\frac{1}{2}}\x_e\|_2\mins H}{\sum_{i=1}^{\Nb}
\|\bSigma^{-\frac{1}{2}}\x_i\|_2\mins H},
\end{equation*}
where 
\begin{equation*}
H=\frac{r\sum_{i=1}^{\Nb-g}\|\bSigma^{-\frac{1}{2}}\x\|_{(i)}}{r-g},
\end{equation*}
$\|\bSigma^{-\frac{1}{2}}\x\|_{(1)}\le\ldots\le\|\bSigma^{-\frac{1}{2}}\x\|_{(\Nb)}$
are order statistics of
$\|\bSigma^{-\frac{1}{2}}\x_1\|,\ldots,\|\bSigma^{-\frac{1}{2}}\x_{\Nb}\|$, and $g$
is an integer such that 
\begin{equation*}
\frac{r\|\bSigma^{-\frac{1}{2}}\x\|_{(\Nb-g)}}{
\sum_{i=1}^{\Nb-g}\|\bSigma^{-\frac{1}{2}}\x\|_{(i)}}<\frac{r}{r-g}, 
\end{equation*}
and
\begin{equation*}
\frac{r\|\bSigma^{-\frac{1}{2}}\x\|_{(\Nb-g+1)}}{
\sum_{b=1}^{\Nb-g+1}\|\bSigma^{-\frac{1}{2}}\x\|_{(b)}}\geq\frac{r}{r-g+1}, 
\end{equation*}
in which we define $\|\bSigma^{-\frac{1}{2}}\x\|_{(\Nb+1)}=\infty$.
\end{proof}

\subsection{Mathematical details of Section~\ref{sec:dtrans}}

In this section, our goal is to prove Proposition~\ref{pro:nobias} and
Theorem~\ref{thm:dtrans}. Since in this section, we mainly focus on the external
data, therefore, without special illustration, we always use $\varepsilon$ to
represent the random error of the external data and use $\x$ to represent the
covariate variable of the external data. For a random variable $Y$, we use notation $Y_{(k)}$
to represent its order statistics, and for a random variable $X$ that is related
to $Y$, we use notation $X_{[k]}$ to represent the corresponding concomitant of
order statistics $Y_{(k)}$. For example, we selected data points with order
statistics $|y-\x\tp\hbeta_{\cS}|_{(1)},|y-\x\tp\hbeta_{\cS}|_{(2)},\ldots
|y-\x\tp\hbeta_{\cS}|_{(r)}$, and therefore, the selected data points can be
represented as concomitants
$(\varepsilon_{[1]},\x_{[1]}\tp,\gamma_{[1]})\tp,(\varepsilon_{[2]},\x_{[2]}\tp,\gamma_{[2]})\tp,
\ldots,(\varepsilon_{[r]},\x_{[r]}\tp,\gamma_{[r]})\tp$. To ease the notation, we denote
$\deltaS=\sqrt{\Ns}(\sumcs\tx_t\tx_t\tp)^{-1}\sumcs\varepsilon_t\tx_t$ in this
section and denote $\Ze=\varepsilon-\Ns^{-\frac{1}{2}}\tx\tp\deltaS$ and
$\Zg=\varepsilon-\Ns^{-\frac{1}{2}}\tx\tp\deltaS+\gamma$. Note that 
\begin{align*}
&y-\x\tp\hbeta_{\cS}=\varepsilon+\gamma+\x\tp\tbeta
-\x\tp\left(\sumcs\x_t\x_t\tp\right)^{-1}\sumcs\varepsilon_t\x_t
-\x\tp\tbeta\\
&=\varepsilon+\gamma+\x\tp\tbeta
-\x\tp\bSigma^{\frac{1}{2}}\left(\sumcs\tx_t\tx_t\tp\right)^{-1}\sumcs\varepsilon_t\tx_t
-\x\tp\tbeta
=\Zg.
\end{align*}
We first prove Proposition~\ref{pro:nobias} in Section~\ref{sec:proofnobias}.

\subsubsection{Proof of Proposition~\ref{pro:nobias}}\label{sec:proofnobias}

We prove Proposition~\ref{pro:nobias} and start with a lemma.
\begin{lemma}\label{lem:order}
Under Assumptions~\ref{asm:eps},~\ref{asm:gamma}, and~\ref{asm:epsLip}, we have
that $\forall1\le r\le\Nb$, 
\begin{align*}
\Pr\left(|\Zg|_{(r)}>\frac{r+\sqrt{r\varsigma}}{\Nb}\Big|\Ds\right)
\lesssim 
e^{-\varsigma}+r^{\frac{1}{2}}\left(\frac{r}{\Nb}\right)^{\alpha_{\varepsilon}\mins
  1}, \text{ almost surely},
\end{align*}
and for any function $h(z)$ that satisfies $\sup_{z\in\mathbb{R}}|h(z)|\le 1$,
we have that for any Borel set $\mathcal{A}$, 
\begin{equation}
\label{eq:iorder}
\left|\int_{\mathcal{A}} h(z)\dd F_{\Nb}^{(r)}(z)\right|\lesssim
\frac{1}{r!}\int_{\mathcal{A}}|h(z)|z^r e^{-z}\dd z
+r^{\frac{1}{2}}\left(\frac{r}{\Nb}\right)^{\alpha_{\varepsilon}\mins 1},
\text{ almost surely},
\end{equation}
where $F_{\Nb}^{(r)}$ is the c.d.f. of $\Nb|\Zg|_{(r)}$ conditional on $\Ds$.
\end{lemma}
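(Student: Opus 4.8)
The plan is to work conditionally on the target data $\Ds$, under which $\deltaS$ is a fixed vector and the variables $|\Zg_e|$, $e\in\cB$, are i.i.d. with a common conditional c.d.f. $G(t):=\Pr(|\Zg|\le t\mid\Ds)$. Everything then reduces to a statement about the lower order statistics of $\Nb$ i.i.d. nonnegative variables, and the two displays are really two readings of a single fact: the conditional law of $\Nb|\Zg|_{(r)}$ is close, in total variation, to a reference law with density proportional to $z^{r}e^{-z}$, with the gap controlled by $r^{1/2}(r/\Nb)^{\alpha_{\varepsilon}\mins 1}$.

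First I would analyze $G$ near the endpoint $0$. Writing $w=\gamma-\Ns^{-1/2}\tx\tp\deltaS$ and conditioning on $w$, one has $G(t)=\Exp_w\{F_{\varepsilon}(t-w)-F_{\varepsilon}(-t-w)\}$, so $G$ is differentiable with $G'(0)=c:=2\Exp_w\{\feps(w)\}$; the symmetry of $\feps$ (Assumption~\ref{asm:eps}) makes the leading term exactly linear. The regular-variation condition of Assumption~\ref{asm:epsLip} (the hypothesis $\lim_{w\to0}w^{-\alpha_{\varepsilon}}\{\feps(z+w)-\feps(z)\}=H(z)$ with $H$ bounded, together with the Lipschitz bound) then yields the quantitative von Mises expansion $G(t)=ct+O(t^{1+(\alpha_{\varepsilon}\mins 1)})$ uniformly on a neighbourhood of $0$. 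The mixture structure of $\gamma$ (Assumption~\ref{asm:gamma}) enters only through $c$ and the remainder, since the $\gamma\ne0$ component contributes a smooth, $O(t^{2})$ piece near the origin.

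For the first display I would use the exact identity $\Pr(|\Zg|_{(r)}>t\mid\Ds)=\Pr\{\mathrm{Bin}(\Nb,G(t))<r\}$. Taking $t=(r+\sqrt{r\varsigma})/\Nb$ and substituting the expansion of the previous step gives $\Nb G(t)\ge r+c'\sqrt{r\varsigma}$ up to the approximation error, after which a Chernoff/Bernstein lower-tail bound for the binomial produces the $e^{-\varsigma}$ term, while the replacement of $G(t)$ by its linear part contributes the $r^{1/2}(r/\Nb)^{\alpha_{\varepsilon}\mins 1}$ term. For the second display I would instead start from the exact density of the $r$-th order statistic, $\Nb^{-1}\binom{\Nb}{r}r\,G(z/\Nb)^{r-1}\{1-G(z/\Nb)\}^{\Nb-r}g(z/\Nb)$ with $g=G'$, and compare it to the reference density $\tfrac{1}{r!}z^{r}e^{-z}$ by substituting $G(z/\Nb)\approx cz/\Nb$, $g(z/\Nb)\approx c$ and $\{1-G(z/\Nb)\}^{\Nb-r}\approx e^{-cz}$; each substitution costs at most $O((z/\Nb)^{\alpha_{\varepsilon}\mins 1})$, and summing these errors across the order-statistic structure---equivalently, invoking the Falk--Reiss variational bound under a quantified von Mises condition \cite{falk1993mises}---controls the total variation between $\dd F_{\Nb}^{(r)}$ and the reference measure by $r^{1/2}(r/\Nb)^{\alpha_{\varepsilon}\mins 1}$. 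Integrating against any $h$ with $\sup_{z}|h(z)|\le1$, for arbitrary Borel $\mathcal{A}$, then yields the stated integral bound.

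The main obstacle, I expect, is making the von Mises expansion of $G$ both quantitative and uniform and then propagating it into a total-variation bound with exactly the factor $r^{1/2}(r/\Nb)^{\alpha_{\varepsilon}\mins 1}$ rather than a cruder $r$-dependence; this is the step where the smoothness index $\alpha_{\varepsilon}$ of Assumption~\ref{asm:epsLip} is used in full, and where the order-statistic approximation theory does the real work. A secondary point requiring care is that the slope $c=G'(0)$ and the remainder constants depend on $\deltaS$, hence on $\Ds$; I would need to check that they are bounded---and that $c$ is bounded away from $0$---for almost every realization of $\Ds$, so that the constants hidden in $\lesssim$ and the ``almost surely'' qualifier are legitimate, using that $\feps$ is a fixed bounded symmetric density and $\deltaS$ is finite almost surely.
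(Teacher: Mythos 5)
Your proposal is correct and follows essentially the same route as the paper's proof: condition on $\Ds$ so the $|\Zg_e|$ are i.i.d., convert Assumption~\ref{asm:epsLip} into a quantified von Mises condition for the law of $|\Zg|$ at its lower endpoint, and invoke the variational bound of \cite{falk1993mises} to approximate the conditional law of $\Nb|\Zg|_{(r)}$ by the reference density $z^{r}e^{-z}/r!$ with error $r^{1/2}(r/\Nb)^{\alpha_{\varepsilon}\mins 1}$, from which both displays follow. The only deviations are minor: the paper works with the reciprocal $\xi=1/|\Zg|$ at the upper endpoint and obtains the first display from the total-variation bound plus a Gamma tail estimate, whereas you stay at the lower endpoint and use a binomial Chernoff argument; and the normalization issue you flag (the slope $c=G'(0)$ depending on $\Ds$) is precisely what the paper handles by its ``without loss of generality $f_{|\Zg|}(0)=1$'' rescaling.
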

\begin{proof}
We have that the density of $|\Zg|$ conditional on $\Ds$ is given as
\begin{align*}
f_{|\Zg|}(z)
&=\sum_{s\in\{-1,1\}}\int_{-\infty}^{\infty}\int_0^{\infty}
\feps\left(sz-\gamma+\frac{u}{\sqrt{\Ns}}\right)
f_{\tx\tp\deltaS}(u)\fgam(\gamma)\dd\gamma\dd u.
\end{align*} 
Without loss of generality, we assume that $f_{|\Zg|}(0)=1$ in this proof, since
we can always consider $\Zg=\kappa(\varepsilon+\gamma-\Ns^{-\frac{1}{2}}\tx\tp\deltaS)$,
where $\kappa=f_{|\varepsilon+\gamma-\Ns^{-\frac{1}{2}}\tx\tp\deltaS|}(0)$, so
that $f_{|\Zg|}(0)=1$. We first prove that the c.d.f of $\Nb|\Zg|$ conditional
on $\Ds$, denoted as $F_{\Nb}^{(r)}$, converges to a
degenerate distribution. This result can be prove with results of extreme
statistics of i.d.d. data points since $\Zg_1,\Zg_2,\ldots,\Zg_{\Nb}$ are i.i.d.
data points conditional on $\Ds$. Letting $\xi=1/|\Zg|$, we have that
$\xi_{(i)}=1/|\Zg|_{(\Nb-i+1)}$,
$F_{\xi}(z)=1-F_{|\Zg|}(z^{-1})$ and $f_{\xi}(z)=z^{-2}f_{|\Zg|}(z^{-1})$. We
now prove that $F_{\xi}$ satisfies VM1 in~\cite{falk1993mises}. Since
$\omega(F_{\xi})=\sup\{z\in\mathbb{R}:F_{\xi}(z)<1\}=\infty$ and
$f_{|\Zg|}(0)=1$, we only need to show that as $z\to\infty$,  
\begin{align*}
f_{\xi}(z)=\frac{1}{z^2}f_{|\Zg|}\left(\frac{1}{z}\right)
=\frac{1}{z^2}\left\{1+f_{|\Zg|}\left(\frac{1}{z}\right)
-f_{|\Zg|}(0)\right\}
=\frac{1}{z^2}\left\{1+Cz^{-\alpha_{\varepsilon}}\right\},
\end{align*}
where $C$ is a fixed constant that does not depend on $\Ds$, which will verify
VM1 in~\cite{falk1993mises}. Under Assumption~\ref{asm:epsLip},
applying dominating convergence theorem, we have that  
\begin{align*}
&\lim_{z\to\infty}z^{\alpha_{\varepsilon}}\left\{f_{|\Zg|}\left(\frac{1}{z}\right)
-f_{|\Zg|}(0)\right\}  
=\lim_{w\to 0}\frac{f_{|\Zg|}(w)-f_{|\Zg|}(0)}{w^{\alpha_{\varepsilon}}}\\
&=\lim_{w\to0}\int_{-\infty}^{\infty}\int_0^{\infty}
\frac{\feps\left(w-\gamma+\frac{u}{\sqrt{\Ns}}\right)
-\feps\left(-\gamma+\frac{u}{\sqrt{\Ns}}\right)}{w^{\alpha_{\varepsilon}}}f_{\tx\tp\deltaS}(u)
\fgam(\gamma)\dd\gamma\dd u\\
&\quad+\lim_{w\to0}\int_{-\infty}^{\infty}\int_0^{\infty}
\frac{\feps\left(-w-\gamma+\frac{u}{\sqrt{\Ns}}\right)
-\feps\left(-\gamma+\frac{u}{\sqrt{\Ns}}\right)}{w^{\alpha_{\varepsilon}}}
f_{\tx\tp\deltaS}(u)\fgam(\gamma)\dd\gamma\dd u\\
&=2\Exp\left\{H\left(-\gamma+\frac{\tx\tp\deltaS}{\sqrt{\Ns}}\right)\Big|\Ds\right\}\le C_1,
\end{align*}
where the last inequality is due to Assumption~\ref{asm:epsLip}, which implies
that $H(\cdot)$ is bounded by a constant almost surely.  
Therefore, we have that
$f_{\xi}(z)=z^{-2}\left\{1+O(z^{-\alpha_{\varepsilon}})\right\}$. Due to Theorem
2.4 and Theorem 3.1 in~\cite{falk1993mises}, we have that for any
Borel set $\mathcal{A}$ and $r\in\mathbb{N}$, 
\begin{align*}
\left|\Pr\left\{\frac{\xi_{(n-r+1)}}{\Nb}\in\mathcal{A}\Big|\Ds\right\}
-\int_{\mathcal{A}}g_{r+1}(z)\dd z\right|
\le C_2\left\{\left(\frac{r}{\Nb}\right)^{\alpha_{\varepsilon}}r^{\frac{1}{2}}+\frac{r}{\Nb}\right\}, \text{ almost surely},
\end{align*} 
where $g_{r+1}(z)=\left\{-\log G(z)\right\}^r g(z)/r!$,
$G(z)=e^{-\frac{1}{z}}$, and $C_2$ is a constant that only determined by $C_1$. Here, $g_{r+1}(z)$ is the asymptotic distribution of
extreme statistics that satisfies condition VM1 in~\cite{falk1993mises}. The
explicit formula of $g_{r+1}(z)$ and $G(z)$ is given
in~\cite{nagaraja1994distribution}. The fact that $C_2$ is determined by $C_1$
is because of the proof in ~\cite{falk1993mises}. Therefore, we have that for any Borel set
$\mathcal{A}$,   
\begin{align*}
\left|\Pr\left\{\Nb|\Zg|_{(r)}\in\mathcal{A}\Big|\Ds\right\}
-\frac{1}{r!}\int_{\mathcal{A}} z^r e^{-z}\dd z\right|
\lesssim r^{\frac{1}{2}}\left(\frac{r}{\Nb}\right)^{\alpha_{\varepsilon}\mins 1},
\text{ almost surely}.
\end{align*}
Now, letting $\mathcal{A}=((r+\sqrt{r\cvsigma})/\Nb,\infty)$ and denoting
$\Gamma_r$ as a gamma distributed random variable with shape parameter $r$ and
scale parameter 1, we have that $\forall1\le r\le\Nb$, 
\begin{align*}%
\begin{split}
&\Pr\left(|\Zg|_{(r)}>\frac{r+\sqrt{r\varsigma}}{\Nb}\Big|\Ds\right)
\lesssim\frac{1}{r!}\int_{r+\sqrt{r\varsigma}}^{\infty}z^r e^{-z}\dd z
+r^{\frac{1}{2}}\left(\frac{r}{\Nb}\right)^{\alpha_{\varepsilon}\mins 1}\\
&\lesssim\Pr\left(\frac{\Gamma_r}{r}-1>\sqrt{\frac{\varsigma}{r}}\right)
+r^{\frac{1}{2}}\left(\frac{r}{\Nb}\right)^{\alpha_{\varepsilon}\mins 1}\\
&\lesssim e^{-r\frac{\varsigma}{r}}+r^{\frac{1}{2}}\left(\frac{r}{\Nb}\right)^{\alpha_{\varepsilon}\mins 1}
=e^{-\varsigma}+r^{\frac{1}{2}}\left(\frac{r}{\Nb}\right)^{\alpha_{\varepsilon}\mins
  1},
\end{split}
\end{align*}
almost surely. Further, we have that
\begin{align*}
&\left|\int_{\mathcal{A}}h(z)\dd F_{\Nb}^{(r)}(z)\right|
\le\frac{1}{r!}\int_{\mathcal{A}}|h(z)|z^r e^{-z}\dd z
+\left|\Pr\left\{\Nb|\Zg|_{(r)}\in\mathcal{A}\right\}
-\int_{\mathcal{A}}z^r e^{-z}\dd z\right|\\
&\lesssim\frac{1}{r!}\int_{\mathcal{A}} |h(z)|z^r e^{-z}\dd z
+r^{\frac{1}{2}}\left(\frac{r}{\Nb}\right)^{\alpha_{\varepsilon}\mins 1},
\end{align*}
almost surely, which completes the proof of Lemma~\ref{lem:order}.
\end{proof}

Now, we present the proof of Proposition~\ref{pro:nobias}.
\begin{proof}[\textbf{Proof of Proposition~\ref{pro:nobias}}]
Letting 
\begin{align*}
\geps(t)=\int_{-\infty}^{\infty}\feps\left(t+\frac{u}{\sqrt{\Ns}}\right)
f_{\tx\tp\deltaS}(u)\dd u\text{, and }
\getg(t)=\int_{-\infty}^{\infty}\fetg\left(t+\frac{u}{\sqrt{\Ns}}\right)
f_{\tx\tp\deltaS}(u)\dd u,
\end{align*}
since $\feps(\cdot)$ is a Lipschitz function with a Lipschitz constant $\Leps$,
we know that $\forall t\in\mathbb{R}$, condtional on $\Ds$, 
\begin{align*}
|\geps(t)-\feps(0)|
\le\int_{-\infty}^{\infty}\left|\feps\left(t+\frac{u}{\sqrt{\Ns}}\right)
-\feps(0)\right|f_{\tx\tp\deltaS}(u)\dd u
\le\Leps|t|+\frac{\Leps\Exp(\|\tx\|_2)\|\deltaS\|_2}{\sqrt{\Ns}}.
\end{align*}  
Since under Assumption~\ref{asm:epsLip}, we have that $\fetg$ is also a
Lipschitz function with Lipschitz constant $\Leps$ because
\begin{align*}
\left|\fetg(z+w)-\fetg(z)\right|
\le\int_0^{\infty}
\left|\feps(z+w-\tilde{\gamma})-\feps(z-\tilde{\gamma})\right|
\ftga(\tilde{\gamma})\dd\tilde{\gamma}
\le\Leps|w|.
\end{align*}
Therefore, we similarly have that $\forall t\in\mathbb{R}$,
\begin{align*}
|\getg(t)-\fetg(0)|
\le\Leps|t|+\frac{\Leps\Exp(\|\tx\|_2)\|\deltaS\|_2}{\sqrt{\Ns}}.
\end{align*}
Now, we consider a ``good'' set $\mathcal{A}$, such that on $\mathcal{A}$, we
have that $\|\deltaS\|_2^2\le C\sigma_{\cS}^2\cvsigma$. Then, if $\sqrt{\Ns}\geq
4C\{f_{\varepsilon}(0)\}^{-1}\Leps\Exp(\|\tx\|_2)\sigma_{\cS}\sqrt{\cvsigma}$, we
have that $\sqrt{\Ns}\geq 4\{f_{\varepsilon}(0)\}^{-1}\Leps\Exp(\|\tx\|_2)\|\deltaS\|_2$.
On ``good'' set $\mathcal{A}$, we have that applying Bayesian's rule, $\forall
0<z<\frac{\feps(0)}{8\Leps}$,
\begin{align*}
&\Pr\left(\gamma>0\big||\Zg|=z,\Ds\right)
=\frac{f_{\left|\varepsilon+\tilde{\gamma}-\frac{\tx\tp\deltaS}{\sqrt{\Ns}}\right|}(z)\Pr(\gamma\neq
  0)}{f_{\left|\varepsilon-\frac{\tx\tp\deltaS}{\sqrt{\Ns}}\right|}(z)\Pr(\gamma=
  0)+f_{\left|\varepsilon+\tilde{\gamma}-\frac{\tx\tp\deltaS}{\sqrt{\Ns}}\right|}(z)\Pr(\gamma\neq
  0)}\\
&=\frac{p_{\gamma}\sum_{s\in\{-1,1\}}\getg(sz)}{(1-p_{\gamma})\sum_{s\in\{-1,1\}}\geps(sz)
+p_{\gamma}\sum_{s\in\{-1,1\}}\getg(sz)}\\
&\le\frac{p_{\gamma}}{1-p_{\gamma}}\frac{\fetg(0)+\Leps|z|+\frac{\Leps\Exp(\|\tx\|_2)\|\deltaS\|_2}{\sqrt{\Ns}}}{
\feps(0)-\Leps|z|-\frac{\Leps\Exp(\|\tx\|_2)\|\deltaS\|_2}{\sqrt{\Ns}}}\\
&\le Cp_{\gamma}\left\{\fetg(0)+z+\frac{\|\deltaS\|_2}{\sqrt{\Ns}}\right\},
\end{align*}
where the last inequality is because $p_{\gamma}$ is upper bounded away from 1
and $C$ is a constant that does not depend on $\Ds$.
Therefore, using the same arguments in~\cite{nagaraja1994distribution} to obtain
(2.1), we can obtain that $\forall 0<z_0<\frac{\feps(0)}{8\Leps}$,
\begin{align*}
&\Pr\left(\max_{1\le k\le r}\gamma_{[k]}=0\Big||\Zg|_{(r+1)}=z_0,\Ds\right)
=\left\{\int_0^{z_0}\Pr\left(\gamma=0\Big||\Zg|=z\right)\frac{f_{|\Zg|}(z)}{F_{|\Zg|}(z_0)}\dd z\right\}^r\\
&\geq\left[1-Cp_{\gamma}\left\{\feg(0)+z_0+\frac{\|\deltaS\|_2}{\sqrt{\Ns}}\right\}\right]^r
\geq 1-Crp_{\gamma}\left\{\feg(0)+z_0+\frac{\|\deltaS\|_2}{\sqrt{\Ns}}\right\}.
\end{align*}
Thus, due to~\eqref{eq:iorder}, we have that 
\begin{align*}
&\Pr\left(\max_{1\le k\le r}\gamma_{[k]}>0\Big|\Ds\right)
\lesssim\frac{1}{(r+1)!}\int_0^{\frac{\Nb\feps(0)}{8\Leps}}\Pr\left(\max_{1\le k\le
r}\gamma_{[k]}>0\Big||\Zg|_{(r+1)}=\frac{z}{\Nb},\Ds\right)z^{r+1} e^{-z}\dd z\\
&\quad+\frac{1}{(r+1)!}\int_{\frac{\Nb\feps(0)}{8\Leps}}^{\infty}z^{r+1} e^{-z}\dd z
+(r+1)^{\frac{1}{2}}\left(\frac{r+1}{\Nb}\right)^{\alpha_{\varepsilon}\mins 1}\\
&\lesssim rp_{\gamma}\fetg(0)
+\frac{rp_{\gamma}\|\deltaS\|_2}{\sqrt{\Ns}}
+\frac{rp_{\gamma}}{\Nb}\frac{1}{(r+1)!}\int_0^{\infty}z^{r+2}e^{-z}\dd z
  +\Pr\left(\Gamma_{r+1}>\frac{\Nb\feps(0)}{2}\right)\\
  &\qquad\qquad+r^{\frac{1}{2}}\left(\frac{r}{\Nb}\right)^{\alpha_{\varepsilon}\mins 1}\\
&\lesssim rp_{\gamma}\fetg(0)
+\frac{rp_{\gamma}\|\deltaS\|_2}{\sqrt{\Ns}}
+\frac{r^2p_{\gamma}}{\Nb}
+\frac{r}{\Nb}+r^{\frac{1}{2}}\left(\frac{r}{\Nb}\right)^{\alpha_{\varepsilon}\mins 1}
:=\check{p}
\end{align*}
where $\Gamma_{r+1}$ represent a Gamma distributed random variable with shape
parameter $r+1$ and scale parameter 1. Therefore, we know that on ``good'' set
$\mathcal{A}$, we have that 
\begin{equation*}
\Pr\left(\gamma_{[i]}=0,k=1,\ldots,r|\Ds\right)\geq 1-C\check{p}
\end{equation*}
Now, we know that 
\begin{align*}
&\Pr\left(\gamma_{[i]}=0,k=1,\ldots,r\right)
\geq\int_{\mathcal{A}}\Pr\left(\gamma_{[i]}=0,k=1,\ldots,r\big|\Ds\right)\dd\Pr
\geq (1-C\check{p})\{1-\Pr(\mathcal{A}^c))\}\\
&\geq 1-C\check{p}-\Pr(\mathcal{A}^c).
\end{align*}
Now, since ``good'' set $\mathcal{A}$ satisfies with probability at least
$1-\frac{C}{\sqrt{\Ns}}$ according to Lemma~\ref{lem:lemofS}, we complete the
proof. 

\end{proof}

\subsubsection{Proof of Theorem~\ref{thm:dtrans}}

In the following, our proofs are based on normal assumptions such that
$\varepsilon\sim\Nor(0,\sigma_{\varepsilon}^2)$ and $\x\sim\Nor(\0,\bSigma)$.
To ease the
notations, we denote $\z_f=(z_1,z_2,z_3)\tp$, $\z_c=(z_1,z_2)\tp$, 
$\sigma_{\Ze}^2=\sigma_{\varepsilon}^2+\frac{\deltaS\tp\deltaS}{\Ns}$,
\begin{align*}
\bmu_c=
\begin{pmatrix}
\frac{\sigma_{\varepsilon}^2}{\sigma_{\Ze}^2}\\
-\frac{\bu\tp\deltaS}{\sqrt{\Ns}\sigma_{\Ze}^2}\\
\end{pmatrix},
\bSigma_f=
\begin{pmatrix}
\sigma_{\varepsilon}^2 & 0 & \sigma_{\varepsilon}^2\\
0 & 1 & -\frac{\bu\tp\deltaS}{\sqrt{\Ns}}\\
\sigma_{\varepsilon}^2 & -\frac{\bu\tp\deltaS}{\sqrt{\Ns}} & \sigma_{\Ze}^2\\
\end{pmatrix},
\end{align*}
and 
\begin{align*}
\bSigma_c=
\begin{pmatrix}
\frac{1}{\Ns}\frac{\deltaS\tp\deltaS}{\sigma_{\Ze}^2}
 & \frac{\sigma_{\varepsilon}^2\bu\tp\deltaS}{\sqrt{\Ns}\sigma_{\Ze}^2}\\
\frac{\sigma_{\varepsilon}^2\bu\tp\deltaS}{\sqrt{\Ns}\sigma_{\Ze}^2} &
1
-\frac{1}{\Ns}\frac{\deltaS\tp\deltaS}{\sigma_{\Ze}^2}\\
\end{pmatrix},
\end{align*}
where $\bu\in\mathbb{S}^{d-1}$ represent any vector with norm equals to 1 in
$\mathbb{R}^d$ in this section. To prove Theorem~\ref{thm:dtrans}, we start with
some lemmas.

\begin{lemma}\label{lem:nobias2}
Under Assumption~\ref{asm:gamma} and assume that
$\varepsilon\sim\Nor(0,\sigma_{\varepsilon}^2)$ and $\x\sim\Nor(\0,\bSigma)$, we
have that $\forall\cvsigma>0,0<\consp<8$, as long as $\Ns\gtrsim e^{2\cvsigma}$,
$r\le\{1-(1+\consp)p_{\gamma}\}\Nb$, and $\gamma_b\geq
4\sqrt{6}\sigma_{\varepsilon}\sqrt{\log\Nb}$, we have that 
\begin{equation*}
\Pr\left(\gamma_{[1]}=\ldots\gamma_{[r]}=0\right)
\geq 1-Crp_{\gamma}
e^{\frac{-\gamma_b^2}{4\sigma_{\varepsilon}^2}}-Ce^{-C\{(\consp^2 p_{\gamma}\Nb)\mins\log\Nb\}},%
\end{equation*}
where $C$ is a constant the does not depend on any sample sizes.
\end{lemma}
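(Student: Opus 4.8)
The plan is to use the concomitant representation of the selection procedure. Since $\Zg_1,\dots,\Zg_{\Nb}$ are i.i.d.\ conditional on $\Ds$, given the order statistics of $|\Zg|$ the associated mean-shift values are conditionally independent, and the law of the concomitant $\gamma_{[i]}$ given $|\Zg|_{(i)}=z$ coincides with the conditional law of $\gamma$ given $|\Zg|=z$. I would therefore reduce $\{\gamma_{[1]}=\cdots=\gamma_{[r]}=0\}$ to controlling $\Pr(\gamma\neq 0\mid |\Zg|=z,\Ds)$ for $z$ in the range where the $r$ smallest order statistics live, and then integrate against the distribution of those order statistics, following the route of the proof of Proposition~\ref{pro:nobias} and Lemma~\ref{lem:order}.

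First I would restrict to a good event $\mathcal{A}$ on $\Ds$ where $\|\deltaS\|_2^2\lesssim\sigma_{\cS}^2\cvsigma$, which holds with high probability by Lemma~\ref{lem:lemofS}; since $\Ns\gtrsim e^{2\cvsigma}$ makes $\|\deltaS\|_2^2/\Ns$ negligible, on $\mathcal{A}$ we have $\sigma_{\Ze}^2=\sigma_{\varepsilon}^2+\deltaS\tp\deltaS/\Ns\le 2\sigma_{\varepsilon}^2$ and $\Ze$ is Gaussian. The essential use of normality is an explicit bound on the biased-point density near the origin: for a biased observation $|\Zg|=|\Ze+\tilde{\gamma}|$ with $\tilde{\gamma}\ge\gamma_b$, so its density at a small argument $z$ is dominated by the Gaussian density $f_{\Ze}$ evaluated near $-\tilde{\gamma}$, giving a factor $\lesssim e^{-\gamma_b^2/(4\sigma_{\varepsilon}^2)}$ from $\sigma_{\Ze}^2\le 2\sigma_{\varepsilon}^2$ and $\gamma_b\gtrsim\sqrt{\log\Nb}$. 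Inserting this into Bayes' rule exactly as in Proposition~\ref{pro:nobias} yields, uniformly over the relevant range of $z$,
$$\Pr(\gamma\neq 0\mid |\Zg|=z,\Ds)\lesssim \frac{p_{\gamma}}{1-p_{\gamma}}\,e^{-\gamma_b^2/(4\sigma_{\varepsilon}^2)}.$$

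Second I would ensure the $r$ smallest values of $|\Zg|$ actually fall in that range. The condition $r\le\{1-(1+\consp)p_{\gamma}\}\Nb$ leaves a margin of order $\consp p_{\gamma}\Nb$ unbiased observations beyond the $r$ needed, so a binomial/order-statistic concentration argument shows that, with probability at least $1-Ce^{-C\{(\consp^2 p_{\gamma}\Nb)\mins\log\Nb\}}$, at least $r$ unbiased points have $|\Ze|$ below a threshold of order $\sigma_{\varepsilon}\sqrt{\log\Nb}$, forcing $|\Zg|_{(r)}$ into the good range. Combining the conditional bound with conditional independence of the concomitants, a union bound over $i=1,\dots,r$ together with integration against the (approximate) law of the order statistics gives
$$\Pr(\exists\, i\le r:\gamma_{[i]}\neq 0)\lesssim rp_{\gamma} e^{-\gamma_b^2/(4\sigma_{\varepsilon}^2)}+e^{-C\{(\consp^2 p_{\gamma}\Nb)\mins\log\Nb\}},$$
which is the claimed bound after accounting for $\Pr(\mathcal{A}^c)$.

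The main obstacle will be the second step, namely turning the margin condition $r\le\{1-(1+\consp)p_{\gamma}\}\Nb$ into the precise exponent $(\consp^2 p_{\gamma}\Nb)\mins\log\Nb$. This needs a two-sided control: a lower-deviation bound for the count of unbiased points below the threshold $\tau$ (so $|\Zg|_{(r)}$ stays small) together with the tail/density estimate that keeps the conditional bias probability uniformly small on $\{z\le\tau\}$. One must also verify that $\tau$ of order $\sigma_{\varepsilon}\sqrt{\log\Nb}$ is compatible with $\gamma_b\ge 4\sqrt{6}\,\sigma_{\varepsilon}\sqrt{\log\Nb}$, so that $\gamma_b-\tau$ remains of order $\gamma_b$ and the exponent $\gamma_b^2/(4\sigma_{\varepsilon}^2)$ is preserved. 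The concomitant independence and order-statistic integration are then routine, mirroring Lemma~\ref{lem:order} and Proposition~\ref{pro:nobias}.
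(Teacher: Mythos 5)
Your proposal is correct and follows essentially the same route as the paper's proof: a Bayes-rule bound $\Pr(\gamma\neq 0\mid |\Zg|=z,\Ds)\lesssim \frac{p_{\gamma}}{1-p_{\gamma}}e^{-\gamma_b^2/(4\sigma_{\varepsilon}^2)}$ valid for $z\lesssim\sigma_{\varepsilon}\sqrt{\log\Nb}$, a good event combining control of $\|\deltaS\|_2$, binomial (Okamoto-type) concentration of the contaminated count $K_{\gamma}\le(1+\consp)p_{\gamma}\Nb$ (whence $|\Zg|_{(i)}\le|\Ze|_{(i+K_{\gamma})}\le\max_e|\Ze|_e\lesssim\sigma_{\Ze}\sqrt{\log\Nb}$ under the margin condition), and the conditional independence of concomitants given the order statistics to multiply the per-point bounds. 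The obstacle you flag — compatibility of the threshold $\tau\asymp\sigma_{\varepsilon}\sqrt{\log\Nb}$ with $\gamma_b\ge 4\sqrt{6}\,\sigma_{\varepsilon}\sqrt{\log\Nb}$ so the exponent survives — is resolved exactly as you suggest in the paper's argument.
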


\begin{proof}
Under normal assumptions, it is easy to know that given any $u\geq 0$, random
variable $\varepsilon+u-\frac{\tx\tp\bdelta}{\sqrt{\Ns}}$ is
normally distributed and the density function of
random variable $\varepsilon+u-\frac{\tx\tp\deltaS}{\sqrt{\Nb}}$ given $u\geq
0$ is
\begin{align*}
f_{\varepsilon+u-\frac{\tx\tp\deltaS}{\sqrt{\Ns}}}(z)
=\frac{1}{\sqrt{2\pi}\sigma_{\Ze}}e^{-\frac{(z-u)^2}{2\sigma_{\Ze}^2}}.
\end{align*}
Thus, applying Bayesian's rule, we have that $\forall z>0$, 
\begin{align*}
&\Pr\left(\gamma=0\big||\Zg|=z,\Ds\right)
=\frac{\Pr(\gamma=0)f_{\left|\varepsilon-\frac{\tx\tp\deltaS}{\sqrt{\Ns}}\right|}(z)
}{\Pr(\gamma=0)f_{\left|\varepsilon-\frac{\tx\tp\deltaS}{\sqrt{\Ns}}\right|}(z)
+\Pr(\gamma\neq 0)\int_{\gamma_b}^{\infty}f_{\left|\varepsilon+u-\frac{\tx\tp\deltaS}{\sqrt{\Ns}}\right|}(z)f_{\tilde{\gamma}}(u)\dd
                 u}\\
&=\frac{(1-p_{\gamma})\frac{2}{\sqrt{2\pi}\sigma_{\Ze}}e^{-\frac{z^2}{2\sigma_{\Ze}^2}}}{
(1-p_{\gamma})\frac{2}{\sqrt{2\pi}\sigma_{\Ze}}e^{-\frac{z^2}{\sigma_{\Ze}^2}}
+p_{\gamma}\sum_{s\in\{-1,1\}}\frac{1}{\sqrt{2\pi}\sigma_{\Ze}}\int_{\gamma_b}^{\infty}e^{-\frac{(sz-u)^2}{2\sigma_{\Ze}^2}}
f_{\tilde{\gamma}}(u)\dd u}\\
&=1-\frac{\frac{1}{2}p_{\gamma}\sum_{s\in\{-1,1\}}\int_{\gamma_b}^{\infty}e^{\frac{2szu-u^2}{2\sigma_{\Ze}^2}}
f_{\tilde{\gamma}}(u)\dd u}{
1-p_{\gamma}+\frac{1}{2}p_{\gamma}\sum_{s\in\{-1,1\}}\int_{\gamma_b}^{\infty}e^{\frac{2szu-u^2}{2\sigma_{\Ze}^2}}
f_{\tilde{\gamma}}(u)\dd u}\\
&\geq 1-\frac{p_{\gamma}}{1-p_{\gamma}}\frac{1}{2}\sum_{s\in\{-1,1\}}\int_{\gamma_b}^{\infty}
\exp\left\{\frac{2szu-u^2}{2\sigma_{\Ze}^2}\right\}f_{\tilde{\gamma}}(u)\dd u
\geq 1-\frac{p_{\gamma}}{1-p_{\gamma}}\exp\left\{\frac{2z\gamma_b-\gamma_b^2}{2\sigma_{\Ze}^2}\right\}.
\end{align*}
The last inequality is because
$\exp\left(\frac{-2zu-u^2}{2\sigma_{\Ze}^2}\right)\le\exp\left(\frac{2zu-u^2}{2\sigma_{\Ze}^2}\right)$. Now,
letting
$K_{\gamma}:=\#\{\gamma_i\neq0,i=1,2,\ldots,\Nb\}$,
we consider a ``good'' set $\mathcal{A}$ that satisfies the following
conditions:
\begin{align}\label{eq:goodset}
K_{\gamma}\le\Nb-r,\text{ }
\frac{\|\deltaS\|_2^2}{\Ns}\le\frac{1}{2}\sigma_{\varepsilon}^2\text{, and}
\max_{1\le i\le\Nb}|\Ze|_i\le 2\sigma_{\Ze}\sqrt{\log\Nb}.
\end{align}
Under ``good'' set $\mathcal{A}$, using similar argument of \textbf{Claim}
(3.3.1) in~\cite{oliveira2025finite}, we can obtain that $\forall
i\in\{1,2,\ldots,r\}$, 
\begin{align*}
|\Zg|_{(i)}\le|\Ze|_{(i+K_{\gamma})}\le 2\sqrt{\sigma_{\varepsilon}^2+\frac{\|\deltaS\|_2^2}{\Ns}}
\sqrt{\log\Nb}\le\sigma_{\varepsilon}\sqrt{6\log\Nb}.
\end{align*}
Thus, we know that when $\gamma_b\geq
4\sqrt{6}\sigma_{\varepsilon}\sqrt{\log\Nb}$, applying Lemma 3.3.1
in~\cite{bhattacharya198418}, 
\begin{align*}
&\Pr\left(\gamma_{[1]}=\ldots\gamma_{[r]}=0\right)
=\Exp\left\{\Pr\left(\gamma_{[1]}=\ldots\gamma_{[r]}=0|\Ds,|\Zg|_{(1)},\ldots|\Zg|_{(\Nb)}\right)\right\}\\
&\geq\int_{\mathcal{A}}\Pr\left(\gamma_{[1]}=\ldots\gamma_{[r]}=0|\Ds,|\Zg|_{(1)},\ldots|\Zg|_{(\Nb)}\right)\dd\Pr\\
&%
\geq\int_{\mathcal{A}}\prod_{i=1}^r\left[1-\frac{p_{\gamma}}{1-p_{\gamma}}
\exp\left\{\frac{2|\Zg|_{(i)}\gamma_b-\gamma_b^2}{2\sigma_{\Ze}^2}\right\}\right]\dd\Pr\\
&\geq\left\{1-\frac{p_{\gamma}}{1-p_{\gamma}}
\exp\left(\frac{2\sqrt{6}\sigma_{\varepsilon}\sqrt{\log\Nb}\gamma_b-\gamma_b^2}{
2\sigma_{\varepsilon}^2}\right)\right\}^r\Pr(\mathcal{A})\\
&\geq\left\{1-\frac{p_{\gamma}}{1-p_{\gamma}}
\exp\left(\frac{\frac{\gamma_b}{2}-\gamma_b^2}{
2\sigma_{\varepsilon}^2}\right)\right\}^r\Pr(\mathcal{A})
=\left(1-\frac{p_{\gamma}}{1-p_{\gamma}}
e^{\frac{-\gamma_b^2}{4\sigma_{\varepsilon}^2}}\right)^r\Pr(\mathcal{A})\\
&=\left(1-\frac{p_{\gamma}}{1-p_{\gamma}}
e^{\frac{-\gamma_b^2}{4\sigma_{\varepsilon}^2}}\right)^r\left\{1-\Pr(\mathcal{A}^c)\right\}
\geq\left(1-\frac{p_{\gamma}}{1-p_{\gamma}}
e^{\frac{-\gamma_b^2}{4\sigma_{\varepsilon}^2}}\right)^r-\Pr(\mathcal{A}^c).
\end{align*}
Now, we only need to calculate the probability of $\mathcal{A}^c$.
Note that $I(\gamma_i\neq 0),i=1,2,\ldots,\Nb$ is a Bernoulli random variable
with probability $p_{\gamma}$ and therefore, $K_{\gamma}=\sum_{i=1}^{\Nb}I(\gamma_i\neq 0)$ is a binomial random
variable. Using Theorem 3 in~\cite{okamoto1959some}, we have that $\forall c>0$, 
\begin{align*}
&\Pr\left\{K_{\gamma}>\Nb(\sqrt{p_{\gamma}}+\sqrt{c})^2\right\}
=\Pr\left\{\sum_{i=1}^{\Nb}I(\gamma_i\neq0)>\Nb(\sqrt{p_{\gamma}}+\sqrt{c})^2\right\}\\
&=\Pr\left\{\sqrt{\frac{Binomial(\Nb,p_{\gamma})}{\Nb}}
>\sqrt{p_{\gamma}}+\sqrt{c}\right\}
<e^{-2c\Nb}.
\end{align*}
Taking $c=\frac{\consp^2 p_{\gamma}}{16}$, we have that with probability at least
$1-e^{-\frac{\consp^2 p_{\gamma}\Nb}{8}}$,
\begin{equation}\label{eq:Kgamma1}
K_{\gamma}\le p_{\gamma}\Nb\left(1+\frac{\consp^2}{16}+\frac{\consp}{2}\right)  
<(1+\consp)p_{\gamma}\Nb,
\end{equation}
where the last inequality is because $\consp<8$. Therefore, as long as
$r\le\{1-(1+\consp)p_{\gamma}\}\Nb$, we have that
$K_{\gamma}\le\Nb-r$ with probability at least $1-e^{-\frac{\consp^2p_{\gamma}\Nb}{8}}$. We also have
that with probability at least $1-\frac{C}{\sqrt{\Ns}}$,
$\|\deltaS\|_2^2\le C\sigma_{\cS}^2\cvsigma$. Therefore, when $\Ns\geq
2C\sigma_{\cS}^2\cvsigma/\sigma_{\varepsilon}^2$, we know that
$\frac{\|\deltaS\|_2^2}{\Ns}\le\frac{1}{2}\sigma_{\varepsilon^2}$ with
probability at least $1-C/\sqrt{\Ns}$. Finally, since $\Ze_i$,
$i=1,2,3,\ldots,\Nb$, are normal distributed with variances $\sigma_{\Ze}^2$,
which implies $\forall\cvsigma_1>0$, $\max_{1\le
  i\le\Nb}|\Ze|_i\le\sigma_{\Ze}\sqrt{2(\log\Nb+\cvsigma_1)}$ with probability at
least $1-e^{-\cvsigma_1}$. Taking $\cvsigma_1=\log\Nb$, we have that $\max_{1\le
  i\le\Nb}|\Ze|_i\le 2\sigma_{\Ze}\sqrt{\log\Nb}$ with
probability at least $1-\Nb^{-1}$, Thus, we have that 
\begin{equation*}
\Pr(\mathcal{A}^c)\le\frac{C}{\sqrt{\Ns}}+e^{-\frac{\consp^2 p_{\gamma}\Nb}{8}}+\frac{1}{\Nb},
\end{equation*}
which completes the proof.

\end{proof}

We highlight the following corollary obtained from the proof of
Lemma~\ref{lem:nobias2}. Assume that we sort $|\Ze|_e$,
$e=1,\ldots,\Nb$, such that
$|\Ze|_{(1)}<|\Ze|_{(2)}<\cdots<|\Ze|_{(\Nb)}$. Define $i[k]$ to be the index
$j$ such that $|\Zg|_{(k)}=|\Ze|_{(j)}$ when $\gamma=0$. Our definition of index
$i[k]$ can be interpreted as follows. Since
$|\Zg|=\varepsilon+\gamma-\frac{\tx\tp\deltaS}{\sqrt{\Ns}}$ can be considered as
a contaminated version of clean sample
$|\Ze|=\varepsilon-\frac{\tx\tp\deltaS}{\sqrt{\Ns}}$. The order statistics of
contaminated samples should correspond to a clean sample in the original clean
samples. Therefore, the index $i[k]$ tells us which clean sample (in the sense
of order statistics) is the $k$-th ordered contaminated sample $|\Zg|_{(k)}$
corresponding to.
\begin{corollary}\label{cor:orderstat}
Under the same Assumptions in Lemma~\ref{lem:nobias2}, we further have that with
probability at least
$1-Crp_{\gamma}e^{\frac{-\gamma_b^2}{4\sigma_{\varepsilon}^2}}-Ce^{-C\{(\consp^2 p_{\gamma}\Nb)\mins\log\Nb\}}$, 
we have that $\forall k=1,2,\ldots,r$ 
\begin{equation*}
|\Ze|_{(i[k])}=|\Zg|_{(k)}\le 2\sigma_{\Ze}\sqrt{\log\Nb}.
\end{equation*}
\end{corollary}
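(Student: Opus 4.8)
The plan is to obtain Corollary~\ref{cor:orderstat} as a direct repackaging of facts already assembled inside the proof of Lemma~\ref{lem:nobias2}; essentially no new probabilistic estimate is needed. I would work on the intersection of two events: the event $\mathcal{G}=\{\gamma_{[1]}=\cdots=\gamma_{[r]}=0\}$ that all $r$ selected concomitants are unbiased, and the ``good'' set $\mathcal{A}$ defined in~\eqref{eq:goodset}. Lemma~\ref{lem:nobias2} already shows $\Pr(\mathcal{G})\geq 1-Crp_{\gamma}e^{-\gamma_b^2/(4\sigma_{\varepsilon}^2)}-Ce^{-C\{(\consp^2 p_{\gamma}\Nb)\mins\log\Nb\}}$, and its proof controls $\Pr(\mathcal{A}^c)$ within the same budget, so the stated probability bound for the corollary is inherited verbatim.

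For the equality $|\Ze|_{(i[k])}=|\Zg|_{(k)}$, I would argue as follows. On $\mathcal{G}$, the data point realizing the $k$-th smallest value $|\Zg|_{(k)}$ (for $k\le r$) has $\gamma=0$, so for that point $\Zg=\varepsilon-\tx\tp\deltaS/\sqrt{\Ns}=\Ze$, whence $|\Zg|_{(k)}$ is literally a value taken by $|\Ze|$. The index $i[k]$ is \emph{defined} as the position of this value in the sorted clean sample $|\Ze|_{(1)}<\cdots<|\Ze|_{(\Nb)}$, so the equality $|\Ze|_{(i[k])}=|\Zg|_{(k)}$ holds by construction once the selected point is known to be unbiased. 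This step is purely definitional and carries no probabilistic content beyond $\mathcal{G}$.

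For the magnitude bound, I would reuse the order-statistic comparison established on $\mathcal{A}$ in the proof of Lemma~\ref{lem:nobias2}. Writing $K_{\gamma}=\#\{e:\gamma_e\neq 0\}$, among the $i+K_{\gamma}$ points with the smallest clean values at least $i$ are unbiased and hence satisfy $|\Zg|=|\Ze|\le|\Ze|_{(i+K_{\gamma})}$, which forces $|\Zg|_{(i)}\le|\Ze|_{(i+K_{\gamma})}$ (the counting argument cited to Claim (3.3.1) of~\cite{oliveira2025finite}). Combining with the good-set conditions $K_{\gamma}\le\Nb-r$ and $\max_{1\le j\le\Nb}|\Ze|_j\le 2\sigma_{\Ze}\sqrt{\log\Nb}$ gives, for every $k\le r$, $|\Zg|_{(k)}\le|\Ze|_{(k+K_{\gamma})}\le\max_{1\le j\le\Nb}|\Ze|_j\le 2\sigma_{\Ze}\sqrt{\log\Nb}$, which is the claimed bound.

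The only thing that needs care is the bookkeeping of the two events and their joint probability, but since both $\mathcal{G}$ and $\mathcal{A}$ are already handled in Lemma~\ref{lem:nobias2} within the stated budget, there is no genuine obstacle here: the ``hard part''---the Bayesian posterior computation of $\Pr(\gamma=0\mid|\Zg|=z,\Ds)$ and the binomial control of $K_{\gamma}$---has already been carried out. I expect the proof to be short, amounting to making explicit the intermediate order-statistic inequality and the definitional identity for $i[k]$ that were used but not isolated in the preceding lemma.
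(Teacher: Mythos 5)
Your proposal is correct and follows essentially the same route as the paper: the paper's proof also works on the event $\{\gamma_{[1]}=\cdots=\gamma_{[r]}=0\}\cap\mathcal{A}$ from Lemma~\ref{lem:nobias2}, invokes the same comparison $|\Ze|_{(i[k])}=|\Zg|_{(k)}\le|\Ze|_{(k+K_{\gamma})}$ (via the counting argument attributed to Claim 3.3.1 of \cite{oliveira2025finite}, with $K_{\gamma}\le\Nb-r$ guaranteeing the index is valid), and concludes with the good-set bound $\max_{1\le i\le\Nb}|\Ze|_i\le 2\sigma_{\Ze}\sqrt{\log\Nb}$, inheriting the probability budget verbatim. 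Your write-up merely makes explicit the definitional identity for $i[k]$ and the event bookkeeping that the paper leaves implicit.
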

\begin{proof}
The result can be derived directly from the fact that under event
$\{\gamma_{[1]}=\gamma_{[2]}=\ldots\gamma_{[r]}=0\}\cap\mathcal{A}$, where
$\mathcal{A}$ is the ``good'' set defined in the proof of
Lemma~\ref{lem:nobias2}, we have that
\begin{equation}\label{eq:Kgamma2}
|\Ze|_{(i[k])}=|\Zg|_{(k)}\le|\Ze|_{(k+K_{\gamma})},
\end{equation}
and $\max_{1\le
  i\le\Nb}|\Ze|_i\le 2\sigma_{\Ze}\sqrt{\log\Nb}$.
\end{proof}

\begin{lemma}\label{lem:concox}
Under Assumption~\ref{asm:gamma} and assume that
$\varepsilon\sim\Nor(0,\sigma_{\varepsilon}^2)$ and $\x\sim\Nor(\0,\bSigma)$,
$\forall\cvsigma>0,0<\consp<8$, as long as $\Ns\gtrsim e^{2\cvsigma}$,
$r\le\{1-(1+\consp)p_{\gamma}\}\Nb$, $\gamma_b\geq
4\sqrt{6}\sigma_{\varepsilon}\sqrt{\log\Nb}$, and there exists a
constant $0<\alpha_3<1$, such that $r\geq 2^{\frac{2}{\alpha_3}}$ and
$r^{\frac{\alpha_3}{2}}\Ns\gtrsim\cvsigma\log\Nb$, we have that with
probability at least
$1-Crp_{\gamma}e^{-\frac{\gamma_b^2}{4\sigma_{\varepsilon}^2}}-Ce^{-C\{(\consp^2
  p_{\gamma}\Nb)\mins\log\Nb\}}-2e^{-Cr^{1-\alpha_3}}$,  
\begin{equation*}
\mineg\left(\frac{1}{r}\sum_{i=1}^r\tx_{[i]}\tx_{[i]}\tp\right)\geq\frac{1}{8}.
\end{equation*}
\end{lemma}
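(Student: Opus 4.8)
The plan is to reduce the claim to the clean-selection event furnished by the preceding results, describe the conditional law of the selected concomitants via the Gaussian structure already encoded in $\bSigma_f,\bmu_c,\bSigma_c$, and then establish the minimum-eigenvalue bound by a covering argument over the sphere combined with a Bernstein-type concentration. First I would condition on the good event $\mathcal{E}$ on which target-guided selection picks only unbiased points, i.e. $\gamma_{[1]}=\cdots=\gamma_{[r]}=0$, and on which the selected residuals are controlled. By Lemma~\ref{lem:nobias2} and Corollary~\ref{cor:orderstat}, $\mathcal{E}$ holds with probability at least $1-Crp_{\gamma}e^{-\gamma_b^2/(4\sigma_{\varepsilon}^2)}-Ce^{-C\{(\consp^2 p_{\gamma}\Nb)\mins\log\Nb\}}$, and on $\mathcal{E}$ each selected concomitant $\tx_{[k]}$ is the covariate paired with a clean order statistic $|\Ze|_{(i[k])}=|\Zg|_{(k)}\le 2\sigma_{\Ze}\sqrt{\log\Nb}$, while on the ``good set'' $\mathcal{A}$ from that proof one has $\|\deltaS\|_2^2/\Ns\le\tfrac12\sigma_{\varepsilon}^2$. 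It then suffices to show $\mineg(\tfrac1r\sum_{i=1}^r\tx_{[i]}\tx_{[i]}\tp)\ge\tfrac18$ off a further event of probability $2e^{-Cr^{1-\alpha_3}}$, after which the three contributions add.

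Next I would exploit normality to pin down the concomitants' conditional distribution. Given $\Ds$ (hence $\deltaS$), for any fixed $\bu\in\mathbb{S}^{d-1}$ the triple $(\varepsilon,\bu\tp\tx,\Ze)$ is centered Gaussian with covariance $\bSigma_f$, so the conditional law of $(\varepsilon,\bu\tp\tx)$ given $\Ze$ is Gaussian with mean $\bmu_c\Ze$ and covariance $\bSigma_c$. Consequently, given $|\Ze|=v$, the scalar $\bu\tp\tx$ is a symmetric two-component Gaussian mixture with mean zero and second moment $\sigma_{\bu}^2+(m_{\bu}v)^2$, where $\sigma_{\bu}^2=1-(\bu\tp\deltaS)^2/(\Ns\sigma_{\Ze}^2)\ge 1-\|\deltaS\|_2^2/(\Ns\sigma_{\varepsilon}^2)\ge\tfrac12$ on $\mathcal{A}$ and $m_{\bu}=-\bu\tp\deltaS/(\sqrt{\Ns}\sigma_{\Ze}^2)$. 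The crucial structural fact is that, conditionally on the order statistics $|\Ze|_{(1)},\ldots,|\Ze|_{(\Nb)}$, the concomitants are independent; hence for fixed $\bu$ the variables $\{(\bu\tp\tx_{[i]})^2\}_{i=1}^r$ are independent, each with conditional mean at least $\sigma_{\bu}^2\ge\tfrac12$ and sub-exponential norm $K\lesssim\sigma_{\bu}^2+m_{\bu}^2\max_{k\le r}|\Ze|_{(i[k])}^2\lesssim 1+\cvsigma\log\Nb/\Ns$, the last step using $(\bu\tp\deltaS)^2\lesssim\sigma_{\cS}^2\cvsigma$ (Lemma~\ref{lem:lemofS}) together with the order-statistic bound.

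Then I would run a $1/2$-net argument: take $\mathcal{N}_{1/2}\subset\mathbb{S}^{d-1}$ with $\#\mathcal{N}_{1/2}\le 5^d$, and for each $\bu\in\mathcal{N}_{1/2}$ apply Bernstein's inequality to the independent sub-exponential summands to obtain $\tfrac1r\sum_{i=1}^r(\bu\tp\tx_{[i]})^2\ge\tfrac14$ off an event of probability $\exp\{-cr\min(1,K^{-2})\}$. The scaling hypothesis $r^{\alpha_3/2}\Ns\gtrsim\cvsigma\log\Nb$ yields $K\lesssim r^{\alpha_3/2}$, hence $K^2\lesssim r^{\alpha_3}$ and $rK^{-2}\gtrsim r^{1-\alpha_3}$; the condition $r\ge 2^{2/\alpha_3}$ keeps us in the binding (sub-Gaussian) Bernstein regime and lets $r^{1-\alpha_3}$ absorb the lower-order terms from the union bound over $\mathcal{N}_{1/2}$. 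Passing from the net to the whole sphere (which also uses a matching upper Bernstein tail to control the operator norm $\maxs_i\tfrac1r\sum_i(\bu\tp\tx_{[i]})^2$, so that the discretization error does not erode the lower bound) then delivers $\mineg(\tfrac1r\sum_i\tx_{[i]}\tx_{[i]}\tp)\ge\tfrac18$ with the stated failure probability $2e^{-Cr^{1-\alpha_3}}$.

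The hard part will be the concentration bookkeeping in the last step. The concomitants are only \emph{conditionally} independent and are not identically distributed, since their conditional means and sub-exponential scales depend on the selected order statistics $|\Ze|_{(i[k])}$ and on the $\deltaS$-induced perturbation; extracting precisely the exponent $r^{1-\alpha_3}$ requires balancing the scale $K$, the fixed deviation level, the net cardinality $5^d$, and the implicit truncation at $2\sigma_{\Ze}\sqrt{\log\Nb}$ against $r$ through \emph{both} scaling conditions simultaneously. A secondary technical point is carrying the minimum-eigenvalue (rather than merely diagonal) bound across the net without a blow-up from the potentially large operator norm, which is why a two-sided control of $\tfrac1r\sum_i(\bu\tp\tx_{[i]})^2$ is needed rather than just the lower tail.
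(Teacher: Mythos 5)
Your proposal is correct and follows essentially the same route as the paper's proof: reduce to the clean-selection/good event supplied by Lemma~\ref{lem:nobias2} and Corollary~\ref{cor:orderstat}, identify the conditional law of $\bu\tp\tx$ given $\Ds$ and $|\Ze|$ as a symmetric two-component Gaussian mixture with second moment $\sigma_{\bu}^2+(m_{\bu}|\Ze|)^2$, exploit the conditional independence of concomitants given the order statistics, and run a sub-exponential concentration bound whose scale is controlled by $\|\deltaS\|_2^2\lesssim\sigma_{\cS}^2\cvsigma$ and $\max_k|\Ze|_{(i[k])}^2\lesssim\log\Nb$ so that the conditions $r^{\alpha_3/2}\Ns\gtrsim\cvsigma\log\Nb$ and $r\ge 2^{2/\alpha_3}$ deliver the exponent $r^{1-\alpha_3}$ and the bound $1/8$ (the paper uses Hanson--Wright with relative deviation $r^{-\alpha_3/2}$, you use Bernstein at a constant deviation level; these are interchangeable here). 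The one genuine difference is your explicit $\varepsilon$-net with two-sided control to pass from fixed $\bu$ to the minimum eigenvalue — a uniformity step the paper's proof leaves implicit (it proves the fixed-$\bu$ bound and asserts the eigenvalue claim) — and your version handles it correctly, up to the minor point that a $1/2$-net is too coarse for a clean operator-norm transfer and a $1/4$-net (or deviation control around the conditional mean matrix) should be used.
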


\begin{proof}

Letting %
$\mu_{\x,c}=-\frac{\bu\tp\deltaS}{\sqrt{\Ns}\sigma_{\Ze}^2}$ and
$\sigma_{\x,c}^2=1-\frac{1}{\Ns}\frac{\|\deltaS\|_2^2}{\sigma_{\Ze}^2}$, we know
that $\forall\bu\in\mathbb{S}^d$, 
\begin{equation*}
f_{\x\tp\bu}^{|\Ds,|\Ze|=z_3}(z)
=\frac{1}{2}\sum_{s\in\{-1,1\}}\frac{1}{\sqrt{2\pi}\sigma_{\x,c}}
e^{-\frac{(z-s\mu_{\x,c}z_3)^2}{2\sigma_{\x,c}^2}}.
\end{equation*}
It is easy to know that
$\Exp\left(\tx\tp\bu\big||\Ze|=z_3,\Ds\right)=0$,
and therefore, $\forall\lambda\in\mathbb{R}$,
\begin{align*}
&\Exp\left(e^{\lambda\tx\tp\bu}\big||\Ze|=z_3,\Ds\right)
=\frac{e^{\lambda\mu_{\x,c}z_3}+e^{-\lambda\mu_{\x,c}z_3}}{2}
e^{\frac{\lambda^2\sigma_{\x,c}^2}{2}}\le\exp\left\{\frac{\lambda^2(\mu_{\x,c}^2z_3^2+\sigma_{\x,c}^2)}{2}\right\}\\
&=\exp\left\{\frac{\lambda^2\left(1-\frac{\|\deltaS\|_2^2}{\Ns\sigma_{\Ze}^2}
+\frac{(\bu\tp\deltaS)^2z_3^2}{\Ns\sigma_{\Ze}^4}\right)}{2}\right\}
:=e^{\frac{\lambda^2K(z_3)}{2}}.
\end{align*}
Therefore, conditional on $\Ds$ and $|\Ze|=z_3$, $\tx\tp\bu$ is a
sub-gaussian random variable and %
\begin{equation*}
\Exp\left\{(\tx\tp\bu)^2\big||\Ze|=z_3,\Ds\right\}=\mu_{\x,c}^2z_3^2+\sigma_{\x,c}^2
=1-\frac{\|\deltaS\|_2^2}{\Ns\sigma_{\Ze}^2}
+\frac{(\bu\tp\deltaS)^2z_3^2}{\Ns\sigma_{\Ze}^4}=K(z_3).
\end{equation*}
Now, we know that given $\Ds$ and $|\Ze|=z_3$, $\tx\tp\bu$ is sub-gaussian with
variance proxy $K(z_3)$, Applying Lemma 3.1 in \cite{bhattacharya198418} and
Hanson-Wright inequality, we have that $\forall t>0$, 
\begin{align*}
&\Pr\left(\left|\frac{1}{r}\sum_{k=1}^r(\bu\tp\tx_{[k]})^2
-\frac{1}{r}\sum_{k=1}^r K(|\Ze|_{(i[k])})\right|>t\Big||\Zg|_{(k)},k=1,\ldots,r,\Ds\right)\\
&=\Pr\left(\left|\frac{1}{r}\sum_{k=1}^r(\bu\tp\tx_{[i[k]]})^2
-\frac{1}{r}\sum_{k=1}^r K(|\Ze|_{(i[k])})\right|>t\Big||\Ze|_{(i[k])},k=1,\ldots,r,\Ds\right)\\
&\le2\exp\left\{-C\min\left(\frac{rt^2}{A^2},\frac{rt}{A}\right)\right\},
\end{align*}
where $C$ is a constant that does not dependent on any sample sizes and 
\begin{equation*}
A=\max_{1\le k\le r}K(\Ze_{(i[k])})=1-\frac{\|\deltaS\|_2^2}{\Ns\sigma_{\Ze}^2}
+\frac{(\bu\tp\deltaS)^2}{\Ns\sigma_{\Ze}^4}\max_{1\le k\le r}|\Ze|_{(i[k])}^2.
\end{equation*}
Letting $t=\frac{A}{r^{\frac{\alpha_3}{2}}}$ and taking expectation on both
sides, we obtain that 
\begin{align*}
&\Pr\left\{\left|\frac{1}{r}\sum_{k=1}^r(\bu\tp\tx_{[k]})^2
-\frac{1}{r}\sum_{k=1}^rK(|\Ze|_{(i[k])})\right|>\frac{1}{r^{\frac{\alpha_3}{2}}}
\left(1-\frac{\|\deltaS\|_2^2}{\Ns\sigma_{\Ze}^2}
+\frac{(\bu\tp\deltaS)^2}{\Ns\sigma_{\Ze}^4}\max_{1\le k\le r}|\Ze|_{(i[k])}^2\right)\right\}\\
&=\Exp\left\{\Pr\left(\left|\frac{1}{r}\sum_{k=1}^r(\bu\tp\tx_{[i[k]]})^2
-\frac{1}{r}\sum_{k=1}^r K(|\Ze|_{(i[k])})\right|>t\Big||\Ze|_{(i[k])},k=1,\ldots,r,\Ds\right)\right\}\\
&\le\Exp\left[2\exp\left\{-C\min\left(r^{1-\alpha_3},r^{1-\frac{\alpha_3}{2}}\right)\right\}\right]
=2e^{-Cr^{1-\alpha_3}}.
\end{align*}
Therefore, we have that with probability at least $1-2e^{-Cr^{1-\alpha_3}}$,
\begin{align*}
&\frac{1}{r}\sum_{k=1}^r(\bu\tp\tx_{[k]})^2\\
&\geq \left(1-\frac{1}{r^{\frac{\alpha_3}{2}}}\right)\left(1-\frac{\|\deltaS\|_2^2}{\Ns\sigma_{\Ze}^2}\right)
+\frac{(\bu\tp\deltaS)^2}{\Ns\sigma_{\Ze}^4}\frac{1}{r}\sum_{k=1}^r|\Ze|_{(i[k])}^2
-\frac{(\bu\tp\deltaS)^2}{\Ns\sigma_{\Ze}^4}\frac{1}{r^{\frac{\alpha_3}{2}}}\max_{1\le k\le r}|\Ze|_{(i[k])}^2\\
&\geq\left(1-\frac{1}{r^{\frac{\alpha_3}{2}}}\right)\left(1-\frac{\|\deltaS\|_2^2}{\Ns\sigma_{\Ze}^2}\right)
-\frac{\|\deltaS\|_2^2}{\Ns\sigma_{\Ze}^4}\frac{1}{r^{\frac{\alpha_3}{2}}}\max_{1\le k\le r}|\Ze|_{(i[k])}^2.
\end{align*}
Since $\|\deltaS\|_2^2\le C\sigma_{\cS}^2\cvsigma$ and $\max_{1\le k\le
  r}|\Ze|_{(i[k])}\le 2\sigma_{\Ze}\sqrt{\log\Nb}$ with probability at
least
$1-Crp_{\gamma}e^{-\frac{\gamma_b^2}{4\sigma_{\varepsilon}^2}}-Ce^{-C\{(\consp^2
  p_{\gamma}\Nb)\mins\log\Nb\}}$
according to Lemma~\ref{lem:nobias2} and Corollary~\ref{cor:orderstat}, we know
that 
\begin{align*}
\frac{\|\deltaS\|_2^2}{\Ns\sigma_{\Ze}^2}
&\le \frac{\sigma_{\cS}^2\cvsigma}{\Ns\sigma_{\varepsilon}^2}\le\frac{1}{2},\\
\frac{\|\deltaS\|_2^2}{\Ns\sigma_{\Ze}^4}\frac{1}{r^{\frac{\alpha_3}{2}}}\max_{1\le k\le r}|\Ze|_{(i[k])}^2
&\le\frac{\sigma_{\cS}^2\cvsigma}{\Ns\sigma_{\varepsilon}^2}
\frac{4\log\Nb}{r^{\frac{\alpha_3}{2}}}\le\frac{1}{8},
\end{align*}
when $\Ns\geq 2C\sigma_{\cS}^2\cvsigma/\sigma_{\varepsilon}^2$ and
$r^{\frac{\alpha_3}{2}}\Ns\geq
32\{\sigma_{\cS}^2\cvsigma\log\Nb\}/\sigma_{\varepsilon}^2$. Taking
$r\geq 2^{\frac{2}{\alpha_3}}$, we obtain that 
\begin{equation*}
\frac{1}{r}\sum_{k=1}^r(\bu\tp\tx_{[k]})^2\geq \frac{1}{2}\left(1-\frac{1}{2}\right)-\frac{1}{8}=\frac{1}{8}.
\end{equation*}
Therefore, we know that
$\mineg\left(\frac{1}{r}\sum_{i=1}^r\tx_{[i]}\tx_{[i]}\tp\right)\geq \frac{1}{8}$.
\end{proof}
Next, we prove a lemma of order statistics of $\chi_1^2$ distribution that we
will use later.
\begin{lemma}\label{lem:chisq}
Letting $X_1,X_2,\ldots,X_n$ are i.i.d. $\chi_1^2$ distributed random variables,
we have that $\forall 1\le r\le n$, the 
order statistics $X_{(1)}<X_{(2)}<\cdots <X_{(r)}$ satisfy that
\begin{equation}\label{eq:chisq1}
\left|\frac{1}{r}\sum_{i=1}^r(1-X_{(i)})\right|\le Q(U_{(r+1)})+\frac{4\cvsigma}{\sqrt{r}}
+\frac{\cvsigma^2\log n}{3r},
\end{equation}
with probability at least $1-2e^{-\frac{\cvsigma^2}{2}}-n^{-1}$, where 
\begin{equation*}
Q(u)=\frac{2\sqrt{F_{\chi_1^2}^{-1}(u)}\exp\{-\frac{1}{2}F_{\chi_1^2}^{-1}(u)\}}{\sqrt{2\pi}u},
\end{equation*}
$U_{(r+1)}=F_{\chi_1^2}(X_{(r+1)})$ follows the same distribution as the
$r+1$-th order statistics of $\{U_i\}_{i=1}^n$, where $U_i\sim U[0,1]$, and
$F_{\chi_1^2}^{-1}(u)$ is the quantile function of $\chi_1^2$ 
distribution. Further, we have that $Q(u)$ is continuous, decreasing,
$Q(0)=1$ and $Q(1)=0$.
\end{lemma}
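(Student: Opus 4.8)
The plan is to recognize $Q(U_{(r+1)})$ as the \emph{exact} conditional bias of the truncated sample mean and then superimpose a Bernstein-type deviation. First I would record the elementary identity obtained by integrating by parts against the $\chi_1^2$ density $f_{\chi_1^2}(x)=(2\pi x)^{-1/2}e^{-x/2}$, namely
\[
\int_0^c x\,f_{\chi_1^2}(x)\,\dd x=F_{\chi_1^2}(c)-\frac{2\sqrt{c}\,e^{-c/2}}{\sqrt{2\pi}},
\]
so that for every $c>0$,
\[
1-\Exp\!\left(X\mid X<c\right)=\frac{F_{\chi_1^2}(c)-\int_0^c x f_{\chi_1^2}(x)\,\dd x}{F_{\chi_1^2}(c)}=\frac{2\sqrt{c}\,e^{-c/2}}{\sqrt{2\pi}\,F_{\chi_1^2}(c)},
\]
where $X\sim\chi_1^2$. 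Substituting $c=F_{\chi_1^2}^{-1}(u)$, so that $F_{\chi_1^2}(c)=u$, identifies $Q(u)=1-\Exp(X\mid X<F_{\chi_1^2}^{-1}(u))$. The stated properties of $Q$ then follow immediately: continuity is inherited from $F_{\chi_1^2}^{-1}$; $Q(0)=1$ and $Q(1)=0$ come from the limits $c\to0$ (where $F_{\chi_1^2}(c)\sim 2\sqrt{c}/\sqrt{2\pi}$, so the ratio tends to $1$) and $c\to\infty$ (where the numerator vanishes); and $Q$ is strictly decreasing because $\Exp(X\mid X<c)$ is strictly increasing in $c$, the truncated laws being stochastically ordered.

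Next I would invoke the Markov property of order statistics: conditional on $X_{(r+1)}=c$, the variables $X_{(1)},\dots,X_{(r)}$ are distributed as the order statistics of $r$ i.i.d.\ draws from $\chi_1^2$ truncated to $[0,c)$, so their sum equals a sum of $r$ i.i.d.\ variables with common mean $\Exp(X\mid X<c)=1-Q(U_{(r+1)})$, where $U_{(r+1)}=F_{\chi_1^2}(X_{(r+1)})$ is the $(r+1)$-th uniform order statistic by the probability integral transform. Consequently
\[
\Exp\!\left(\frac{1}{r}\sum_{i=1}^r(1-X_{(i)})\;\Big|\;X_{(r+1)}\right)=Q(U_{(r+1)}),
\]
and it remains only to bound the conditional fluctuation $\frac1r\sum_{i=1}^r X_{(i)}-\Exp(X\mid X<c)$.

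For this I would apply Bernstein's inequality to the conditionally i.i.d.\ summands, which lie in $[0,c]$ and have conditional variance bounded by the truncated second moment $\Exp(X^2\mid X<c)$; since the latter is increasing in $c$ it is at most $\Exp(X^2)=3$ uniformly. Taking the Bernstein deviation level $s=\cvsigma^2/2$ yields, with conditional probability at least $1-2e^{-\cvsigma^2/2}$,
\[
\left|\frac{1}{r}\sum_{i=1}^r X_{(i)}-\Exp(X\mid X<c)\right|\le\frac{\sqrt{3}\,\cvsigma}{\sqrt{r}}+\frac{c\,\cvsigma^2}{6r}.
\]
I would then control the range on the good event $\{X_{(r+1)}\lesssim\log n\}$: the tail bound $\Pr(\chi_1^2>x)\le e^{-x/2}$ together with a union bound gives $\Pr(\max_i X_i> 2\log n)\le n^{-1}$, hence $c\le 2\log n$ on an event of probability at least $1-n^{-1}$. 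Combining the two displays through the triangle inequality and absorbing the universal constants ($\sqrt3<4$, $2/6=1/3$) produces exactly \eqref{eq:chisq1}.

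The main obstacle I anticipate is the bookkeeping of the two couplings of probability: the Bernstein estimate is conditional on $X_{(r+1)}=c$ while the range event controls $c$ itself, so one must integrate the conditional tail bound against the law of $X_{(r+1)}$ and intersect with the good event so that the failure probabilities $2e^{-\cvsigma^2/2}$ and $n^{-1}$ add without double counting. A secondary technical point is the uniform-in-$c$ bound on the truncated second moment $\Exp(X^2\mid X<c)$, which I would settle by its monotonicity in $c$ (it is a weighted average of $x^2$ over an expanding interval) so that it never exceeds the untruncated value $\Exp(X^2)=3$; this is what keeps the $\cvsigma/\sqrt{r}$ variance term at the stated order even when $c$ is small.
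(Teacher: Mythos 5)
Your overall architecture is the same as the paper's: condition on the $(r+1)$-th order statistic, recognize that $X_{(1)},\dots,X_{(r)}$ are then order statistics of $r$ i.i.d.\ draws from the truncated law, identify the conditional mean deficit with $Q(U_{(r+1)})$, apply Bernstein conditionally, and control the range via the maximum of the $\chi_1^2$ sample. Within that shared skeleton, several of your steps are actually cleaner than the paper's: you obtain $\mu^{(0,b)}=1-Q(b)$ by a direct integration-by-parts identity for the $\chi_1^2$ density (the paper routes this through the variance formula for a symmetric truncated normal, citing an external reference); you bound the conditional variance by $\Exp(X^2\mid X<c)\le\Exp(X^2)=3$ using monotonicity of the truncated second moment (the paper gets the weaker constant $4$ via a sub-Gaussian fourth-moment argument); and you prove that $Q$ is decreasing with $Q(0)=1$, $Q(1)=0$ by stochastic ordering of the truncated laws rather than by the paper's explicit derivative computation. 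These are all correct and would be fine substitutes.

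The genuine slip is quantitative and sits in the range-control step. You claim $\Pr(\max_i X_i>2\log n)\le n^{-1}$, but the union bound with the tail estimate $\Pr(\chi_1^2>x)\le e^{-x/2}$ gives $n\,e^{-\log n}=1$, which is vacuous; to get failure probability $n^{-1}$ you need the threshold $4\log n$, exactly as the paper uses ($X_{(r+1)}\le\max_i X_i\le 4\log n$ w.p.\ at least $1-n^{-1}$). Compounding this, your Bernstein inversion constant $c\cvsigma^2/(6r)$ is not the standard one: inverting $\Pr\left(|\sum_i(Y_i-\Exp Y_i)|\ge t\right)\le 2\exp\left\{-t^2/(2V+2Mt/3)\right\}$ at level $2e^{-\cvsigma^2/2}$ gives $t/r\le\sigma\cvsigma/\sqrt{r}+M\cvsigma^2/(3r)$, i.e.\ constant $1/3$, while the paper's cited version (Oliveira, Theorem 2.3.1) has $\Delta/12$, which is what makes $4\log n\times\cvsigma^2/(12r)=\cvsigma^2\log n/(3r)$ come out exactly. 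With your threshold corrected to $4\log n$, your argument yields $2\cvsigma^2\log n/(3r)$ under your claimed constant, or $4\cvsigma^2\log n/(3r)$ under the standard one — either way strictly larger than the stated third term. This is harmless for every downstream use in the paper (all invocations are through $\lesssim$), but as written your proof does not establish the inequality \eqref{eq:chisq1} with its stated constants, and the $2\log n$ probability claim is false rather than merely lossy.
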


\begin{proof}
Our proof is very similar to the proof of Theorem 5.1.1 in \cite{oliveira2025finite}
but slightly different since $r^{-1}\sum_{i=1}^rX_{(i)}$ can be considered as a one-sided
trimmed mean. However, the proof of Theorem 5.1.1 in \cite{oliveira2025finite} only
covers two-sided trimmed mean. Note that we can represent $X_i$'s and their
order statistics as $X_i=F_{\chi_1^2}^{-1}(U_i)$ and
$X_{(i)}=F_{\chi_1^2}^{-1}(U_{(i)})$, where $\{U_i\}_{i=1}^n$ are i.i.d. random
variables distributed as uniform distribution $U[0,1]$. We denote $\Delta_{n,r}=X_{(r+1)}$. Due to
Corollary 3.2.3 in \cite{oliveira2025finite}, we know that conditional on
$U_{(r+1)}=b$, $X_{(i)},1\le i\le r$ are i.i.d. random variables with mean
$\mu^{(0,b)}$ and standard deviation
$\sigma^{(0,b)}$, where $\forall 0<b<1$ 
\begin{align*}
\mu^{(0,b)}=\Exp\left\{F_{\chi_1^2}^{-1}(U^{(0,b)})\right\},
\{\sigma^{(0,b)}\}^2=\Var\left\{F_{\chi_1^2}^{-1}(U^{(0,b)})\right\}.
\end{align*}
and $U^{(0,b)}$ is a uniform random variable over $(0,b)$. Here, our definition
of $\mu^{(0,b)}$ and $\sigma^{(0,b)}$ are same as
Definition 3.2.2 in~\cite{oliveira2025finite} ans similarly to the proof of
Theorem 5.1.1 in~\cite{oliveira2025finite}, the random variable in the average,
when centered are bounded by $\Delta_{n,r}$ in absolute value. Therefore,
Bernstein's inequality (Theorem 2.3.1 in~~\cite{oliveira2025finite}) can be used
conditionally on $U_{(r+1)}$ to obtain the following inequality, $\forall\cvsigma>0$,  
\begin{align*}
&\Pr\left\{\left|\frac{1}{r}\sum_{i=1}^rX_{(i)}-\mu^{(0,U_{(r+1)})}\right|
>\frac{\cvsigma\sigma^{(0,U_{(r+1)})}}{\sqrt{r}}
+\frac{\cvsigma^2\Delta_{n,r}}{12r}\right\}\\
&=\Exp\left[\Pr\left\{\left|\frac{1}{r}\sum_{i=1}^rX_{(i)}-\mu^{(0,U_{(r+1)})}\right|
>\frac{\cvsigma\sigma^{(0,U_{(r+1)})}}{\sqrt{r}}
+\frac{\cvsigma^2\Delta_{n,r}}{12r}\Big|U_{(r+1)}\right\}\right]
\le 2e^{-\frac{\cvsigma^2}{2}}.
\end{align*}
Thus, with probability at least $1-2e^{-\frac{\cvsigma^2}{2}}$, we have that 
\begin{align*}
&\left|\frac{1}{r}\sum_{i=1}^r(1-X_{(i)})\right|
\le|\mu^{(0,U_{(r+1)})}-1|+\left|\frac{1}{r}\sum_{i=1}^r(X_{(i)}-\mu^{(0,U_{(r+1)})})\right|\\
&\le|\mu^{(0,U_{(r+1)})}-1|+\frac{\cvsigma\sigma^{(0,U_{(r+1)})}}{\sqrt{r}}
+\frac{\cvsigma^2X_{(r+1)}}{12r}.
\end{align*}
Our next goal is to bound $|\mu^{(0,U_{(r+1)})}-1|$,
$\sigma^{(0,U_{(r+1)})}$, and $X_{(r+1)}$, respectively. We start with
calculating $\mu^{(0,b)}$ and $\sigma^{(0,b)}$ for every $b\in(0,1)$.
Denoting random variable $F_{\chi_1^2}^{-1}(U^{(0,b)})=\xi^2$, we know that
$\mu^{(0,b)}=\Exp(\xi^2)$ and
$\{\sigma^{(0,b)}\}^2=\Var(\xi^2)\le\Exp(\xi^4)$. It is easy to calculate the
density function of $F_{\chi_1^2}^{-1}(U^{(0,b)})$, which is 
\begin{equation*}
f_{F_{\chi_1^2}^{-1}(U^{(0,b)})}(z)=\frac{f_{\chi_1^2}(z)}{b}I\left\{z\le F_{\chi_1^2}^{-1}(b)\right\}.
\end{equation*}
Therefore, 
\begin{align*}
&f_{\xi}(z)=\frac{\phi(z)}{b}
I\left\{-\sqrt{F_{\chi_1^2}^{-1}(b)}\le z\le\sqrt{F_{\chi_1^2}^{-1}(b)}\right\}\\
&=\frac{\phi(z)}{\Phi\left\{\sqrt{F_{\chi_1^2}^{-1}(b)}\right\}-\Phi\left\{-\sqrt{F_{\chi_1^2}^{-1}(b)}\right\}}
I\left\{-\sqrt{F_{\chi_1^2}^{-1}(b)}\le z\le\sqrt{F_{\chi_1^2}^{-1}(b)}\right\},
\end{align*}
where $\phi(\cdot)$ is the density function of standard normal distribution and
$\Phi(\cdot)$ is the c.d.f. of standard normal distribution.
Therefore, we know that $\xi$ is distributed as a symmetric truncated normal
distribution. Due to Theorem 2.2 in \cite{barreto2024optimal}, we have that
$\xi$ is sub-gaussian with a 
variance proxy smaller than 1, which implies $\Exp(\xi^4)\le 16$. Applying (2)
in~\cite{barreto2024optimal}, we have that 
\begin{align*}
&\mu^{(0,b)}=\Exp(\xi^2)=\Var(\xi)
=1-\frac{2\sqrt{F_{\chi_1^2}^{-1}(b)}\phi\left(\sqrt{F_{\chi_1^2}^{-1}(b)}\right)}{
\Phi\left(\sqrt{F_{\chi_1^2}^{-1}(b)}\right)-\Phi\left(-\sqrt{F_{\chi_1^2}^{-1}(b)}\right)}\\  
&=1-\frac{2\sqrt{F_{\chi_1^2}^{-1}(b)}\exp\{-\frac{1}{2}F_{\chi_1^2}^{-1}(b)\}}{\sqrt{2\pi}b}
=1-Q(b).
\end{align*}
For $\sigma^{(0,b)}$, we have already proven that
$\sigma^{(0,b)}\le\sqrt{\Exp(\xi^4)}\le 4$. As for $X_{(r+1)}$, since
$X_i\sim\chi_1^2$, it is easy to know that with probability at least
$1-n^{-1}$, we have that $X_{(r+1)}\le\max_{1\le i\le n}X_i\le 4\log
n$. Therefore, with probability at least
$1-2e^{-\frac{\cvsigma^2}{2}}-n^{-1}$, 
\begin{align*}
\left|\frac{1}{r}\sum_{i=1}^r(1-X_{(i)})\right|\le |Q(U_{(r+1)})|+\frac{4\cvsigma}{\sqrt{r}}
+\frac{\cvsigma^2\log n}{3r},
\end{align*}
and the bound in~\eqref{eq:chisq1} holds since $Q(u)\geq 0,\forall u\geq 0$.
To prove $Q(u)$ is decreasing and $Q(0)=1$, $Q(1)=0$, we start with some
properties of the following function:
\begin{equation*}
\tilde{Q}(u)=\frac{2u\phi(u)}{\Phi(u)-\Phi(-u)}
=\frac{ue^{-\frac{u^2}{2}}}{\int_0^ue^{-\frac{t^2}{2}}\dd t}, u\geq 0,
\end{equation*}
We have that 
\begin{equation*}
\frac{\dd\tilde{Q}(u)}{\dd u}
=\frac{e^{-\frac{u^2}{2}}\{(1-u^2)\int_0^ue^{-\frac{t^2}{2}}\dd t-ue^{-\frac{u^2}{2}}\}}{
\left(\int_0^ue^{-\frac{t^2}{2}}\dd t\right)^2}.
\end{equation*}
Letting $g(u)=(1-u^2)\int_0^ue^{-\frac{t^2}{2}}\dd t-ue^{-\frac{u^2}{2}}$, we
have that $g(0)=0$ and the derivative $g'(u)=-2u\int0^ue^{-\frac{t^2}{2}}\dd
t\le 0,\forall u\geq 0$. Thus, $g(u)\le g(0)=0,\forall u\geq 0$.
Therefore, we
have that $\tilde{Q}(u)$ is decreasing. We also have that
\begin{align*}
\tilde{Q}(0)=\lim_{u\to0}\frac{ue^{-\frac{u^2}{2}}}{\int_0^ue^{-\frac{t^2}{2}}\dd
  t}=\lim_{u\to0}(1-u^2)=1\text{, and }\tilde{Q}(\infty)=\lim_{u\to\infty}2e^{-\frac{u^2}{2}}=0,
\end{align*}
which implies $Q(u)$ is decreasing, $Q(0)=\tilde{Q}(0)=1$, and
$Q(1)=\tilde{Q}(\infty)=0$, since $\sqrt{F_{\chi_1^2}^{-1}(u)}$ is an increasing function,
$\sqrt{F_{\chi_1^2}^{-1}(0)}=0$, and $\sqrt{F_{\chi_1^2}^{-1}(1)}=\infty$.

\end{proof}

\begin{lemma}\label{lem:deteps}
Under Assumption~\ref{asm:gamma} and assume that
$\varepsilon\sim\Nor(0,\sigma_{\varepsilon}^2)$ and $\x\sim\Nor(\0,\bSigma)$,
$\forall\cvsigma>0$, $0<\consp<8$, as long as $\Ns\gtrsim e^{2\cvsigma}$,
$r\le\{1-(1+\consp)p_{\gamma}\}\Nb$, $\gamma_b\gtrsim\sqrt{\log\Nb}$, we have that with
probability at least
$1-Ce^{-\cvsigma}-Ce^{-\{C(p_{\gamma}\Nb)\mins\log\Nb\}}-e^{-C\sqrt{r}}$,
\begin{align*}
\frac{1}{r}\left\|\sum_{k=1}^r\varepsilon_{[k]}\tx_{[k]}\tp\right\|_2
\lesssim\frac{\sigma_{\cS}\sqrt{\cvsigma}}{\sqrt{\Ns}}Q\left\{\left(1-\frac{1}{\log\Nb}\right)\frac{r}{\Nb}\right\}
+\frac{\sqrt{\cvsigma}\{(\cvsigma^2\log\Nb)\maxs (p_{\gamma}\Nb)\}}{\sqrt{r}}.
\end{align*} 
\end{lemma}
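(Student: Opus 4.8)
The plan is to condition on the high-probability event on which the selection is ``clean'', then exploit the concomitant representation to split the target sum into a conditional-mean term, which produces the $Q$-factor, and a centered deviation term. First I would invoke Lemma~\ref{lem:nobias2} and Corollary~\ref{cor:orderstat} to restrict attention to the event $G$ on which $\gamma_{[1]}=\cdots=\gamma_{[r]}=0$, on which $\|\deltaS\|_2^2\lesssim\sigma_{\cS}^2\cvsigma$ (via Lemma~\ref{lem:lemofS}), $K_{\gamma}\le(1+\consp)p_{\gamma}\Nb$, and on which the selected sorting values satisfy $|\Ze|_{(i[k])}=|\Zg|_{(k)}\le 2\sigma_{\Ze}\sqrt{\log\Nb}$ for $k=1,\dots,r$; this event carries precisely the stated failure probability. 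On $G$ every selected error decomposes as $\varepsilon_{[k]}=\Ze_{[k]}+\Ns^{-1/2}\tx_{[k]}\tp\deltaS$, so $\frac1r\sum_k\varepsilon_{[k]}\tx_{[k]}\tp$ can be written as its conditional mean given the sorting order statistics and $\Ds$, plus a centered remainder.

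For the mean term I would use the joint normality of $(\varepsilon,\tx,\Ze)$ given $\Ds$ to obtain the identity $\Exp(\varepsilon\tx\mid|\Ze|=z)=\frac{\sigma_{\varepsilon}^2}{\sqrt{\Ns}\sigma_{\Ze}^2}\bigl(1-z^2/\sigma_{\Ze}^2\bigr)\deltaS$, so that the conditional mean of the sum equals $\frac{\sigma_{\varepsilon}^2\deltaS}{\sqrt{\Ns}\sigma_{\Ze}^2}\cdot\frac1r\sum_k(1-X_{(i[k])})$, where $X_{(i[k])}=|\Ze|_{(i[k])}^2/\sigma_{\Ze}^2$ are order statistics of i.i.d.\ $\chi_1^2$ variables. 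Since $\|\deltaS\|_2\lesssim\sigma_{\cS}\sqrt{\cvsigma}$ and $\sigma_{\varepsilon}^2/\sigma_{\Ze}^2\le1$, the norm of this term is $\lesssim\frac{\sigma_{\cS}\sqrt{\cvsigma}}{\sqrt{\Ns}}\bigl|\frac1r\sum_k(1-X_{(i[k])})\bigr|$. I would then write $\frac1r\sum_k(1-X_{(i[k])})=\frac1r\sum_{k=1}^r(1-X_{(k)})-\frac1r\sum_k(X_{(i[k])}-X_{(k)})$, bound the first piece by $Q(U_{(r+1)})$ plus lower-order terms through Lemma~\ref{lem:chisq}, and use a lower deviation bound $U_{(r+1)}\ge(1-1/\log\Nb)(r/\Nb)$ together with the monotonicity of $Q$ to arrive at $Q\{(1-1/\log\Nb)\,r/\Nb\}$. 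The index-shift piece is controlled by $0\le X_{(i[k])}-X_{(k)}\le X_{(k+K_{\gamma})}-X_{(k)}$ and $\max_j X_{(j)}\lesssim\log\Nb$, which yields a remainder of order $p_{\gamma}\Nb\log\Nb/r$ that is absorbed into the second summand.

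For the deviation term I would use that, conditionally on the sorting order statistics and $\Ds$, the concomitants $(\varepsilon_{[k]},\tx_{[k]})$ are independent, and that each $\varepsilon_{[k]}(\bu\tp\tx_{[k]})$ is sub-exponential with scale governed by the range bound $|\Ze|_{(i[k])}\le 2\sigma_{\Ze}\sqrt{\log\Nb}$ and the sub-Gaussianity of $\tx$, mirroring the conditional-moment computations in Lemma~\ref{lem:concox}. A Bernstein bound for each fixed $\bu$, combined with a $1/2$-net over $\mathbb{S}^{d-1}$, then yields a deviation of order $\sqrt{\cvsigma\log\Nb/r}$ up to lower-order terms, with the $e^{-C\sqrt r}$ failure probability arising from this Bernstein/net step. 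Collecting the mean remainder and the deviation, and bounding both by the larger of $\cvsigma^2\log\Nb$ and $p_{\gamma}\Nb$, produces the second summand $\frac{\sqrt{\cvsigma}\{(\cvsigma^2\log\Nb)\maxs(p_{\gamma}\Nb)\}}{\sqrt r}$; adding the probabilities of the conditioning events from Lemma~\ref{lem:nobias2} and Corollary~\ref{cor:orderstat} completes the argument.

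The hard part will be the rigorous treatment of the concomitants: establishing their conditional independence and the exact conditional law of $\varepsilon_{[k]}\tx_{[k]}$ given the sorting order statistics (which rests on the Gaussian conditional-expectation identity above and the concomitant representation), while simultaneously reconciling the selected ranks $i[k]$ with the clean ranks $1,\dots,r$ through $K_{\gamma}$, so that the $\chi_1^2$ order-statistic average of Lemma~\ref{lem:chisq} can be applied and the shift error kept subdominant. Maintaining compatibility of all the conditioning events across the invocations of Lemma~\ref{lem:nobias2}, Corollary~\ref{cor:orderstat}, and Lemma~\ref{lem:chisq} is the principal bookkeeping obstacle.
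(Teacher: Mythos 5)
Your proposal reproduces the paper's architecture almost step for step: condition on the clean-selection event from Lemma~\ref{lem:nobias2} and Corollary~\ref{cor:orderstat}; use the Gaussian conditional identity $\Exp(\varepsilon\tx\mid|\Ze|=z,\Ds)=\sigma_{\varepsilon}^2\sigma_{\Ze}^{-2}(1-z^2/\sigma_{\Ze}^2)\Ns^{-1/2}\deltaS$ (this is exactly the paper's $m(z_3)$); split into conditional mean plus centered deviation using the conditional independence of concomitants given the order statistics (Bhattacharya's lemma, also the paper's tool); reduce the mean term to an average of $\chi_1^2$ order statistics handled by Lemma~\ref{lem:chisq}, then convert $Q(U_{(r+1)})$ to $Q\{(1-1/\log\Nb)r/\Nb\}$ via the lower bound on $U_{(r+1)}$ and monotonicity of $Q$; and control the rank mismatch through $K_{\gamma}$. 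Two localized differences are worth recording. First, for the deviation term the paper does \emph{not} use Bernstein plus a net: it applies Chebyshev's inequality conditionally, which is why the failure probability $C\varsigma^{-1}$ appears in Theorem~\ref{thm:dtrans}. Your conditional Bernstein/Hanson--Wright route is legitimate (conditionally on $|\Ze|_{(i[k])}$ and $\Ds$, each $\varepsilon_{[k]}\bu\tp\tx_{[k]}$ is a quadratic form of a two-dimensional Gaussian, exactly the structure the paper exploits in Lemma~\ref{lem:concox}), and it would in fact give an exponential tail that matches the lemma's stated probability more faithfully than the paper's own proof does.

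Second, there is a small but genuine gap in your treatment of the index shift. You bound $\frac1r\sum_k\bigl(X_{(i[k])}-X_{(k)}\bigr)\le K_{\gamma}\max_jX_{(j)}/r\lesssim K_{\gamma}\log\Nb/r$, and after multiplying by $\sigma_{\cS}\sqrt{\varsigma}/\sqrt{\Ns}$ this remainder is $\sigma_{\cS}\sqrt{\varsigma}\,K_{\gamma}\log\Nb/(r\sqrt{\Ns})$; absorbing it into $\sqrt{\varsigma}\,p_{\gamma}\Nb/\sqrt{r}$ requires $\Ns r\gtrsim\sigma_{\cS}^2(\log\Nb)^2$, which is not implied by the lemma's hypotheses ($\Ns\gtrsim e^{2\varsigma}$, $r\le\{1-(1+c_0)p_{\gamma}\}\Nb$, $\gamma_b\gtrsim\sqrt{\log\Nb}$). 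The paper avoids the extra $\log\Nb$ factor by keeping the centered form: from $X_{(k)}\le X_{(i[k])}\le X_{(k+K_{\gamma})}$ one gets
\begin{equation*}
\sum_{k=1}^r\bigl(1-X_{(i[k])}\bigr)\ \geq\ \sum_{j=1}^{r+K_{\gamma}}\bigl(1-X_{(j)}\bigr)-\sum_{j=1}^{K_{\gamma}}\bigl(1-X_{(j)}\bigr)\ \geq\ \sum_{j=1}^{r+K_{\gamma}}\bigl(1-X_{(j)}\bigr)-K_{\gamma},
\end{equation*}
using only $0\le 1-X_{(j)}\le 1$, so the shift enters as $+K_{\gamma}$ rather than $+K_{\gamma}\log\Nb$, and both chi-square sums (at ranks $r$ and $r+K_{\gamma}$) are then handled by Lemma~\ref{lem:chisq}. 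Replacing your $\max_jX_{(j)}$ bound with this telescoping closes the gap; everything else in your outline goes through.
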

\begin{proof}
For any $\bu\in\mathbb{S}^{d-1}$, we have that the joint distribution of
$(\varepsilon,\tx\tp\bu,\varepsilon-\tx\tp\deltaS/\sqrt{\Ns})$ conditional on $\deltaS$ is a
multivariate normal distribution, which is given as 
\begin{align*}
f^{|\Ds}_{(\varepsilon,\tx\tp\bu,\Ze)}(\z_f)
&=\frac{1}{(2\pi)^{\frac{3}{2}}\sqrt{|\bSigma_f|}}e^{-\frac{1}{2}\z_f\tp\bSigma_f^{-1}\z_f}\\
&=\frac{1}{2\pi\sqrt{|\bSigma_c|}}e^{-\frac{1}{2}(\z_c-\bmu_c
  z_3)\tp\bSigma_c^{-1}(\z_c-\bmu_c z_3)}
\frac{1}{\sqrt{2\pi}\sigma_{\Ze}}e^{-\frac{z_3^2}{2\sigma_{\Ze}^2}}.
\end{align*}
Thus,
\begin{align*}
&f^{|\Ds}_{(\varepsilon,\tx\tp\bu,|\Ze|=z)}(\z_f)
=\frac{1}{2}\sum_{s\in\{-1,1\}}\frac{1}{2\pi\sqrt{|\bSigma_c|}}e^{-\frac{1}{2}(\z_c-s\bmu_c
  z)\tp\bSigma_c^{-1}(\z_c-s\bmu_c z)}
\frac{1}{\sqrt{2\pi}\sigma_{\Ze}}e^{-\frac{(sz)^2}{2\sigma_{\Ze}^2}}\\
&=\left\{\frac{1}{2}\sum_{s\in\{-1,1\}}\frac{1}{2\pi\sqrt{|\bSigma_c|}}e^{-\frac{1}{2}(\z_c-s\bmu_c
  z_3)\tp\bSigma_c^{-1}(\z_c-s\bmu_c z_3)}\right\}\frac{1}{\sqrt{2\pi}\sigma_{\Ze}}e^{-\frac{z^2}{2\sigma_{\Ze}^2}}.
\end{align*}
which implies
\begin{align*}
f^{|\Ds,|\Ze|=z_3}_{(\varepsilon,\tx\tp\bu)}(\z_f)
=\frac{1}{2}\sum_{s\in\{-1,1\}}\frac{1}{2\pi\sqrt{|\bSigma_c|}}e^{-\frac{1}{2}(\z_c-s\bmu_c
  z_3)\tp\bSigma_c^{-1}(\z_c-s\bmu_c z_3)}.
\end{align*}
Therefore, we know that conditional on $\Ds$ and $|\Ze|=z_3$, random vector
$(\varepsilon,\tx\tp\bu)\tp$ follows $\xi\bv$, where $\xi$
is a random variable that taking values $-1$ and $1$ with probability
$\frac{1}{2}$ and $\bv$ follows $\Nor(\bmu_cz_3,\bSigma_c)$.
We have
$\varepsilon\tx\tp\bu=(\varepsilon,\tx\tp\bu)\bm{A}(\varepsilon,\tx\tp\bu)\tp
=\xi\bv\tp\bm{A}\xi\bv=\bv\tp\bm{A}\bv$,
where 
\begin{align*}
\bm{A}=
\begin{pmatrix}
0 & \frac{1}{2}\\
\frac{1}{2} & 0\\
\end{pmatrix}.
\end{align*}
Thus, $\forall\bu\in\mathbb{S}^{d-1}$,
\begin{align*}
&\Exp(\varepsilon\tx\tp\bu||\Ze|=z_3,\Ds)
=\Exp(\bv\tp\bm{A}\bv)
=trace(\bm{A}\bSigma_c)+\bmu_c\tp\bSigma_c\bmu_c z_3^2\\
&=\frac{\sigma_{\varepsilon}^2\bu\tp\deltaS}{\sqrt{\Ns}\sigma_{\Ze}^2}
-\frac{\sigma_{\varepsilon}^2\bu\tp\deltaS z_3^2}{\sqrt{\Ns}\sigma_{\Ze}^4}
=\frac{\sigma_{\varepsilon}^2\bu\tp\deltaS(\sigma_{\Ze}^2-z_3^2)}{\sqrt{\Ns}\sigma_{\Ze}^4}
:=m(z_3)
\end{align*}
and similarly, $\forall\bu\in\mathbb{S}^{d-1}$,
\begin{align*}
&\Var(\varepsilon\tx\tp\bu||\Ze|=z_3,\Ds)=\Var(\bv\tp\bm{A}\bv)
=2trace(\bm{A}\bSigma_c\bm{A}\bSigma_c)+4\bmu_c\tp
\bm{A}\bSigma_c\bm{A}\bmu_c z_3^2\\
&=\frac{\sigma_{\varepsilon}^4(\bu\tp\deltaS)^2}{\Ns\sigma_{\Ze}^4}
+\frac{\|\deltaS\|_2^2}{\Ns\sigma_{\Ze}^2}
-\frac{\|\deltaS\|_2^4}{\Ns\sigma_{\Ze}^4}
+\left\{\frac{(\bu\tp\deltaS)^2}{\Ns\sigma_{\Ze}^6}\left(
\frac{\|\deltaS\|_2^2}{\Ns}-2\sigma_{\varepsilon}^4\right)+\frac{\sigma_{\varepsilon}^6}{\sigma_{\Ze}^6}\right\}z_3^2\\
&\le\left(1+\frac{1}{\sigma_{\varepsilon}^2}\right)\frac{\|\deltaS\|_2^2}{\Ns}+z_3^2,
\end{align*}
as long as $\Ns\geq\frac{\|\deltaS\|_2^2}{2\sigma_{\varepsilon}^4}$.
Applying Lemma 3.1 in~\cite{bhattacharya198418}, we have that 
\begin{align*}
&\Pr\left[\left|\sum_{k=1}^r\left\{\varepsilon_{[k]}\tx_{[k]}\tp\bu
-m(|\Ze|_{(i[k])})\right\}\right|>A_1\right]\\
&=\Exp\left(\Pr\left[\left|\sum_{k=1}^r\left\{\varepsilon_{[k]}\tx_{[k]}\tp\bu
-m(|\Ze|_{(i[k])})\right\}\right|>A_1\Big||\Zg|_{(k)},1\le k\le r,\Ds\right]\right)\\
&=\Exp\left(\Pr\left[\left|\sum_{k=1}^r\left\{\varepsilon_{[i[k]]}\tx_{[i[k]]}\tp\bu
-m(|\Ze|_{(i[k])})\right\}\right|>A_1\Big||\Ze|_{(i[k])},1\le k\le r,\Ds\right]\right)\\
&\le\Exp\left\{\frac{\sum_{k=1}^r\Var\left(\varepsilon\tx\tp\bu\big||\Ze|=|\Ze|_{(i[k])},\Ds\right)}{A_1^2}\right\}
=\frac{1}{\cvsigma},
\end{align*}
where 
\begin{align*}
A_1=\sqrt{\cvsigma\sum_{k=1}^r \left\{|\Ze|_{(i[k])}^2
+\left(1+\frac{1}{\sigma_{\varepsilon}^2}\right)\frac{\|\deltaS\|_2^2}{\Ns}\right\}},
\end{align*}
and the last inequality is due to Chebyshev's inequality. We then bound the value 
\begin{equation*}
\left|\sum_{k=1}^r\left(1-\left|\frac{\Ze_{(i[k])}}{\sigma_{\Ze}}\right|^2\right)\right|.
\end{equation*}
For any fixed $K_{\gamma}$, if $\gamma_b\geq|\Ze|_{(\Nb)}-\Ze_{(1)}$, we know
that all the data points with nonzero
bias terms satisfy $|\Zg|_i>|\Ze|_{(\Nb)}$, $\forall i\in\{j:\gamma_j\neq
0\}$. In this case, it is equivalent to say that we put all the ``contaminated''
points $|\Zg|_i$, $i\in\{j:\gamma_j\neq 0\}$ after the value
$|\Ze|_{(\Nb)}$. Therefore, as long as $r<\Nb-K_{\gamma}$, we know that for any
``clean'' sample $|\Ze|_{(i)}$, $i=1,2,\ldots,r$, it is either be remained at the same
position or be put after $|\Ze|_{(\Nb)}$. Note that under this circumstance, we
know that as long as $r<\Nb-K_{\gamma}$, the selected points are data points
with no bias terms. Therefore, $\forall k=1,2,\ldots,r$, we have that
$|\Zg|_{(k)}=|\Ze|_{(i[k])}$. Using similar arguments as in~Claim 3.3.1
in~\cite{oliveira2025finite}, we can conclude that under the scenario we
considered,
$|\Ze|_{(k)}\le|\Ze|_{(i[k])}=|\Zg|_{(k)}\le|\Ze|_{(k+K_{\gamma})}$, $\forall
k=1,2,\ldots,r$. Therefore, 
\begin{align*}
\sum_{k=1}^r\left(1-\frac{|\Ze|_{(i[k])}^2}{\sigma_{\Ze}^2}\right)
=\sum_{k=1}^r\left(1-\frac{|\Zg|_{(k)}^2}{\sigma_{\Ze}^2}\right)
\le \sum_{k=1}^r\left(1-\frac{|\Ze|_{(k)}^2}{\sigma_{\Ze}^2}\right)
\end{align*}
and
\begin{align*}
&\sum_{k=1}^r\left(1-\frac{|\Zg|_{(k)}^2}{\sigma_{\Ze}^2}\right)
\geq\sum_{k=K_{\gamma}+1}^{r+K_{\gamma}+1}\left(1-\frac{|\Zg|_{(k)}^2}{\sigma_{\Ze}^2}\right)
\geq\sum_{k=1}^{K_{\gamma}}\frac{|\Ze|_{(k)}^2}{\sigma_{\Ze}^2}-K_{\gamma}
+\sum_{k=1}^{r+K_{\gamma}+1}\left(1-\frac{|\Ze|_{(k)}^2}{\sigma_{\Ze}^2}\right)\\
&\geq-K_{\gamma}+\sum_{k=1}^{r+K_{\gamma}+1}\left(1-\frac{|\Ze|_{(k)}^2}{\sigma_{\Ze}^2}\right).
\end{align*}
Thus, we know that 
\begin{align*}
\left|\sum_{k=1}^r\left(1-\left|\frac{\Ze_{(i[k])}}{\sigma_{\Ze}}\right|^2\right)\right|
\le\max\left\{\left|\sum_{k=1}^{r+K_{\gamma}+1}\left(1-\frac{|\Ze|_{(k)}^2}{\sigma_{\Ze}^2}\right)\right|,
\left|\sum_{k=1}^r\left(1-\frac{|\Ze|_{(k)}^2}{\sigma_{\Ze}^2}\right)\right|\right\}+K_{\gamma}.
\end{align*}
Now, since conditional on $\Ds$, $\frac{|\Ze|^2}{\sigma_{\Ze}^2}\sim\chi_1^2$, we
know that, applying Lemma~\ref{lem:chisq} 
\begin{align*}
\Pr\left\{\left|\frac{1}{r}\sum_{k=1}^r\left(1-\frac{|\Ze|_{(k)}^2}{\sigma_{\Ze}^2}\right)\right|\geq
  A_2\right\}
&=\Exp\left[\Pr\left\{\left|\frac{1}{r}\sum_{k=1}^r\left(1-\frac{|\Ze|_{(k)}^2}{\sigma_{\Ze}^2}\right)\right|\geq
  A_2\Big|\Ds\right\}\right]\\
  &\le 1-2e^{-\frac{\cvsigma^2}{2}}-\Nb^{-1},
\end{align*}
where 
\begin{align*}
A_2=Q(U_{(r+1)})+\frac{4\cvsigma}{\sqrt{r}}+\frac{\cvsigma^2\log\Nb}{3r}.
\end{align*}
Therefore, 
\begin{align*}
\left|\sum_{k=1}^r\left(1-\frac{|\Ze|_{(k)}^2}{\sigma_{\Ze}^2}\right)\right|
\le rQ(U_{(r+1)})+4\sqrt{r}\cvsigma+\frac{\cvsigma^2\log\Nb}{3}
\end{align*}
Similarly, we have that with probability at least
$1-2e^{-\frac{\cvsigma^2}{2}}-\Nb^{-1}$,
\begin{align*}
&\left|\sum_{k=1}^{r+K_{\gamma}+1}\left(1-\frac{|\Ze|_{(k)}^2}{\sigma_{\Ze}^2}\right)\right|
\le(r+K_{\gamma}+1)Q(U_{(r+K_{\gamma}+1)})+4\sqrt{r+K_{\gamma}}\cvsigma+\frac{\cvsigma^2\log\Nb}{3}\\
&\le rQ(U_{(r+1)})+4\sqrt{r+K_{\gamma}+1}\cvsigma+\frac{\cvsigma^2\log\Nb}{3}+(K_{\gamma}+1)\\
&\le rQ(U_{(r+1)})+4\sqrt{r}\cvsigma+\frac{\cvsigma^2\log\Nb}{3}+(K_{\gamma}+1)
+4\sqrt{K_{\gamma}+1}\cvsigma\\
&\le rQ(U_{(r+1)})+4\sqrt{r}\cvsigma+\frac{\cvsigma^2\log\Nb}{3}
+3(K_{\gamma}+1)+2\cvsigma^2,
\end{align*}
where the second inequality is because $Q(u)$ is decreasing and $Q(u)\le 1$.
Therefore,
\begin{align*}
\left|\sum_{k=1}^r\left(1-\left|\frac{\Ze_{(i[k])}}{\sigma_{\Ze}}\right|^2\right)\right|
\le rQ(U_{(r+1)})+4\sqrt{r}\cvsigma+\frac{\cvsigma^2\log\Nb}{3}
+4K_{\gamma}+3+2\cvsigma^2:=A_3.
\end{align*}
Hence,
\begin{align*}
&\frac{1}{r}\left|\sum_{k=1}^r\varepsilon_{[k]}\tx_{[k]}\tp\bu\right|
\le \frac{1}{r}\left|\sum_{k=1}^r\left\{\varepsilon_{[k]}\tx_{[k]}\tp\bu
-m(|\Ze|_{(i[k])})\right\}\right|+\frac{1}{r}\left|\sum_{k=1}^rm(|\Ze|_{i[k]})\right|\\
&\le\frac{1}{r}\left|\sum_{k=1}^r\left\{\varepsilon_{[k]}\tx_{[k]}\tp\bu
-m(|\Ze|_{(i[k])})\right\}\right|
+\frac{1}{r}\left|\sum_{k=1}^r\frac{\sigma_{\varepsilon}^2\bu\tp\deltaS(\sigma_{\Ze}^2-|\Ze_{(i[k])}|^2)}{\sqrt{\Ns}\sigma_{\Ze}^4}\right|\\
&\le\frac{1}{r}A_1+\frac{1}{r}A_3\\
&\le\frac{\sqrt{\cvsigma}}{\sqrt{r}}\left\{2\sigma_{\Ze}\sqrt{\log\Nb}
+\sqrt{1+\frac{1}{\sigma_{\varepsilon}^2}}\frac{\|\deltaS\|_2}{\sqrt{\Ns}}\right\}\\
&+\frac{\|\deltaS\|_2}{\sqrt{\Ns}}Q(U_{(r+1)})+\frac{4\cvsigma\|\deltaS\|_2}{\sqrt{r\Ns}}
+\frac{\cvsigma^2\log\Nb\|\deltaS\|_2}{3r\sqrt{\Ns}}
+\frac{(4K_{\gamma}+3+2\cvsigma^2)\|\deltaS\|_2}{r\sqrt{\Ns}}\\
&\lesssim \frac{\|\deltaS\|_2}{\sqrt{\Ns}}Q(U_{(r+1)})
+\frac{\sqrt{\cvsigma}\sqrt{\log\Nb}}{\sqrt{r}}
+\frac{\{\cvsigma^2\log\Nb\maxs K_{\gamma}\}\|\deltaS\|_2}{r\sqrt{\Ns}}.
\end{align*}
Remember that our bound above are derived under assumptions
$\Ns\geq\frac{\|\deltaS\|_2^2}{2\sigma_{\varepsilon}^2}$, $\gamma_b\geq
|\Ze|_{(\Nb)}-\Ze_{(1)}$ and $r<\Nb-K_{\gamma}$. We consider the following
``good'' set $\mathcal{A}$ that satisfies 
\begin{align*}
&\Ns\geq\frac{\|\deltaS\|_2^2}{2\sigma_{\varepsilon}^2}, \gamma_b\geq
|\Ze|_{(\Nb)}-\Ze_{(1)}, r<\Nb-K_{\gamma}\\
&\|\deltaS\|_2^2\le C\sigma_{\cS}^2\cvsigma,K_{\gamma}\le (1+\consp)p_{\gamma}\Nb,%
U_{(r+1)}\geq\frac{r}{2\Nb}.
\end{align*} 
Since $\|\deltaS\|_2^2\le C\sigma_{\cS}^2\cvsigma$ with probability at least
$1-\frac{C}{\sqrt{\Ns}}$, it is easy to prove $\Ns\geq 
\|\deltaS\|_2^2/2\sigma_{\varepsilon}^2$ with probability at least
$1-\frac{C}{\sqrt{\Ns}}$ by taking $\Ns\geq
C\sigma_{\cS}^2\cvsigma/2\sigma_{\varepsilon}^2$. Similarly, since~\eqref{eq:Kgamma1}
satisfies with probability at least $1-e^{-\frac{\consp^2 p_{\gamma}\Nb}{8}}$, we also know that
$r<\Nb-K_{\gamma}$ with probability at least $1-e^{-\frac{\consp^2 p_{\gamma}\Nb}{8}}$ by taking
$r\le\{1-(1+\consp)p_{\gamma}\}\Nb$. Since
$\Ze_i,i=1,2,\ldots,\Nb$ are normally distributed, it is easy to know that
$|\Ze|_{(\Nb)}-\Ze_{(1)}\le 4\sigma_{\Ze}\sqrt{\log\Nb}$ with
probability at least $1-2\Nb^{-1}$. Therefore, $\gamma_b\geq
|\Ze|_{(\Nb)}-\Ze_{(1)}$ with probability at least $1-2\Nb^{-1}$ if
$\gamma_b\gtrsim\sqrt{\log\Nb}$. Due to Lemma A.1.1
in~\cite{oliveira2025finite}, we also know that $\forall\cvsigma_1>0$, with
probability at least $1-e^{-2\cvsigma_1}$, we have that 
\begin{align*}
U_{(r+1)}\geq\frac{(\sqrt{r+1}-\sqrt{\cvsigma_1})^2}{\Nb}.
\end{align*}
Taking $\cvsigma_1=\frac{r+1}{(2\log\Nb)^2}$, we have that with probability at least
$1-e^{-(r+1)/\{2(\log\Nb)^2\}}$, %
\begin{equation*}
U_{(r+1)}\geq\left\{1+\frac{1}{4(\log\Nb)^2}-\frac{1}{\log\Nb}\right\}\frac{r+1}{r}\frac{r}{\Nb}
\geq\left(1-\frac{1}{\log\Nb}\right)\frac{r}{\Nb}.
\end{equation*}
Therefore, with probability at least
$1-\frac{C}{\sqrt{\Ns}}-\frac{3}{\Nb}-e^{-\frac{\consp^2p_{\gamma}\Nb}{8}}-e^{-(r+1)/\{2(\log\Nb)^2\}}-2e^{-\frac{\cvsigma^2}{2}}-\cvsigma^{-1}$,
we have that 
\begin{align*}
\frac{1}{r}\left|\sum_{k=1}^r\varepsilon_{[k]}\tx_{[k]}\tp\bu\right|
\lesssim\frac{\sigma_{\cS}\sqrt{\cvsigma}}{\sqrt{\Ns}}Q\left\{\left(1-\frac{1}{\log\Nb}\right)\frac{r}{\Nb}\right\}
+\frac{\sqrt{\cvsigma}\{(\cvsigma^2\log\Nb)\maxs (p_{\gamma}\Nb)\}}{\sqrt{r}}.
\end{align*} 
\end{proof}

Now, we prove Theorem~\ref{thm:dtrans} based on Lemma~\ref{lem:nobias2},
Lemma~\ref{lem:concox}, and Lemma~\ref{lem:deteps}.

\begin{proof}[\textbf{Proof of Theorem~\ref{thm:dtrans} (I)}]

Remember that for target guided method the weighting matrix $\W_{\cBs}=\I_{\Nb}$
and therefore, we know that according to~\eqref{eq:fpgamma}, we know that
$\hgamma_{\cBs}$ is the fix point of equation:
\begin{align}\label{eq:fpgamma2}
\hgamma_{\cBs}
=\Theta\left\{(\I-\Xbs\V^{-1}\Xbs\tp)\ybs
+\Xbs\V^{-1}\Xbs\tp\hgamma_{\cBs}-\Xbs\V^{-1}\Xs\tp\ys;\lambda\right\},
\end{align}
and $\hbeta=\V^{-1}\Xs\tp\ys+\V^{-1}\Xbs\tp(\ybs-\hgamma_{\cBs})$, where
$\cBs=\cBd$. Since 
\begin{align*}
&\Xbs\V^{-1}\Xs\tp\ys=\Xbs\V^{-1}\Xs\tp\Xs\hbeta_{\cS}\\
&=\Xbs\V^{-1}(\V-\Xbs\tp\Xbs)\hbeta_{\cS}
=(\I-\Xbs\V^{-1}\Xbs\tp)\Xbs\hbeta_{\cS},
\end{align*}
we have that 
\begin{equation*}
\hgamma_{\cBs}
=\Theta_1\left\{(\I-\Xbs\V^{-1}\Xbs\tp)(\ybs-\Xbs\hbeta_{\cS})
+\Xbs\V^{-1}\Xbs\tp\hgamma_{\cBs};\lambda\right\}.
\end{equation*}
Now, we prove that the all the eigen values of matrix $\Xbs\V^{-1}\Xbs\tp$ are in
the range of $[0,1]$. It is easy to observe that $\Xbs\V^{-1}\Xbs\tp$ is
non-negative and thus we only prove that the eigen values of
$\Xbs\V^{-1}\Xbs\tp$ are less than 1. We know that, by definition, the eigen
values of $\Xbs\V^{-1}\Xbs\tp$ are the roots of 
\begin{align*}
&det\left\{\lambda\I_{\Nb}-\Xbs\V^{-1}\Xbs\tp\right\}
=\lambda^{\Nb-d}det\left\{\lambda\I_d-\V^{-1}\Xbs\tp\Xbs\right\}\\
&=\lambda^{\Nb-d}(\lambda-1)^{d-\Ns}det\left\{(\lambda-1)\I_{\Ns}+\Xs\V^{-1}\Xs\tp\right\},
\end{align*}
which is strictly positive if $\lambda>0$ and therefore, all the eigen values of
matrix $\Xbs\V^{-1}\Xbs\tp$ are in the range of $[0,1]$. This also implies that
all the eigen values of matrix $\I-\Xbs\V^{-1}\Xbs\tp$ are in the range of
$[0,1]$ and thus $\|\I-\Xbs\V^{-1}\Xbs\tp\|_2\le 1$. Since
\begin{align*}
&\|(\I-\Xbs\V^{-1}\Xbs\tp)(\ybs-\Xbs\hbeta_{\cS})\|_{\infty}
\le\|\I-\Xbs\V^{-1}\Xbs\tp\|_{\infty}\|\ybs-\Xbs\hbeta_{\cS}\|_{\infty}\\
&\le\sqrt{d}\|\I-\Xbs\V^{-1}\Xbs\tp\|_2\|\ybs-\Xbs\hbeta_{\cS}\|_{\infty}
\le\sqrt{d}\|\ybs-\Xbs\hbeta_{\cS}\|_{\infty}=\sqrt{d}|\Zg|_{(r)}.
\end{align*}
Applying Lemma~\ref{lem:nobias2} and Corollary~\ref{cor:orderstat}, we know that
on $\cBd$, $\gamma_{[1]}=\gamma_{[2]}=\cdots=\gamma_{[r]}=0$ and
$|\Zg|_{(r)}\lesssim\sqrt{\log\Nb}$ with probability
at least
$1-Crp_{\gamma}e^{\frac{-\gamma_b^2}{4\sigma_{\varepsilon}^2}}-Ce^{-\{(p_{\gamma}\Nb)\mins\log\Nb\}}$,
Due to the definition of $\Theta_1(\cdot)$, we know that if
$\lambda>\frac{|\Zg|_{(r)}}{\sqrt{d}}$, we have that $\hgamma_{\cBs}=\0$ is the fix
point of~\eqref{eq:fpgamma2}. Therefore, since $\lambda\gtrsim\sqrt{\log\Nb}$,
we know that $\lambda\gtrsim\frac{|\Zg|_{(r)}}{\sqrt{d}}$ with probability at
least
$1-Crp_{\gamma}e^{\frac{-\gamma_b^2}{4\sigma_{\varepsilon}^2}}-Ce^{-\{(p_{\gamma}\Nb)\mins\log\Nb\}}$
and 
\begin{align*}
&\hbeta_{\dtm}=\V^{-1}\Xs\tp\ys+\V^{-1}\Xbs\tp(\ybs-\hgamma_{\cBs})\\
&=\left(\sumcs\x_t\x_t\tp+\sum_{k=1}^r\x_{[k]}\x_{[k]}\tp\right)^{-1}
\left(\sumcs\x_t\x_t\tp\tbeta+\sumcs\varepsilon_t\x_t
+\sum_{k=1}^r\x_{[k]}\x_{[k]}\tp\tbeta+\sum_{k=1}^r\varepsilon_{[k]}\x_{[k]}\right),
\end{align*}
which implies 
\begin{align*}
\bSigma^{\frac{1}{2}}\hbeta-\bSigma^{\frac{1}{2}}\tbeta
=\left(\sumcs\tx_t\tx_t\tp+\sum_{k=1}^r\tx_{[k]}\tx_{[k]}\tp\right)^{-1}
\left(\sumcs\varepsilon_t\tx_t+\sum_{k=1}^r\varepsilon_{[k]}\tx_{[k]}\right).
\end{align*}
Due to Lemma~\ref{lem:lemofS} and Lemma~\ref{lem:concox}, we know that with
probability at least
$1-Crp_{\gamma}e^{-\frac{\gamma_b^2}{4\sigma_{\varepsilon}^2}}
-Ce^{-\{(p_{\gamma}\Nb)\mins\log\Nb\}}-2e^{-Cr^{1-\alpha_3}}-\frac{C}{\sqrt{\Ns}}$,
we have that 
\begin{equation*}
\V=\frac{1}{N}\left(\sumcs\tx_t\tx_t\tp
+\sum_{i=1}^r\tx_{[i]}\tx_{[i]}\tp\right)\geq\frac{1}{4},
\end{equation*}
where $N=\Ns+\Nb$. Now, combining Lemma~\ref{lem:deteps}, we complete the proof.

\end{proof}

\begin{proof}[\textbf{Proof of Theorem~\ref{thm:dtrans} (II)}]

Due to Lemma~\ref{lem:nobias2}, we have that with probability
at least $1-Crp_{\gamma}e^{\frac{-\gamma_b^2}{4\sigma_{\varepsilon}^2}}-Ce^{-\{(p_{\gamma}\Nb)\mins\log\Nb\}}$,
$\gamma_{[1]}=\gamma_{[2]}=\cdots=\gamma_{[r]}=0$.
From the proof of Theorem~\ref{thm:rtrans2} (see Section~\ref{sec:prf-rtrans2}),
we can know that for $\hbetad$, we have that
\begin{align*}
\bSigma^{\frac{1}{2}}\hbetad-\bSigma^{\frac{1}{2}}\tbeta%
=\left(\sumcs\tx_t\tx_t\tp
+\frac{\lambda}{1+\lambda}\sum_{i=1}^r\tx_{[i]}\tx_{[i]}\tp\right)^{-1}
\left(\sumcs\varepsilon_t\tx_t
+\frac{\lambda}{1+\lambda}\sum_{i=1}^r\varepsilon_{[i]}\tx_{[i]}\right).
\end{align*}
Due to Lemma~\ref{lem:lemofS} and Lemma~\ref{lem:concox}, we know that with
probability at least
$1-Crp_{\gamma}e^{-\frac{\gamma_b^2}{4\sigma_{\varepsilon}^2}}
-Ce^{-\{(p_{\gamma}\Nb)\mins\log\Nb\}}-2e^{-Cr^{1-\alpha_3}}-\frac{C}{\sqrt{\Ns}}$,
we have that
\begin{equation*}
\V_{\lambda}=\frac{1}{N_{f\lambda}}\left(\sumcs\tx_t\tx_t\tp
+\frac{\lambda}{1+\lambda}\sum_{i=1}^r\tx_{[i]}\tx_{[i]}\tp\right)\geq\frac{1}{4},
\end{equation*}
where $N_{f\lambda}=\Ns+c_{\lambda}\}r$ and
$c_{\lambda}=\lambda/(1+\lambda)$. Now, combining Lemma~\ref{lem:deteps}, we
complete the proof.

\end{proof}

\begin{proof}[\textbf{Proof of Corollary~\ref{cor:orderimprove}}]
Due to Proposition 4.1.1 in \cite{oliveira2025finite}, and the definition in of
$Q(b)$ in the proof of Lemma~\ref{lem:chisq}, we have that $\forall b>0$
\begin{align*}
  Q(b)=1-\mu^{(0,b)}=|\mu^{(0,b)}-\mu|
  &=-\frac{\int_0^1\{F^{-1}(u)-\mu\}I(u<b)\dd u}{1-b}\\
  &\le\Exp(|X_1-\mu|^q)\frac{(1-b)}{b}(1-b)^{-\frac{1}{q}}.
\end{align*}
Therefore, the result holds by plugging in $b=(1-\frac{1}{\log\Nb})\frac{r}{\Nb}$, which implies, 
\begin{align*}
Q(b)\lesssim\frac{1-\left(1-\frac{1}{\log\Nb}\right)\frac{r}{\Nb}}{
\left(1-\frac{1}{\log\Nb}\right)\frac{r}{\Nb}}\left\{1-\left(1-\frac{1}{\log\Nb}\right)\frac{r}{\Nb}\right\}^{-\frac{1}{q}}
=\left(\Nb-r+\frac{r}{\log\Nb}\right)^{1-\frac{1}{q}}\frac{\Nb^{\frac{1}{q}}}{r-\frac{r}{\log\Nb}}.
\end{align*}

\end{proof}

\subsection{Calculation of degree of freedom in Section~\ref{sec:compissue}}

For $\ell_2$ penalty, denoting $\V_{\cS}=\X_{\cS}\tp\X_{\cS}$,
$\V_{\cBs}=\X_{\cBs}\tp\W_{\cBs}\X_{\cBs}$, $c_{\lambda}=\lambda/(1+\lambda)$,
$\V_{\lambda}=\V_{\cS}+c_{\lambda}\V_{\cBs}$,  we have that
$\hbeta=\V_{\lambda}^{-1}\left(\Xs\tp\ys+c_{\lambda}\Xbs\tp\W_{\cBs}\ybs\right)$ and
$\hgamma_{\cBs}=(1-c_{\lambda})(\ybs-\Xbs\hbeta)$.  
Therefore, We have that
\begin{align*}
&\begin{pmatrix}
\hat{\bm{y}}_{\cS}\\
\hat{\bm{y}}_{\cBs}\\
\end{pmatrix}
=\bm{M} 
\begin{pmatrix}
\ys\\
\ybs\\
\end{pmatrix},\\
&\text{ where }
\bm{M}
=
\begin{pmatrix}
\Xs\V_{\lambda}^{-1}\Xs\tp
& c_{\lambda}\Xs\V_{\lambda}^{-1}\Xbs\tp\W_{\cBs}\\
c_{\lambda}\W_{\cBs}\Xb\V_{\lambda}^{-1}\Xs\tp
& c_{\lambda}\Xbs\V_{\lambda}^{-1}c_{\lambda}\Xbs\tp\W_{\cBs}+(1-c_{\lambda})\I_{\Nbs}
\end{pmatrix}.
\end{align*}
There for the degree of freedom is defined as 
\begin{align*}
&\dfr%
=\text{tr}\left(\V_{\lambda}^{-1}\bm{V}_{\cS}\right)
+c_{\lambda}\text{tr}\left\{\V_{\lambda}^{-1}
c_{\lambda}\bm{V}_{\cBs}\right\}+(1-c_{\lambda})\text{tr}(\I_{\Nbs})\\
&=(1-c_{\lambda})\Nbs+c_{\lambda}d
+(1-c_{\lambda})\text{tr}\left(\V_{\lambda}^{-1}\V_{\cS}\right)
\end{align*}
Since the trace of $\V_{\lambda}^{-1}\V_{\cS}$ equals the trace of
$\left(\I_d+c_{\lambda}\V_{\cS}^{-\frac{1}{2}}\V_{\cBs}\V_{\cS}^{-\frac{1}{2}}\right)^{-1}$,
denoting the eigen values of
$\bm{V}_{\cS}^{\frac{1}{2}}\bm{V}_{\cBs}^{-1}\bm{V}_{\cS}^{\frac{1}{2}}$ as $q_i,i=1,...,d$, we
have  
\begin{align*}
\dfr=\text{tr}(\bm{M})
&=(1-c_{\lambda})\Nb+c_{\lambda}d
+(1-c_{\lambda})\sum_{i=1}^d\frac{1}{1+c_{\lambda}\frac{1}{q_i}}\\
&=\frac{\Nb}{1+\lambda}+\frac{\lambda}{1+\lambda}d
+\sum_{i=1}^d\frac{q_i}{\lambda+(1+\lambda)q_i}.
\end{align*}

\subsection{Additional simulation results}\label{sec:addsimu}

In this section, we present additional simulation results mentioned in Section~\ref{sec:simu}.
We first present additional results for simulated data using $\ell_1$ penalization.
The eBias and eVar of
Case ``SP'' and Case ``HT'' are presented in the following Figure~\ref{fig:bias-var-apdx}.
\begin{figure}[H]
  \centering
  \begin{subfigure}{0.205\textwidth}
    \includegraphics[width=\textwidth]{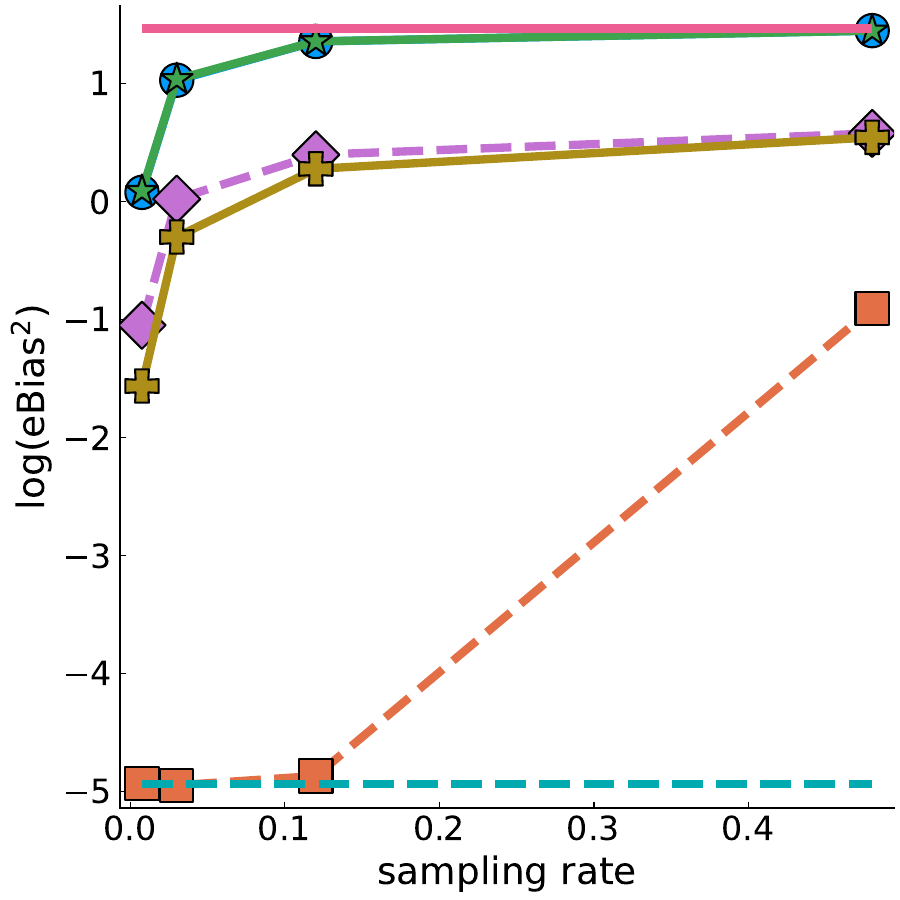}
    \caption*{eBias$^2$ of Figure~\ref{fig:mseL1}(\subref{sfig:a})}
  \end{subfigure}
  \begin{subfigure}{0.205\textwidth}
    \includegraphics[width=\textwidth]{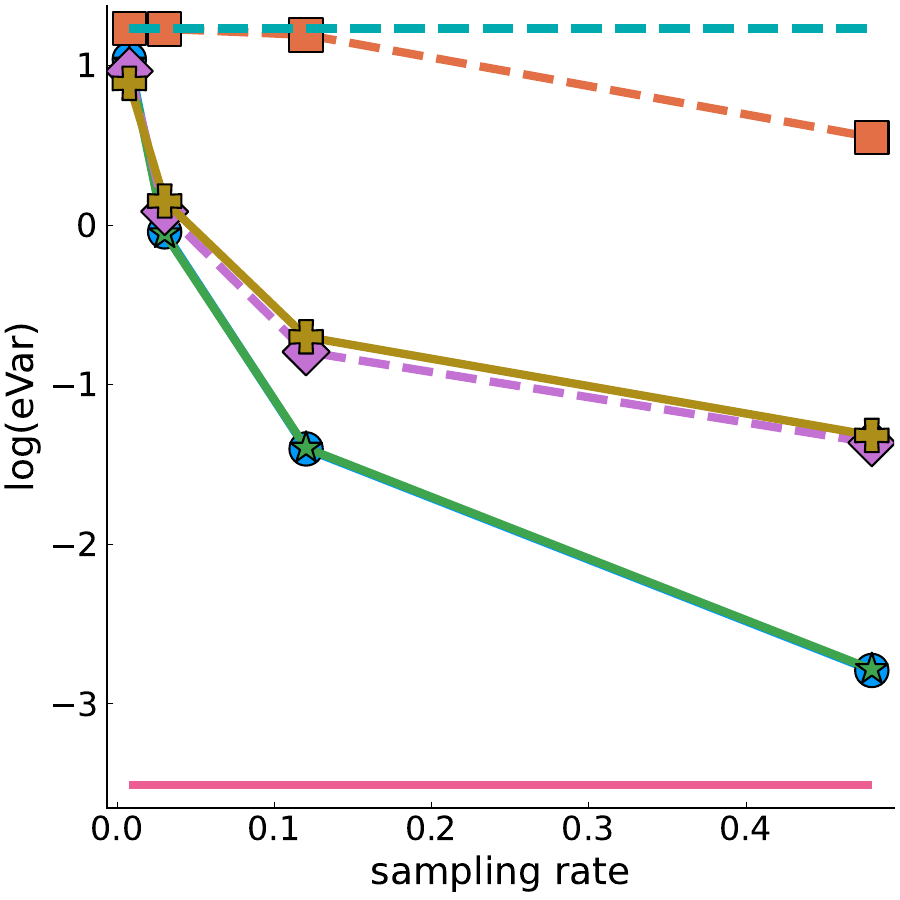}
    \caption*{eVar of Figure~\ref{fig:mseL1}(\subref{sfig:a})}
  \end{subfigure}
  \begin{subfigure}{0.215\textwidth}
    \includegraphics[width=\textwidth]{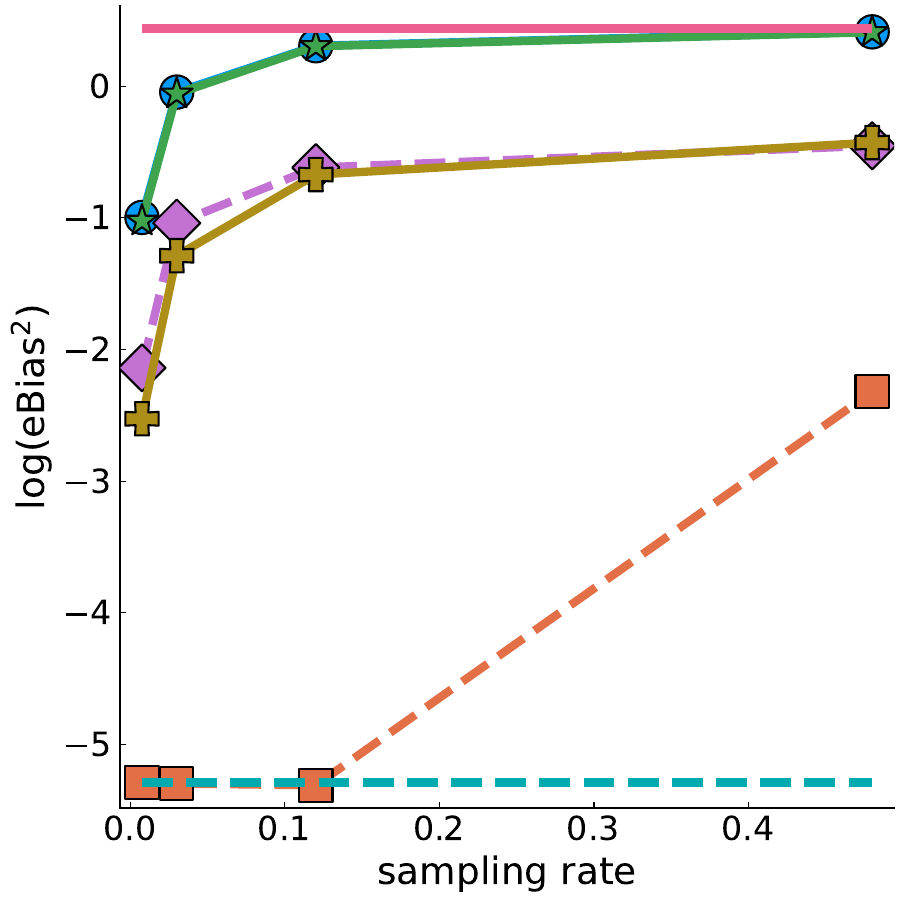}
    \caption*{eBias$^2$  of Figure~\ref{fig:mseL1}(\subref{sfig:b})}
  \end{subfigure}
  \begin{subfigure}{0.215\textwidth}
    \includegraphics[width=\textwidth]{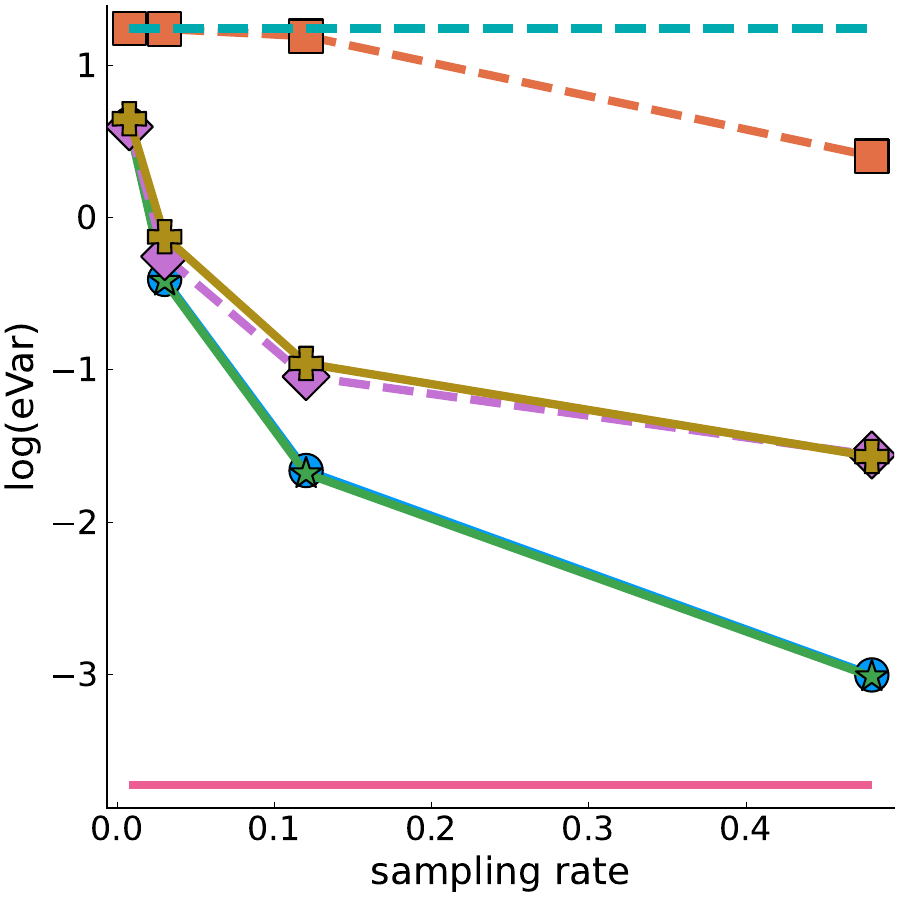}
    \caption*{eVar of Figure~\ref{fig:mseL1}(\subref{sfig:b})}
  \end{subfigure}
  \begin{subfigure}{0.10\textwidth}
    \includegraphics[width=\textwidth]{leg3.pdf}
  \end{subfigure}\\
  \begin{subfigure}{0.215\textwidth}
    \includegraphics[width=\textwidth]{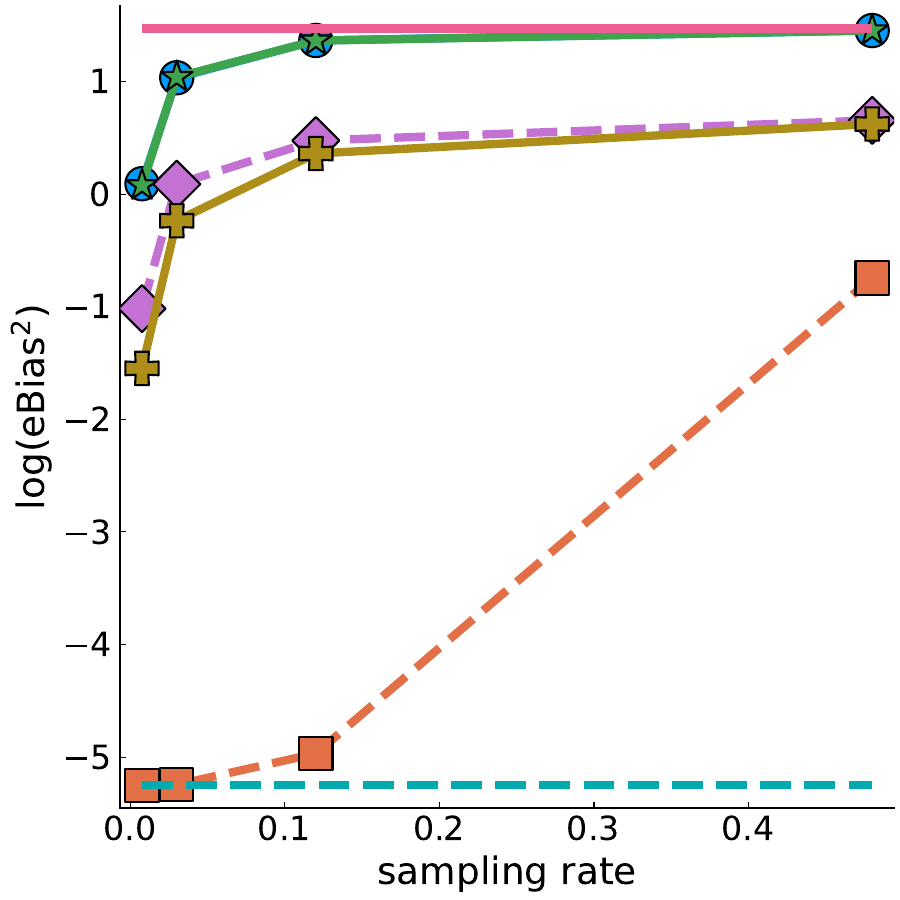}
    \caption*{eBias$^2$  of Figure~\ref{fig:mseL1}(\subref{sfig:d})}
  \end{subfigure}
    \begin{subfigure}{0.205\textwidth}
    \includegraphics[width=\textwidth]{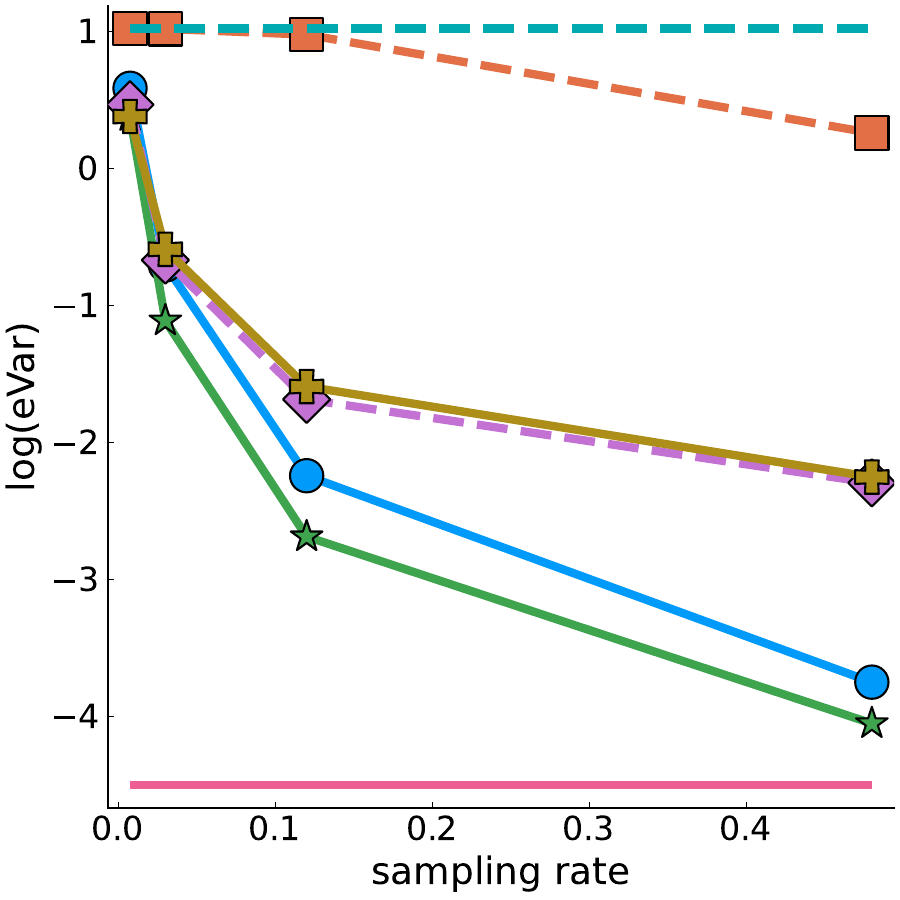}
    \caption*{eVar of Figure~\ref{fig:mseL1}(\subref{sfig:d})}
  \end{subfigure}
  \begin{subfigure}{0.215\textwidth}
    \includegraphics[width=\textwidth]{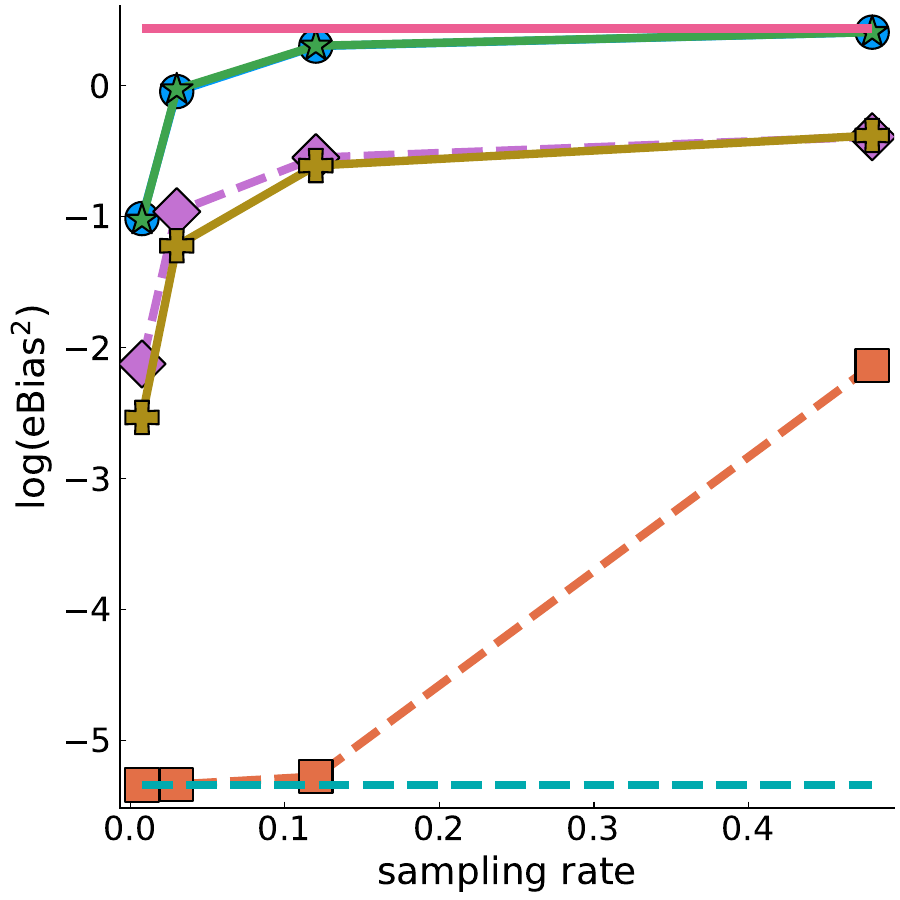}
    \caption*{eBias of Figure~\ref{fig:mseL1}(\subref{sfig:e})}
  \end{subfigure}
    \begin{subfigure}{0.215\textwidth}
    \includegraphics[width=\textwidth]{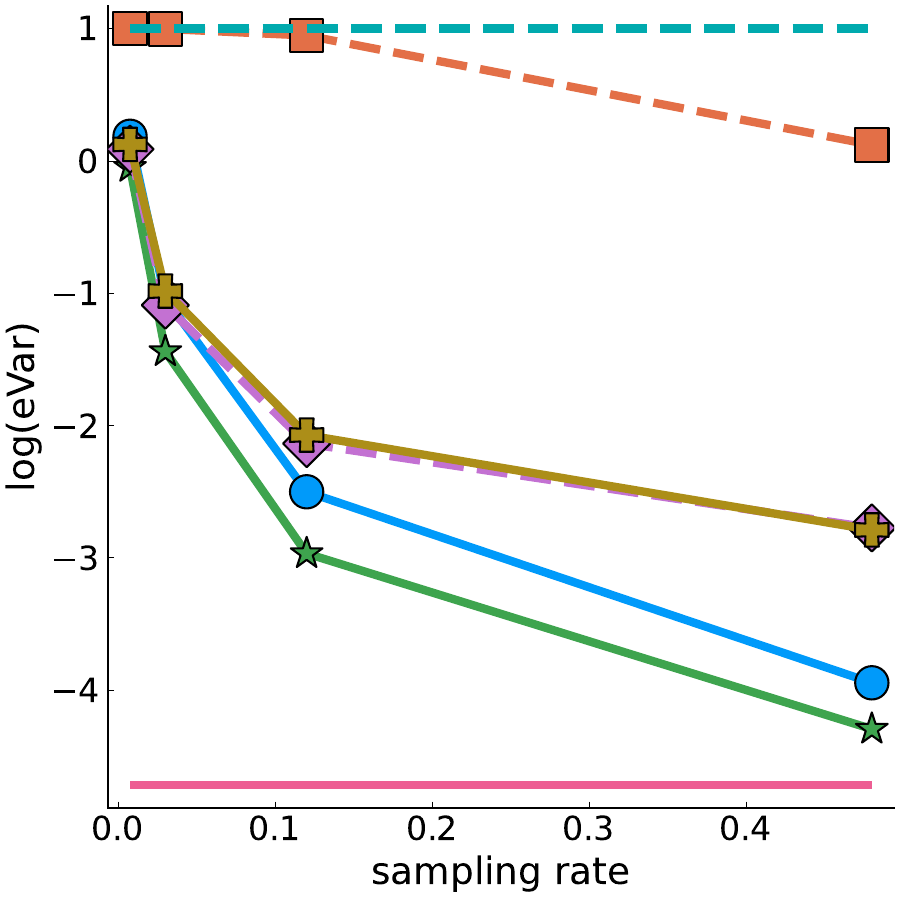}
    \caption*{eVar of Figure~\ref{fig:mseL1}(\subref{sfig:e})}
  \end{subfigure}
  \begin{subfigure}{0.10\textwidth}
    \includegraphics[width=\textwidth]{leg3.pdf}
  \end{subfigure}
  \caption{The eBias$^2$  and eVar of transfer learning estimators with $\ell_1$
    penalty for Case ``SP'' and Case ``HT''.}
  \label{fig:bias-var-apdx}
\end{figure}
Now, we present additional simulation results using $\ell_2$ penalty. The eMSEs are
presented in the following Figure~\ref{fig:mseL2}.
\begin{figure}[H]
  \centering 
  \begin{subfigure}{0.255\textwidth}
    \includegraphics[width=\textwidth]{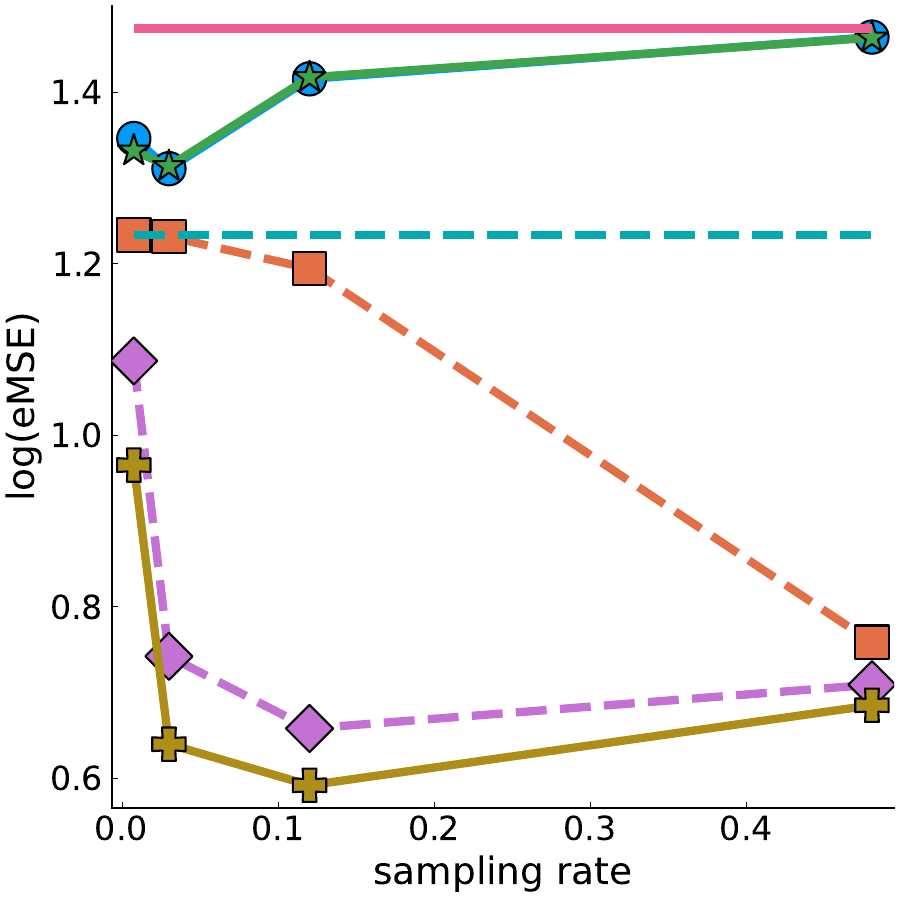}
    \caption{SP (light-tail $\x$)}
  \end{subfigure}
  \begin{subfigure}{0.255\textwidth}
    \includegraphics[width=\textwidth]{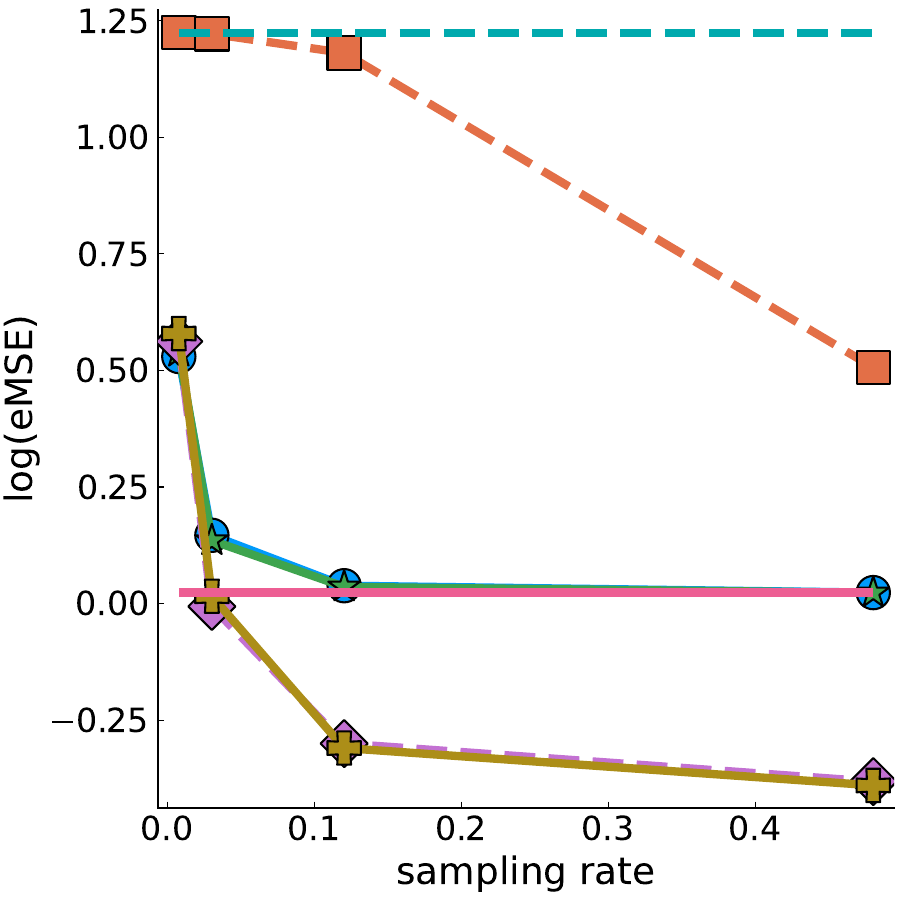}
    \caption{HL (light-tail $\x$)}
  \end{subfigure}
  \begin{subfigure}{0.255\textwidth}
    \includegraphics[width=\textwidth]{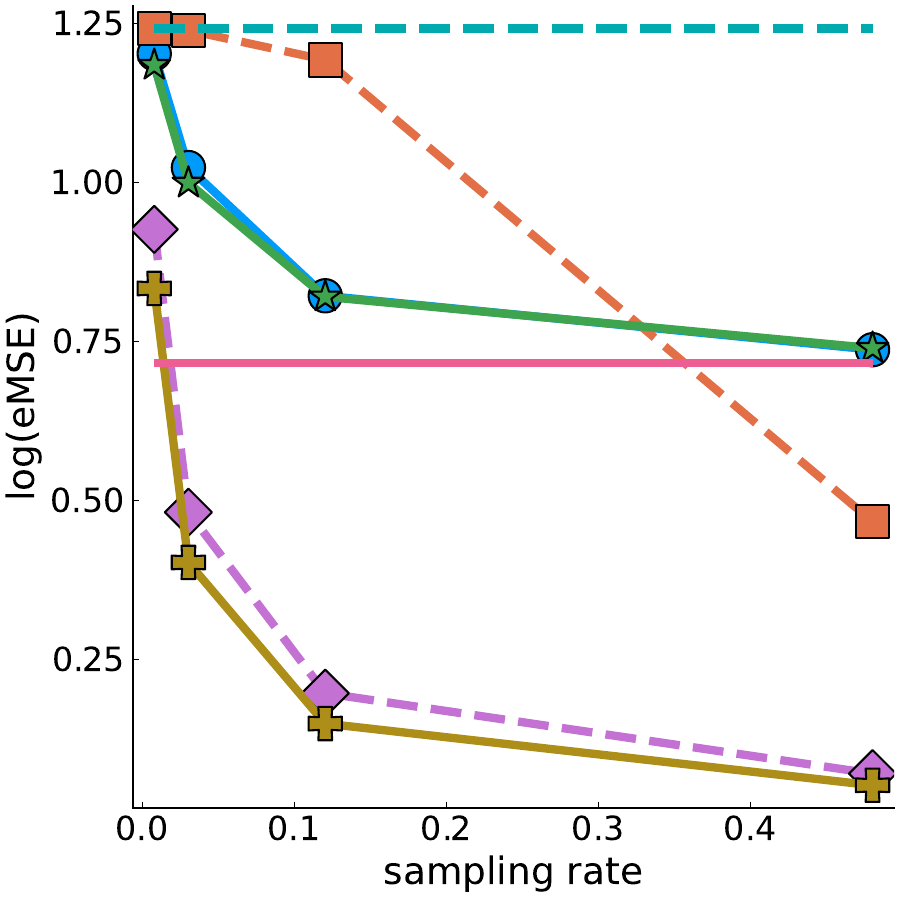}
    \caption{HT (light-tail $\x$)}
  \end{subfigure}
  \begin{subfigure}{0.1\textwidth}
    \includegraphics[width=\textwidth]{leg3.pdf}
  \end{subfigure}
  \begin{subfigure}{0.255\textwidth}
    \includegraphics[width=\textwidth]{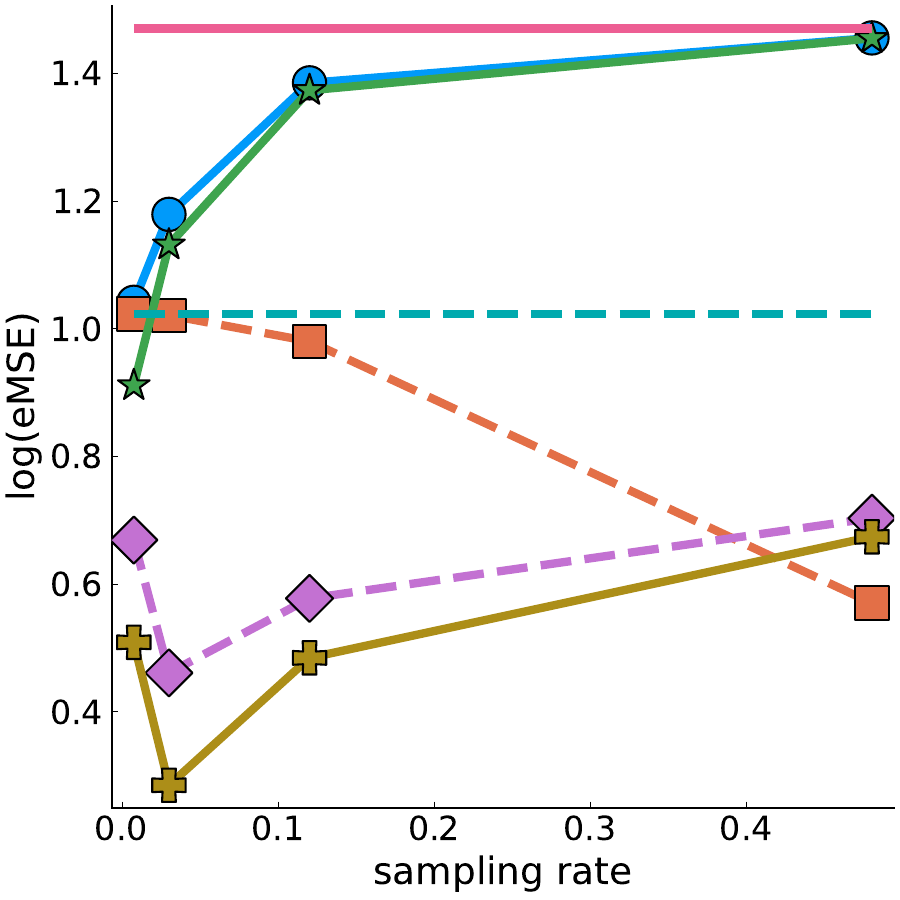}
    \caption{SP (heavy-tail $\x$)}
  \end{subfigure}
  \begin{subfigure}{0.255\textwidth}
    \includegraphics[width=\textwidth]{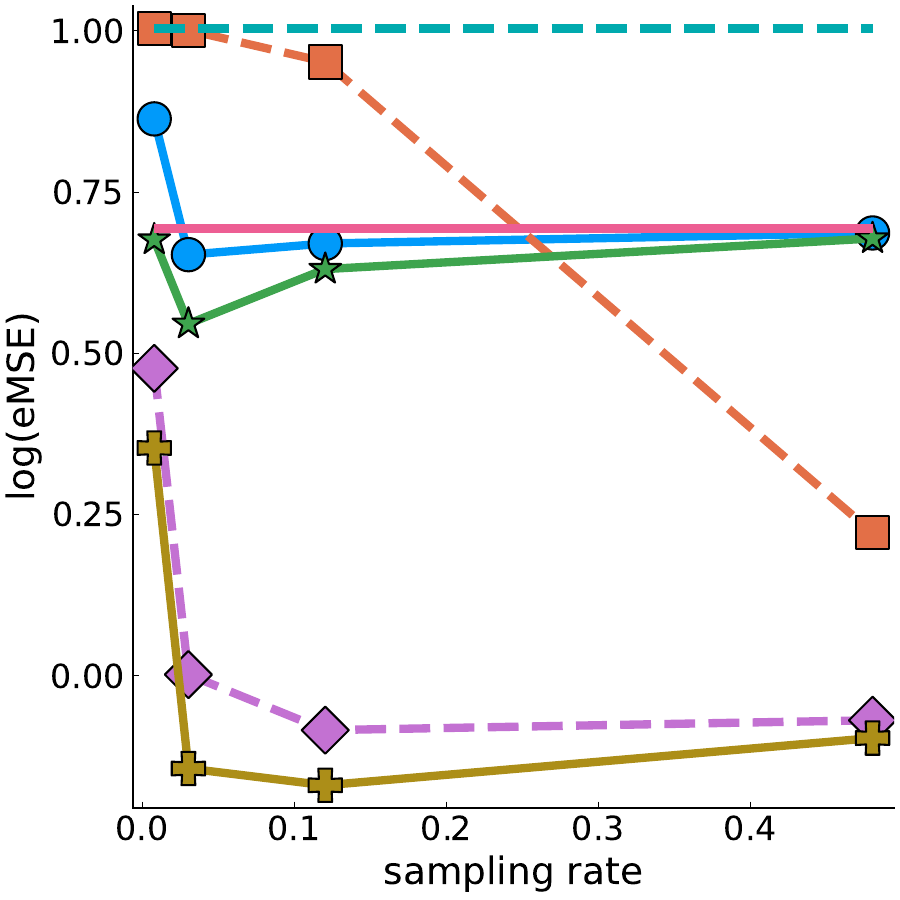}
    \caption{HT (heavy-tail $\x$)}
  \end{subfigure}
  \begin{subfigure}{0.1\textwidth}
    \includegraphics[width=\textwidth]{leg3.pdf}
  \end{subfigure}
  \caption{The eMSE of transfer learning estimators with $\ell_2$ penalty. The first row
    is for light-tailed $\x$ and the second row is for heavy-tailed $\x$.
    The dashed line represents the results of least
    square estimator ($\hbeta_{\cS}$) with only target data $\cS$. The solid line
    represents the results
    of the transfer learning estimator ($\hbeta_{\mathrm{full}}$) with full data $\cS\cup\cB$.}
  \label{fig:mseL2}
\end{figure}
Next, we present some additional simulation results of real data. As
we mentioned in Section~\ref{sec:realdata}, we also try combined estimators with
different combining proportions. Here, we present the results with
$c=0.5,0.7,0.9$.
\begin{table}[H]
\caption{Percentages improved on eMSPE of transfer learning estimators over
$\hbeta_{\cS}$.}
\label{tb:real-apx}
\centering
\begin{tabular}{l *{6}{w{c}{12mm}} }\hline
       & \multicolumn{6}{c}{50\%-50\% split of target data}\\
       & \multicolumn{2}{c}{$c=0.5$} & \multicolumn{2}{c}{$c=0.7$} & \multicolumn{2}{c}{$c=0.9$}\\\hline
  $r$ & $\hbeta_{\comdat}$ & $\hbeta_{\comest}$
  & $\hbeta_{\comdat}$ & $\hbeta_{\comest}$ & $\hbeta_{\comdat}$
  & $\hbeta_{\comest}$\\\hline
  $\Ns$  & $+1.01$ & $+1.05$ & $+0.60$ & $+0.69$ & $+0.19$ & $+0.24$ \\
  $2\Ns$ & $+1.00$ & $+1.29$ & $+0.60$ & $+0.96$ & $+0.19$ & $+0.39$ \\
  $4\Ns$ & $+0.82$ & $+1.47$ & $+0.47$ & $+1.22$ & $+0.14$ & $+0.58$ \\
  $8\Ns$ & $+0.56$ & $+1.17$ & $+0.30$ & $+1.23$ & $+0.09$ & $+0.70$ \\\hline
\end{tabular}
\begin{tabular}{l *{6}{w{c}{12mm}} }\hline
       & \multicolumn{6}{c}{80\%-20\% split of target data}\\
       & \multicolumn{2}{c}{$c=0.5$} & \multicolumn{2}{c}{$c=0.7$} & \multicolumn{2}{c}{$c=0.9$}\\\hline
  $r$ & $\hbeta_{\comdat}$ & $\hbeta_{\comest}$
  & $\hbeta_{\comdat}$ & $\hbeta_{\comest}$ & $\hbeta_{\comdat}$
  & $\hbeta_{\comest}$\\\hline
  $\Ns$  & $+0.69$ & $+0.69$ & $+0.46$ & $+0.48$ & $+0.16$ & $+0.19$ \\
  $2\Ns$ & $+0.65$ & $+0.88$ & $+0.41$ & $+0.65$ & $+0.14$ & $+0.28$ \\
  $4\Ns$ & $+0.53$ & $+0.83$ & $+0.32$ & $+0.74$ & $+0.11$ & $+0.33$ \\
  $8\Ns$ & $+0.37$ & $+0.44$ & $+0.22$ & $+0.69$ & $+0.07$ & $+0.43$ \\\hline
\end{tabular}
\end{table}

\bibliographystyle{natbib}
\bibliography{references}
\end{document}